\newcommand{\blind}{1}
\theoremstyle{plain}
\newtheorem{theorem}{Theorem}
\newtheorem{lemma}{Lemma}
\newtheorem*{lemma*}{Lemma}
\newtheorem*{theorem*}{Theorem}
\newtheorem{claim}{Claim}
\newtheorem*{claim*}{Claim}
\newtheorem{proposition}{Proposition}
\newtheorem*{proposition*}{Proposition}
\newtheorem{informal-theorem}{Theorem}
\newtheorem{informal-proposition}{Proposition}
\theoremstyle{remark}
\newtheorem{assumption}{Assumption}
\newtheorem{definition}{Definition}
\setlist[enumerate]{topsep=0pt,itemsep=-1ex,partopsep=1ex,parsep=1ex}
\g@addto@macro{\UrlBreaks}{\UrlOrds}
\newcommand{\R}{\ensuremath{\field{R}}} % real numbers
\newcommand{\Bscr}{\ensuremath{\mathcal B}}
\newcommand{\minimize}{\ensuremath{\mathop{\mathrm{minimize}}\limits}}
\tikzstyle{every picture} += [>=stealth]
\def\draft{1}  % 1 for draft version (eg including author notes), 0 for non-draft version
\def\<{\langle}
\def\>{\rangle}
\def\R{\mathbb{R}}
\newcommand{\norm}[1]{\left\lVert\mspace{1mu} #1 \mspace{1mu}\right\rVert}
\newcommand{\tr}{{\rm tr}}
\newcommand{\E}[1] {{\mathbb{E}}\left(#1\right)}
\newcommand {\F } {{\mathrm{F}}}
\newcommand {\1}[1] {{\mathds{1}}\left\{#1\right\}}
\newcommand{\rank}{\mathrm{rank}}
\renewcommand{\O}{\mathcal{O}}
\newcommand{\inner}[2]{\langle #1, #2 \rangle}
\renewcommand{\vec}[1]{\mathrm{vec}\left(#1\right)}
\newcommand{\PT}[1]{P_{T}(#1)}
\newcommand{\PTp}[1]{P_{T^{\perp}}(#1)}
\newcommand{\bm}[1]{#1}
\newcommand{\mT}{\mathcal{T}}
\newcommand{\error}{\left(\frac{\sigma \sqrt{n} \log^{2.5}(n)}{\sigma_{\min}} \right)}
\newcommand{\cvx}{\mathrm{cvx}}
\newcommand{\SNR}{\frac{\sigma}{\sigma_{\min}} \sqrt{n} \lesssim \frac{1}{\kappa^3 r^{4.5} \log^{5}(n) \mu }}
\begin{document}

\def\spacingset#1{\renewcommand{\baselinestretch}%
{#1}\small\normalsize} \spacingset{1}

%%%%%%%%%%%%%%%%%%%%%%%%%%%%%%%%%%%%%%%%%%%%%%%%%%%%%%%%%%%%%%%%%%%%%%%%%%%%%%

\if1\blind
{
  \title{\bf Learning Treatment Effects in Panels with \\ General Intervention Patterns}
  \author{Vivek F. Farias\hspace{.2cm}\\
    Sloan School of Management, Massachusetts Institute of Technology\\
    and \\
    Andrew A. Li \\
    Tepper School of Business, Carnegie Mellon University\\
    and \\
    Tianyi Peng \\
    Department of Aeronautics and Astronautics, Massachusetts Institute of Technology}
  \maketitle
} \fi

\if0\blind
{
  \bigskip
  \bigskip
  \bigskip
  %\begin{center}
  \title{ \bf Learning Treatment Effects in Panels with \\ General Intervention Patterns}
%\end{center}
  \date{}
  \author{}
  \maketitle
   %\medskip
} \fi

%\bigskip
\begin{abstract}
The problem of causal inference with panel data is a central econometric question. The following is a fundamental version of this problem: Let $M^*$ be a low-rank matrix and $E$ be a zero-mean noise matrix. For a `treatment' matrix $Z$ with entries in $\{0,1\}$, we observe the matrix $O$ with entries $O_{ij} := M^*_{ij} + E_{ij} + \mathcal{T}_{ij} Z_{ij}$, where $\mathcal{T}_{ij} $ are unknown, heterogenous treatment effects. The problem requires we estimate the average treatment effect $\tau^* := \sum_{ij} \mathcal{T}_{ij} Z_{ij} / \sum_{ij} Z_{ij}$. Existing approaches work only for specific, highly-structured
classes of $Z$. We develop an estimator that provably achieves rate-optimal recovery of $\tau^{*}$ for general $Z$. Our guarantees
are the first of their type in this general setting and allow the assignment of $Z$ depending on the historical observations. Computational experiments on synthetic and real-world data show a substantial advantage over competing estimators. 
\end{abstract}
%The synthetic control paradigm provides an approach to estimating $\tau^*$ when $Z$ places support on a single row.
%This paper extends that framework to allow rate-optimal recovery of $\tau^*$ for general $Z$, thus broadly expanding its applicability. 
\noindent%
{\it Keywords:} Causal Inference, Panel Data, Synthetic Control, Matrix Completion
%\vfill

%\newpage
\spacingset{1.77} % DON'T change the spacing!

\section{Introduction}

Observational data on commerce platforms can, in many settings, be fruitfully viewed as panel data wherein any given unit is exposed to one or more treatments at various points in time. A hallmark of these datasets is that the decision to treat a unit is often the byproduct of some algorithmic process, so that the resulting treatment `patterns' are complex, and adapted to the observed data. A natural inferential task then associated with these datasets is estimating the average effect of the treatment on the treated units (ATT). The present paper considers a parsimonious model of this problem, and designs a rate-optimal algorithm for the task of estimating the ATT. 

As brief concrete motivation, consider a retailer operating a set of stores. The retailer routinely applies a promotion to drive store sales, but the decision of whether or not to apply the promotion at any point in time is store-specific, and dependent on past outcomes. The retailer cares to understand the incremental sales driven by the promotional activity. The panel at hand consists of sales data over time, across stores (i.e.~the units are stores). The binary decision of whether or not to offer the promotion at a store may be viewed as a binary matrix with dimensions conforming to the panel. Estimating the ATT in this setting corresponds precisely to an estimate of the incremental sales driven by the promotions. It is easy to list variants of this example in e-commerce (the units are individual customers), healthcare (where the units are patients), and so on. 

\subsection{An (Informal) Problem Statement}

We consider a panel of $n$ units with outcomes observed over $T$ periods. To that end, let $M^* \in \R^{n \times T}$ be a fixed, unknown matrix and $E\in \R^{n \times T}$ be a zero-mean random matrix; the matrix $M^*+E$ then represents counterfactual outcomes. A known `treatment' matrix $Z\in \{0,1\}^{n \times T}$ encodes the application of a treatment to the $n$ units; specifically $Z_{it} = 1$ if and only if unit $i$ is treated at time $t$. The panel data observed is a matrix $O \in \R^{n \times T}$ with entries $O_{it} := M^*_{it} + E_{it} + \mathcal{T}_{it} Z_{it}$. Here the $\mathcal{T}_{it}$ are unknown, heterogenous treatment effects.
%\footnote{Observable covariates on each unit may also be available; we suppress this aspect here for clarity.}
In the spirit of allowing for a parsimonious model, we make the following assumptions: 
\begin{itemize}
\item The matrix $M^*$ is low-rank but otherwise arbitrary, thus allowing us to capture interactive fixed effects. Similarly, the treatment effects $\mathcal{T}_{it}$ are also arbitrary. 
\item The treatments $Z$ are allowed to depend on past observations. In particular, we only require that given all the data available up until time $t-1$ (i.e~the first $t-1$ columns of $O$ and $Z$), the entries in the $t^{\text{th}}$ columns of $Z$ and $E$ are mutually independent. 
\end{itemize}
Given this setup, our goal is to recover the average treatment effect on treated units (or ATT):
\[
\tau^* := \sum_{it} \mathcal{T}_{it}Z_{it}/\sum_{it}Z_{it}.
\]
It is worth placing the problem above in context. In the setting where we were willing to assume that $M^*_{it}$ takes the form $a_i + b_t$, simple difference-in-differences would more or less suffice. On the other hand, this is likely far too simple a counterfactual model for the applications we care about. Now since the ability to impute mean counterfactual outcomes, $M^*_{it}$, on treated entries would immediately yield an approach to estimating the ATT, it is natural to consider approaches that might let us do the same. To that end, the synthetic control method of \cite{abadie2003economic, abadie2010synthetic} -- or related alternatives such as Robust Synthetic Control \citep{amjad2018robust,agarwal2021robustness} and Synthetic Differences-in-Differneces \citep{arkhangelsky2019synthetic} -- are a natural first approach to consider. Unfortunately, the general treatment matrix $Z$ germane to our problem precludes the application of these approaches which typically apply to more structured block-like patterns. Stated more carefully, these approaches typically require $Z$ to be exogenous with its non-zero entries restricted to a block. Another tantalizing possibility, first raised by \cite{athey2021matrix}, is treating entries of $M^*$ in the support of $Z$ as missing, and applying matrix completion techniques to impute these counterfactual values. Here, however, it is unclear that counterfactual recovery -- which is equivalent to matrix recovery with general missingness patterns -- is actually possible, so that recovery guarantees are typically unavailable.

\subsection{Our Contributions}

Succinctly, we develop an estimator that recovers the average treatment effect under provably minimal assumptions on $Z$. This is made possible by a new de-biasing identity and a substantial extension of the entry-wise uncertainty quantification analysis of \cite{chen2019noisy,chen2020bridging} to general non-random missingness patterns that is of independent interest. 

In providing context for our contributions, it is worth asking what one can hope for in this problem. In addition to requiring that $M^*$ have low rank (say $r$), it is clear that we {\em cannot} in general expect to recover $\tau^*$ absent assumptions on the treatment matrix $Z$. For instance, we must rule out the existence of a rank $r$ matrix $M'$, distinct from $M^*$, for which $M' = M^* + \gamma Z$ for some $\gamma \neq 0$, or else identifying $\tau^*$ is impossible even if $E$ is identically zero (we could for example rule this out if $Z$ were not in the tangent space of $M^*$). Separately, unlike  matrix completion, we actually {\em observe} $M^*_{it} + \mathcal{T}_{it} + E_{it}$ on treated entires. If the heterogeneity in treatment effects is too large, however, it is unclear that these observations are of much value. Thus, meaningful results requires assumptions on both (i) the projection of $Z$ onto the tangent space of $M^*$ so as to allow for identification, and (ii) limits on the heterogeneity in treatment effects. 
%Specifically, representing by $\delta$ the matrix with entires $\delta_{ij} = (\tau_{ij} - \tau^*)Z_{ij}$, we will make assumptions on the spectral norm of $\delta$ and 
Against this backdrop, we make the following contributions: 
\begin{itemize}
\item {\em Rate Optimal Estimator: } We construct an estimator that achieves rate optimal guarantees for the recovery of the average treatment effect $\tau^*$ (\cref{thm:error-rate-theorem}) with general treatment patterns. We show under additional assumptions that our estimator is asymptotically normal (\cref{thm:main-theorem}).

\item {\em Minimality of Assumptions: }Should the conditions we place on the projection of $Z$ onto the tangent space of $M^*$ be violated by an amount that grows small with problem size, we show {\em no algorithm} can recover $\tau^*$ even with homogeneous treatment effects (Proposition~\ref{prop:necessity-of-conditions}). Our assumptions on heterogeneity of the treatment effect are also shown to be minimal, and satisfied by extant models in the synthetic control literature. 

\item {\em Easy Ex-Ante Conditions for Identification: }We show that the tangent space conditions required for identification are implied by easy to check conditions on the treatment pattern $Z$. These include requiring that at least a constant (but otherwise arbitrary) fraction of entries are {\em not} treated, or requiring that $Z$ have rank polynomially larger than the rank of $M^*$. We anticipate these conditions are ubiquitously satisfied in many applications of interest. 
%Of course, our characetrization subsumes the structure on $Z$ permitted in synthetic control. 

\item {\em Empirical Performance: }We show both for synthetic and real data that our estimator provides a material improvement in empirical performance relative to available alternatives, including matrix completion based estimators and, where applicable, state-of-the-art synthetic control estimators.  

\end{itemize}

\subsection{Related Literature}

The synthetic control literature pioneered by \cite{abadie2003economic, abadie2010synthetic} has grown to encompass sophisticated learning and inferential methods; see \cite{abadie2019using} for a review. \cite{doudchenko2016balancing,li2017estimation,ben2021augmented} consider a variety of regularized regression techniques to learn the linear combination of untreated units that yields a synthetic control. \cite{amjad2018robust,amjad2019mrsc,agarwal2021robustness} consider instead the use of principal component regression techniques. \cite{arkhangelsky2019synthetic} proposes alternative approaches to imputing counterfactuals by averaging across both untreated units (rows) and time (untreated columns). %\cite{li2020statistical} and \cite{chernozhukov2021exact} address inferential questions that arise in synthetic control, with the latter providing a permutation test that is generally applicable. 

Matrix completion methods present a means to allow for inference with {\em general} treatment patterns. \cite{athey2021matrix} are among the first to study this, but provides no guarantees on recovering the average treatment effect. Alternatively, methods that do provide guarantees on the recovery of treatment effects via matrix completion tend to make strong assumptions: \cite{xu2017generalized}, \cite{bai2019matrix} effectively assume that $Z$ has support on a block (so that traditional synthetic control techniques could also apply),  \cite{xiong2019large} make stationarity assumptions on $M^*$ and require it to be zero-mean, and \cite{CFMY:19} assumes that $Z$ has i.i.d. entries (wherein a trivial estimator of the average treatment effect is also applicable).
The endogeneity of $Z$ remains largely unexplored. Recently, a line of work allows $Z$ to depende on $M^{*}$ (\cite{athey2021matrix,xiong2019large,agarwal2020synthetic}). We will allow $Z$ to depend on both $M^{*}$ and $E$, thus broadly generalizing the applicability of existing settings.

%A distinct (and common) setting for treatment effect estimation presents us with a single observation per unit and a rich set of observed covariates on these units. These observed covariates roughly ensure exchangeability of units across interventions within covariate strata. ML has been broadly used in this setting; e.g.~\cite{athey2016recursive, wager2018estimation, kunzel2019metalearners}. In contrast for our problem, one could view these covariates as latent.   
   
Our estimation procedure begins with first computing a `rough' estimate of the treatment effect via a natural convex estimator;  \cite{xiong2019optimal, gobillon2016regional, gobillon2020local} are empirical studies that use this estimator. Crucially, we provide a new de-biasing technique that allows for recovery guarantees and exhibits a significant performance improvement relative to this rough estimate. It is also worth noting that this convex estimator also finds application in the related problem of panel data regression; see \cite{bai2009panel, moon2017dynamic, moon2018nuclear}. State-of-the-art methods there effectively require that $Z$ is dense.   

Whereas work on matrix completion with non-standard observation patterns \citep{chatterjee2020deterministic,foucart2019weighted,liu2017new,klopp2017robust,agarwal2021causal} exists, this is by and large not obviously useful or applicable to our problem. Instead, we build on a recent program to bridge convex and non-convex formulations for matrix completion \citep{chen2019noisy} and Robust-PCA \citep{chen2020bridging}. That work has provided a pioneering analysis of entry-wise guarantees and uncertainty quantification for convex matrix completion estimators wherein entries remain missing at random. Our work here may be viewed as extending that program to a broad class of non-random missingness patterns, a contribution of important independent interest.

\section{Model and Algorithm}
\label{sec:model}

%\textbf{\textsf{Notation: }} For any matrix $X$, let $\norm{X}$, $\norm{X}_*$ and $\norm{X}_\F$ denote the operator, nuclear, and Frobenius norms of $X$, respectively. Let $\norm{X}_\infty = \max_{i,j}|X_{ij}|$ be the max-norm, and let $\|X\|_{2,\infty}$  

We begin by formally defining our problem, which is in fact a generalization of the problem described in the previous section that allows for {\em multiple} treatments.
Let $M^* \in \mathbb{R}^{n \times n}$ be the fixed rank-$r$ \textit{counterfactual} matrix\footnote{Note that in contrast to the previous section, we are now assuming {\em square} matrices (i.e~$n=T$ in the notation of the previous section). This is purely to simplify the notation -- for a rectangular $n$-by-$T$ matrix, all of our theoretical guarantees hold if one swaps $n$ with $\min\{n,T\}$.} which indicates the {\em expected} outcomes absent treatment and noise. %As a concrete example, the rows and columns of $M^*$ might represent individual stores and time periods, respectively, and the entries of $M^*$ might represent expected sales at each store and time period. 
We denote the singular value decomposition (SVD) of $M^{*}$ by $M^* = U^{*}\Sigma^{*}V^{*\top}$, where $U^{*},V^{*} \in \R^{n\times r}$ have orthonormal columns, and $\Sigma^* \in \R^{r \times r}$ is diagonal with diagonal entries $\sigma_1 \ge \cdots \ge \sigma_r > 0$. Our guarantees will depend specifically on $\sigma_{\max} := \sigma_1$, $\sigma_{\min} := \sigma_{r}$, and the condition number $\kappa := \sigma_{\max} / \sigma_{\min}$. 
%Without any {\em treatments} or noise, we would hypothetically observe $M^*$. Instead,  %Without any {\em treatments} or noise, we would hypothetically observe $M^*$. Instead, 

We assume there are $k$ treatments, any subset of which can be applied to each entry. For each treatment $m \in \{1,\ldots,k\}$, a {\em treatment matrix} $Z_m \in \{0, 1\}^{n\times n}$  encodes the entries that have received the $m^\text{th}$ treatment (0 encoding no treatment, and 1 encoding being treated).\footnote{Naturally, we assume that at least one entry in each matrix $Z_m$ is non-zero.} We observe (in addition to the treatment matrices) a single matrix of {\em outcomes}:\footnote{The symbol $\circ$ denotes the Hadamard or `entry-wise' product} 
\[ O := M^{*} + E + \sum_{m=1}^k \mathcal{T}_m\circ Z_m, \] 
where each $\mathcal{T}_m \in \R^{n\times n}$ is an unknown matrix of {\em treatment effects}. %-- in our running example, the $(i,j)^\text{th}$ entry of $\mathcal{T}_m$ represents the additive effect of treatment $m$ on the expected sales at store $i$ and time period $j$. 
The term $E\in \mathbb{R}^{n \times n}$ is a {\em noise} matrix which encodes the realized deviation from the expected value of each entry.
%is a (potentially neither independent nor identically distributed) random noise matrix.  The corresponding entry of $E$ captures any randomness in sales.

Finally, let $\tau^{*} \in \mathbb{R}^k$ be the vector of ATT's, whose $m^\text{th}$ value is defined as \[\tau^{*}_m := \frac{\langle \mathcal{T}_m, Z_m \rangle}{\sum_{ij} (Z_m)_{ij}},\] and let $\delta_m = \mathcal{T}_m\circ Z_m - \tau^*_m Z_m$ be the associated `residual' matrices, which represent the extent to which individual treatment effects differ from the average treatment effect.  Our problem is to estimate $\tau^*$, having observed $O$ and $Z_1,\ldots,Z_k$.

%{\em average treatment effects},\footnote{This particular notion of the average treatment effect is sometimes referred to as the {\em average treatment effect on the treated} to distinguish it from an average taken across all entries.}

Having defined the problem completely, it is worth emphasizing at this point the power of the model above. First, by allowing for general treatment matrices $Z_m$, our model subsumes a number of common settings in causal learning: 
\begin{enumerate}
	\item {\em Synthetic Control}: The family of {\em synthetic control} methods apply to the special case of our model where $k=1$ and $Z_1$ places support on a single row. From here on, we will overload the term `synthetic control' to refer this special case (in addition to the methods themselves). %; the focus of our later analysis will be the case where $Z_1$ is allowed to be general. 
	\item {\em Treatment Covariates}: One common setting is where the treatment on any entry is associated with a $\{0,1\}^k$-valued covariate vector, and the treatment effect on that entry is some linear function of this covariate vector. Recovery of $\tau^*$ is then equivalent to recovering covariate-dependent heterogeneous treatment effects.  
	\item {\em Difference-in-Differences (DID)}: DID allows for general treatment matrices $Z_m$, but relies on the assumption that $M^{*}$ be a specialized rank-2 matrix.
\end{enumerate}
Second, note that our model is entirely {\em deterministic}, and in fact our main recovery guarantee (\cref{thm:error-rate-theorem}) will be stated deterministically. This is extremely important -- and is unique in the context of the existing literature -- precisely because it allows us to address stochastic generative models wherein treatments are assigned {\em adaptively}. This is critical to representing real applications (e.g.~as described in the introduction). In particular, we will consider the following generative model. 
\begin{definition}[Adaptive Treatment Model]\label{def:model}
$E$ and $Z_1,\ldots,Z_k$ are random objects. Fix any column $j \in [n]$. \textit{Conditioned on} the previously-observed columns, i.e.~columns $1$ through $j-1$ of $O$ and $Z_1,\ldots,Z_k$, we have the following:
\begin{enumerate}
\item The entries in the $j^\text{th}$ columns of $E$ and $Z_1,\ldots,Z_k$ are jointly independent;
\item The entries in the $j^\text{th}$ column of $E$ are mean-zero and sub-Gaussian\footnote{The {\em sub-Gaussian norm} of a random variable $X$ is defined as $ \norm{X}_{\psi_2} := \inf\{t > 0: \E{\exp(X^2/t^2)}\leq 2\}.$ For $X$ itself to be {\em sub-Gaussian} is equivalent to having finite sub-Gaussian norm.} with sub-Gaussian norm $\|E_{ij}\|_{\psi_2} \le \sigma$.
\end{enumerate}
\end{definition}
\noindent This model admits general dependence of each $Z_m$ on the past outcomes, with a `causal requirement' of conditional independence between $E$ and $Z_m$ which is necessary in general for the problem to be meaningful. This is substantially more general than the typical assumption that $Z$ be entirely exogenous to the noise.

In the coming subsection, we will outline the core contribution of this paper, which is an estimator for $\tau^*$ with a provably rate-optimal guarantee. That discussion will require the notion of the {\em tangent space} of a low-rank matrix, which is critical in characterizing the extent to which the treatment matrices $Z_m$ allow for, or preclude, recovery of $\tau^*$. Loosely speaking, recovery of $\tau^*_m$ is (provably) impossible if $Z_m$  can be `disguised' within $M^*$. The formal version of this statement (\cref{prop:necessity-of-conditions}) relates to a particular decomposition of the linear space of $n \times n$ matrices, $\mathbb{R}^{n \times n} = T^* \oplus  T^{*\perp}$, where $T^*$ is the tangent space of $M^*$ in the manifold consisting of matrices with rank no larger than $r$ (the rank of $M^*$): \[T^* = \{U^*A^\top + BV^{*\top}| \; A,B \in \mathbb{R}^{n \times r}\}.\]
Equivalently, the orthogonal space of $T^*$, denoted $T^{*\perp}$, is the subspace of $\mathbb{R}^{n \times n}$  whose columns and rows are orthogonal, respectively, to the spaces $U^*$ and $V^*$.\footnote{When we refer to a matrix with orthonormal columns as a `space', we mean the subspace spanned its columns.} Let $P_{T^{*\perp}}(\cdot)$ denote the projection operator onto $T^{*\perp}$:
\begin{align*}
P_{T^{*\perp}}(A) = (I - U^{*}U^{*\top}) A (I - V^{*}V^{*\top}).
\end{align*}
We will defer the formal statements to the next section, but suffice to say for now that $Z_m$ can be `disguised'  within $M^*$ when its projection onto the tangent space of $M^*$ is large, or equivalently, when $P_{T^{*\perp}}(Z_m)$ is small. Thus we will require $P_{T^{*\perp}}(Z_m)$ to be sufficiently large (we will show that lower bounds on the size of $P_{T^{*\perp}}(Z_m)$, in the the precise form of our own conditions, are nearly necessary).

\subsection{A De-biased Convex Estimator}
Our estimator for $\tau^*$ is constructed in two steps, stated as \cref{eq:convex-program,eq:construct-taud} below:
\begin{subequations} \label{eq:estimator}
\begin{align} 
   (\hat{M}, \hat{\tau}) \in \underset{M \in \R^{n\times n}, \tau \in \R^k}{\mathrm{argmin}} \quad g(M, \tau) := \frac{1}{2}\norm{O - M - \sum_{m=1}^k\tau_m Z_m}_{\F}^{2} + \lambda \norm{M}_{*},\label{eq:convex-program} 
\end{align}
\begin{align}
    \tau^{d} := \hat{\tau} - D^{-1}\Delta. \label{eq:construct-taud}
\end{align}
\end{subequations}
%\begin{subequations} \label{eq:estimator}
%\begin{align} 
%   (\hat{M}, \hat{\tau}) \in \underset{M \in \R^{n\times n}, \tau \in R}{\mathrm{argmin}} \quad g(M, \tau) := \frac{1}{2}\norm{O - M - \tau Z}_{\F}^{2} + \lambda \norm{M}_{*},\label{eq:convex-program} 
%\end{align}
%\begin{align}
%    \tau^{d} := \hat{\tau} - \frac{\lambda \inner{Z}{\hat{U}\hat{V}^{\top}}}{\norm{P_{\hat{T}^{\perp}}(Z)}_{\F}^2}, \label{eq:construct-taud}
%\end{align}
%\end{subequations}
In \cref{eq:construct-taud}, define $D \in \R^{k \times k}$ as the Gram matrix with entires $D_{lm} = \langle
P_{\hat{T}^{\perp}}(Z_l),P_{\hat{T}^{\perp}}(Z_m)
\rangle
$,
and define $\Delta \in \R^k$ as the `error' vector with components
$\Delta_l = \lambda 
\langle
Z_l,\hat U \hat V^\top
\rangle$, where we have let $\hat{M} = \hat{U}\hat{\Sigma} \hat{V}^{\top}$ be the SVD of $\hat{M}$, and let $\hat{T}$ denote the tangent space of $\hat{M}$.\footnote{Our definition implicitly assumes that $D$ is invertible. We view this as a natural assumption on (the absence of) collinearity in treatments.}

To parse this estimator, note that the first step, \cref{eq:convex-program}, is a natural convex optimization formulation that we use to compute a `rough' estimate of the average treatment effects. The objective function's first term penalizes choices of $M$ and $\tau$ which differ from the observed $O$, and the second term seeks to penalize the rank of $M$ using the nuclear norm as a (convex) proxy. We will take the tuning parameter $\lambda$ to be $\lambda := \Theta(\sigma \sqrt{n} \log^{1.5}(n))$ throughout the paper.

%which will be specified in our theoretical guarantees, encodes the relative weight of these two objectives.

After the first step, having $(\hat{M}, \hat{\tau})$ as a minimizer of \cref{eq:convex-program}, we could simply use $\hat{\tau}$ as our estimator for $\tau^*$. However, a brief analysis of the first-order optimality conditions for \eqref{eq:convex-program} yields a simple, but powerful decomposition of $\hat{\tau}-\tau^*$ that suggests a first-order improvement to $\hat{\tau}$ via {\em de-biasing}: 
\begin{lemma}[Error Decomposition]\label{lem:tau-decomposition}
%Suppose $(\tau, M)$ satisfies \cref{eq:convex-condition-tau} and \cref{eq:convex-condition-M}, then
Suppose $(\hat{M}, \hat{\tau})$ is a minimizer of \eqref{eq:convex-program}. Let $\hat{M} = \hat{U}\hat{\Sigma} \hat{V}^{\top}$ be the SVD of $\hat{M}$, and let $\hat{T}$ denote the tangent space of $\hat{M}$. Denote $\hat E = E +\sum_m \delta_m \circ Z_m$. Then,
\begin{equation} \label{eq:tau-decomposition}
D(\hat \tau - \tau^*)
= \Delta^1 + \Delta^2 + \Delta^3,
\end{equation}
where 
%$D \in \R^{k \times k}$ is the Gram matrix with entires $D_{lm} = \langle
%P_{\hat{T}^{\perp}}(Z_l),P_{\hat{T}^{\perp}}(Z_m)
%\rangle$, and 
$\Delta^1, \Delta^2, \Delta^3 \in \R^k$ are vectors with components
\[
\Delta^1_m = \lambda 
\langle
Z_m,
\hat U \hat V^\top
\rangle,\;\;\;\;
\Delta^2_m = 
\langle
P_{\hat{T}^{\perp}}(Z_m)
,
\hat E
\rangle,\;\;\;\;
\Delta^3_m = 
\langle
Z_m
,P_{\hat{T}^{\perp}}(M^*)
\rangle.
\]
\end{lemma}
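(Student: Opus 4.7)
The plan is to derive the decomposition directly from the first-order (KKT) conditions of the convex program (\ref{eq:convex-program}). Because $g$ is a sum of a smooth quadratic and the nuclear norm, any minimizer $(\hat M,\hat\tau)$ satisfies: (i) $\tau$-stationarity, $\langle O - \hat M - \sum_m \hat\tau_m Z_m,\; Z_l\rangle = 0$ for every $l$; and (ii) $M$-stationarity, which by the standard parametrization of $\partial\|\cdot\|_{*}$ at $\hat M$ reads
\[
O - \hat M - \sum_m \hat\tau_m Z_m \;=\; \lambda\bigl(\hat U\hat V^\top + W\bigr)
\]
for some $W \in \hat T^\perp$ with $\|W\|_{\op}\le 1$.

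Next I would substitute $O = M^{*} + E + \sum_m \mathcal{T}_m\circ Z_m$ and use the identity $\mathcal{T}_m\circ Z_m = \tau^{*}_m Z_m + \delta_m$ (so that $E + \sum_m \delta_m = \hat E$, since $\delta_m$ is supported on the entries where $Z_m = 1$). This rewrites the $M$-stationarity as
\[
(M^{*} - \hat M) + \hat E + \sum_m(\tau^{*}_m - \hat\tau_m) Z_m \;=\; \lambda\bigl(\hat U\hat V^\top + W\bigr).
\]
Applying the projector $P_{\hat T^\perp}$ to both sides annihilates $\hat M$ and $\hat U\hat V^\top$ (which lie in $\hat T$) and fixes $W$, leaving
\[
P_{\hat T^\perp}(M^{*}) + P_{\hat T^\perp}(\hat E) + \sum_m(\tau^{*}_m - \hat\tau_m)\,P_{\hat T^\perp}(Z_m) \;=\; \lambda W.
\]
Pairing with $Z_l$ and invoking the self-adjointness of $P_{\hat T^\perp}$ then yields $\Delta^{3}_l + \Delta^{2}_l - \sum_m D_{lm}(\hat\tau_m - \tau^{*}_m) = \lambda\langle W, Z_l\rangle$.

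All that remains is to evaluate $\lambda\langle W, Z_l\rangle$. Taking the inner product of the (un-projected) $M$-stationarity equation with $Z_l$ and invoking $\tau$-stationarity forces $\lambda\langle \hat U\hat V^\top + W,\,Z_l\rangle = 0$, hence $\lambda\langle W, Z_l\rangle = -\Delta^{1}_l$. Substituting this and rearranging gives $D(\hat\tau - \tau^{*}) = \Delta^{1} + \Delta^{2} + \Delta^{3}$, as claimed. The only subtleties are correctly parametrizing the nuclear-norm subgradient at a non-smooth point and keeping track of the fact that each $\delta_m$ is already supported on $\{Z_m = 1\}$, so it enters $\hat E$ without an extra Hadamard product; beyond that, the lemma is a clean KKT bookkeeping computation with no real obstacle.
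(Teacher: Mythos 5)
Your proposal is correct and follows essentially the same KKT-based route as the paper: project the $M$-stationarity condition onto $\hat T^\perp$, pair with $Z_l$ using self-adjointness of the projector, and then eliminate $\lambda\langle W, Z_l\rangle$ via the $\tau$-stationarity condition. The only cosmetic difference is the order in which the two stationarity conditions are combined; the algebra and the intermediate identities (including the observation that $\delta_m \circ Z_m = \delta_m$ because each $\delta_m$ is already supported on $\{Z_m=1\}$) match the paper's Appendix A proof.
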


Consider this error decomposition, i.e.~$\hat{\tau} - \tau^* = D^{-1}(\Delta^1 + \Delta^2 + \Delta^3)$ by \cref{eq:tau-decomposition}, and note that $D^{-1}\Delta^1$ is entirely a function of observed quantities. Thus, it is known and {\em removable}. The second step of of our algorithm, \cref{eq:construct-taud}, does exactly this. The resulting de-biased estimator, denoted $\tau^d$, is the subject of this paper. As an aside, it is worth noting that while de-biased estimators for high-dimensional inference have received considerable attention recently, our de-biasing procedure is algorithmically distinct from existing notions of de-biasing, including those for problems closely related to our own, such as matrix completion (\cite{xia2018confidence,CFMY:19}) and  panel data regression (\cite{moon2018nuclear}).

Our main results characterize the error $\tau^d - \tau^*$. The crux of this can be gleaned from the second and third terms of \cref{eq:tau-decomposition}. If $\hat{T}$ is sufficiently `close' to $T^*$, then $\Delta^3$ becomes negligible (because $P_{T^{*\perp}}(M^*)$ = 0). Showing closeness of $\hat{T}$ and $T^{*}$ is the main technical challenge of this work. The remaining error, contributed by $\Delta^2$, can then be characterized as a particular `weighted average' of the entries of $E$ and the residual matrices $\delta_m$ which we show to be min-max optimal.

To conclude this section, we give the proof of \cref{lem:tau-decomposition} for a single treatment ($k=1$); the complete proof is a straightforward generalization, completed in \cref{sec:proof-of-first-order-lemma}.

\begin{proof}[Proof of \cref{lem:tau-decomposition} for $k=1$] 
Since $k=1$, we suppress redundant subscripts. Consider the first-order optimality conditions of \eqref{eq:convex-program}:
\begin{subequations}\label{eq:convex-conditions}
\begin{align}\label{eq:convex-condition-tau}
\langle Z,{O - \hat{M}  - \hat{\tau} Z} \rangle &= 0 \\
       O - \hat{M} - \hat{\tau} Z &= \lambda (\hat{U}\hat{V}^{\top} + W), \label{eq:convex-condition-M}\\
     P_{\hat{T}^\perp}(W) &= W \label{eq:convex-condition-W2}\\
     \norm{W} &\leq 1 \label{eq:convex-condition-W1} 
\end{align}
\end{subequations}
where \cref{eq:convex-condition-tau} corresponds to the condition of $\hat{\tau}$ and \cref{eq:convex-condition-M,eq:convex-condition-W1,eq:convex-condition-W2} correspond to the conditions of $\hat{M}$ ($W$ is called the `dual certificate' in the matrix completion literature: \cite{wright2009robust,recht2011simpler,candes2009exact}).
%(to see this, when $M$ is fixed, the optimal $\tau$ is simply obtained by solving a quadratic program in a single variable), and that there exists $W \in \R^{n\times n}$ such that 
%The optimal $(\tau, M)$ for the convex program will satisfy the conditions \cref{eq:convex-condition-tau} and \cref{eq:convex-condition-M}. 
% Here, $W$ is often called the `dual certificate' for the optimality. The main difficulty in directly analyzing the conditions \cref{eq:convex-condition-tau} and \cref{eq:convex-condition-M}  is that guessing $M$ (and $\tau$) is hard since $M$ and $\tau$ are correlated. A similar difficulty also exists in the matrix completion literature. 
Combining \cref{eq:convex-condition-M,eq:convex-condition-tau}, we have
\begin{align}
    \inner{Z}{O-\hat{M}-\hat{\tau} Z} = 0 \implies \inner{Z}{\lambda (\hat{U}\hat{V}^{\top} +  W)} = 0 \implies \lambda \inner{Z}{\hat{U}\hat{V}^{\top}} = -\inner{Z}{\lambda W}.\label{eq:ZUVT-ZW}
\end{align}
Next, applying $P_{\hat{T}^{\perp}}(\cdot)$ to both sides of \cref{eq:convex-condition-M} and using \cref{eq:convex-condition-W2} and $P_{\hat{T}^{\perp}}(\hat{U}\hat{V}^{\top}) = 0$:
\begin{align}
    & P_{\hat{T}^{\perp}} (O - \hat{M} - \hat{\tau} Z) = \lambda W \nonumber\\
    \implies & P_{\hat{T}^{\perp}} (M^{*}) + P_{\hat{T}^{\perp}}(E + \delta \circ Z) - (\hat{\tau} - \tau^{*})P_{\hat{T}^{\perp}}(Z) = \lambda W, \label{eq:PTPerpO}
\end{align}
where the implication is by definition: $O = M^{*} + E + \tau^{*}Z + \delta \circ Z$ and $P_{\hat{T}^{\perp}}(\hat{M}) = 0$. 
Finally, substituting $\lambda W$ from \cref{eq:PTPerpO} into \cref{eq:ZUVT-ZW}, we obtain
\begin{align}
& \lambda \inner{Z}{\hat{U}\hat{V}^{\top}} = -\inner{Z}{P_{\hat{T}^{\perp}} (M^{*}) + P_{\hat{T}^{\perp}}(E+\delta \circ Z) - (\hat{\tau} - \tau^{*})P_{\hat{T}^{\perp}}(Z)} \nonumber\\
    \implies& (\hat{\tau} - \tau^{*}) \norm{P_{\hat{T}^{\perp}}(Z)}_{\F}^2 = \lambda \inner{Z}{\hat{U}\hat{V}^{\top}} + \inner{Z}{P_{\hat{T}^{\perp}}(E+\delta \circ Z)} + \inner{Z}{P_{\hat{T}^{\perp}} (M^{*})} \nonumber
\end{align}
This is equivalent to \cref{eq:tau-decomposition}, completing the proof. 
\end{proof}

\section{Theoretical Guarantees}
Summarizing so far, our estimator is constructed in two steps: solve the convex program in \cref{eq:convex-program} to obtain an initial estimate $(\hat{M},\hat{\tau})$, then de-bias according to \cref{eq:construct-taud}. While we have presented this estimator in a setting that allows for {\em multiple} treatments (i.e.~$k \ge 1$), for the sake of simplicity our results here (\cref{thm:error-rate-theorem,thm:main-theorem}) are restricted to the single treatment ($k=1$) setting. %Recall that the work on synthetic control, which we seek to extend, is for a single treatment and a particular form of $Z_1$ (support on a single row); our results in this section are for a single treatment but {\em general} $Z_1$. 
To ease notation, we will from here on suppress treatment-specific subscripts ($Z_1$, $\tau_1$, etc.)

%guarantee with high probability that (a) $\tau^{d}$ is within $\tilde{O}(1/\|Z\|_\F)$ of $\tau^*$, and (b) $\hat{M}$ is entry-by-entry within $\tilde{O}(1/\sqrt{n})$ of $M^*$. 

\subsection{Assumptions}
Before presenting the main result, we formally state two sets of assumptions.
\paragraph{1. Minimal Identification Conditions:}
The first is a set of identification conditions (as discussed earlier) that relate the treatment matrix $Z$ to the tangent space $T^*$ of $M^*$: %We state the formal assumptions for $Z$ below, and because of the important role that $Z$ plays in causal inference applications, a separate subsection below is devoted to delineating these conditions. %To conclude this section, for the purpose of shedding light on the type of result one can expect, we will state a simplified, special case:

\begin{assumption}[Identification] \label{assum:conditions-Z}There exist positive constants $C_{r_1}, C_{r_2}$ such that
\begin{enumerate}[label={(\alph*)}, ref={\theassumption(\alph*)}]
\item \label[assumption]{cond:Z-condition-nonconvex}
$\|ZV^{*}\|_{\F}^2 + \|{Z^{\top}U^{*}}\|_{\F}^2 \leq \left(1-\frac{C_{r_1}}{\log(n)}\right) \|{Z}\|_{\F}^2,$
\item \label[assumption]{cond:Z-condition-convex}
$
\left|\inner{Z}{U^{*}V^{*\top}}\right| \|P_{T^{*\perp}}(Z)\| \leq \left(1-\frac{C_{r_2}}{\log(n)}\right)  \|P_{T^{*\perp}}(Z)\|_{\F}^2.
$
\end{enumerate}
\end{assumption}
\cref{assum:conditions-Z} broadly expands on the set of treatment `patterns' addressed in the existing literature (such as synthetic control and panel regression; see \cref{sec:treatment-conditions}) and is mild enough to allow for many patterns that occur in practice, such as those generated dynamically (see the experiments in \cref{sec:experiments}). The conditions also can be verified easily ex-ante (see that discussion in depth in \cref{sec:treatment-conditions}). 

The necessity of \cref{assum:conditions-Z} is demonstrated by the following result, which establishes that should either of the two conditions of \cref{assum:conditions-Z} be violated by an amount that grows negligible with $n$, then identification is rendered impossible so that no estimator can recover $\tau^*$. 

\begin{proposition}\label{prop:necessity-of-conditions}
	For any $n$, there exists a matrix $Z \in \{0,1\}^{n \times n}$ and a pair of rank-1 matrices $M_1,M_2 \in \mathbb{R}^{n \times n}$ with SVDs denoted by $M_i = U_i\Sigma_iV_i^\top$ and $T_{i}$ being the tangent space of $M_i$, such that all three of the following statements hold:
	%\vspace{-1em}
	\begin{enumerate}
	\item $\norm{ZV_i}_{\F}^2 + \|{Z^TU_i}\|_{\F}^2 = \norm{Z}_{\F}^2, \;\; i=1,2$
	\vspace{2pt}
	\item $\left| \left\langle Z,U_iV_i^\top \right\rangle \right| \|{P_{T_i^{\perp}}(Z)}\| = \|P_{T_i^{\perp}}(Z)\|_{\F}^2, \;\; i=1,2   $
	\vspace{2pt}
	\item $M_1 + Z= M_2 $
	\end{enumerate}
\end{proposition}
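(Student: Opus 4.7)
The plan is to construct $(Z, M_1, M_2)$ explicitly so that each of the three equalities holds by inspection, then verify the remaining ingredients (rank, incoherence, and the two tangent-space identities) by short matrix computations. Set $n = 2k$ and write $p := \mathbf{1}_{S_1}/\sqrt{k}$ and $q := \mathbf{1}_{S_2}/\sqrt{k}$ for the normalized indicators of the two halves $S_1 = \{1,\dots,k\}$ and $S_2 = \{k+1,\dots,n\}$; these are orthonormal. Define unit vectors $u_1 = v_1 := (p-q)/\sqrt{2}$ and $u_2 = v_2 := (p+q)/\sqrt{2}$, set $M_i := k\, u_i v_i^\top$, and take $Z := M_2 - M_1 = k(pq^\top + qp^\top)$. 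A direct check shows $Z$ is the $\{0,1\}$-matrix whose ones are exactly the two off-diagonal $k \times k$ blocks, which gives condition 3 immediately; both $M_i$ are rank one by construction.

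For $\mu = 2$ incoherence, observe that $u_1, v_1, u_2, v_2$ all have entries in $\{\pm 1/\sqrt{n}\}$, so $\|U_i\|_{2,\infty} = \|V_i\|_{2,\infty} = 1/\sqrt{n} \le \sqrt{2/n} = \sqrt{\mu r/n}$.

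The verification of conditions 1 and 2 rests on the single observation that $p \perp q$ forces $u_1 \perp u_2$ and $v_1 \perp v_2$, which places $M_2 \in T_1^\perp$ and $M_1 \in T_2^\perp$. Hence $P_{T_1^\perp}(Z) = P_{T_1^\perp}(M_2 - M_1) = M_2$ and $P_{T_2^\perp}(Z) = -M_1$; both are rank one with Frobenius and operator norm equal to $k$. The same orthogonality reduces $\langle Z, U_i V_i^\top \rangle = u_i^\top (M_2 - M_1) v_i$ to $\mp k$, since $u_i^\top M_i v_i = k$ while $u_i^\top M_j v_i = 0$ for $j \neq i$. Using the Frobenius-orthogonal decomposition $Z = U_i U_i^\top Z + (I - U_i U_i^\top) Z V_i V_i^\top + P_{T_i^\perp}(Z)$, one verifies $\|ZV_i\|_{\F}^2 + \|Z^\top U_i\|_{\F}^2 = \|Z\|_{\F}^2 - \|P_{T_i^\perp}(Z)\|_{\F}^2 + (U_i^\top Z V_i)^2 = \|Z\|_{\F}^2$ (since $k^2 = k^2$), which is condition 1 at equality; and $|\langle Z, U_i V_i^\top \rangle| \,\|P_{T_i^\perp}(Z)\| = k \cdot k = k^2 = \|P_{T_i^\perp}(Z)\|_{\F}^2$, which is condition 2 at equality.

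The only real conceptual step is arriving at the construction in the first place; I would do this by noting that for rank-one $M_i = \sigma_i u_i v_i^\top$, jointly forcing equality in conditions 1 and 2 requires $P_{T_i^\perp}(Z)$ to be rank one with operator norm equal to $|\langle Z, U_i V_i^\top\rangle|$, and writing out the resulting polynomial equations yields $\sigma_1 = \sigma_2$ together with $u_1^\top u_2 = v_1^\top v_2$. Under the symmetric choice $u_i = v_i$, the requirement $Z = \sigma(u_2 u_2^\top - u_1 u_1^\top) \in \{0,1\}^{n \times n}$ is then essentially solved by the orthogonal pair $(p \pm q)/\sqrt{2}$; everything downstream is bookkeeping.
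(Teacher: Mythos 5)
Your argument is correct and rests on the same underlying idea as the paper's: pick two rank-one matrices $M_1, M_2$ whose singular subspaces are mutually orthogonal, and set $Z = M_2 - M_1$. The paper realizes this with $u_1 = \mathbf{1}_{S_1}/\sqrt{k}$ and $u_2 = \mathbf{1}_{S_2}/\sqrt{k}$, which places the ones of $Z$ in the two \emph{diagonal} $k\times k$ blocks; you use the rotated pair $(p\pm q)/\sqrt{2}$, which places them in the two \emph{anti-diagonal} blocks. The two constructions are conjugate under a $45^\circ$ rotation in the $(p,q)$ plane, so they are abstractly equivalent but concretely distinct $\{0,1\}$ witnesses. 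What you do differently is to organize the verification around the observation that orthogonality forces $P_{T_1^\perp}(Z) = M_2$ and $P_{T_2^\perp}(Z) = -M_1$; conditions 1 and 2 then follow from the Frobenius decomposition of $Z$ and the fact that $M_2$ and $-M_1$ are rank one with singular value $k$, rather than from the direct block computations the paper uses. This is a cleaner expression of why the equalities hold. As a small bonus, every entry of your $u_i$ equals $\pm 1/\sqrt{n}$, so your $M_i$ are in fact $\mu=1$ incoherent, strictly tighter than the paper's $\mu=2$.
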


\paragraph{2. Mildly-Bounded Noise:}
In addition to \cref{assum:conditions-Z}, we require mild conditions on the noise matrix $E$.
\begin{assumption}[Bounded Noise]\label{assum:operator-E-delta}
There exists a positive constant $C_{e}$ such that
$$
\max\left\{\|E\|, \frac{|\inner{E}{Z}|}{\|Z\|_{\F}}  \right\} \leq C_{e} \sigma \sqrt{n}.
$$
\end{assumption}
Note that \cref{assum:operator-E-delta} is stated for completely deterministic $E$ and $Z$. To see that it is indeed `mild', consider the following result for the Adaptive Treatment Model (\cref{def:model}):
\begin{lemma}\label{lem:adaptive-noise}
Under the model in \cref{def:model}, with probability $1-O(1/n^{C})$, we have  $\|E\| \lesssim \sigma \sqrt{n}$ and $|\inner{E}{Z}|/\|Z\|_{\F} \lesssim \sigma \log(n)$, for any $C > 0$.
\end{lemma}
\noindent Thus the Adaptive Treatment Model satisfies \cref{assum:operator-E-delta} with high probability. In fact, there is even substantial ‘slack’ (between the $O(\sqrt{n})$ condition and the $\log(n)$ guarantee).

%The third condition in \cref{assum:operator-E-delta}, i.e.~$\norm{\delta} \lesssim \sqrt{n}$, requires the aggregate size of $\delta$ to be sufficiently small (a condition for the heterogeneity of treatment effects). Note that $\delta$ is a zero-mean matrix, and is zero outside the support of $Z$. Thus one manner in which the condition is trivially met is if $\delta$ is $O(n)$ sparse -- this is the case in synthetic control settings. Another manner is if the entries of $\delta$ are, like $E$ in \cref{def:ind-noise}, randomly generated independently. This is particularly interesting as it enables an asymptotic normality result for our estimator (\cref{thm:main-theorem})
 %(that subsection can be read now without loss of continuity). 
\subsection{A Rate-Optimal Deterministic Guarantee}\label{sec:deterministic}
We can now state our main results.
The first establishes a bound on the error rate of $\tau^d$: 
%Final assumption for a meaningful guarantee is on $\mT.$  Our first main result (\cref{thm:error-rate-theorem} below) establishes a bound on the error rate of $\tau^d$. Note that $\delta$ is a zero-mean matrix and is zero outside the support of $Z$. Thus the requirement below that $\norm{\delta} \leq C_{\delta} \sigma \sqrt{n}$ is mild. 

 % (subtracted from the average), we require:
%\begin{assumption}\label{assum:tau-assumption}
%There exists positive constant $C_{\delta}$ such that $\norm{\delta} \leq C_{\delta} \sigma \sqrt{n}/r.$
%%In synthetic control when $M^{*} = 11^{\top}$, this only requires a fraction of $\frac{1}{\log(n)}$ in a row to be untreated (with $Z_{ij}=0$). 
%%\item[(b)] $|\inner{P_{T^{*}}(\delta)}{P_{T^{*}}(Z)}| \leq C_{\delta_2} \sigma  \log(n)\norm{Z}_{\F}$
%%\end{itemize}
%\end{assumption}

\begin{theorem}[Optimal Error Rate]\label{thm:error-rate-theorem}
%Suppose \cref{assum:random-noise,assum:conditions-Z,assum:tau-assumption} hold. %Let $\delta = \mT \circ Z - \tau^{*} Z$. 
Under \cref{assum:conditions-Z} and \cref{assum:operator-E-delta}, suppose
\begin{equation} \label{eq:error-rate-theorem-conditions}
\frac{\sigma \sqrt{n}}{\sigma_{\min}} \leq C_1\frac{1}{\kappa^2 r^2 \log^{5}(n)}
 \quad\text{ and }\quad
 \norm{\delta} \leq C_{\delta} \sigma \sqrt{n}.
 \end{equation} 
Taking $\tau^d$ as defined in \cref{eq:estimator}, we have that 
\begin{align*}
|\tau^{d} - \tau^{*}|  \leq C\log^{1.5}(n)\max\left(\frac{\sigma}{\norm{Z}_{\F}} \left(\frac{\sigma n\kappa^2 r^{1.5} \log^{4}(n)}{\sigma_{\min}}+1\right), \frac{|\inner{P_{T^{*\perp}}(\delta+E)}{Z}| }{\norm{Z}_{\F}^2}\right),
\end{align*}
where $C$ is a constant depending (polynomially) on $C_1, C_{r_1}, C_{r_2}, C_{\delta}$.
\end{theorem}
Recall that $\delta = \mT \circ Z - \tau^{*}Z$ denotes the matrix of treatment effect `residuals', and so the requirement that $\norm{\delta} \leq C_{\delta} \sigma \sqrt{n}$ is essentially a condition on the heterogeneity of treatment effects. Since $\delta$ is a zero-mean matrix and is zero outside the support of $Z$, the requirement is easily satisfied in practice. For example it is trivially met in synthetic control settings. It is also easily seen as met when $\delta$  has independent, sub-Gaussian entries. Finally as it turns out, the condition can also admit random sub-gaussian matrices with complex correlation patterns; see \cite{moon2015linear}.

To begin parsing \cref{thm:error-rate-theorem}, consider a `typical' scenario in which $\sigma,\kappa,r=O(1)$, and $\sigma_{\min}=\Omega(n)$. Then \cref{thm:error-rate-theorem} implies that
\begin{equation} \label{eq:simplified-rate}
	|\tau^{d}-\tau^{*}| = \tilde{O}\left(\frac{\sigma}{\|Z\|_{\F}} + \frac{|\inner{P_{T^{*\perp}}(\delta+E)}{Z}|}{\|Z\|_{\F}^2}\right).
\end{equation}
This is minimax optimal (up to $\log(n)$ factors), as shown by \cref{prop:necessity-of-conditions-tau} below, the proof of which is deferred to \cref{sec:proof-of-propositions}: 

\begin{proposition}[Minimax Lower Bound]\label{prop:necessity-of-conditions-tau}
	For any estimator $\hat{\tau}$, there exists an instance with $\sigma,\kappa,r=\Theta(1)$, and $\sigma_{\min}=\Theta(n)$, on which, with probability at least $1/3$,
	\begin{align*}
		|\hat{\tau} - \tau^{*}| \geq \max\left(\frac{\sigma}{\norm{Z}_{\F}}, \frac{|\inner{P_{T^{*\perp}}(\delta+E)}{Z}|} { \norm{Z}_{\F}^2}\right).
	\end{align*}
\end{proposition}
To further expound the error bound established in \cref{eq:simplified-rate}, consider the two terms separately. The first error term, 
$\sigma / \|Z\|_{\F},$ is the optimal rate achievable even if $M^{*}$ and $\delta$ were known and the entries of $E$ were generated independently -- thus, it is the statistical lower bound in an `idealized' setting in which the counterfactuals are known and there is zero heterogeneity in the treatment effects. 
 The second error term,  $|\inner{P_{T^{*\perp}}(\delta+E)}{Z}|/\|Z\|_{\F}^2,$ is less natural but precisely characterizes the error introduced by the deterministic disturbances $E$ and $\delta$, as shown by the matching term in \cref{prop:necessity-of-conditions-tau}. 
%Finally, since the last term in \cref{eq:simplified-rate} might seem unnatural or unintuitive, 

\paragraph{Achieving the Idealized Rate: Comparison to Existing Methods.}
In the remainder of this subsection, we will highlight some example settings in which the second error term vanishes and \cref{eq:simplified-rate} reduces to the statistical lower bound
\begin{equation} \label{eq:super-simplified-rate} |\tau^{d}-\tau^{*}| = \tilde{O}\left(\sigma/ \|Z\|_{\F} \right).  \end{equation} 
This discussion primarily serves to compare the guarantee made in \cref{thm:error-rate-theorem} against those for existing approaches. As generally assumed in the existing literature, we will assume (only here) that $E$ consists of independent noise, in which case the term $|\inner{P_{T^{*\perp}}(E)}{Z}|/\|Z\|_{\F}^2$ vanishes. 
Any of the following are, alone, sufficient to imply \cref{eq:super-simplified-rate}. See \cref{sec:discussion-of-delta} for details. 
\begin{itemize}
\item {\em Independent $\delta$:} Independent, sub-gaussian $\delta_{ij}$ with $O(1)$ sub-gaussian norm. We will see in the next subsection this also guarantees asymptotic normality of $\tau_d$.

\item {\em Synthetic control and block $Z$:} $\delta_{ij} = O(1)$, and $Z$ consists of an $\ell \times c$ block that is sufficiently sparse: $\sqrt{\ell c}(\ell + c) = O(n)$. For comparison, state-of-the-art synthetic control results (e.g.~\cite{arkhangelsky2019synthetic,agarwal2020synthetic})  require the sparser condition $\ell c(\ell + c) = O(n)$ (though that condition enables asymptotic normality).

\item {\em Panel data regression:} The conditions imposed in \cite{moon2018nuclear}, which are sufficient for `linear panel regression' methods. The most notable condition is that that $Z$ be sufficiently dense: $\|P_{T^{*\perp}}(Z)\|_{\F}^2 = \Theta(n^2)$. In this case, our result recovers their error guarantee (up to $\log$ factors). %see \cref{appendix:panel-data-regerssion} for detailed discussions.

\item {\em Matrix completion:}  the entries of $Z$ are drawn independently, which is the canonical condition for matrix completion at this moment. %(and $M^*$ be sufficiently incoherent).
\end{itemize}
As an aside, difference-in-differences would require $M^*$ to be a particular form of rank-2 matrix: $M^* = a_i + b_j$ to achieve the optimal rate. 

In summary, from \cref{thm:error-rate-theorem}, our estimator achieves the mini-max optimal rate for a very general class of $Z$ and $E$, and broadly expands the settings that are addressed in the existing literature (see \cref{sec:treatment-conditions} for more discussion).

\subsection{Asymptotic Normality}
Our second main result (\cref{thm:main-theorem}) establishes asymptotic normality for our estimator. This naturally requires some additional control over the variability of $\delta$ and $E$. We consider the setting in which $E$ and $\delta$ are generated independently. 
\begin{theorem}[Asymptotic Normality]\label{thm:main-theorem}
Suppose the entries of  $E$ and $\delta$ are independent, mean-zero, sub-Gaussian random variables with sub-Gaussian norms $\|E_{ij}\|_{\psi_2}, \|\delta_{ij}\|_{\psi_2} = O(1)$. Assume $\sigma,\kappa,r =O(1)$, $\sigma_{\min}=\Omega(n)$, and $\max(\|U^{*}\|_{2,\infty},\|V^{*}\|_{2,\infty}) =O(\sqrt{r/n})$. Then with probability $1-O(1/n^{3})$, 
\begin{align} \label{eq:main-theorem}
\tau^{d} - \tau^{*} =  \frac{\inner{E+ (\delta \circ Z)}{P_{T^{*\perp}}(Z)}}{\norm{P_{T^{*\perp}}(Z)}_{\F}^2} + O\left(\frac{\log^{8}(n)}{n}\right).
\end{align} 
Consequently, 
\begin{align} \label{eq:main-theorem-2}
\frac{\tau^{d} - \tau^{*}}{V_{\tau}^{1/2}} \rightarrow \mathcal{N}(0, 1), \quad \text{ where } \;\; V_{\tau} = \frac{\sum_{ij} P_{T^{*\perp}}(Z)_{ij}^2 \mathrm{Var}(E_{ij}+\delta_{ij}Z_{ij}) }{ \left(\sum_{ij} P_{T^{*\perp}}(Z)_{ij}^2\right)^2},
\end{align}
provided that $V_{\tau}^{1/2} = \Omega( \log^{9}(n)/n).$
\end{theorem}

%One simple consequence of the additional assumptions made in \cref{thm:main-theorem} is that $|\tau^{d}-\tau^{*}| = \tilde{O}(1/\|Z\|_{\F})$, which is the optimal rate (up to $\log n$) achieviable even when $M^{*}$ is known. More importantly, 
Asymptotic normality, e.g.~as established by \cref{thm:main-theorem}, is of econometric interest as it enables {\em inference}. Specifically, inference can be performed using a `plug-in' estimator $\hat{V}_{\tau}$ for $V_{\tau}$, gotten by substituting $\hat{T}$ for $T^{*}$ and $O-M^{d}-\tau^{d}Z$ for $E + (\delta \circ Z) $, where $M^{d}$ is a novel de-biased estimator for $M^{*}$ with entry-wise guarantees (see \cref{section:proof-of-theorem-M}). This `plug-in' estimation for variances is a common procedure in the literature (e.g. \cite{CFMY:19}), and it is straightforward to show that $\hat{V}_{\tau} \sim V_{\tau}.$ 

\vspace{1em}
\textbf{Proof Techniques:} %\label{sec:proof-technique}
The proofs of \cref{thm:error-rate-theorem,thm:main-theorem} are outlined in \cref{sec:Proof-Outline,sec:proof-main-theorem}. They are inspired by recent developments on bridging convex and non-convex formulations for matrix completion \citep{chen2019noisy} and Robust-PCA \citep{chen2020bridging}. Whereas that line of work assumes a random, independent missingness pattern, our proof extends the program to deal with deterministic treatment patterns $Z$. As such, this analysis is likely of interest, in its own right, as a complement to the matrix completion literature  \citep{abbe2017entrywise,ma2019implicit,chen2019noisy,chen2020bridging}. Broadly, we must address the issue that constructing a dual certificate to analyze the quality of our convex estimator directly is hard. Instead  \cite{chen2019noisy} show the existence of such a certificate non-constructively by studying a non-convex estimator and showing that a (fictitious) gradient descent algorithm applied to that estimator recovers a suitable dual certificate. We effectively extend that program to deterministic patterns $Z$, and provide entry-wise recovery guarantees on $M^*$ in this setting that are of independent interest.

\subsection{Applicability of Identification Condition}\label{sec:treatment-conditions}
Having stated our results, we return to our identification conditions for $Z$ (\cref{assum:conditions-Z}) and discuss various treatment patterns that are admissible under \cref{assum:conditions-Z}. These conditions are verifiable since they only depend on $Z$ that is observed. Formal proofs of all claims made here can be found in \cref{sec:proof-discussion-condition-Z}. 

\begin{enumerate}
\item \textit{Rank grows faster than $r$:} \cref{assum:conditions-Z} holds if
$$
\sum_{i=1}^r \sigma_i(Z)^2 \le \left(1 - \frac{C}{\log n} \right)\frac{\|Z\|_{\F}^2}{\sqrt{r}+2}
$$ 
where $\sigma_i(Z)$ denotes the $i$-th largest singular value of $Z$. Loosely speaking, this requires that the rank of $Z$ be strictly higher than $r$, and that less than $1/\sqrt{r}$ of its `mass' lie in its first $r$ components. Put another way, $Z$ must be sufficiently different from any rank-$r$ approximation. One common setting where this occurs is when there is sufficient randomness in generating $Z$, such as the case where the entries of $Z$ are drawn independently, which is the canonical scenario in the matrix completion literature (e.g., \cite{candes2009exact,abbe2017entrywise,ma2019implicit}). %i.i.d. $Ber(p)$ (i.e. with probability $p$, $Z_{ij}=1$). 

\item \textit{Maximal number of ones in a row and column:} Let $\ell$ and $c$ denote the maximum number of ones in a row and column, respectively, of $Z$. Let $\mu$ be the incoherent parameter of $M^{*}$, i.e., $\mu := \frac{n}{r}\max(\|U^{*}\|^2_{2,\infty},\|V^{*}\|^2_{2,\infty})$. \cref{assum:conditions-Z} holds if 
$$
\ell + c \le \left(1 - \frac{C}{\log n} \right) \frac{n}{r^2 \mu}.
$$ 
In a typical scenario $r,\mu=O(1)$, this allows $\ell,c = O(n)$ and generalizes the sparse block patterns studied in the literature (e.g. \cite{xu2017generalized,arkhangelsky2019synthetic}), where $Z$ is a two-by-two block matrix with exactly one block equal to one.
\item \textit{Single row or column (Synthetic Control):} Consider the case when $Z$ is supported on a single row (or column, equivalently), as in synthetic control. \cref{assum:conditions-Z} holds if 
$$\|z^\top V^* \|_{\F}^2 \le \left(1 - \frac{C}{\log n} - \frac{\mu r}{n} \right)  \|z\|^2,
$$
where $z^{\top}$ is the non-zero row of $Z$. This will easily hold, if allowing a negligible perturbation to either $z_1$ or the row space of $M^*$. It is also interesting to note that the identification assumption made in the canonical paper \cite{abadie2010synthetic} (i.e., $T_0^{-1}\sigma_{\min}(\sum_{i=1}^{T_0} V^{*}_iV^{*\top}_{i}) > 0$ is bounded away from zero, where $T_0 = \max_{z_i=0} i$), together with $T_0 = \Omega(n/\log(n))$ and $\mu r = O(n/\log(n))$ (also implicitly assumed in \cite{abadie2010synthetic} for an optimal gaurantee), together imply \cref{assum:conditions-Z}. 
\end{enumerate}

We end this section by drawing a connection to the literature on panel data regression with interactive fixed effects \cite{bai2009panel,moon2015linear,moon2018nuclear}. That literature studies estimators similar in spirits to ours (e.g., \cite{moon2018nuclear} analyzed the performance of the convex estimator and a heuristic de-biasing approach). However, those approaches are only known to work if  $\|P_{T^{*\perp}}(Z)\|_{\F}=\Theta(n^2)$. This is of course a substantially stronger assumption than \cref{assum:conditions-Z} and rules out sparse treatment patterns (as in synthetic control). In summary, our approach also has the potential to broaden the scope of problems addressed via panel data regression. See more discussion in \cref{appendix:panel-data-regerssion}.

\section{Experiments}\label{sec:experiments}
We conducted a set of experiments on semi-synthetic datasets (the treatment is introduced artificially and thus ground-truth treatment-effect values are known) and real datasets (the treatment is real and ground-truth treatment-effect values are unknown). The results show that our estimator $\tau^{d}$ is more accurate than existing algorithms and its performance is robust to various treatment patterns, in particular for the treatment that is \textbf{adaptively assigned depending on the historical outcomes}.\footnote{The source code is available in \url{https://github.com/TianyiPeng/Causal-Inference-Code}.}

The following four benchmarks were implemented: (i) Synthetic Difference-in-Difference (SDID) \citep{arkhangelsky2019synthetic}; (ii) Matrix-Completion with Nuclear Norm Minimization (MC-NNM) \citep{athey2021matrix} (iii) Robust Synthetic Control (RSC) \citep{amjad2018robust} (iv) Ordinary Least Square (OLS): Selects $a,b \in \R^{n}, \tau \in \R$ to minimize $\|O-a1^{T} - 1b^{T} - \tau Z\|_{\F}^2$, where $1 \in \R^{n}$ is the vector of ones. This corresponds to the canonical Difference-in-Difference (DID) method with two-way fixed effects. It is also worth noting that SDID and RSC only apply to traditional synthetic control patterns ({\em block} and {\em stagger} below).

\textbf{Warm-Up (block and stagger patterns).} The first dataset consists of the annual tobacco consumption per capita for 38 states during 1970-2001, collected from the prominent synthetic control study \citep{abadie2010synthetic} (the treated unit California is removed). Identical to \cite{athey2021matrix}, we view the collected data as $M^{*}$ and introduce artificial treatments. 
We considered two families of patterns that are common in the economics literature: {\em block}  and {\em stagger}  \citep{athey2021matrix}. Block patterns model simultaneous adoption of the treatment, while stagger patterns model adoption at different times. In both cases, treatment continues forever once adopted. Specifically, given the parameters $(m_1, m_2)$, a set of $m_1$ rows of $Z$ are selected uniformly at random. On these rows, $Z_{ij} = 1$ if and only if $j \geq t_i$, where for block patterns, $t_i = m_2$, and for stagger patterns, $t_i$ is selected uniformly from values greater than $m_2$.

To model heterogenous treatment effects, let $\mT_{ij} = \tau^{*} + \delta_{i}$ where $\delta_{i}$ is i.i.d and $\delta_{i} \sim \mathcal{N}(0, \sigma_{\delta})$ characterizes the unit-specific effect. Then the observation is $O = M^{*} + \mT \circ Z$. We fix $\tau^{*} = \sigma_{\delta} = \bar{M}^{*}/5$ through all experiments, where $\bar{M}^{*}$ is the mean value of $M^{*}$ \footnote{See \cref{appendix:experiments} for estimating row-specific treatment effects.}. The hyperparameters for all algorithms were tuned using rank $r\sim 5$ (estimated via the spectrum of $M^{*}$). 

Next, we compare the performances of the various algorithms on an ensemble of 1,000 instances with $m_1 \sim \mathrm{Uni}[1, n_1), m_2 = \mathrm{Uni}[1,n_2)$ for stagger patterns and $m_1 \sim \mathrm{Uni}[1, 5), m_2 = 18$ for block patterns (matching the year 1988, where California passed its law for tobacco control). The results are reported in the first two rows of \cref{tab:synthetic-average-delta-tau} in terms of the average normalized error $|\tau - \tau^{*}|/\tau^{*}$. 

%\footnote{Note that $\tau^{*}=\bar{M}^{*}/10$ is fixed through all instances, serving as a normalization constant.} 
Note that the treatment patterns here are `home court' for the SDID and RSC synthetic control methods but our approach nonetheless outperforms these benchmarks. One potential reason is that these methods do not leverage all of the available data for learning counterfactuals: MC-NNM and SDID ignore treated observations. RSC ignores even more: it in addition does not leverage some of the {\em untreated} observations in $M^*$ on treated units (i.e. observations $O_{ij}$ for $j <   t_i$ on treated units). 
%These quantities demonstrate that our algorithm achieves significantly higher accuracy as compared to the benchmarks. Note that synthetic-control type of methods are widely used for block and stagger patterns, one reason for our significant improvement in the showing results is that we make use of the data that are ignored by synthetic control (e.g., when a single unit is imputed, RSC ignores other treated units). 

\begin{table}[h!]
%\small
\centering
\caption{Comparison of our algorithm (De-biased Convex) to benchmarks on semi-synthetic datasets (Block and Stagger correspond to Tobacco dataset; Adaptive pattern corresponds to Sales dataset). Average normalized error $|\tau-\tau^{*}|/\tau^{*}$ is reported.}
\begin{tabular}{@{}lccccc@{}}
\toprule
Pattern & De-biased Convex & SDID & MC-NNM & RSC & OLS\\
\midrule
Block  & 0.15 ($\pm 0.13$) &0.23 ($\pm 0.19$)  & 0.27 ($\pm 0.24$) & 0.30 ($\pm 0.26$) & 0.38 ($\pm 0.36$)\\
Stagger  & 0.10 ($\pm 0.20$)  &0.16 ($\pm 0.18$)  & 0.15 ($\pm 0.16$) & 0.20 ($\pm$ 0.27) & 0.18 ($\pm$ 0.19) \\ 
\textbf{Adaptive} & 0.02 ($\pm 0.02$) & -  & 0.13 ($\pm 0.10$) & - &  0.20 ($\pm 0.18$)\\
\bottomrule
\end{tabular}
%\vspace{5pt}
\label{tab:synthetic-average-delta-tau}
\end{table}
%\textbf{Limitation.} There is a trade-off for our exploitation of treated entries. Probably rare, but if the level of fluctuations for heterogenous treatment effects surpass the signal of counterfactuals (to be precise, $\|\delta \circ Z\| \gg \sigma_{r}(M^{*})$), our performance can degrade significantly. In this case, we suggest the use of other types of methods (e.g., SC or matrix completion) in practice.
%See Appendix for more details. 

\textbf{Adaptive Treatment Pattern.} The second dataset consists of weekly sales of 167 products over 147 weeks, collected from a Kaggle competition \citep{sales2021}. In this application, treatment corresponds to various `promotions' of a product (e.g. price reductions, advertisements, etc.). %It is critical to understand the treatment effect of promotion in order to optimize the business strategy and maximize the revenue. 
We introduced an artificial promotion $Z$, used the collected data as $M^{*}$ ($\bar{M}^{*} \approx 12170$), and the goal was to estimate the average treatment effect given $O = M^{*}+\mT \circ Z$ and $Z$ ($\mT$ follows the same generation process as above with $\tau^{*} = \sigma = {\bar{M}^{*}}/{5}$).

Now the challenge in these settings is that these promotions are often decided based on previous sales. Put another way, the treatment matrix $Z$ is constructed {\em adaptively}.
%it remains a challenge task for estimating promotion effects in various business applications. 
We considered a simple model for generating adaptive patterns for $Z$: Fix parameters $(a, b)$. If the sale of a product reaches its lowest point among the past $a$ weeks, then we added promotions for the following $b$ weeks (this models a common preference for promoting low-sale products). Across our instances, $(a,b)$ was generated according to $a \in \mathrm{Uni}[5, 25], b \in \mathrm{Uni}[5, 25].$ This represents a treatment pattern where it is unclear how typical synthetic control approaches (SDID, RSC) might even be applied. 

The rank of $M^{*}$ is estimated via the spectrum with $r \sim 35$. See \cref{tab:synthetic-average-delta-tau} for the results averaged over 1,000 instances.
%\footnote{Algorithm SDID and RSC are proposed for block patterns (admitting natural generalizations for stagger patterns). It is not clear how these algorithms can be applied to more complex patterns such as the adaptive pattern tested here.}  
 The average of ${|\tau-\tau^{*}|}/{\tau^{*}}$ is $\sim 2\%$ for our algorithm, versus $13\%$ for MC-NNM, indicating a strong improvement. This demonstrates the advantage of our algorithm for \textbf{complex adaptive treatment patterns}, which widely exist in real applications. On the other hand, the performance of matrix-completion algorithms is limited for those structured and adaptive missing-ness patterns. We overcome this limitation by leveraging the treated data\footnote{There is a natural trade-off here: if the heterogeneity in $\delta$ were on the order of the variation in $M^*$ (so that $\|\delta \circ Z\| \gg \sigma_{r}(M^{*})$) then it is unclear that the treated data would help (and it might, in fact, hurt). But for most practical applications, the treatment effects we seek to estimate are typically small relative to the nominal observed values.}.
 
 %We conjecture that the reason for this is that highly structured missing-ness patterns are challenging for matrix-completion algorithms; we overcome this limitation by leveraging the treated data as well. 
 
% One reason for this significant improvement is that  the missing pattern that depends on the values of entries is problematic for matrix-completion type algorithms, but still feasible for our algorithm in which the full treatment information is utilized. 
%Finally, we also used a now-classic dataset from the field of policy evaluation, consisting of tobacco consumption data \cite{abadie2010synthetic}. See  \cref{sec:experiments-tobacco} for details. 
%cannot be implemented for the treatment patterns here. Note that the RSC algorithm was technically proposed only for a specific pattern where the treated entries are on a single row. The algorithm admits a natural generalization (which we have implemented) to the block and stagger patterns, but it is not clear how the same technique could apply to more complex patterns such as the adaptive pattern tested here.}

\textbf{Real Data.} This dataset consists of daily sales and promotion information of 571 drug stores over 942 days, collected from Rossmann Store Sales dataset \citep{Rossmannsales2021}. The promotion dataset $Z$ is binary (1 indicates a promo is running on that specific day and store). The {\em real} pattern is highly complex (see \cref{fig:rossmann-sales}) and hence synthetic-control type methods (SDID, RSC) again do not apply. Our goal here is to estimate the average increase of sales $\tau^{*}$ brought by the promotion. 

\begin{figure}[h!]
\hspace{-20pt}
\begin{subfigure}[b]{.45\textwidth}
  \centering
    \includegraphics[scale=0.4]{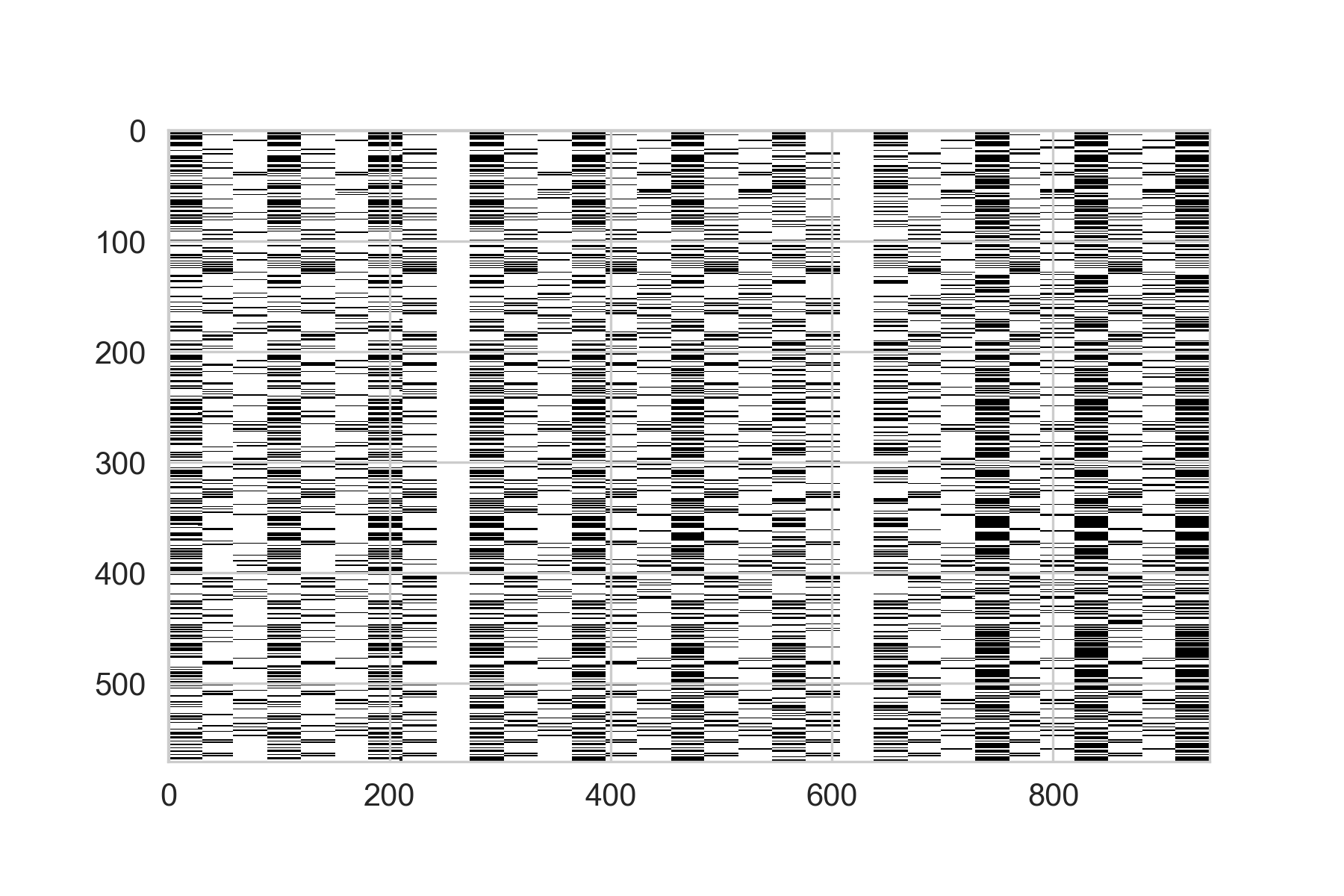}
    \label{fig:rossman-pattern}
    %\caption{aaa}
\end{subfigure}    
    %\hfill
  \begin{subfigure}[b]{0.5\textwidth}
    \centering
    %\small
    \begin{tabular}{@{}r|rrrr@{}}
	\toprule
   	& $\tau$ & Test Error\\ \hline
  	De-biased Convex & 118.2 ($\pm 2.4$) & 0.04 ($\pm 0.002$)\\
 	MC-NNM & -49.4 ($\pm 0.98$) & 0.07 ($\pm 0.002$) \\
  	OLS & -45.8 ($\pm 1.24$) & 0.18 ($\pm 0.003$)\\
	\bottomrule
	\end{tabular}
    \label{fig:block-synthetic-performance}
    \vspace{30pt}
  \end{subfigure}
    \caption{{\it Left:} The promotion pattern of the real data. {\it Right:} Estimation of $\tau$ and test errors.}
    \label{fig:rossmann-sales}
\end{figure}

The hyperparameters for all algorithms were tuned using rank $r \sim 70$ (estimated via cross validation). A test set $\Omega$ consisting of 20\% of the treated entires is randomly sampled and hidden. The test error is then calculated by $\|P_{\Omega}(M+\tau Z - O)\|_{\F}^2 / \|P_{\Omega}(O-\bar{O})\|_{\F}^2$ where $\bar{O}$ is the mean-value of $O$. \cref{fig:rossmann-sales} shows the results averaged over 100 instances. Our algorithm provides superior test error. This is potentially a conservative measure since it captures error in approximating both $M^{*}$ and $\tau^{*}$; the variation contributed by $M^*$ to observations is substantially larger that that contributed by $\tau^*$. 
Now whereas the ground-truth for $\tau^*$ is not known here, the negative treatment effects estimated by MC-NNM and OLS seem less likely since store-wise promotions are typically associated with positive effects on sales. 
%Although there is no ground-truth for $\tau^{*}$, the negative $\tau$ estimated by two benchmarks are counter-intuitive since the promotion effect should be positive, hence making their estimations less reliable. 
%The test errors further justifies the applicability of our algorithm compared to existing approaches.

\textbf{Asymptotic Normality.} The normality of our estimator is also verified, where the prediction from \cref{thm:main-theorem} is precise enough to enable inferential tasks such as constructing confidence intervals (CIs) for $\tau^{*}$: our 95\% CIs typically had ``true'' coverage rates in the range of 93-96\% for a synthetic set of instances described in Appendix \ref{appendix:experiments}. See \cref{fig:stagger-distribution}. 

\begin{figure}[h!]
%\small
\begin{subfigure}[b]{.45\textwidth}
  \centering
    \includegraphics[scale=0.4]{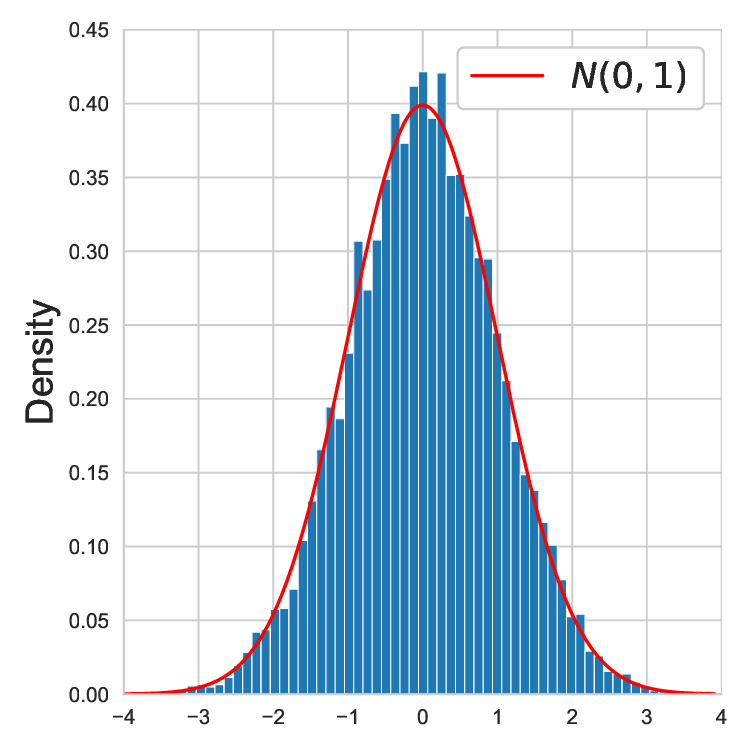}
    \label{fig:stagger-synthetic-performance}
    %\caption{aaa}
\end{subfigure}    
    \hfill
  \begin{subfigure}[b]{0.5\textwidth}
    \centering
    \begin{tabular}{@{}r|rrrr@{}}
	\toprule
   	& $n_1=50$ & $100$ & $150$ & $200$\\ \hline
  	$n_2/n_1 = 0.5$ & 0.916 & 0.957 & 0.956 & 0.942 \\
 	$1$ & 0.953 & 0.946 & 0.954 & 0.939 \\
  	$2$ & 0.946 & 0.947 & 0.945 & 0.957 \\
  	$4$ & 0.94 & 0.934 & 0.949 & 0.944\\
	\bottomrule
	\end{tabular}
    \label{fig:block-synthetic-performance}
    \vspace{20pt}
  \end{subfigure}
    \caption{ Evaluation of our distributional characterization of $\tau^d$ on a synthetic ensemble where $\delta$ and $E$ follow i.i.d Gaussian distribution. {\it Left:} Empirical Distribution of $(\tau^{d}-\tau^{*})/V_{\tau}$ with $n=100$, overlaid with the $\mathcal{N}(0,1)$ density function as predicted by \cref{thm:main-theorem}. {\it Right:} Coverage rates of $95\%$ confidence intervals (the `correct' coverage rate is 0.95) for different sizes $(n_1, n_2)$ with $r=10$ . See \cref{appendix:experiments} for the data generation processes in details. }
    \label{fig:stagger-distribution}
\end{figure}

%
%We also perform the experiments in another real dataset from the field of policy evaluation, consisting of the tobacco consumption data \cite{abadie2010synthetic}. The results consistently show the advantage of our algorithm over existing ones. See \cref{sec:experiments-tobacco} for details. 

%$M_0$ is 393 products * 245 weeks. Choose $r = 20.$

%
%\begin{table}
%\centering
%\begin{tabular}{c|c|c|c|c}
%\toprule
%& Debias Convex & MC-NNM & OLS & W-PCA\\
%\midrule
%Adaptive Pattern & 43.2 ($\pm 31.5$) &  852.5 ($\pm 579.7$) & 945.3 ($\pm 816.6$) & 3166.0 ($\pm 2365.6$)\\
%\bottomrule
%\end{tabular}
%\caption{Average value of $|\hat{\tau}-\tau^{*}|.$}
%\label{tab:beer-data}
%\end{table}

\section{Proof of \cref{thm:error-rate-theorem}: Overview}\label{sec:Proof-Outline}
We will establish the proof of \cref{thm:error-rate-theorem} by giving an overview of the entire proof in this section, and then proving the `main' lemma (\cref{lem:X-Xstar-Y-Ystar}) in \cref{sec:showing-XXstar-YYstar}.
%We conclude by establishing the proof for \cref{thm:error-rate-theorem} (\cref{sec:proof-of-error-rate,sec:showing-XXstar-YYstar}) and describing the proof ideas of \cref{thm:main-theorem} (\cref{sec:proof-main-theorem}).

%\subsection{Proof Overview}\label{sec:proof-of-error-rate} 
To begin, recall that in our problem setup, we observe $O = M^{*} + \tau^{*} Z + \hat{E}$, where the newly-defined $\hat{E} := E + \delta$ can be thought of as the total `disturbance' we must contend with. Our algorithm works by first solving the convex program defined in \cref{eq:convex-program}, which yields $(\hat{M},\hat{\tau})$, and then de-biasing $\hat{\tau}$ to produce our estimator $\tau^d$ as defined in \cref{eq:construct-taud}.

As we discussed in \cref{sec:model}, \cref{lem:tau-decomposition} (restricted here to the case $k=1$) allows us to decompose the error $|\tau^d - \tau^*|$. Specifically, letting $T$ denote the tangent space of $\hat{M}$, we have that
\begin{align} \label{eq:lem:tau-decomposition}
\|P_{T^{\perp}}(Z)\|_{\F}^2 (\tau^{d} - \tau^{*}) = \inner{P_{T^{\perp}}(Z)}{\hat{E}} + \inner{Z}{P_{T^{\perp}}(M^{*})},
\end{align}
which implies that
\begin{align}
   |\tau^{d} - \tau^{*}| 
&\leq \underbrace{\left| \frac{\inner{P_{T^{\perp}}(Z)}{\hat{E}}}{\norm{P_{T^{\perp}}(Z)}_{\F}^2}\right|}_{A_1} +\underbrace{\left| \frac{\inner{Z}{P_{T^{\perp}}(M^{*})}}{\norm{P_{T^{\perp}}(Z)}_{\F}^2} \right|}_{A_2}. \label{eq:bound-taud-taustar}
\end{align}
We will eventually bound the terms $A_1$ and $A_2$ separately to complete the proof, but it is worth pausing here to briefly survey the proof ideas from this vantage. In particular, we can see from here that the crux of the proof will be to show that $T \approx T^*$. To see that this is sufficient for bounding $A_1$ and $A_2$, suppose that $T = T^*$. Then the denominator of $A_1$ and $A_2$ is bounded from below using \cref{assum:conditions-Z}. Specifically, recall the statement of \cref{assum:conditions-Z}(a):
\begin{align*}
\norm{ZV^{*}}_{\F}^2 + \norm{Z^{\top}U^{*}}_{\F}^2 &\leq \left(1-\frac{C_{r_1}}{\log(n)}\right) \norm{Z}_{\F}^2.
\end{align*}
A direct implication of this is that\footnote{This can be simply seen by a basis transformation. See \cref{appendix:conditions-illustration} for full details.} \[\norm{P_{T^{*\perp}}(Z)}_{\F}^2 \geq \frac{C_{r_1}}{2\log(n)} \norm{Z}_{\F}^2.\] Furthermore, $A_2$ would in fact be zero because $P_{T^*}(M^*) = 0$, and bounding $A_1$ would then amount to simply controlling $\hat{E}$.

The steps of the proof can be outlined as follows (for convenience, they are numbered to match the coming subsections):
\begin{enumerate}
\item {\em Properties of a Deterministic $\hat{E}$}: We will first state the bounds for $\hat{E}$ for convenience. 
\item {\em The Main Lemma}: We prove \cref{lem:X-Xstar-Y-Ystar}, which states that $\hat{M}$ decomposes as $\hat{M} = XY^\top$ in such a way that $X,Y$ are `close' to $X^*,Y^*$ for which $M^* = X^*Y^{*\top}$. This is arguably the `main' lemma for two reasons. First, it implies that $T \approx T^*$, which is the crux of the proof as we discussed above. Second, its proof (given in \cref{sec:showing-XXstar-YYstar}) requires our key technical machinery -- introducing a non-convex proxy problem and analyzing a (purely hypothetical) gradient descent algorithm.
\item {\em Applying the Main Lemma}: Using \cref{lem:X-Xstar-Y-Ystar}, we prove that $T$ is sufficiently `close' to  $T^*$ (\cref{lem:applying-the-main-lemma}).
\item {\em Bounding $A_1$}: We bound $A_1$ using \cref{lem:applying-the-main-lemma}.
\item {\em Bounding $A_2$}: We bound $A_2$ using \cref{lem:X-Xstar-Y-Ystar,lem:applying-the-main-lemma}.
\item {\em Putting it All Together}: We insert bounds on $\hat{E},A_1,A_2$ (from steps 1,5,6) directly into \cref{eq:bound-taud-taustar}, completing the proof.
\end{enumerate}

\subsection{Properties of a Deterministic $\hat{E}$}
As a preliminary step, we control the size of $\hat{E}$ by \cref{assum:operator-E-delta} and $\|\delta\| \lesssim \sigma\sqrt{n}$. The arguments in the rest of the proof will only require that $\hat{E}$ satisfy the following equations:
\begin{subequations}\label{eq:assumption-Ehat}
\begin{align}
\|\hat{E}\| &\lesssim \sigma \sqrt{n}, \label{eq:op-E}\\
|\inner{Z}{\hat{E}}| &\lesssim  \sigma \sqrt{n} \|Z\|_{\F}, \label{eq:inner-Z-E}
%|\inner{\hat{E}}{P_{T^{*\perp}}(Z)}| &\lesssim \sigma \sqrt{n} \|Z\|_{\F} \label{eq:inner-TperpZ-E}.
\end{align}
\end{subequations}
For \cref{eq:op-E}, it is simply by $\|E\| \lesssim \sigma \sqrt{n}, \|\delta\| \lesssim \sigma \sqrt{n}$; for \cref{eq:inner-Z-E}, note that $\inner{Z}{\delta}=0$ by the definition of $\delta$ as the residuals, and $|\inner{Z}{E}| \lesssim \sigma \|Z\|_{\F} \sqrt{n}$ (by \cref{assum:operator-E-delta}).

\subsection{The Main Lemma}
Treating $\hat{E}$ as deterministic and satisfying \cref{eq:assumption-Ehat}, we will prove a rate guarantee for $|\tau^{d}-\tau^{*}|$ which has \cref{thm:error-rate-theorem} as a special case (see \cref{eq:taud-taustar-determinisitc-bound} below). 
To begin, recall that $(\hat{M}, \hat{\tau})$ denotes an optimizer of the convex program $g(\cdot,\cdot)$ defined in \cref{eq:convex-program}, and that $M^{*}=U^{*}\Sigma^{*}V^{*\top}$ is the SVD of $M^{*}$. We can then decompose $M^{*}$ as $M^{*}=X^{*}Y^{*\top}$, where
 $$
X^{*} := U^{*}\Sigma^{*1/2}, Y^{*} := V^{*}\Sigma^{*1/2}.
$$

The key step in our proof of \cref{thm:error-rate-theorem} is to show that $\hat{M}$ can be similarly decomposed as 
$
\hat{M} = XY^{\top},
$
where $X, Y \in \R^{n\times r}$ are sufficiently close (in Frobenius norm) to $X^{*}, Y^{*}$ respectively. To be `sufficiently close' will mean that $X,Y$ lie in the following subset of  $\mathbb{R}^{n \times r} \times \mathbb{R}^{n \times r}$, which appears frequently enough in our proof to warrant its own symbol:
\begin{align} \label{eq:X-Xstar-Y-Ystar}
\mathcal{B} = \left\{(X, Y)~\Big|~\|X - X^{*}\|_{\F} + \|Y - Y^{*}\|_{\F} \leq \error \|X^{*}\|_{\F}\right\}.
\end{align}
To parse the definition of $\Bscr$, note that $\|X^{*}\|_{\F}$ in the RHS of \cref{eq:X-Xstar-Y-Ystar} can be replaced by $\|Y^{*}\|_{\F}$ (since $\|X^{*}\|_{\F}=\|Y^{*}\|_{\F} = \|\Sigma^{*1/2}\|_{\F}$), so the definition of $\Bscr$ is only asymmetric in appearance. There is also implicitly a constant on the RHS of \cref{eq:X-Xstar-Y-Ystar}, which we have taken to be 1, but could be any positive value for the sake of this proof. 
Finally, the coefficient in the RHS of \cref{eq:X-Xstar-Y-Ystar} will, by assumption, {\em shrink} with $n$, since a rearrangement of assumption \cref{eq:error-rate-theorem-conditions} in \cref{thm:error-rate-theorem} gives $\error \lesssim \frac{1}{\kappa^2 r^2 \log^{2.5}(n)}$.
%$\frac{\sigma \sqrt{n}}{\sigma_{\min}} \lesssim \frac{1}{\kappa^2 r^2 \log^{5}(n)}$ (\cref{eq:error-rate-theorem-conditions} in \cref{thm:error-rate-theorem}), we have $\frac{\sigma \sqrt{n} \log^{2.5}(n)}{\sigma_{\min}} \lesssim \frac{1}{\kappa^2 r^2 \log^{2.5}(n)}$, which 
Indeed, the particular form of that assumption was chosen to be sufficiently small to enable the remainder of the proof, but in the `typical' regime where $\sigma_{\min}=\Omega(n)$ and  $\sigma = O(1)$, we would have that $\error = \tilde{O}(1/\sqrt{n})$.

The key step then is summarized in the following lemma. Its proof is the primary obstacle in proving \cref{thm:error-rate-theorem}, and is deferred to \cref{sec:showing-XXstar-YYstar}. 
\begin{lemma}[The Main Lemma] \label{lem:X-Xstar-Y-Ystar}
For sufficiently large $n$, there exists $(X,Y) \in \Bscr$ (as defined in \cref{eq:X-Xstar-Y-Ystar}) such that $\hat{M} = XY^{\top}$.
\end{lemma}

\subsection{Applying the Main Lemma} Now take $X,Y$ from \cref{lem:X-Xstar-Y-Ystar}.
Let $XY^{\top}=U \Sigma V^{\top}$ be the SVD of $XY^{\top}$, and let $T$ be the tangent space of $XY^{\top}$. Intuitively, \cref{eq:X-Xstar-Y-Ystar} implies that $X\approx X^{*}$ and $Y\approx Y^{*}$, which will also imply $XY^{\top} \approx M^{*}$ and $T \approx T^{*}.$ Indeed, the following technical lemma makes this explicit. Its proof is by straightforward (yet lengthy) algebra, and thus is deferred to \cref{sec:tech-lemmas-main-theorem}. 

\begin{lemma}\label{lem:applying-the-main-lemma}
For any $(X,Y) \in \Bscr$, let $T$ be the tangent space of $XY^\top$ and $XY^{\top}=U\Sigma V^{\top}$ be the SVD of $XY^{\top}$. Then,
\begin{subequations}
\begin{align}
\norm{ZV}_{\F}^2 + \norm{Z^{\top}U}_{\F}^2 &\leq \left(1-\frac{C_{r_1}}{2\log(n)}\right) \norm{Z}_{\F}^2 \label{eq:assumption3a-mimic}\\
\left|\inner{Z}{U V^{\top}}\right| \norm{P_{T^{\perp}}(Z)} &\leq \left(1-\frac{C_{r_2}}{2\log(n)}\right)  \norm{P_{T^{\perp}}(Z)}_{\F}^2.\label{eq:assumption3b-mimic}
\end{align}
\end{subequations}
As direct implications, the followings hold
\begin{subequations}
\begin{align}
 \norm{P_{T^{\perp}}(Z)}_{\F}^2 &\geq \frac{C_{r_1}}{2\log(n)} \norm{Z}_{\F}^2, \label{lem:assumption-hold-around-ball}\\
  %   \norm{P_{T^{\perp}}(Z) - P_{T^{*\perp}}(Z)}_{\F} &\lesssim \frac{\sigma r\kappa^2\sqrt{n} \log^{2.5}(n)}{\sigma_{\min}} \norm{Z}_{\F},\label{lem:PTperpZ-near-PTstarperpZ}\\
    \norm{P_{T^{\perp}}(Z) - P_{T^{*\perp}}(Z)}_{*} &\lesssim \frac{\sigma r^{1.5}\kappa^2\sqrt{n} \log^{2.5}(n)}{\sigma_{\min}} \norm{Z}_{\F}.\label{lem:PTperpZ-near-PTstarperpZ-2}  
\end{align}
\end{subequations}
\end{lemma}
The first part of \cref{lem:applying-the-main-lemma} directly links to \cref{assum:conditions-Z} ($\norm{ZV^{*}}_{\F}^2 + \|{Z^{\top}U^{*}}\|_{\F}^2 \leq \left(1-C_{r_1}/\log(n)\right) \|{Z}\|_{\F}^2$ and $\left|\inner{Z}{U^{*}V^{*\top}}\right| \|P_{T^{*\perp}}(Z)\| \leq \left(1-C_{r_2}/\log(n)\right)  \|P_{T^{*\perp}}(Z)\|_{\F}^2$). Essentially, the \cref{eq:assumption3a-mimic} and \cref{eq:assumption3b-mimic} state that any $(X, Y) \in \mathcal{B}$ enjoys similar conditions as \cref{assum:conditions-Z}, which will be useful as the proof proceeds. 

\cref{lem:assumption-hold-around-ball} establishes a lower bound of $\|P_{T^{\perp}}(Z)\|_{\F}$ and \cref{lem:PTperpZ-near-PTstarperpZ-2} states that $P_{T^{\perp}}(Z) \approx P_{T^{*\perp}}(Z)$ as measured by the nuclear norm. They are direct implications of \cref{eq:assumption3a-mimic,eq:assumption3b-mimic}. The details are deferred to \cref{sec:tech-lemmas-main-theorem}.

\subsection{Bounding $A_1$} 
\begin{align}
A_1 &=  \left| \frac{\inner{P_{T^{\perp}}(Z)}{\hat{E}}}{\norm{P_{T^{\perp}}(Z)}_{\F}^2} \right| \nonumber \\
&\leq \left| \frac{\inner{P_{T^{\perp}}(Z)-P_{T^{*\perp}}(Z)}{\hat{E}}}{\norm{P_{T^{\perp}}(Z)}_{\F}^2}\right| + \left| \frac{\inner{P_{T^{*\perp}}(Z)}{\hat{E}}}{\norm{P_{T^{\perp}}(Z)}_{\F}^2}\right| \nonumber\\
&\overset{(i)}{\leq} \frac{ \|P_{T^{\perp}}(Z)-P_{T^{*\perp}}(Z)\|_{*}\|\hat{E}\|}{\norm{P_{T^{\perp}}(Z)}_{\F}^2} + \left| \frac{\inner{P_{T^{*\perp}}(Z)}{\hat{E}}}{\norm{P_{T^{\perp}}(Z)}_{\F}^2}\right| \nonumber\\
&\overset{(ii)}{\lesssim} \frac{\sigma r^{1.5}\kappa^2 \sqrt{n} \log^{3.5}(n)}{\sigma_{\min} \|Z\|_{\F}} \|\hat{E}\| + \frac{\log(n)}{\|Z\|_{\F}^2} \left|\inner{P_{T^{*\perp}}(Z)}{\hat{E}}\right| \label{eq:bound-A1-main-theorem}
\end{align}
where (i) is due to the trace inequality, and (ii) uses \cref{lem:assumption-hold-around-ball,lem:PTperpZ-near-PTstarperpZ-2}.

\subsection{Bounding $A_2$} 
\begin{align}
\left|\frac{\inner{Z}{P_{T^{\perp}}(M^{*})}}{\norm{P_{T^{\perp}}(Z)}_{\F}^2}\right| &= \left|\frac{\inner{P_{T^{\perp}}(Z)}{P_{T^{\perp}}(M^{*})}}{\norm{P_{T^{\perp}}(Z)}_{\F}^2}\right| \nonumber\\
&\overset{(i)}{\leq} \frac{\|P_{T^{\perp}}(Z)\|_{\F} \|P_{T^{\perp}}(M^{*})\|_{\F}}{\norm{P_{T^{\perp}}(Z)}_{\F}^2} \nonumber\\
&= \frac{\|P_{T^{\perp}}(M^{*})\|_{\F}}{\|P_{T^{\perp}}(Z)\|_{\F}} \overset{(ii)}{\lesssim} \frac{\|P_{T^{\perp}}(M^{*})\|_{\F}\log^{0.5}(n)}{\|Z\|_{\F}} \label{eq:ZPhatTMstar},
%\overset{(ic)}{\lesssim} \frac{\sigma^2 \kappa r n \log^{5.5}(n)}{\sigma_{\min} } \frac{1}{\|Z\|_{\F}}.
\end{align}
where (i) is Cauchy-Schwartz, and (ii) uses  \cref{lem:assumption-hold-around-ball}. For $\|P_{T^{\perp}}(M^{*})\|_{\F}$, note that $P_{T^{\perp}}(XA^{\top}) = P_{T^{\perp}}(AY^{\top})=0$ for any $A \in \R^{n\times r}$ by the definition of the tangent space $T$ and the non-degeneracy of $X, Y$. This implies that
\begin{align}
\|P_{T^{\perp}}(M^{*})\|_{\F} \nonumber
&= \|P_{T^{\perp}}(X^{*}Y^{*\top})\|_{\F} \nonumber\\
&\overset{(i)} = \|P_{T^{\perp}}((X-X^{*})(Y-Y^{*})^{\top})\|_{\F} \nonumber\\
&\leq \|(X-X^{*})(Y-Y^{*})^{\top}\|_{\F} \nonumber\\
&\leq \|X-X^{*}\|_{\F} \|Y-Y^{*}\|_{\F} \overset{(ii)}{\lesssim} \frac{\sigma^2 n \log^{5}(n)}{\sigma_{\min}^2} \|X^{*}\|_{\F}^2 \label{eq:PhatTMstar}
\end{align}
where (i) is due to $P_{T^{\perp}}(X^{*}Y^{\top})=P_{T^{\perp}}(XY^{*\top})=P_{T^{\perp}}(XY^{\top})=0$, and (ii) is by \cref{lem:X-Xstar-Y-Ystar}. Plugging \cref{eq:PhatTMstar} back into \cref{eq:ZPhatTMstar} with $\|X^{*}\|_{\F}^2 \leq \sigma_{\max}r$, we have
\begin{align}
A_2 \leq \frac{\sigma^2 n \log^{5.5}(n) \kappa r}{\sigma_{\min}} \frac{1}{\|Z\|_{\F}}. \label{eq:bound-A2-main-theorem}
\end{align}

\subsection{Putting it All Together} Circling back to \cref{eq:bound-taud-taustar} from the beginning of the proof:
\begin{align*}
   |\tau^{d} - \tau^{*}| 
&\leq \underbrace{\left| \frac{\inner{P_{T^{\perp}}(Z)}{\hat{E}}}{\norm{P_{T^{\perp}}(Z)}_{\F}^2}\right|}_{A_1} +\underbrace{\left| \frac{\inner{Z}{P_{T^{\perp}}(M^{*})}}{\norm{P_{T^{\perp}}(Z)}_{\F}^2} \right|}_{A_2} \\
&\overset{(i)}{\lesssim} \frac{\sigma r^{1.5}\kappa^2 \sqrt{n} \log^{3.5}(n)}{\sigma_{\min} \|Z\|_{\F}} \|\hat{E}\| + \frac{\log(n)}{\|Z\|_{\F}^2} \left|\inner{P_{T^{*\perp}}(Z)}{\hat{E}}\right|  + \frac{\sigma^2 n \log^{5.5}(n) \kappa r}{\sigma_{\min}} \frac{1}{\|Z\|_{\F}}
\end{align*}
where (i) comes directly from the bounds for $A_1$ and $A_2$ in \cref{eq:bound-A2-main-theorem,eq:bound-A1-main-theorem}. Rearranging terms above, we obtain the following bound for $|\tau^{d}-\tau^{*}|$ (with deterministic $\hat{E}$):
\begin{align}
|\tau^{d} - \tau^{*}| = \frac{\sigma r^{1.5} \kappa^2 \log^{3.5}(n) \sqrt{n}}{\sigma_{\min} \|Z\|_{\F}} O\left(\|\hat{E}\| + \sigma\sqrt{n}\log^{2}(n)\right) + O\left(\frac{\log(n)}{\|Z\|_{\F}^2}\right) \left|\inner{P_{T^{*\perp}}(Z)}{\hat{E}}\right|. \label{eq:taud-taustar-determinisitc-bound}
\end{align}
\cref{thm:error-rate-theorem} is then a direct implication of \cref{eq:taud-taustar-determinisitc-bound}: plugging the bounds $\|\hat{E}\| \lesssim \sigma \sqrt{n}$ and $\inner{P_{T^{*\perp}}(Z)}{\hat{E}} = \inner{P_{T^{*\perp}}(E)}{Z} + \inner{P_{T^{*\perp}}(\delta)}{Z}$ into \cref{eq:taud-taustar-determinisitc-bound} completes the proof of \cref{thm:error-rate-theorem}.

\section{Proof Sketch of \cref{thm:main-theorem}} \label{sec:proof-main-theorem}
We conclude by describing the proof ideas of \cref{thm:main-theorem} (the complete proof is contained in \cref{sec:full-details-proof-main}). First, a few simplifying reductions:
\begin{itemize}
\item \cref{thm:main-theorem} contains two main conclusions: \eqref{eq:main-theorem} and \eqref{eq:main-theorem-2}.
It suffices to show \eqref{eq:main-theorem}, as \eqref{eq:main-theorem-2} follows directly from \eqref{eq:main-theorem} by applying Berry-Esseen type inequalities. 
\item In the statement of \cref{thm:main-theorem}, $\delta$ is assumed to consist of independent, mean-zero, sub-Gaussian variables with $O(1)$ sub-Gaussian norm. These same assumptions are made on $E$, and so we can without loss assume that $\delta = 0$ since it can be absorbed into $E$.
\end{itemize}

The overall framework of the proof is similar to that of \cref{thm:error-rate-theorem} above, but requires a substantially more-refined analysis for controlling $\tau^{d}-\tau^{*}$, in order to show asymptotic normality. 
For example, as we will see in a moment, the bounds on Frobenius error $\|X-X^{*}\|_{\F}, \|Y-Y^{*}\|_{\F}$, as used in the proof of \cref{thm:error-rate-theorem} (\cref{sec:Proof-Outline,sec:showing-XXstar-YYstar}), are insufficient for establishing asymptotic normality. Instead we require error bounds on row-norms  $\|X-X^{*}\|_{2,\infty}, \|Y-Y^{*}\|_{2,\infty}$. Similarly, more-refined versions of the statement that $T \approx T^{*}$ (recall that $T$ is the tangent space of $XY^{\top}$) are needed. 

Just as in the proof of \cref{thm:error-rate-theorem}, we start from \eqref{eq:lem:tau-decomposition} (i.e. \cref{lem:tau-decomposition}) and decompose $\tau^d - \tau^*$, but this time more carefully, as follows:
\begin{align*}
   \norm{P_{T^{\perp}}(Z)}_{\F}^2(\tau^{d} - \tau^{*} )
&= \inner{P_{T^{\perp}}(Z)}{E} + \inner{Z}{P_{T^{\perp}}(M^{*})} \\
&= \inner{Z}{P_{T^{\perp}}(E)}
+ \inner{Z}{P_{T^{\perp}}(M^{*})} \\
&= \inner{Z}{P_{T^{*\perp}}(E)}
+ \underbrace{\inner{Z}{P_{T^{\perp}}({E})-P_{T^{*\perp}}({E})}}_{A_1}
+ \underbrace{\inner{Z}{P_{T^{\perp}}(M^{*})}}_{A_2}
\end{align*}
The previous proof had the first two terms in the last line combined. This time, the first term is the `source' of the limiting distribution, and so it remains to bound the other two terms: $A_1$ and $A_2$. For both terms, we will use the fact that the entries of $Z$ are binary. So for $A_1$:
%&\le \|Z\|_\F^2 \left( \underbrace{\|{P_{T^{*\perp}}(E)}\|_\infty}_{A_1}
%+ \underbrace{\|{P_{T^{\perp}}({E})-P_{T^{*\perp}}({E})}\|_\infty}_{A_2}
%+ \underbrace{\|{P_{T^{\perp}}(M^{*})}\|_\infty}_{A_3} \right)
%\end{align*}
%(the previous proof had the terms $A_1$ and $A_2$ combined, and $\hat{E} = E + \delta$ has been replaced with $E$ since we assume $\delta = 0$).
%{\em Bounding $A_1$}: First, from the fact that the entries of $Z$ are binary, we have that
\begin{align*}
\inner{Z}{P_{T^{\perp}}({E})-P_{T^{*\perp}}({E})} \le \|Z\|_\F^2 \|	P_{T^{\perp}}({E})-P_{T^{*\perp}}({E})\|_\infty 
\end{align*}
%{\em Bounding $A_2$}:
Similarly for $A_2$, starting again with the fact that the entries of $Z$ are binary:
\begin{align*}
\inner{Z}{P_{T^{\perp}}(M^{*})} 
& \le \|Z\|_\F^2 \|P_{T^{\perp}}(M^{*})\|_\infty.
%& = \|Z\|_\F^2 \| (I - UU^\top)U^*\Sigma^* V^{*\top} (I-VV^\top)  \|_\infty \\
%& \overset{(i)}{\le}  \|Z\|_\F^2 \|\Sigma^*\| \| (I - UU^\top)U^* \|_{2,\infty}  \| (I-VV^\top)V^{*}  \|_{2,\infty} 
\end{align*}
%where (i) uses the fact that $\|AB^\top\|_\infty \le \|A\|_{2,\infty} \|B\|_{2,\infty}$ for any matrices $A,B$.

Stepping through the proof of \cref{thm:error-rate-theorem}, the terms $\|P_{T^{\perp}}({E})-P_{T^{*\perp}}({E})\|_\infty$, $\|P_{T^{\perp}}(M^{*})\|_{\infty}$ are {\em not} directly controlled, but can be with more careful analysis. For example, the following lemma establishes these guarantees for the iteration sequences $\{(X^{t}, Y^{t}, \tau^{t})\}$ in Algorithm \ref{alg:GD}, as a significant generalization of \cref{lem:induction-Frobenious}.
Incidentally, \cref{thm:non-convex-theorem} also enables entry-wise control for recovering $M^{*}$, and thus may be of independent interest in matrix completion \citep{ma2019implicit,chen2019noisy,chen2020nonconvex}.  The proof is in \cref{sec:proof-non-convex-iteration}.

\begin{restatable}{lemma}{ThmNonConvexTheorem}\label{thm:non-convex-theorem}
%Assume \cref{cond:Z-condition-nonconvex} holds. 
Suppose $O = M^{*} + \tau^{*}Z + E$, where the entries of $E$ are independent sub-Gaussian random variables with $\|E_{ij}\|_{\psi_2} \leq \sigma.$ 

%Suppose \[\kappa^{4}\mu^2 r^2 \log^2(n) \leq C_1 n \;\;\;\;\; \text{and} \;\;\;\;\; \frac{\sigma}{\sigma_{\min}} \sqrt{n} \leq C_2\frac{1}{\kappa^3 \mu r^{4.5}\log^{5}(n)}.\] 
Let $\lambda = C_{\lambda} \sigma \sqrt{n} \log^{1.5}(n), \eta = \frac{C_{\eta}}{n^{20}\sigma_{\max}}, t^{\star} = n^{C_t}$. For any $C_3 > 0$, for large enough $n$, with probability $1-O(n^{-C_3})$, the iteration $(X^{t}, Y^{t}, \tau^{t})_{0\leq t\leq t^{\star}}$ given in Algorithm \ref{alg:GD} satisfies the following:
\begin{subequations}\label{eq:induction-main}
\begin{align}
    \norm{F^{t}H^{t} - F^{*}}_{\F}&\leq C_{\F} \error \norm{F^{*}}_{\F} \label{eq:FtHt-Fstar-Fnorm-main}\\
    %\norm{F^{t+1}H^{t+1} - F^{*}} &\leq C_{\op} \error \norm{F^{*}} \label{eq:FtHt-Fstar-opnorm}\\
   \norm{F^{t}H^{t} - F^{*}}_{2,\infty} &\leq C_{\infty}\left(\frac{\sigma \mu r^{2.5} \kappa \log^{3.5}(n)}{\sigma_{\min}}\right)\norm{F^{*}}_{\F} \label{eq:FH-Fstar-rownorm-main}\\
    |\tau^{t} - \tau^{*}| &\leq C_{\tau} \left( \frac{\sigma \mu r^{2.5} \kappa \log^{3.5} n}{\sqrt{n}} + \frac{\sigma \log^{1.5} n}{\norm{Z}_{\F}}\right)\label{eq:tau-taustar-main}\\
    \min_{0\leq t\leq t^{\star}} \norm{\nabla f(X^{t}, Y^{t}; \tau^{t})}_{\F} &\leq \frac{\lambda \sqrt{\sigma_{\min}}}{n^{10}} \label{eq:gradient-small-main}.%\norm{X^{t+1, (l)T}X^{t+1, (l)} - Y^{t+1,(l)T}Y^{t+1, (l)}}_{\F} &\leq C_{B} \kappa \eta \error \sqrt{r}\sigma_{\max}^2 \label{eq:XlXl-YlYl}\\
\end{align}
\end{subequations}
Furthermore, let $T_t$ be the tangent space of $X^{t}Y^{t\top}$. With probability $1-O(n^{-C_3})$, the following hold for all iterations $0\leq t \leq t^{\star}$ simultaneously:
\begin{subequations}\label{eq:PT-bound-main}
\begin{align}
    \norm{P_{T_t^{\perp}}(M^{*})}_{\infty} &\leq  C_{T,1}\frac{\sigma^2 \mu^2 r^{6} \kappa^{3} \log^{7}(n)}{\sigma_{\min}}\label{eq:PTM-bound-main}\\
    \norm{P_{T_t^{\perp}}(E) - P_{T^{*\perp}}(E)}_{\infty} &\leq C_{T,2}\frac{\sigma^2 r^{3.5} \mu^{1.5} \kappa^{2} \log^{4}(n)}{\sigma_{\min}}\label{eq:PTE-bound-main}\\
    \norm{P_{T_t^{\perp}}(Z)-P_{T^{*\perp}}(Z)}_{\F} &\leq C_{T,3}\frac{r\kappa \sigma \sqrt{n}\log^{2.5}(n)}{\sigma_{\min}} \norm{Z} 
    \label{eq:PTZ-bound-main}.
    %\norm{P_{T_t^{\perp}}(Z) - P_{T^{*\perp}}(Z)}_{\infty} &\leq C_{T,4}\frac{r^{3.5} \kappa^{1.5} \sigma \mu^{1.5} \log^{3.5}(n)}{\sigma_{\min}} \left(\norm{Z}_{2,\infty} + \|{Z^{\top}}\|_{2,\infty}\right) \label{eq:PTZ-infinity-bound-main}.
\end{align}
\end{subequations}
Here, $C_{\lambda}$, $C_{\eta}$, $C_{t}$, $C_{\F}$, $C_{\infty}$, $C_{\tau}$, $C_{T,1}$, $C_{T,2}$, $C_{T,3}$ are constants depending (polynomially) on $C_1$, $C_2$, $C_3$, $C_{r_1}$ (where $C_{r_1}$ is the constant in \cref{cond:Z-condition-nonconvex}).
\end{restatable}

In contrast to \cref{lem:induction-Frobenious}, \cref{thm:non-convex-theorem}, involving probabilistic statements, establishes more bounds including $|\tau^{t}-\tau^{*}|$ (\cref{eq:tau-taustar-main}) and $\|F^{t}H^{t}-F^{*}\|_{2,\infty}$ (\cref{eq:FH-Fstar-rownorm-main}), as well as various bounds that show $T^{t} \approx T^{*}$ (\cref{eq:PT-bound-main}). The proof of \cref{thm:non-convex-theorem} relies on mathematical induction and the \textit{leave-one-out} technique developed in \cite{ma2019implicit,abbe2017entrywise,chen2019noisy}, which introduces a set of auxiliary loss functions to facilitate the analysis of the gradient descent algorithm. We also establish a number of technical innovations to minimize additional assumptions on $Z$ while maintaining the desired estimation error bounds. 

Note that \cref{thm:non-convex-theorem} only holds for $t^{\star}$ (a finite number) steps, from which only an approximate critical point of $f$ with $\nabla f \approx 0$ can be found, instead of an exact critical point. This necessitates a generalization of \cref{lem:connection-deterministic}, to connect $f$ and $g$ approximately. The lemma below establishes such a connection: an approximate critical point of $f$ with (extremely) small gradient is (extremely) close to the optimizer of $g$.

\begin{lemma}\label{lem:connection}
Assume $(X,Y) \in \Bscr$ and $\|\nabla f(X, Y; \tau)\|_{\F} \leq \frac{\lambda\sqrt{\sigma_{\min}}}{\kappa n}$ with $\tau = \inner{Z}{O-XY^{\top}}/\|Z\|_{\F}^2$. Let $(\hat{M}, \hat{\tau})$ be an optimal solution of the convex program \cref{eq:convex-program}.

Then for any $C>0$, for large enough $n$, the following hold with probability $1-O(n^{-C}):$
\begin{align}
    \|{XY^{\top} - \hat{M}}\|_{\F} &\lesssim \frac{\kappa \log(n)}{\sqrt{\sigma_{\min}}}  \norm{\nabla f(X,Y; \tau)}_{\F}, \nonumber\\
    |\tau - \hat{\tau}| &\lesssim \frac{\kappa \log(n)}{\sqrt{\sigma_{\min}} \norm{Z}_{\F}}  \norm{\nabla f(X,Y; \tau)}_{\F}. \nonumber
\end{align}
%Furthermore, letting $\hat{T}, T$ be the tangent spaces of $\hat{M}, XY^{\top}$ respectively, we have 
%\begin{align*}
%\|P_{\hat{T}^{\perp}}(A) - P_{T^{\perp}}(A)\|_{\F} &\lesssim \norm{A} \frac{\kappa \log(n)}{\sigma_{\min}^{1.5}} \|\nabla f(X, Y; \tau)\|_{\F} \;\;\;\text{ for any } A \in \R^{n\times n},\\
% \|P_{\hat{T}^{\perp}}(XY^{\top})\|_{\F}  &\lesssim \frac{\kappa^{3}\log^{2}(n)}{\sigma_{\min}^2} \|\nabla f(X,Y; \tau)\|_{\F}^2.
%\end{align*}
%$\|P_{\hat{T}^{\perp}}(A) - P_{T^{\perp}}(A)\|_{\F} \lesssim \norm{A} \frac{\kappa \log(n)}{\sigma_{\min}^{1.5}} \|\nabla f(X, Y; \tau)\|_{\F}$ and $
%    \|P_{\hat{T}^{\perp}}(XY^{\top})\|_{\F}  \lesssim \frac{\kappa^{3}\log^{2}(n)}{\sigma_{\min}^2} \|\nabla f(X,Y; \tau)\|_{\F}^2.$
%Here, $C$ is a constant depending (polynomially) on $C_1, C_2, C_3, C_{\lambda}, C_{\F}$.
\end{lemma}

%\textit{Proof Sketch of \cref{thm:main-theorem}.} 
Finally, given \cref{lem:tau-decomposition,lem:connection,thm:non-convex-theorem}, similar to the proof of \cref{thm:error-rate-theorem}, we establish the desired results by controlling the error of $P_{\hat{T}^{\perp}}(M^{*})$, $P_{\hat{T}^{\perp}}(E) - P_{T^{*\perp}}(E)$, and showing that $P_{\hat{T}^{\perp}}(Z) \approx P_{T^{*\perp}}(Z)$, which matches the bounds provided in \cref{thm:non-convex-theorem}.

\section{Conclusion}\label{sec:conclusion}

Motivated by the extremely important econometric problem of estimating treatment effects from panel data, we studied a natural formulation of this problem as one of recovering an unknown quantity that has been added to an unknown low-rank matrix at a subset of its entries. We proposed an estimator based on solving, and de-biasing, a natural regularized least-squares objective. We built on recent techniques for establishing entry-wise guarantees that leverage a connection between convex and non-convex formulations, and proved that our estimator is order-optimal with nearly minimal conditions on the underlying low-rank matrix and the pattern of `treated' entries.

{
\spacingset{1.5}
\bibliographystyle{informs2014}
\bibliography{reference.bib}
}
\newpage
\begin{appendices}
\tableofcontents
\crefalias{section}{appendix}
\crefalias{subsection}{appendix}
\section{Proof of Lemma \ref{lem:tau-decomposition}} \label{sec:proof-of-first-order-lemma}
We begin by stating the first order optimality conditions for \cref{eq:convex-program}:
\begin{subequations}\label{eq:convex-conditions-mdim}
\begin{align}\label{eq:convex-condition-tau-mdim}
\left\langle Z_l, O - \hat M - \sum_{m=1}^k \hat \tau_m Z_m \right\rangle = 0 
\quad {\rm for \ } l = 1,2,\dots,k
\end{align}
\end{subequations}
\addtocounter{equation}{-1}
\begin{subequations}
\setcounter{equation}{1}
\begin{align}\label{eq:convex-condition-M-mdim}
O - \hat{M} - \sum_{m=1}^k \hat \tau_m Z_m &= \lambda (\hat{U}\hat{V}^{\top} + W), \nonumber\\
    \norm{W} &\leq 1, \; P_{\hat{T}^\perp}(W) = W, 
\end{align}
\end{subequations}

Substituting \cref{eq:convex-condition-M-mdim} in \cref{eq:convex-condition-tau-mdim} immediately yields
\begin{equation}
\label{eq:mdim_sub_w}
\inner{Z_l}{\hat U \hat V^\top} 
= - \inner{Z_l}{W} 
\end{equation}
Whereas projecting both sides of \cref{eq:convex-condition-M-mdim} onto $\hat T^\perp$ yields
\[
P_{\hat T^{\perp}}\left(
M^* + \hat{E} + \sum_{m=1}^k (\tau^*_m - \hat \tau_m) Z_M 
\right)
= \lambda W
\]
where we use the fact that $P_{\hat T^{\perp}}(\hat M)$ and $P_{\hat T^{\perp}}(\hat U \hat V^\top)$  are identically zero, and $P_{\hat T^{\perp}}(W) = W$. Together with \cref{eq:mdim_sub_w} this yields
\[
\begin{split}
-\lambda \inner{Z_l}{\hat U \hat V^\top} 
&=
\lambda \inner{Z_l}{W}
\\
& =
\inner{Z_l}{P_{\hat T^{\perp}}(M^*)}
+
\inner{Z_l}{P_{\hat T^{\perp}}(\hat{E})}
+ 
\sum_{m=1}^k (\tau^*_m - \hat \tau_m) \inner{Z_l}{P_{\hat T^{\perp}}(Z_M)} 
\\
& =
\inner{Z_l}{P_{\hat T^{\perp}}(M^*)}
+
\inner{P_{\hat T^{\perp}}(Z_l)}{\hat{E}}
+ 
\sum_{m=1}^k (\tau^*_m - \hat \tau_m) \inner{P_{\hat T^{\perp}}(Z_l)}{P_{\hat T^{\perp}}(Z_M)} 
\end{split}
\]
which is the result.  
\section{Proof of Proposition \ref{prop:necessity-of-conditions}}\label{sec:proof-no-go-identification}
Consider the following construction: fix any even-valued $n$, and let  $Z,M_1,M_2$ be the $n$-by-$n$ matrices depicted below:
\begin{equation} \label{eqn:counter-example}
Z =
\begin{bmatrix}
\mathbf{1} &&& \mathbf{0} \\
\mathbf{0} &&& \mathbf{1} 
\end{bmatrix}
\quad\quad M_1 =
\begin{bmatrix}
\mathbf{-1} && \mathbf{0} \\
\mathbf{0} && \mathbf{0} 
\end{bmatrix}
\quad\quad M_2 =
\begin{bmatrix}
\mathbf{0} &&& \mathbf{0} \\
\mathbf{0} &&& \mathbf{1} 
\end{bmatrix},
\end{equation}
where the boldfaced $\mathbf{1}$ represents the $n/2$-by-$n/2$ matrix of all ones (and similarly for $\mathbf{0}$ and $\mathbf{-1}$). 
Then, $Z + M_1 = M_2$ immediately follows. It is also easy to see that both $M_1$ and $M_2$ satisfy $r=1,\kappa=1, \mu=O(1)$, and $\sigma_{\min}=\Theta(n)$.

Next, consider the SVD of $M_1$: $M_1 = \sigma_1 u_1 v_1^{\top}$ where
\begin{align*}
v_1 := \sqrt{2} \Big[\underbrace{\frac{1}{\sqrt{n}}, \dotsc, \frac{1}{\sqrt{n}}}_{\frac{n}{2}}, 0, \dotsc 0\Big]^{\top}, \quad \quad u_1 := \sqrt{2} \Big[\underbrace{\frac{1}{\sqrt{n}}, \dotsc, \frac{1}{\sqrt{n}}}_{\frac{n}{2}}, 0, \dotsc 0\Big]^{\top}.
\end{align*}
Then one can verify that
\begin{align*}
\|Zv_1\|_{\F}^2 + \|Z^{\top} u_1\|_{\F}^2 = n^2 /4 + n^2 /4 = \norm{Z}_{\F}^2.
\end{align*}
Furthermore, let $T_1$ be the tangent space of $M_1$. Note that $P_{T_1^{\perp}}(Z) = \begin{bmatrix}
\mathbf{0} &&& \mathbf{0} \\
\mathbf{0} &&& \mathbf{1} 
\end{bmatrix}$. Then,  
\begin{align*}
\left|\inner{Z}{u_1v_1^{\top}}\right| \|P_{T_1^{\perp}}(Z)\| = \frac{n}{2} \cdot \frac{n}{2} = \frac{n^2}{4} = \|P_{T_1^{\perp}}(Z)\|_{\F}^2.
\end{align*}
Similar equalities hold for $M_2$ by symmetry. This completes the proof.

\section{Proof of Proposition \ref{prop:necessity-of-conditions-tau}}\label{sec:proof-of-propositions}
In this section, we present the proof of \cref{prop:necessity-of-conditions-tau}.

To begin, let $M_1, M_2$ be the $n$-by-$n$ matrices depicted below (for $n=3k, k \in \mathbb{Z}$):
\begin{align} \label{eqn:counter-example}
M_1 =
\begin{bmatrix}
-3t\cdot \mathbf{1}_{1\times \frac{n}{3}} && \mathbf{0}_{1\times \frac{2n}{3}} \\
\mathbf{1}_{(n-1)\times \frac{n}{3}} && \mathbf{0}_{(n-1) \times \frac{2n}{3}} 
\end{bmatrix}&
\quad\quad M_2 =
\begin{bmatrix}
3t\cdot \mathbf{1}_{1\times \frac{n}{3}} && \mathbf{0}_{1\times \frac{2n}{3}} \\
\mathbf{1}_{(n-1)\times \frac{n}{3}} && \mathbf{0}_{(n-1) \times \frac{2n}{3}} 
\end{bmatrix},
\end{align}
where $\mathbf{1}_{a\times b}$ is a $a\times b$ block with all ones (similar for $\mathbf{0}$) and $t \in [0, 1/5]$ is a parameter. Furthermore, let $Z$ be the matrix with all ones in the first row:
\begin{align}
Z &=
\begin{bmatrix}
\mathbf{1}_{1\times n}\\
\mathbf{0}_{(n-1) \times n}
\end{bmatrix}.
\end{align}
Then let $\delta_{1},\delta_{2}$ be the $n$-by-$n$ matrices below:
\begin{align} \label{eqn:counter-example}
\delta_1 =
\begin{bmatrix}
2t\cdot \mathbf{1}_{1\times \frac{n}{3}} && -t\cdot \mathbf{1}_{1\times \frac{2n}{3}} \\
\mathbf{0}_{(n-1)\times \frac{n}{3}} && \mathbf{0}_{(n-1) \times \frac{2n}{3}} 
\end{bmatrix}&
\quad\quad \delta_2 = \begin{bmatrix}
-2t\cdot \mathbf{1}_{1\times \frac{n}{3}} && t\cdot \mathbf{1}_{1\times \frac{2n}{3}} \\
\mathbf{0}_{(n-1)\times \frac{n}{3}} && \mathbf{0}_{(n-1) \times \frac{2n}{3}} .
\end{bmatrix}
\end{align}
Let $\tau_1^{*} = 0$ and $\tau_2^{*} = -2t.$ Then one can verify that $\langle \delta_{1}, Z \rangle = 0, \langle \delta_{2}, Z \rangle = 0$ and 
\begin{align*}
M_1 + \tau_1^{*}Z + \delta_1 \circ Z = M_2 + \tau_2^{*} Z + \delta_2 \circ Z,
\end{align*}
i.e., $(M_1, \tau_1^{*}, \delta_1)$ and $(M_2, \tau_2^{*}, \delta_2)$ will be indistinguishable given the observation. Therefore, for any estimator $\hat{\tau}$, with probability at least $1/2$, either $|\hat{\tau}-\tau_1^{*}| > t$ holds or $|\hat{\tau}-\tau_2^{*}| > t$ holds.

Next, we examine the value of $\langle P_{T_1}(\delta_1), P_{T_1}(Z) \rangle$ where $T_1$ is the tangent space of $M_1$. Easy to see that the SVD of $M_1$ is $M_1 = \sigma_1 u_1 v_1^{\top}$ where
\begin{align*}
v_1 := \sqrt{3} \Big[\underbrace{\frac{1}{\sqrt{n}}, \dotsc, \frac{1}{\sqrt{n}}}_{\frac{n}{3}}, 0, \dotsc 0\Big]^{\top}, \quad \quad u_1 := \sqrt{\frac{n}{n-1+9t^2}}\left[\frac{-3t}{\sqrt{n}}, \frac{1}{\sqrt{n}}, \dotsc, \frac{1}{\sqrt{n}}\right]^{\top} 
\end{align*}
Then, 
\begin{align*}
P_{T_1}(Z) 
&= Z v_1 v_1^{\top} + u_1u_1^{\top} Z (I - v_1 v_1^{\top}) \\
&= \begin{bmatrix}
\mathbf{1}_{1\times \frac{n}{3}} && \mathbf{0}_{1\times \frac{2n}{3}} \\
\mathbf{0}_{(n-1)\times \frac{n}{3}} && \mathbf{0}_{(n-1) \times \frac{2n}{3}}
\end{bmatrix}
+ u_1u_1^{\top} Z (I - v_1 v_1^{\top}).
\end{align*}
This implies
\begin{align*}
\langle P_{T_1}(\delta_1), P_{T_1}(Z) \rangle 
&= \inner{\delta_1}{P_{T_1}(Z)}\\
&= 2t \frac{n}{3} + \inner{\delta_{1}}{u_1u_1^{\top} Z (I - v_1 v_1^{\top})}\\
&\leq \frac{2t}{3} n + \norm{u_1^{\top}\delta_1}\norm{u_1^{\top}Z} \norm{I - v_1 v_1^{\top}}\\
&\overset{(i)}{\leq} tn.
\end{align*}
where (i) is due to $t \leq 1/5.$ Similarly, one can show that $\langle P_{T_2}(\delta_2), P_{T_2}(Z) \rangle \leq tn.$ Note that $\|Z\|_{\F}^2 = n.$ Then, with probability $1/2$, there exists $i \in \{1, 2\}$ such that
\begin{align*}
|\hat{\tau}-\tau^{*}_i| \geq \frac{\inner{P_{T_i}(\delta_i)}{P_{T_i}(Z)}}{\|Z\|_{\F}^2} = \frac{\inner{P_{T_i^{\perp}}(\delta_i)}{Z}}{\|Z\|_{\F}^2}
\end{align*}

After introducing the noise $E_i=\delta_i$, it is also easy to see that $|\hat{\tau}-\tau^{*}_i| \geq \frac{\inner{P_{T_i^{\perp}}(\delta_i+E_i)}{Z}}{\|Z\|_{\F}^2}$. In addition, one shall verify that $r=\kappa=1, \sigma,\mu=O(1), \sigma_{\min}=\Theta(n)$ for both $M_1$ and $M_2$. Furthermore, $\|\delta_1\|, \|\delta_2\|, \|E_{1}\|, \|E_2\| = O(\sqrt{n})$; \cref{assum:conditions-Z} hold for both $M_1, M_2$ and $Z$; \cref{assum:operator-E-delta} hold for both $E_1, E_2$ and $Z$. Note that $\frac{\sigma}{\|Z\|_{\F}} = O(1/\sqrt{n}).$ This then implies that with probability $1/2$, there exists $i \in \{1, 2\}$ such that
\begin{align*}
|\hat{\tau}-\tau^{*}_i| > t \geq O\left(\frac{\sigma}{\|Z\|_{\F}} + \frac{\inner{P_{T_i^{\perp}}(\delta_i+E_i)}{Z}}{\|Z\|_{\F}^2}\right)
\end{align*}
This shows that \cref{thm:error-rate-theorem} achieves the mini-max lower bound.
\section{Basis Transformation for Assumption \ref{assum:conditions-Z}}\label{appendix:conditions-illustration}
Here, we present an ``intuitive'' way to interpret \cref{cond:Z-condition-nonconvex} and \cref{cond:Z-condition-convex} by a change of basis for $Z$. 

Let $M^{*}=U^{*}\Sigma^{*}V^{*\top}$ be the SVD of $M^{*}$ where $U^{*}, V^{*} \in \R^{n \times r}$ characterize the column and row space of $M^{*}$ respectively. We consider the expansion of $U^{*}$ and $V^{*}.$ Note that $P_{T^{*\perp}}(Z) = (I - U^{*}U^{*T}) Z (I - V^{*}V^{*\top})$ has the columns and rows orthogonal to the spaces of $U^{*}$ and $V^{*}.$ We take $P_{T^{*\perp}}(Z) = U^{\perp}\Sigma^{\perp}V^{\perp \top}$ be the SVD of $P_{T^{*\perp}}(Z).$ Then $\tilde{U} = [U^{*}, U^{\perp}]$\footnote{Expand $U^{\perp}$ to $n-r$ columns if it is not.} constitutes the basis matrix for the space of $\R^{n}$ (the same for $\tilde{V} = [V^{*}, V^{\perp}]$). 

We then consider a basis transformation for $Z$ (left and right basis respectively) based on $\tilde{U}$ and $\tilde{V}$:
\begin{equation*}
\tilde{U}^{\top}Z \tilde{V}
=\begin{bmatrix}
U^{*\top}ZV^{*} & U^{*\top}ZV^{\perp} \\
U^{\perp \top}ZV^{*} & U^{\perp \top} Z V^{\perp}
\end{bmatrix}
=:
\begin{bmatrix}
Z_{A} & Z_{B} \\
Z_{C} & Z_{D}
\end{bmatrix}.
\end{equation*}
Here, the $2 \times 2$ block representation with $Z_{A} \in \R^{r\times r}, Z_{B} \in \R^{r\times (n-r)}, Z_{C} \in \R^{(n-r) \times r}, Z_{D} \in \R^{(n-r) \times (n-r)}$ is closely connected to $P_{T^{*\perp}}(Z)$ and $P_{T}(Z)$, which will provide nice interpretations for our conditions (described momentarily). To begin, from the definition of $P_{T^{*\perp}}(Z)$, one can see that
\begin{equation*}
\tilde{U}^{\top} P_{T^{*\perp}}(Z) \tilde{V}
=\begin{bmatrix}
U^{*\top} P_{T^{*\perp}}(Z) V^{*} & U^{*\top} P_{T^{*\perp}}(Z)V^{\perp} \\
U^{\perp \top} P_{T^{*\perp}}(Z) V^{*} & U^{\perp \top}  P_{T^{*\perp}}(Z) V^{\perp}
\end{bmatrix}
=:
\begin{bmatrix}
\mathbf{0} & \mathbf{0} \\
\mathbf{0} & Z_{D}
\end{bmatrix}.
\end{equation*}
Similarly, due to $P_{T^{*}}(Z) + P_{T^{*\perp}}(Z) = Z$, we have
\begin{equation*}
\tilde{U}^{\top} P_{T^{*}}(Z) \tilde{V}
=:
\begin{bmatrix}
Z_{A} & Z_{B} \\
Z_{C} & \mathbf{0}
\end{bmatrix}.
\end{equation*}

In addition, this also provides a way to interpret $U^{*\top} Z$ and $Z V^{*}$ by noting that
\begin{equation*}
\left(U^{*\top} Z\right) \tilde{V}
=:
\begin{bmatrix}
Z_{A} & Z_{B}
\end{bmatrix} 
\quad
\quad
\tilde{U}^{\top} \left(Z V^{*}\right)
=:
\begin{bmatrix}
Z_{A}\\
Z_{C}
\end{bmatrix} 
\end{equation*}

Then, we are ready to state the \cref{cond:Z-condition-nonconvex} in an equivalent way (See \cref{lem:condition-non-convex-implication} for a generalized statement and a more algebraic proof).
\begin{claim}
\cref{cond:Z-condition-nonconvex} is equivalent to the following condition: there exists a constant $C > 0$ such that
\begin{align*}
\frac{C}{\log n}\norm{Z}_{\F}^2 \leq \norm{Z_{D}}_{\F}^2 - \norm{Z_{A}}_{\F}^2.
\end{align*}
\end{claim}
\begin{proof}
Note that $\norm{Z}_{\F}^2 = \norm{Z_{A}}_{\F}^2 + \norm{Z_{B}}_{\F}^2 + \norm{Z_{C}}_{\F}^2 + \norm{Z_{D}}_{\F}^2$, $\norm{Z V^{*}}_{\F}^2 = \norm{Z_{A}}_{\F}^2 + \norm{Z_{C}}_{\F}^2$, and $\norm{U^{*\top} Z}_{\F}^2 = \norm{Z_{A}}_{\F}^2 + \norm{Z_{B}}_{\F}^2$ due to the unitary invariance of the Frobenius norm. Then \cref{cond:Z-condition-nonconvex} can be transformed into
\begin{align*}
\norm{Z}_{\F}^2 + \norm{Z_A}_{\F}^2 - \norm{Z_D}_{\F}^2 \leq \left(1-\frac{C}{\log n}\right) \norm{Z}_{\F}^2
\end{align*}
which completes the proof.
\end{proof}
Since $\norm{Z_{D}}_{\F}^2 = \norm{P_{T^{*\perp}}(Z)}_{\F}^2$, the \cref{cond:Z-condition-nonconvex} is effectively a lower bound for requiring $\norm{P_{T^{*\perp}}(Z)}_{\F}^2$ being sufficiently large. 

For \cref{cond:Z-condition-convex}, note that $\inner{Z}{U^{*}V^{*T}} = \tr(U^{*T}ZV^{*}) = \tr(Z_{A})$ and $\norm{P_{T^{*\perp}}(Z)} = \norm{Z_{D}}$ due to the unitary invariance of the operator norm. We have
\begin{claim}
\cref{cond:Z-condition-convex} is equivalent to the following condition: there exists a constant $C > 0$ such that
\begin{align*}
|\tr(Z_{A})| \norm{Z_{D}} \leq \left(1-\frac{C}{\log n}\right) \norm{Z_{D}}_{\F}^2.
\end{align*}
\end{claim}

To see the looseness of these two conditions, note that $Z_{A}$ is a $r\times r$ matrix and $Z_{D}$ is a $(n-r) \times (n-r)$ matrix where $n \gg r.$ Intuitively, some degree of non-overlapping between $Z$ and $M^{*}$ is enough to guarantee a small $Z_{A}$ and large $Z_{D}$ and sufficient for these two conditions. See \cref{sec:treatment-conditions} for a few examples to make this intuition precise.

\textit{Generalization to $k>1$.} To generalize \cref{assum:conditions-Z} to case $k>1$, we need that every linear combination of $Z_{i}, i\in[k]$ ($Z = \sum_{i\in[k]} \alpha_i Z_i$)
 satisfies \cref{assum:conditions-Z} (this could be restrictive in practice when $k$ is large, but one could for example mitigate this by regularization or adding explicit constraints on the $\tau_{i}, i\in[k]$ ). 
%
%We also restate \cref{rmk:Cond1and2} and present a short proof here.
%\newtheorem*{remark*}{Remark}
%\begin{remark*}
%Condition 2 is implied by Condition 1 if either $Z$ or $M^*$ are rank-1. 
%\end{remark*}
%\vspace{-5pt}
%To see that Remark \ref{rmk:Cond1and2} holds, note that when either $M^{*}$ or $Z$ is rank-1, we have $Z_{A}$ being rank $1$ and then
%$$
%|\tr(Z_{A})| \overset{(i)}{\leq} \|Z_{A}\|_{*} \overset{(ii)}{=} \|Z_{A}\|_{\F}
%$$
%where (i) is due to the trace inequality and (ii) is due to $\norm{A}_{*} = \norm{A}_{\F}$ if $A$ is rank-1. Then in order to show \cref{cond:Z-condition-convex}, it is enough to show $\norm{Z_{A}}_{\F} \leq \left(1-\frac{C}{2\log n}\right)\norm{Z_{D}}_{\F}.$ This is in fact a direct implication of \cref{cond:Z-condition-nonconvex}, where
%\begin{align*}
%\norm{Z_{A}}_{\F}^2 \leq \norm{Z_{D}}_{\F}^2 - \frac{C}{\log n}\norm{Z}_{\F}^2 \implies \norm{Z_{A}}_{\F}^2 \leq \left(1-\frac{C}{\log n}\right)\norm{Z_{D}}^2_{\F}.
%\end{align*}
%
%Since Condition 1 implies that $\|{U^{*\top}ZV^{*}}\|_{\F} \leq \|{P_{T^{* \perp}}(Z)}\|_{\F} \left(1-{C_{r_1}}/{2\log(n)}\right)$ (by direct algebra, see \cref{lem:condition-non-convex-implication}), and since $\norm{P_{T^{*\perp}}(Z)} \leq \norm{P_{T^{*\perp}}(Z)}_{\F}$, we can conclude that Condition 2 also holds.

\section{Discussions of $\delta$ and Assumption \ref{assum:conditions-Z}}

\subsection{Discussions of \texorpdfstring{$\delta$}{delta}}\label{sec:discussion-of-delta}

Suppose $\kappa=\mu=r=O(1).$ We aim to show that $\|\delta\|\lesssim \sqrt{n}$ and $|\inner{P_{T^{*}}(\delta)}{P_{T^{*}}(Z)}|/\|Z\|_{\F}^2 = \tilde{O}(1/\|Z\|_{\F})$ in the following scenarios. 

{\em Independent $\delta$.} Independent, sub-gaussian $\delta_{ij}$ with $O(1)$ sub-gaussian norm. In this scenario, it is easy to see that $\|\delta\| \lesssim \sqrt{n}$. We also have
\begin{align*}
|\inner{P_{T^{*}}(\delta)}{P_{T^{*}}(Z)}| = |\inner{\delta}{P_{T^{*}}(Z)}| \overset{(i)}{\lesssim} \sqrt{\log(n)} \|P_{T^{*}}(Z)\|_{\F}. 
\end{align*}
Here (i) holds with probability $1-O(n^{-C})$ due to the Chernoff bound (the sum of independent sub-Gaussian random variables is still sub-Gaussian). Then
\begin{align*}
\frac{|\inner{P_{T^{*}}(\delta)}{P_{T^{*}}(Z)}|}{\|Z\|_{\F}^2} \lesssim \frac{\sqrt{\log(n)} \|P_{T^{*}}(Z)\|_{\F}}{\|Z\|_{\F}^2} = \tilde{O}\left(\frac{1}{\|Z\|_{\F}}\right).
\end{align*}

{\em Synthetic control and block $Z$.} $\delta_{ij} = O(1)$, $Z$ consists of an $\ell \times c$ block that is sufficiently sparse: $\sqrt{\ell c}(\ell + c) = O(n)$. 

In this scenario, note that $\ell + c \geq 2\sqrt{\ell c}.$ Therefore, $\sqrt{\ell c} \sqrt{\ell c} \leq \sqrt{\ell c}(\ell + c)  = O(n).$ This implies that $\ell c = O(n).$ Note that $\delta$ is vanished outside $Z$. Hence, $\|\delta\| \leq \|\delta\|_{\F} = O(\sqrt{lc} \|\delta\|_{\max}) = O(\sqrt{n}).$

Next, consider $\|P_{T^{*}}(\delta)\|_{\F}^2$. WOLG, assume $Z$ and $\delta$ are vanished outside the first $\ell$ rows and first $c$ columns. To begin, one can verify that
\begin{align*}
\|P_{T^{*}}(\delta)\|_{\F}^2 \leq \|\delta V^{*}\|_{\F}^2 + \|U^{*} \delta^{\top}\|_{\F}^2.
\end{align*}
For $\delta V^{*}$, we have
\begin{align*}
\|\delta V^{*}\|_{\F}^2 = \sum_{i=1}^{n} \left\|\sum_{j=1}^{n} \delta_{ij} V^{*}_{j,\cdot}\right\|^2 
&\leq \sum_{i=1}^{\ell} \left(\sum_{j=1}^{c} \left\| V^{*}_{j,\cdot}\right\|\right)^2 \|\delta\|_{\max} \\
&\overset{(i)}{=} \frac{\ell c^2 \mu r \|\delta\|_{\max}}{n} \overset{(ii)}{=} O(\ell c^2 / n).
\end{align*}
Here (i) uses the incoherence condition and (ii) uses $r=\mu=O(1)$. By symmetry, one can obtain that $\|U^{*}\delta^{\top}\|_{\F}^2 = O(\ell^2 c / n).$ Then $\|P_{T^{*}}(\delta)\|_{\F}^2 = O((\ell+c)\ell c/n).$ Similarly, we also have $\|P_{T^{*}}(Z)\|_{\F}^2 = O((\ell+c)\ell c/n).$

Hence,
\begin{align*}
\frac{|\inner{P_{T^{*}}(\delta)}{P_{T^{*}}(Z)}|}{\|Z\|_{\F}^2} \leq \frac{\|P_{T^{*}}(\delta)\|_{\F}\|P_{T^{*}}(Z)\|_{\F}}{\|Z\|_{\F}^2} &\overset{(i)}{=} O\left(\frac{(\ell+c)\ell c}{n \sqrt{lc} \|Z\|_{\F}}\right)\\
&\overset{(ii)}{=} O\left(\frac{1}{\|Z\|_{\F}}\right).
\end{align*}
Here, (i) uses that $\|Z\|_{\F} = \sqrt{\ell c}$ and (ii) uses that $\sqrt{\ell c}(\ell+c) = O(n).$

\subsubsection{Panel data regression} \label{appendix:panel-data-regerssion}
Finally, we provide a detailed discussion here for comparing assumptions and guarantees between \cite{moon2018nuclear} and our algorithm. To fit \cite{moon2018nuclear} into our model, we consider $k=1$ and view $Z$ as a regressor and $\delta$ as the idiosyncratic noise. 

The key assumptions for the main result in \cite{moon2018nuclear} (which is Theorem 4 in  \cite{moon2018nuclear}) are (i) $\|\delta\| =O(\sqrt{n})$, (ii) $\inner{P_{T^{*\perp}}(Z)}{\delta} = O(n)$, (iii) $\|P_{T^{*\perp}}(Z)\|_{\F}^2 = \Omega(n^2)$, (iv) $|\tau^{0}-\tau^{*}| = o(1)$, and other low-rank and incoherence conditions. Then Theorem 4 in \cite{moon2018nuclear} provides an iterative estimator $\tau^{t}$ such that $|\tau^{t} - \tau^{*}|=O(1/n)$ for large enough $t$. 

Under these assumptions, our result on $\tau^{d}-\tau^{*}$ \textit{recovers their guarantees} up to $\log(n)$ factors. To see this, note that  their assumption (ii) implies $\inner{P_{T^{*}}(Z)}{P_{T^{*}}(\delta)}=O(n)$ (since $\inner{Z}{\delta} = 0$) and (iii) implies $\|Z\|_{\F}^2 \geq \|P_{T^{*\perp}}(Z)\|_{\F}^2 = \Omega(n^2).$ Then our guarantee $|\tau^{d}-\tau^{*}| = \tilde{O}\left(1/\|Z\|_{\F} + |\inner{P_{T^{*}}(\delta)}{P_{T^{*}}(Z)}|/\|Z\|_{\F}^2\right)$ can be simplified to 
$$|\tau^{d} - \tau^{*}| = \tilde{O}(1/n).$$

\textbf{Limitation of their assumption (iii).} Their assumption (iii) $\|P_{T^{*\perp}}(Z)\|_{\F}^2 = \Omega(n^2)$ is probably the most restrictive one, which rules out many interesting scenarios (such as synthetic control). This assumption is also implicitly made in Theorem 1 and 2 in \cite{moon2018nuclear}, where the bound $|\hat{\tau}-\tau^{*}|=O(1/\sqrt{n})$ is provided (recall $\hat{\tau}$ is the convex estimator). In particular, Theorem 1 \cite{moon2018nuclear} requires that $rank(Z)=O(1)$ and $\|P_{T^{*\perp}}(Z)\|_{*} - \norm{U^{*T}ZV^{*}} = \Omega(n)$, hence implying $\sqrt{rank(Z)}\|P_{T^{*\perp}}(Z)\|_{\F} \geq \|P_{T^{*\perp}}(Z)\|_{*} = \Omega(n)$, i.e., $\|P_{T^{*\perp}}(Z)\|_{\F} = \Omega(n).$ Similarly, $\|P_{T^{*\perp}}(Z)\|_{\F} = \Omega(n)$ is implied by their Assumption 1 and Theorem 2(iii) \cite{moon2018nuclear}. 

Technically, the assumption $\|P_{T^{*\perp}}(Z)\|_{\F}=\Theta(n^2)$ greatly simplifies analysis in \cite{moon2018nuclear} since a global bound on $\hat{\tau}-\tau^{*}$ can be easily obtained. One of the main technical innovations in our paper, building on recent advances in the matrix completion literature, is to conduct a refined `local' analysis without the assumption on the density of $Z$.

Other than technical challenges, another reason that panel data regression literature \cite{bai2009panel,moon2017dynamic,moon2018nuclear} did not loose the $\|P_{T^{*\perp}}(Z)\|_{\F}=\Theta(n^2)$ assumption, is probably the regressor is usually viewed as a dense matrix (e.g., GDP, wages). In the contrast, the scenario we are considering is different, where $Z$ characterizes the treatment pattern and $\|P_{T^{*\perp}}(Z)\|_{\F}=\Theta(n^2)$ becomes a more problematic restriction. 

Note that \cite{moon2018nuclear} also provides the results for $k$ being a constant, where we think it is an interesting open question to study what are the minimal identification conditions for $Z_{m}$ in that scenario.

\subsection{Discussions of Assumption \ref{assum:conditions-Z}}\label{sec:proof-discussion-condition-Z}
In the section, we present the proofs for various treatment patterns that are admissible under \cref{assum:conditions-Z}. 

\subsubsection{Rank grows faster than $r$}
 \begin{claim}%\label{claim:not-low-rank-condition}
 Let $Z \in \R^{n\times n}$. Suppose there exists a constant $C$ such that
 \begin{align}\label{eq:high-rank}  
     \sum_{i=1}^r \sigma_i(Z)^2 \le \left(1 - \frac{C}{\log n} \right)\frac{\|Z\|_{\F}^2}{\sqrt{r}+2}.
 \end{align}
 Then $(Z, M^{*})$ satisfies Assumption \ref{cond:Z-condition-nonconvex} and \ref{cond:Z-condition-convex} for any $M^* \in \R^{n\times n}$ with $\rank(M^{*}) \leq r$:
 \end{claim}
\begin{proof}
Note that 
$$
\norm{ZV^{*}}_{\F}^2 = \tr(Z^{\top}ZV^{*}V^{*\top}) \overset{(i)}{\leq} \sum_{i=1}^{n} \sigma_i(Z^{\top}Z)\sigma_i(V^{*}V^{*\top}) = \sum_{i=1}^{r} \sigma_i^2(Z)
$$
where (i) is by the Von Neumann's trace inequality. Similarly, $\norm{Z^{\top}U^{*}}_{\F}^2 \leq \sum_{i=1}^{r} \sigma_i^2(Z).$ 

Therefore, \cref{eq:high-rank} implies \cref{cond:Z-condition-nonconvex}:
\begin{align*}
    \norm{ZV^{*}}_{\F}^2 + \norm{Z^{\top}U^{*}}_{\F}^2 
    &\leq 2\sum_{i=1}^{r} \sigma_i^2(Z) \\
    &\overset{(i)}{\leq} (1 - C/\log(n)) \frac{2\norm{Z}_{\F}^2}{\sqrt{r}+2} \leq (1 - C/\log(n)) \norm{Z}_{\F}^2
\end{align*}
where (i) is due to \cref{eq:high-rank}. 

To show \cref{cond:Z-condition-convex}, note that
\begin{align*}
    (\sqrt{r}+2)\sum_{i=1}^{r} \sigma_i^2(Z) 
    &\leq (1 - C/\log(n)) \norm{Z}_{\F}^2\\
    &\overset{(i)}{\leq} (1 - C/\log(n)) \left(\norm{P_{T^{*\perp}}(Z)}_{\F}^2 + \norm{ZV^{*}}_{\F}^2 + \norm{Z^{\top}U^{*}}_{\F}^2\right) \\
    &\leq (1 - C/\log(n)) \norm{P_{T^{*\perp}}(Z)}_{\F}^2 + 2\sum_{i=1}^{r} \sigma_i^2(Z)
\end{align*}
where (i) is due to $\norm{Z}_{\F}^2 \leq \norm{P_{T^{*\perp}}(Z)}_{\F}^2 + \norm{ZV^{*}}_{\F}^2 + \norm{Z^{\top}U^{*}}_{\F}^2$ and (ii) is due to $\norm{ZV^{*}}_{\F}^2 + \norm{Z^{\top}U^{*}}_{\F}^2 \leq 2\sum_{i=1}^{r} \sigma_i^2(Z)$. This then implies 
\begin{align}\label{eq:sqrtr}
    \sqrt{r} \sum_{i=1}^{r} \sigma_i^2(Z) \leq (1 - C/\log(n)) \norm{P_{T^{*\perp}}(Z)}_{\F}^2.
\end{align}

Next, we show \cref{cond:Z-condition-convex} by the following
\begin{align*}
    \left|\inner{Z}{U^{*}V^{*\top}}\right|\norm{P_{T^{*\perp}}(Z)} 
    &\overset{(i)}{\leq} \left(\sum_{i=1}^{n}\sigma_i(Z)\sigma_i(U^{*}V^{*\top})\right) \sigma_1(Z)\\
    &\leq \left(\sum_{i=1}^{r} \sigma_i(Z)\right) \sqrt{\sum_{i=1}^{r} \sigma_i^2(Z)}\\
    &\overset{(ii)}{\leq} \sqrt{r} \sum_{i=1}^{r} \sigma_i^2(Z)\\
    &\overset{(iii)}{\leq} (1 - C/\log(n)) \norm{P_{T^{*\perp}}(Z)}_{\F}^2
\end{align*}
where (i) is due to Von Neumann's trace inequality and $\norm{P_{T^{*\perp}}(Z)} \leq \norm{Z} = \sigma_1(Z)$, (ii) is due to the Cauchy-Schwartz inequality $\sum_{i=1}^{r} \sigma_i(Z) \leq \sqrt{r} \sqrt{\sum_{i=1}^{r} \sigma_i^2(Z)}$, and (iii) is due to \cref{eq:sqrtr}. This finishes the proof. 
\end{proof}

\subsubsection{Maximal number of ones in a row and column}
\begin{claim}
%Let $Z \in \{0,1\}^{n\times n}$. 
Let $k$ and $\ell$ denote the maximum number of ones in a row and column, respectively, of $Z$. Suppose there exists a constant $C$ such that
\begin{align}\label{eq:sparse}
    k + \ell \le \frac{n}{r^2 \mu} \left(1-\frac{C}{\log n}\right).	
\end{align}
Then $(Z, M^{*})$ satisfies Assumption \ref{cond:Z-condition-nonconvex} and \ref{cond:Z-condition-convex} for any $M^*$ %\in \R^{n\times n}$ 
satisfying $\rank(M^{*}) \leq r$ and $\max(\|U^{*}\|_{2,\infty}, \|V^{*}\|_{2,\infty}) \leq \sqrt{r\mu/n}$ (incoherence assumption).
\end{claim}
\begin{proof}
Let $k_i$ be the number of ones in the $i$-th row of $Z$, $l_j$ be the number of ones in the $j$-th column of $Z$:
\begin{align*}
    k_i := \sum_{j=1}^{n} Z_{ij}, \quad l_j := \sum_{i=1}^{n} Z_{ij}.
\end{align*}
Let $k := \max_{i} \{k_i\}, l := \max_{j} \{l_j\}$ be the maximal number of ones in a row and column, respectively. Consider \cref{cond:Z-condition-nonconvex}. Note that
\begin{align*}
    \norm{ZV^{*}}_{\F}^2 
    &= \sum_{i=1}^{n} \norm{\sum_{j=1}^{n} Z_{ij} V^{*}_{j,\cdot}}^2\\
    &\leq \sum_{i=1}^{n} k_i^2 \norm{V^{*}}_{2,\infty}^2\\
    &\overset{(i)}{\leq} \left(\sum_{i=1}^{n} k_i^2\right) \frac{\mu r}{n}\\
    &\leq \left(\sum_{i=1}^{n} k_i\right)k \frac{\mu r}{n}\\
    &\overset{(ii)}{=} \norm{Z}_{\F}^2 k\frac{\mu r}{n}
\end{align*}
where (i) is due to the incoherence condition of $M^{*}$ and (ii) is due to $\sum_{i=1}^{n} k_i = \norm{Z}_0 = \norm{Z}_{\F}^2.$ Similarly, one can verify that 
\begin{align}\label{eq:ZtopU}
\norm{Z^{\top}U^{*}}_{\F}^2 \leq \norm{Z}_{\F}^2 l\frac{\mu r}{n}.
\end{align} 
Then
\begin{align}
    \norm{ZV^{*}}_{\F}^2 + \norm{Z^{\top}U^{*}}_{\F}^2 \leq \norm{Z}_{\F}^2 (k+l) \frac{\mu r}{n} \leq \left(1-\frac{C}{\log(n)}\right)\frac{1}{r} \norm{Z}_{\F}^2. \label{eq:condition-one-row-sparse-column-sparse}
\end{align}
This verifies the \cref{cond:Z-condition-nonconvex}. Next, consider \cref{cond:Z-condition-convex}. By \cref{eq:condition-one-row-sparse-column-sparse} and \cref{lem:condition-non-convex-implication} (\cref{eq:implication-assumption3a}), we have
\begin{align*}
    \norm{U^{*\top}ZV^{*}}_{\F}^2 
    &\leq \norm{P_{T^{*\perp}}(Z)}_{\F}^2 - \left(1 - \left(1 - \frac{C}{\log(n)}\right)\frac{1}{r}\right) \norm{Z}_{\F}^2\\
    &\leq \norm{P_{T^{*\perp}}(Z)}_{\F}^2 - \left(\frac{r-1}{r} + \frac{C}{\log(n)r}\right) \norm{Z}_{\F}^2.
\end{align*}
Since $\norm{P_{T^{*\perp}}(Z)}_{\F}^2 \leq \norm{Z}_{\F}^2$, this implies 
\begin{align*}
    \norm{U^{*\top}ZV^{*}}_{\F}^2 
    &\leq \norm{P_{T^{*\perp}}(Z)}_{\F}^2 - \left(\frac{r-1}{r} + \frac{C}{\log(n)r}\right) \norm{P_{T^{*\perp}}(Z)}_{\F}^2\\
    &\leq \frac{1}{r}\left(1-\frac{C}{\log(n)}\right)\norm{P_{T^{*\perp}}(Z)}_{\F}^2.
\end{align*}
This further implies that
\begin{align*}
    \sqrt{r}\norm{U^{*\top}ZV^{*}}_{\F} 
    &\leq \sqrt{\left(1-\frac{C}{\log(n)}\right)}\norm{P_{T^{*\perp}}(Z)}_{\F}\\
    &\leq \left(1-\frac{C}{2\log(n)}\right) \norm{P_{T^{*\perp}}(Z)}_{\F}.
\end{align*}

Then, 
\begin{align*}
    \left|\inner{Z}{U^{*}V^{*\top}}\right| \norm{P_{T^{*\perp}}(Z)} 
    &\leq \left|\tr(U^{*\top}ZV^{*})\right|\norm{P_{T^{*\perp}}(Z)}\\
    &\leq \norm{U^{*\top}ZV^{*}}_{*}\norm{P_{T^{*\perp}}(Z)}\\
    &\leq \sqrt{r} \norm{U^{*\top}ZV^{*}}_{\F} \norm{P_{T^{*\perp}}(Z)}\\
    &\leq \left(1-\frac{C}{2\log(n)}\right) \norm{P_{T^{*\perp}}(Z)}_{\F}\norm{P_{T^{*\perp}}(Z)}\\
    &= \left(1-\frac{C}{2\log(n)}\right)\norm{P_{T^{*\perp}}(Z)}_{\F}^2.
\end{align*}
This verifies \cref{cond:Z-condition-convex}.
\end{proof}

\subsubsection{Single row or column (Synthetic Control)}
\begin{claim}
Consider the case when $Z$ is supported on a single row. Suppose 
\begin{align}\label{eq:ZVstar-synthetic-control}
\|Z V^* \|_{\F}^2 \le \left(1 - C / \log n - \mu r/n \right) \|Z\|^2,
\end{align}
Then $(Z, M^{*})$ satisfies Assumption \ref{cond:Z-condition-nonconvex} and \ref{cond:Z-condition-convex} for any $M^*$ %\in \R^{n\times n}$ 
satisfying $\rank(M^{*}) \leq r$ and  $\max(\|U^{*}\|_{2,\infty}, \|V^{*}\|_{2,\infty}) \leq \sqrt{r\mu/n}$ (incoherence assumption).
\end{claim}
\begin{proof}
By \cref{eq:ZtopU}, we have $\|Z^{\top}U^{*}\|_{\F}^2 \leq \|Z\|_{\F}^2 l \frac{\mu r}{n}$ where $l$ is the maximum number of ones in a column of $Z$. Since $Z$ is only supported in one row, we have $l \leq 1.$ Then, $\|Z^{\top}U^{*}\|_{\F}^2 \leq \|Z\|_{\F}^2 \frac{\mu r}{n}.$ Together this with \cref{eq:ZVstar-synthetic-control},
\begin{align}\label{eq:ZVZtopU-synthetic-control}
\|Z V^* \|_{\F}^2 + \|Z^{\top}U^{*}\|_{\F}^2 \leq (1 - C/\log(n)) \|Z\|^2
\end{align} 
which verifies \cref{cond:Z-condition-nonconvex}. 

By \cref{eq:ZVZtopU-synthetic-control} and \cref{lem:condition-non-convex-implication} (\cref{eq:implication-assumption3a}), we have
\begin{align*}
 \norm{U^{*\top}ZV^{*}}_{\F}^2 
 &\leq \norm{P_{T^{*\perp}}(Z)}_{\F}^2 - \frac{C}{\log(n)} \|Z\|_{\F}^2\\
 &\leq \left(1-\frac{C}{\log(n)}\right) \norm{P_{T^{*\perp}}(Z)}_{\F}^2.
\end{align*}
Therefore
\begin{align*}
\norm{U^{*\top}ZV^{*}}_{\F} 
    &\leq \sqrt{\left(1-\frac{C}{\log(n)}\right)}\norm{P_{T^{*\perp}}(Z)}_{\F}\\
    &\leq \left(1-\frac{C}{2\log(n)}\right) \norm{P_{T^{*\perp}}(Z)}_{\F}.
\end{align*}
Then,
\begin{align*}
\left|\inner{Z}{U^{*}V^{*\top}}\right| \norm{P_{T^{*\perp}}(Z)} 
    &\leq \left|\tr(U^{*\top}ZV^{*})\right|\norm{P_{T^{*\perp}}(Z)}\\
    &\leq \norm{U^{*\top}ZV^{*}}_{*}\norm{P_{T^{*\perp}}(Z)}\\
    &\overset{(i)}{\leq} \norm{U^{*\top}ZV^{*}}_{\F} \norm{P_{T^{*\perp}}(Z)}\\
    &\leq \left(1-\frac{C}{2\log(n)}\right) \norm{P_{T^{*\perp}}(Z)}_{\F}\norm{P_{T^{*\perp}}(Z)}\\
    &= \left(1-\frac{C}{2\log(n)}\right)\norm{P_{T^{*\perp}}(Z)}_{\F}^2.
\end{align*}
In (i), we use the fact that $Z$ is rank-1, and $\|A\|_{*}=\|A\|_{\F}$ for any rank-1 matrix $A$. This verifies \cref{cond:Z-condition-convex} and finishes the proof. 
\end{proof}

\section{Additional Details of Experiments}\label{appendix:experiments}

In this section, we present more details for \cref{sec:experiments}.

\textbf{Computing Infrastructure.} All experiments are done in a personal laptop equipped with 2.6 GHz 6-Core
Intel Core i7 and 16 GB 2667 MHz DDR4. The operating system is macOS Catalina. For each instance, the
running time for our algorithm is within seconds. 

\textbf{Algorithm Implementations.} Recall that we implemented the following four benchmarks: (i) Matrix-Completion with Nuclear Norm Minimization (MC-NNM): \cite{athey2021matrix} that applies matrix completion by viewing the treated entries as missing. (ii) Robust Synthetic Control (RSC): ``Algorithm 1'' in \cite{amjad2018robust} with linear regression used to recover the counterfactual results. This can be viewed as a robust variant of the well-known synthetic control method \cite{abadie2003economic,abadie2010synthetic}. (iii) Ordinary Least Square (OLS): Selects $a,b \in \R^{n}, \tau \in \R$ to minimize $\|O-a1^{T} - 1b^{T} - \tau Z\|_{\F}^2$, where $1 \in \R^{n}$ is the vector of ones. This can be viewed as regression adapted to the difference-in-differences frameworks \cite{imbens2009recent,xiong2019optimal}.  (iv)  Synthetic Difference-in-Difference (SDID): An algorithm proposed in \cite{arkhangelsky2019synthetic} as a generalization of both synthetic control and difference-in-difference.

This section presents more details for algorithm implementations. All algorithms share the same input $(O, Z)$ and $r$, where $r$ is a pre-defined rank. 
%For the algorithms that output $\hat{M}$ as the estimator of $M^{*}$ (i.e. MC-NNM, RSC, W-PCA), we use $\tau := \inner{Z}{O-\hat{M}}/\norm{Z}_{\F}^2$ as the coresponding estimator for $\tau^{*}.$  

For implementing MC-NNM described in \cite{athey2021matrix}, let $\Omega$ be the set of observed control entries ($\Omega$ is the complement of $Z$). We optimize the following problem
\begin{align*}
\min_{M \in \R^{n\times n}, a\in \R^{n}, b \in \R^{n}} \norm{P_{\Omega}(O - a1^{T} - 1b^{T} - M)}_{\F}^2 + \lambda \norm{M}_{*}
\end{align*}
where $a, b$ are for characterizing the fixed effects. Let $(\hat{M}, \hat{a}, \hat{b})$ be the optimizer for the above problem. The estimator of $\tau^{*}$ is then given by $\tau = \inner{Z}{O-\hat{M}-\hat{a}1^{T}-1\hat{b}^{T}} / \norm{Z}_{\F}^2.$ To tune the hyper-parameter $\lambda$, we choose a large enough $\lambda$ and then gradually decrease $\lambda$ until the rank of $\hat{M}$ achieves a pre-defined rank $r$.

For implementing RSC described in \cite{amjad2018robust} for the block and stagger patterns, let $S_1$ be the set of treated units and $S_2$ be set of the control units. For each $i \in S_1$, we use the ``Algorithm 1'' in \cite{amjad2018robust} with the ``linear regression ($\eta = 0$)'' to estimate the counterfactuals of the $i$-th row based on the control units $S_2$. Let $\hat{M}$ be their final estimate of the counterfactuals (combining all estimations across different rows). Let $\tau = \inner{Z}{O-\hat{M}} / \norm{Z}_{\F}^2$ be the estimator of $\tau^{*}.$

%Let $O_{S_2} \in \R^{|S_2|\times n_2}$ be the matrix only consisting of the rows from $S_2.$ Let $M_{S_2} = \min_{\mathrm{rank}(M)\leq r} \norm{O_{S_2} - M}_{\F}$ be the first $r$ components of $O_{S_2}$ ($r$ is a pre-defined rank). Then $M_{S_2}$ is used to regress treated units and obtain the counterfactual estimation. In particular, for each $i \in S_1$, let $t_i$ be the starting point for the treatment for $i$. Let $\hat{\beta} = \arg\min_{\beta} \sum_{1\leq j<t_i} |\beta^{T}\cdot M_{S_2,j} - O_{i,j}|^2$ be the least-square optimizer. Then the counterfactual results of the $i$-th unit is estimated to be $\hat{M}_{i,\cdot} = \beta^{T} \cdot M_{S_2}.$ The estimator of $\tau^{*}$ is then given by $\tau = \inner{Z}{O-\hat{M}} / \norm{Z}_{\F}^2.$ 

%For W-PCA, we implement the algorithm described in the ``Eq.(1), Eq.(2), and Eq.(4)'' of \cite{xiong2019large} (with the input $r$). Let $\hat{M}$ be their estimation of the counterfactual matrix. We take $\tau = \inner{Z}{O-\hat{M}} / \norm{Z}_{\F}^2$ be the estimator for $\tau^{*}.$

For OLS, we obtain the solution by a single linear regression.  For SDID, we implement ``Algorithm 1'' in \cite{arkhangelsky2019synthetic}. The SDID is designed for block patterns, to extend to stagger patterns, we adopt the suggestions in ``Footnote 1'' of \cite{arkhangelsky2019synthetic}. 

For De-biased Convex (our algorithm), we implement an alternating optimization to solve the convex problem. Let $\tau = \tau^{d}$ be the estimator for $\tau^{*}.$ To tune the hyper-parameter $\lambda$, similar to the MC-NNM, we choose a large enough $\lambda$ and then gradually decrease $\lambda$ until the rank of $\hat{M}$ achieves a pre-defined rank $r$.

%let $\Sigma \in \R^{n_1 \times n_1}$ be a matrix where $\Sigma_{ij} = \frac{1}{|Q_{ij}|}\sum_{t \in Q_{ij}} O_{it}O_{jt}$ and $Q_{ij} = \{1\leq t \leq n_2 | Z_{i,t}=Z_{j,t}=0\}.$ Let $X=\sqrt{n_1}U \in \R^{n_1\times r}$ where $U$ be the top $r$ eigen-vectors for $\Sigma$. Let $Y \in \R^{n_2 \times r}$ where $Y_{tj} = \frac{1}{|W_t|}\sum_{i \in W_{t}} O_{it}X_{ij}$ and $W_{t} = \{1\leq i\leq n_1~|~ \}$

\textbf{The Choice of $\tau^{*}$.} %\label{sec:experiments-tau-independent}
For all algorithms we implemented, given $(M^{*}, Z, E)$, the error $\tau-\tau^{*}$ is in fact invariant from the change of $\tau^{*}. $ For MC-NNM, RSC, this is obvious since they do not use the information from treatment entries. In particular, one can check $\tau^{*}-\tau = \inner{Z}{M^{*}+E-\hat{M}}/\|Z\|_{\F}^2$ where $\hat{M}$ is independent from the choice of $\tau^{*}$ in their algorithms. 

For De-biased Convex, one can verify that if $(\hat{M}, \hat{\tau})$ satisfies the first-order conditions of the instance $O=M^{*}+Z*\tau^{*}+E$, then $(\hat{M}, \hat{\tau}+\Delta)$ satisfies the first order conditions of the instance $O=M^{*}+Z*(\tau^{*}+\Delta)+E.$ This linear response together with the linear debias procedure implies that $\tau^{*}-\tau$ is independent from the choice of $\tau^{*}.$ Similarly, OLS and SDID also shares the same property. See \cref{fig:invariance-tau-test} for the invariance of $\tau - \tau^{*}$ for a particular instance.

 \begin{figure}[h]
     \centering
     \includegraphics[scale=0.5]{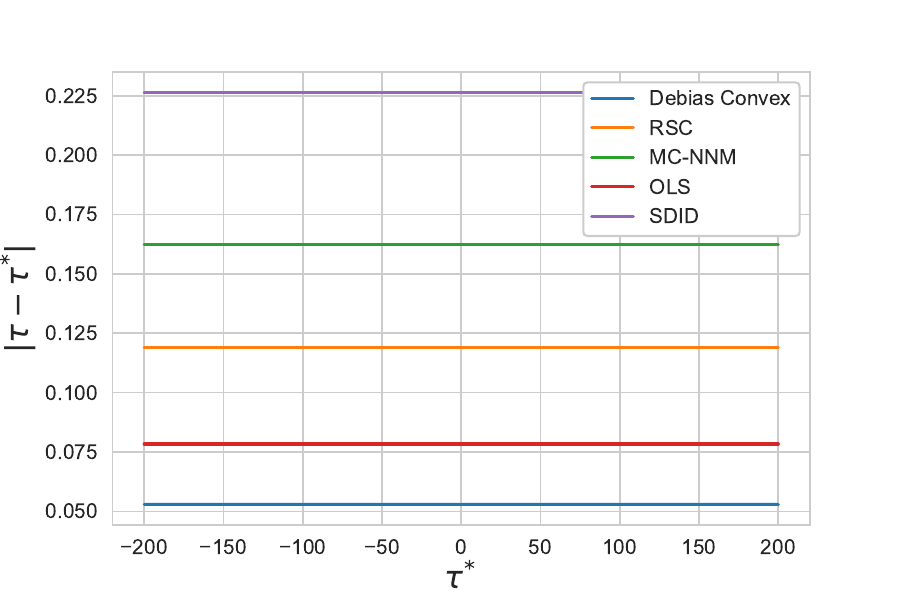}
     \caption{An example of showing that $|\tau-\tau^{*}|$ is invariant for different choice of $\tau^{*}$ for various algorithms.}
      \label{fig:invariance-tau-test}
 \end{figure}
 
%\textbf{Limitations for $\delta$.}

\textbf{Real Data.} The data is collected from a Kaggle competition \citep{Rossmannsales2021}. There are two types of promotion information: Promo $Z_1$ and Promo2 $Z_2$, which are both binary matrices. Since $Z_1$ is store-independent (in a specific day,  stores either all have promotions or all have no promotions), we discard $Z_1$ (viewing $Z_1$ being absorbed by latent features of days) and focus on estimating treatment effects of $Z_2$. We also filter out stores that have no promotions for any days.

\textbf{Asymptotic Normality.} We consider a set of synthetic instances. For each instance, we used a typical procedure for generating low-rank non-negative matrices $M^{*} \in \R^{n_1\times n_2}$ (e.g. \cite{cemgil2008bayesian,farias2021near}). We select the rank $r$ and mean-value $\bar{M^{*}}$. Given $(r, \bar{M^{*}})$, we generate $U \in \R^{n_1\times r}$ and $V \in \R^{n_2 \times r}$ with entries drawn i.i.d from $\mathrm{Gamma}(2, 1)$, and set $M^{*} = kUV^{T}$ with $k$ chosen that the mean-value is indeed $\bar{M}^{*}$. For $E$ and $\delta$, its entries were drawn i.i.d from a Gaussian $\mathcal{N}(0,\sigma)$ and $\mathcal{N}(0, \sigma_{\delta})$ respectively. The observation is given by $O = M^{*} + E + \tau^{*}Z + \delta \circ Z.$ 

\cref{fig:stagger-distribution} confirms the asymptotic normality of $\tau^{d}$ using an ensemble of 10,000 instances with $r = 10, \bar{M^{*}}=10, \tau^{*}=\sigma=\sigma_{\delta}=1$, and randomly generated stagger patterns $Z$ where $m_1 \sim \mathrm{Uni}[1, n_1), m_2 \sim \mathrm{Uni}[n_2/2, n_2)$.\footnote{Experiments on other tested patterns (e.g., block patterns) also showed similar performances.} For each instance, we computed $(\tau^{d}-\tau^{*})/(\sigma/\norm{P_{\hat{T}^{\perp}}(Z)}_{\F})$, which is predicted by \cref{thm:main-theorem} to be distributed according to $\mathcal{N}(0,1)$. \cref{fig:stagger-distribution} (Left) shows the empirical distribution of $\delta_{\tau}$ over instances with $n_1=n_2=100$, overlaid with the true density function of $\mathcal{N}(0,1)$.
We also computed the {\em coverage rate}: the proportion of instances where $\tau^{*}$ is within the predicted 95\% confidence interval of $\tau^{d}$, calculated based on the estimated variance. \cref{fig:stagger-distribution} (Right) shows the desired coverage rates for different sizes of matrices. Both results suggest the effectiveness of our estimator for performing inference on $\tau^*$. 

\textbf{Row-specific Treatment Effects ($k>1$).} We also test the performances of various algorithms for multiple treatment effects ($k>1$). In particular, we reuse the setting for semi-synthetic (Tobacco) in \cref{sec:experiments}, where $(O, Z)$ is generated with the same distribution. Instead of estimating the overall average treatment effects, we now consider to estimate the unit-specific treatment effects $\vec{\tau}^{*}$, where $\vec{\tau}^{*}_{i} = \sum_{j} Z_{ij}\mT_{ij} / \sum_{j} Z_{ij} = \tau^{*} + \delta_{i}.$ 

To estimate $\vec{\tau}^{*}$, we use De-biased Convex algorithm for multiple treatment effects specified in \cref{eq:estimator} (each $Z_{l}$ corresponds to one row). We also extend RSC, MC-NNM, and OLS in this case, where the estimator $\vec{\tau}$ can be obtained directly based on their estimation of counterfactuals $\hat{M}$: $\vec{\tau}_i = \sum_{j} Z_{ij}(O_{ij} - \hat{M}_{ij}) / \sum_{j} Z_{ij}$.\footnote{It is not clear how SDID can be extended for multiple treatment effects here.} \cref{tab:synthetic-average-delta-multiple-tau} reports the average $\|\vec{\tau}-\vec{\tau}^{*}\| / \|\vec{\tau}^{*}\|$ over 1000 instances. The results show the compelling performance of De-biased Convex algorithm compared to the state-of-the-arts. 

\begin{table}[h!]
\centering
\caption{Comparison of our algorithm (De-biased Convex) to benchmarks on the semi-synthetic (Tobacco) dataset. Average normalized error $\|\vec{\tau}-\vec{\tau}^{*}\| / \|\vec{\tau}^{*}\|$ is reported.}
\begin{tabular}{@{}lcccc@{}}
\toprule
Pattern & De-biased Convex & MC-NNM & RSC & OLS \\
\midrule
Block  & 0.05 ($\pm 0.06$) &0.08 ($\pm 0.13$)  & 0.10 ($\pm 0.12$) & 0.10 ($\pm 0.16$)\\
Stagger  & 0.03 ($\pm 0.02$)  &0.05 ($\pm 0.03$)  & 0.06 ($\pm 0.07$) & 0.07 ($\pm 0.03$)\\ 
\bottomrule
\end{tabular}
%\vspace{5pt}
\label{tab:synthetic-average-delta-multiple-tau}
\end{table}

\section{Proof of \cref{thm:error-rate-theorem}\label{sec:proof-of-main-theorem-part2}: Proof of \cref{lem:X-Xstar-Y-Ystar}}\label{sec:showing-XXstar-YYstar}
It remains to prove \cref{lem:X-Xstar-Y-Ystar}. Now a natural idea is to explicitly construct an optimizer $\hat{M}$ (or its dual certificate) of the convex program $g(\cdot,\cdot)$ and then analyze its properties. However, this is challenging (if not infeasible) in this setting. Instead we introduce the following non-convex {\em proxy} problem which plays a critical role in the analysis (note that our algorithm itself does {\em not} involve solving this proxy problem): 
\begin{align}
    \minimize_{X \in \R^{n\times r}, Y \in \R^{n\times r}, \tau \in \R} f(X, Y; \tau) :=  \frac{1}{2}\norm{O - XY^{\top} - \tau Z}_{F}^{2} + \frac{\lambda}{2} \norm{X}_{\F}^2 + \frac{\lambda}{2}\norm{Y}_{\F}^2 \label{eq:non-convex-program}
\end{align}
\cref{eq:non-convex-program} can be viewed as minimizing the Euclidean error with a hard constraint $\rank(XY^{\top}) = r$ and additional regularization terms that serve to maintain an important `balance' between $X$ and $Y$. The hope is that $X \approx X^{*}, Y \approx Y^{*}$ where $M^{*}=X^{*}Y^{*\top}$. Recall that we take $\lambda := \Theta(\sigma \sqrt{nr} \log^{1.5}(n))$ throughout the proof of \cref{thm:error-rate-theorem}.

The proof of \cref{lem:X-Xstar-Y-Ystar} involves two steps, which we will take in \cref{sec:connecting-convex-non-convex,sec:properties-non-convex-estimators}.
\begin{enumerate}
\item {\em Connecting the Convex and Non-convex Optimizers}: The first step is to show that if a critical point $(X, Y)$ of $f$ is in $\Bscr$ (see \cref{eq:X-Xstar-Y-Ystar}), then $\hat{M} = XY^{\top}$ is exactly the unique optimal solution of $g$ (\cref{lem:connection-deterministic}). This converts the problem of directly analyzing $\hat{M}$ to analyzing the critical points of $f$. 
\item {\em Properties of the Non-convex Optimizers}: The second step is then to show the existence of such a critical point of $f$ in $\Bscr$ (\cref{thm:non-convex-theorem-Frobenious-deterministic}). %Combining these two steps, \cref{lem:X-Xstar-Y-Ystar} immediately follows. 
\end{enumerate}

As an aside, this larger framework is inspired by recent developments on bridging convex and non-convex problems \cite{chen2019noisy,chen2020bridging} for matrix completion and Robust-PCA. In contrast to those works, which assume a completely random pattern (i.e. independent across entries) of missing or corrupted entries, our deterministic treatment pattern $Z$ necessitates a substantially more-careful analysis.

%
%The proof of \cref{thm:error-rate-theorem} involves two major parts: (i) showing that if a point $(X, Y)$ that is close to $(X^{*}, Y^{*})$ is meanwhile a critical point of $f(X, Y; \tau)$, then this point with $M=XY^{\top}$ is exactly an (unique) optimal point of $g(M,\tau)$, and (ii) showing the existence of such a critical point of $f$ by an inductive argument. Combining these two parts leads to the guarantees in \cref{thm:error-rate-theorem}. The establishment of \cref{thm:main-theorem} requires a generalization of (i) and (ii) from the exact and deterministic statement to an approximate and probabilistic one with much more refinements (e.g., a more careful definition of closeness). %This motivates us to analyze the properties of the critical points of non-convex problem.
%
\subsection{Step 1: Connecting the Convex and Non-convex Optimizers}\label{sec:connecting-convex-non-convex} 
The lemma below connects the optimizer of the convex program $g$ and a  critical point of the non-convex function $f.$ The result is completely deterministic.
\begin{lemma}\label{lem:connection-deterministic}
%Assume \cref{assum:conditions-Z} holds, and $\frac{\sigma}{\sigma_{\min}} \sqrt{n} \leq C_1\frac{1}{\kappa^2 r^2 \log^{5}(n)}.$ Suppose $O = M^{*} + \tau^{*} Z + E$ for a deterministic $E$.\footnote{To simplify the analysis, we use the fact that $\delta$ can be absorbed as a part of noise $E$.} Assume $\|E\| \leq C_2\sigma \sqrt{n}$ with $|\inner{E}{P_{T^{*\perp}}(Z)}| \leq C_3 \sigma \sqrt{n} \|Z\|_{\F}.$ 

%Let $\lambda = C_{\lambda} \sigma \sqrt{n} \log^{1.5}(n)$ with $C_{\lambda} \geq 8C_{3}/C_{r_1}^{1.5}.$  

%Consider any $X, Y \in \R^{n\times r}$, and let $\tau = \frac{\inner{Z}{O-XY^{\top}}}{\|Z\|_{\F}^2}$. 
Suppose $(X, Y) \in \Bscr$ (as defined in \cref{eq:X-Xstar-Y-Ystar}), and that $\nabla f(X, Y; \tau)=0$ with $\tau$ taken to be $\tau = \inner{Z}{O-XY^{\top}}/\|Z\|_{\F}^2$. Then $(XY^{\top}, \tau)$ is the unique optimal solution to the convex program $g(\cdot,\cdot)$ in \cref{eq:convex-program}, for sufficiently large $n$.
\end{lemma}
The proof of \cref{lem:connection-deterministic} is in two parts, first establishing that $(XY^{\top}, \tau)$ is an optimal solution by direct verification of the first-order conditions, and then showing uniqueness via a ``local strict convexity'' property (omitted for simplicity).
\begin{proof}[Proof of \cref{lem:connection-deterministic}]
To show that $(XY^{\top}, \tau)$ is an optimal point of convex program $g$, it suffices to verify the following first-order conditions:
\begin{subequations}\label{eq:convex-conditions-again}
\begin{align}\label{eq:convex-condition-tau-again}
\inner{Z}{O - XY^{\top}  - \tau Z} &= 0, \\
       O - XY^{\top} - \tau Z &= \lambda (UV^{\top} + W), \label{eq:convex-condition-M-again}\\
     P_{T^\perp}(W) &= W, \label{eq:convex-condition-W2-again}\\
     \norm{W} &\leq 1, \label{eq:convex-condition-W1-again} 
\end{align}
\end{subequations}
where $XY^{\top} = U\Sigma V^{\top}$ is the SVD of $XY^{\top}$ and $T$ is the tangent space of $XY^{\top}.$ We verify these in sequence. First, note that \cref{eq:convex-condition-tau-again} is immediately satisfied given the definition of $\tau$. \cref{eq:convex-condition-M-again} will similarly hold by construction: we select $W$ to be $W:= \frac{1}{\lambda} (O - XY^{\top} - \tau Z - \lambda UV^{\top})$. The remaining two conditions can be verified as follows:

{\bf Verifying  \cref{eq:convex-condition-W2-again}:}
It suffices to show that $WV = 0$ and $U^{\top}W = 0$. By the definition of $W$, this is equivalent to showing that
\begin{subequations}\label{eq:O-XY-UV}
\begin{align}
(O-XY^{\top}-\tau Z)V &= \lambda U \label{eq:O-XY-V},\\
(O-XY^{\top}-\tau Z)^{\top} U &= \lambda V \label{eq:O-XY-U}.
\end{align}
\end{subequations}
To show that \cref{eq:O-XY-UV} holds, we use the fact that $\nabla f(X, Y; \tau)=0$. Note that $\nabla f(X, Y; \tau)=0$ is equivalent to
\begin{subequations}\label{eq:O-XY-XY}
\begin{align}
(O-XY^{\top}-\tau Z)Y &= \lambda X \label{eq:O-XY-Y},\\
(O-XY^{\top}-\tau Z)^{\top} X &= \lambda Y \label{eq:O-XY-X}.
\end{align}
\end{subequations}
Then it boils down to replacing $X$ and $Y$ in \cref{eq:O-XY-XY} by $U$ and $V$. To begin, note that left-multiplying $X^{\top}$ (and $Y^{\top}$) on both sides of \cref{eq:O-XY-Y} (and \cref{eq:O-XY-X}) yields 
\begin{align*}
&\lambda X^{\top} X = X^{\top}(O-XY^{\top}-\tau Z)Y = (\lambda Y^{\top} Y)^{\top} = \lambda Y^{\top}Y.
\end{align*}
Hence $X^{\top}X = Y^{\top}Y$ at any critical point of $f$. This `balance' of $X$ and $Y$ enables connections to $U$ and $V$. In fact, we have the following claim.\footnote{For completeness, we present a simplified proof of this claim in \cref{sec:technical-lemmas}.}
\begin{restatable}[Lemma 20 of \cite{chen2019noisy}]{claim}{XYUVbalance}
\label{claim:XY-UVQ}
Suppose $X, Y \in \R^{n\times r}$ with $\rank(X)=\rank(Y)=r.$ If $X^{\top}X = Y^{\top}Y$, then there exists a rotation matrix $Q \in \R^{r\times r}$ such that
\begin{align*}
X &= U\Sigma^{1/2}Q\\
Y &= V\Sigma^{1/2}Q
\end{align*}
where $XY^{\top}=U\Sigma V^{\top}$ is the SVD of $XY^{\top}$ with $U, V \in \R^{n\times r}, \Sigma \in \R^{r\times r}.$
\end{restatable}

From \cref{claim:XY-UVQ}, we can write $X=U\Sigma^{1/2}Q, Y=V\Sigma^{1/2}Q$, where $Q$ is a rotation matrix. Combining this with \cref{eq:O-XY-XY} and right-multiplying $Q^{-1}\Sigma^{-1/2}$ on both sides of both equations, we obtain  \cref{eq:O-XY-UV}, and hence \cref{eq:convex-condition-W2-again} holds. 

{\bf Verifying  \cref{eq:convex-condition-W1-again}:}
By our selection of $W$, and $W = P_{T^{\perp}}(W)$, we have
\begin{align}
\|W\| &= \|P_{T^{\perp}}(W)\| \nonumber\\
&=\lambda^{-1} \|P_{T^{\perp}} (O - XY^{\top} - \tau Z - \lambda UV^{\top})\| \nonumber\\ 
&\overset{(i)}{=} \lambda^{-1} \|P_{T^{\perp}}(M^{*}) + P_{T^{\perp}}(\hat{E}) + (\tau^{*}-\tau)P_{T^{\perp}}(Z)\| \nonumber\\
&\leq \underbrace{\lambda^{-1}\|P_{T^{\perp}}(M^{*})\|}_{A_1} + \underbrace{\lambda^{-1}\|P_{T^{\perp}}(\hat{E})\|}_{A_2} + \underbrace{\lambda^{-1}|\tau^{*}-\tau|\|P_{T^{\perp}}(Z)\|}_{A_3} \label{eq:bound-W-operator-norm}
\end{align}
where (i) holds because $P_{T^{\perp}}(XY^{\top}) = 0, P_{T^{\perp}}(UV^{\top}) = 0$, and $O = M^{*} + \hat{E} + \tau^{*}Z.$ We will bound $A_1, A_2, A_3$ separately. 

{\it Bounding $A_1$}: By $\lambda = \Theta(\sigma \sqrt{nr} \log^{1.5}(n))$ and $\|P_{T^{\perp}}(M^{*})\|_{\F} \lesssim \frac{\sigma^2 n \log^{5}(n)}{\sigma_{\min}^2} \|X^{*}\|_{\F}^2 $ as shown in \cref{eq:PhatTMstar}, we have that
\begin{align}
A_1:=\lambda^{-1}\|P_{T^{\perp}}(M^{*})\| 
&\leq \lambda^{-1}\|P_{T^{\perp}}(M^{*})\|_{\F} \nonumber\\
&\overset{(i)}{\lesssim} \frac{\sigma^2 \kappa r n \log^{5}(n)}{\sigma_{\min}} \frac{1}{\sigma \sqrt{nr} \log^{1.5}(n)} \nonumber\\
&= \frac{\sigma \kappa r^{0.5} \sqrt{n} \log^{3.5}(n)}{\sigma_{\min}} \overset{(ii)}{=} O(1/\log^{1.5}(n)) \label{eq:PTperpMstar}
\end{align}
where (i) uses $\|X^{*}\|_{\F} \leq \sqrt{r \sigma_{\max}}$ and (ii) uses $\frac{\sigma}{\sigma_{\min}} \sqrt{n} \lesssim \frac{1}{\kappa^2 r^2 \log^{5}(n)}.$ 

{\it Bounding $A_2$}: Using $\|\hat{E}\| \lesssim \sigma\sqrt{n}$ (\cref{eq:op-E}), we have
\begin{align}
A_2 := \lambda^{-1}\|P_{T^{\perp}}(\hat{E})\| \leq \lambda^{-1} \|\hat{E}\| = O(1/\log^{1.5}(n)). \label{eq:PTperpE}
\end{align}

{\it Bounding $A_3$}: We will use the following fact:
\begin{claim}
If $(XY^{\top}, \tau)$ satisfy \cref{eq:convex-condition-tau-again,eq:convex-condition-M-again,eq:convex-condition-W2-again}, then
\begin{align*}
(\tau - \tau^{*})\norm{P_{T^{\perp}}(Z)}_{\F}^2 =  \lambda \inner{Z}{UV^{\top}} + \inner{P_{T^{\perp}}(Z)}{\hat{E}} + \inner{Z}{P_{T^{\perp}}(M^{*})}.
\end{align*}
\end{claim}
\begin{proof}
This decomposition holds due to the same proof in \cref{lem:tau-decomposition}, which in fact only uses the conditions \cref{eq:convex-condition-tau,eq:convex-condition-M,eq:convex-condition-W2} among the first-order conditions. 
\end{proof}

This leads to
\begin{align*}
\lambda^{-1} |\tau - \tau^{*}| \|P_{T^{\perp}}(Z)\| 
&\leq \underbrace{\frac{|\inner{Z}{UV^{\top}}|\|P_{T^{\perp}}(Z)\|}{\|P_{T^{\perp}}(Z)\|_{\F}^2}}_{B_1} + \underbrace{\frac{\left|\inner{P_{T^{\perp}}(Z)}{\hat{E}}\right|\|P_{T^{\perp}}(Z)\|}{\lambda \|P_{T^{\perp}}(Z)\|_{\F}^2}}_{B_2}\\
&\quad +\underbrace{\frac{|\inner{Z}{P_{T^{\perp}}(M^{*})}|\|P_{T^{\perp}}(Z)\|}{\lambda \|P_{T^{\perp}}(Z)\|_{\F}^2}}_{B_3}.
\end{align*}
We will bound $B_1, B_2, B_3$ separately. 

{\it Bounding $B_1$}: By \cref{eq:assumption3b-mimic}, we have $\frac{|\inner{Z}{UV^{\top}}|\|P_{T^{\perp}}(Z)\|}{\|P_{T^{\perp}}(Z)\|_{\F}^2} \leq 1 -\frac{C_{r_2}}{2\log(n)}$, and hence\footnote{This is in fact why \cref{cond:Z-condition-convex} is needed in the first place: a guarantee for $B_1 < 1$ is required for establishing the optimality. To what extent this assumption can be relaxed is an interesting future direction to explore.}
\begin{align}
B_1 \leq 1-\frac{C_{r_2}}{2\log(n)}. \label{eq:B1-bound}
\end{align}

{\it Bounding $B_2$}: 
\begin{align*}
B_2 
&= \frac{\left|\inner{P_{T^{\perp}}(Z)}{\hat{E}}\right|\|P_{T^{\perp}}(Z)\|}{\lambda \|P_{T^{\perp}}(Z)\|_{\F}^2} \\
&\overset{(i)}{\leq} \frac{\left|\inner{P_{T^{\perp}}(Z)}{\hat{E}}\right|}{\lambda \|P_{T^{\perp}}(Z)\|_{\F}}\\
&\overset{(ii)}{\lesssim}  \frac{\sigma r^{1.5}\kappa^2 \sqrt{n} \log^{3}(n)}{\lambda \sigma_{\min}} \|\hat{E}\| + \frac{\log^{0.5}(n)}{\lambda  \|Z\|_{\F}} \left|\inner{P_{T^{*\perp}}(Z)}{\hat{E}}\right|,
\end{align*}
where (i) is due to $\|P_{T^{\perp}}(Z)\| \leq \|P_{T^{\perp}}(Z)\|_{\F}$, and (ii) follows nearly identically the analysis for \cref{eq:bound-A1-main-theorem}. Next, we use the fact that $\left|\inner{P_{T^{*\perp}}(Z)}{\hat{E}}\right| \lesssim \sigma \sqrt{nr} \|Z\|_{\F}$ due to the following
\begin{align*}
\left|\inner{P_{T^{*\perp}}(Z)}{\hat{E}}\right| 
&= \left|\inner{Z-P_{T^{*}}(Z)}{\hat{E}}\right|\\
&\leq \left|\inner{Z}{\hat{E}}\right| +  \left|\inner{P_{T^{*}}(Z)}{\hat{E}}\right|\\
&\overset{(i)}{\leq} \sigma \sqrt{n} \|Z\|_{\F} + \|\hat{E}\| \|P_{T^{*}}(Z)\|_{*}\\
&\overset{(ii)}{\leq} \sigma \sqrt{n} \|Z\|_{\F} + \|\hat{E}\| \sqrt{2r} \|P_{T^{*}}(Z)\|_{\F}\\
&\lesssim \sigma\sqrt{n} \sqrt{r} \|Z\|_{\F} 
\end{align*}
Here in (i) we use that $|\inner{Z}{\hat{E}}| \lesssim \sqrt{n} \|Z\|_{\F}$ (\cref{eq:inner-Z-E}) and the trace inequality, and in (ii) we use that $\|A\|_{*} \leq \rank(A) \|A\|_{\F}$ and $\rank(P_{T^{*}}(Z)) \leq 2r$ by the definition of $T^{*}.$ Plugging in
$\|\hat{E}\| \lesssim \sigma \sqrt{n}$ (\cref{eq:op-E}) and $\left|\inner{P_{T^{*\perp}}(Z)}{\hat{E}}\right| \lesssim \sigma \sqrt{nr} \|Z\|_{\F}$ back to the bound of $B_2$, we obtain
\begin{align}
B_2 \leq C \frac{\sigma^2 r^{1.5}\kappa^2 n \log^{3}(n)}{\lambda \sigma_{\min}} + C \frac{\sigma\log^{0.5}(n)\sqrt{nr}}{\lambda} \overset{(i)}{\leq} \frac{C_{r_1}}{4\log(n)}, \label{eq:B2-bound}
\end{align}
where (i) is by using $\frac{\sigma \sqrt{n}}{\sigma_{\min}} \lesssim \frac{1}{\kappa^2 r^2 \log^{5}(n)}$ and taking $\lambda = C_{\lambda} \sigma \sqrt{nr} \log^{1.5}(n)$ for large enough $C_{\lambda}$.

{\it Bounding $B_3$}: 
\begin{align}
B_3=\frac{|\inner{Z}{P_{T^{\perp}}(M^{*})}|\|P_{T^{\perp}}(Z)\|}{\lambda \|P_{T^{\perp}}(Z)\|_{\F}^2} 
&\leq \frac{|\inner{P_{T^{\perp}}(Z)}{P_{T^{\perp}}(M^{*})}|}{\lambda \|P_{T^{\perp}}(Z)\|_{\F}} \nonumber\\
&\overset{(i)}{\leq} \frac{\|P_{T^{\perp}}(Z)\|_{\F} \|P_{T^{\perp}}(M^{*})\|_{\F}}{\lambda \|P_{T^{\perp}}(Z)\|_{\F}} \nonumber\\
&\leq \lambda^{-1} \|P_{T^{\perp}}(M^{*})\|_{\F} \overset{(ii)}{\lesssim} \frac{1}{\log^{1.5}(n)}, \label{eq:B3-bound}
\end{align}
where (i) is by Cauchy-Schwartz, and (ii) is by \cref{eq:PTperpMstar}. 

Combining \cref{eq:bound-W-operator-norm,eq:PTperpMstar,eq:PTperpE,eq:B1-bound,eq:B2-bound,eq:B3-bound}, we have, for large enough $n$, 
\begin{align*}
\|W\| 
&\leq A_1 + A_2 + B_1 + B_2 + B_3\\
&\leq O\left(\frac{1}{\log^{1.5}(n)}\right) + 1 - \frac{C_{r_1}}{2\log(n)} + \frac{C_{r_1}}{4\log(n)} + O\left(\frac{1}{\log^{1.5}(n)}\right)\\
&< 1.
\end{align*}
This establishes \cref{eq:convex-condition-W1-again}. %and completes the proof of \cref{lem:connection-deterministic}. 
%
%To establish the proof, we utilize the special form of $\tau$ and build on the framework in \cite{chen2019noisy,chen2020bridging}, along with  results for the soft-thresholding estimator \cite{mazumder2010spectral}. By definition, note that (a) $\tau \norm{Z}_{\F}^2 = \inner{Z}{O-XY^{\top}}$. Furthermore, let $R := \frac{1}{\lambda} (O - XY^{\top} - \tau Z - \lambda UV^{\top}).$ One is able to show that (b) $P_{T^{\perp}}(R) \approx R$ and $\norm{R} \leq 1.$  Recall that the first order conditions of the convex program for $(\hat{M}, \hat{\tau})$ are: (i) $\hat{\tau} \norm{Z}_{\F}^2 = \inner{Z}{O-\hat{M}}$ (\cref{eq:convex-condition-tau-mdim}); and (ii) there exists certificate $W$ such that $O - \hat{M} - \hat{\tau} Z = \lambda (\hat{U}\hat{V}^{\top} + W)$ and $\norm{W} \leq 1, P_{T^{\perp}}(W) = W$ (\cref{eq:convex-condition-M-mdim}).
% This shows that $(XY^{\top}, \tau)$ satisfies the first-order conditions of $g$ approximately (compare (a) and (b) with (i) and (ii) respectively) and will eventually imply that $XY^{\top} \approx \hat{M}$ and $\tau \approx \hat{\tau}$ and claimed ``closesness'' between $\hat{T}$ and $T$.
\end{proof}

\newcommand{\sSNR}{\frac{\sigma \sqrt{n}}{\sigma_{\min}} \lesssim \frac{1}{r^2\kappa^2 \log^{5}(n)}}

\subsection{Step 2: Properties of the Non-convex Optimizers}\label{sec:properties-non-convex-estimators}
The final part of the proof of \cref{thm:error-rate-theorem} is to show the existence of a point $(X, Y) \in \Bscr$  (see \cref{eq:X-Xstar-Y-Ystar}) satisfying $\nabla f(X, Y; \tau) = 0.$
We achieve this by analyzing an alternating gradient descent algorithm (Algorithm \ref{alg:GD}) on $f$, initialized at $(X^{*}, Y^{*}).$ Note that this gradient descent algorithm exists purely for the purpose of analyzing the properties of the convex optimizer of $g$, so this idealized choice of initialization is not illegitimate. Algorithm \ref{alg:GD} contains a learning rate $\eta$, which we take to be $\eta = \Theta\left(({\kappa^3 n^{20} \sigma_{\max}})^{-1}\right).$
\begin{algorithm}
\caption{Gradient Descent of Non-convex Optimization} \label{alg:GD}
{\bf Input:} the observation $O$ and $Z$
\begin{algorithmic}[1]

\State{\textbf{{Initialization}}: $X^{0} = X^{*}, Y^{0}=Y^{*}, \tau^{0} = \frac{\inner{Z}{O-X^{0}Y^{0\top}}}{\norm{Z}_{\F}^2}$ where  $X^{*} = U^{*}(\Sigma^{*})^{1/2}, Y^{*} = V^{*}(\Sigma^{*})^{1/2}$.}

\State{\textbf{{Gradient updates}}: \textbf{for }$t=0,1,\ldots $
\textbf{do}
 \begin{subequations}\label{subeq:gradient_update_ncvx-loo}
\begin{align}
X^{t+1}= & X^{t}-\eta[(X^{t}Y^{t\top} + \tau^{t} Z - O)Y^{t}+\lambda X^{t}]; \nonumber \\
Y^{t+1}= & Y^{t}-\eta[(X^{t}Y^{t\top} + \tau^{t} Z - O)^{\top}X^{t}+\lambda Y^{t}]; \nonumber \\
\tau^{t+1} = & \frac{\inner{Z}{O - X^{t+1}Y^{(t+1)\top}}}{\norm{Z}_{\F}^2} \nonumber
\end{align}
where $\eta$ determines the learning rate. 
\end{subequations}
}
\end{algorithmic}
\end{algorithm}

%In the first part of the proof of ,  we show that an alternating gradient descent algorithm (Algorithm \ref{alg:GD}) on $f$, initialized at $(X^{*}, Y^{*})$, converges to $(X, Y, \tau)$ with vanishing gradients (and hence is a critical point of $f$), and such that $(X, Y)$ is `close' to $(X^{*}, Y^{*})$.  %See \cref{thm:non-convex-theorem-Frobenious} below.
%Note that it is easy to compute the gradient of $f$: 
%\begin{align}
%\nabla_{X} f(X, Y; \tau) &= (XY^{\top} + \tau Z - O)Y + \lambda X\\
%\nabla_{Y} f(X, Y; \tau) &= (XY^{\top} + \tau Z - O)^{\top}X + \lambda Y.
%\end{align}

Our measure of the accuracy of $(X^t,Y^t)$ should be rotation-invariant (since $X^{t}H^{t}H^{t\top}Y^{t\top} = X^{t}Y^{t\top}$ for any rotation/orthogonal matrix $H \in \O^{r\times r}$). Thus, the coming result is stated for the optimal rotation for aligning $(X^{t}, Y^{t})$ to $(X^{*}, Y^{*})$: 
\begin{align}
\bm{H}^{t} :=\arg\min_{\bm{R}\in\mathcal{O}^{r\times r}} \norm{X^{t}R-X^{*}} _{\mathrm{F}}^2 + \norm{Y^{t}R-Y^{*}} _{\mathrm{F}}^2. \label{eq:rotation-H}
\end{align}

We aim to establish that if $(X^{t}H^{t}, Y^{t}H^{t}) \in \Bscr$, then $(X^{t+1}H^{t+1}, Y^{t+1}H^{t+1}) \in \Bscr$. Then by taking $t\rightarrow \infty$, the iteration sequence converges to a point $(X, Y) \in \Bscr$ with vanishing gradient (by the nature of gradient descent).

The following lemma provides guarantees for the iteration sequence by induction. The proof, which requires only (lengthy) algebra, is deferred to \cref{sec:proof-non-convex-determinisitc}.

%\begin{lemma}\label{lem:induction-Frobenious}
%Assume \cref{cond:Z-condition-nonconvex} holds and $\frac{\sigma}{\sigma_{\min}} \sqrt{n} \leq C_1\frac{1}{\kappa^2 r^{2}\log^{5}(n)}.$ 
%Suppose $O = M^{*} + \tau^{*}Z + E$ for a deterministic $E$.\footnote{To simplify the analysis, we use the fact that $\delta$ can be absorbed as a part of noise $E$. } Assume $\norm{E} \leq C_2 \sigma \sqrt{n}$ and $|\inner{Z}{E}| \leq C_3 \sigma\sqrt{n}\norm{Z}_{\F}.$
%
%Let $\lambda = C_{\lambda} \sigma \sqrt{n} \log^{1.5}(n), \eta = \frac{C_{\eta}}{\kappa^3 n^{20} \sigma_{\max}}.$
%Suppose the induction hypothesis (\ref{eq:induction-Frobenious}) holds for $q=t$ with (pre-determined) constants $C_{\F}, C_{B}$. Then (\ref{eq:induction-Frobenious}) holds for $q=t+1$.
%\begin{subequations}\label{eq:induction-Frobenious}
%\begin{align}    
%	\norm{F^{q}H^{q} - F^{*}}_{\F} &\leq C_{\F} \error \norm{F^{*}}_{\F} \label{eq:FtHt-Fstar-Fnorm-Frobenious}\\
%       \norm{X^{q \top} X^{q} - Y^{q \top}Y^{q}}_{\F} &\leq C_{B} \frac{\sigma}{\kappa n^{15}} \label{eq:XX-YY-Frobenious}.
% \end{align}  
%\end{subequations} 
%%Here, $C_{\F}$ is a constant depending (polynomially) on $C_1, C_2, C_3, C_{\lambda}, C_{\eta}, C_{r_1}$.
%\end{lemma}
\begin{lemma}\label{lem:induction-Frobenious}
Suppose the following induction hypothesis (\ref{eq:induction-Frobenious-main}) holds for $q=t$ with (pre-determined) constant $C_{B}$. \begin{subequations}\label{eq:induction-Frobenious-main}
\begin{align}    
	(X^qH^q,Y^qH^q) &\in \Bscr \label{eq:FtHt-Fstar-Fnorm-Frobenious-main}\\
       \|{X^{q \top} X^{q} - Y^{q \top}Y^{q}}\|_{\F} &\leq C_{B} \frac{\sigma}{\kappa n^{15}} \label{eq:XX-YY-Frobenious-main}.
 \end{align}  
 \end{subequations}
 Then (\ref{eq:induction-Frobenious-main}) holds for $q=t+1$, and the following inequality holds:
\begin{align}
         f(X^{q+1}, Y^{q+1}; \tau^{q+1}) &\leq f(X^{q}, Y^{q}; \tau^{q}) - \frac{\eta}{2} \norm{\nabla f(X^{q}, Y^{q}; \tau^{q})}_{\F}^2. \label{eq:f-decrease-Frobenious-main}
\end{align}
\end{lemma}

%Note that \cref{eq:FtHt-Fstar-Fnorm-Frobenious-main} corresponds to \cref{eq:X-Xstar-Y-Ystar}. 
By \cref{lem:induction-Frobenious}, we can establish the following lemma. Combined with \cref{lem:connection-deterministic}, this completes the proof of \cref{lem:X-Xstar-Y-Ystar} (and thus \cref{thm:error-rate-theorem}).
%To prove \cref{thm:non-convex-theorem-Frobenious}, we establish \cref{lem:induction-Frobenious} by the mathematical induction. Then \cref{eq:gradient-small-main} and \cref{eq:FtHt-Fstar-Fnorm-main} are direct implications of \cref{eq:f-decrease-Frobenious} and \cref{eq:FtHt-Fstar-Fnorm-Frobenious} respectively. 
\begin{lemma}\label{thm:non-convex-theorem-Frobenious-deterministic}
There exists  $(X, Y) \in \Bscr$ (as defined in \cref{eq:X-Xstar-Y-Ystar}) and $\nabla f(X, Y; \tau)=0$ with $\tau = \inner{Z}{O-XY^{\top}}/\|Z\|_{\F}^2$.
\end{lemma}
\begin{proof}[Proof of \cref{thm:non-convex-theorem-Frobenious-deterministic}]
%Consider the compact set $\mathcal{B} \in \R^{n\times 2r}$ defined as:
%\begin{align*}
%\mathcal{B} = \left\{[X; Y]~\Big|~\|X - X^{*}\|_{\F} + \|Y - Y^{*}\|_{\F} \leq C_{\F} \error \|X^{*}\|_{\F}\right\}.
%\end{align*}
Consider minimization of the objective function $\|\nabla f(X, Y; \tau)\|_{\F}$ with  $\tau = \inner{Z}{O-XY^{\top}}/\|Z\|_{\F}^2$ on (compact set) $\mathcal{B}$. Suppose the minimum is obtained at $(X', Y') \in \mathcal{B}$ with $\|\nabla f(X', Y'; \tau)\|_{\F} = \beta$. % (such a minimum exists by the extreme value theorem). 
It suffices to prove that $\beta=0$. 

Suppose $\beta > 0.$ To arrive at a contradiction, consider the sequence $\{(X^{t}, Y^{t}, \tau^{t})\}$ defined in Algorithm \ref{alg:GD}.
Note that (\ref{eq:induction-Frobenious-main}) trivially holds for $t=0$ since $X^{0}=X^{*}, Y^{0}=Y^{*}.$ Thus \cref{eq:induction-Frobenious-main,eq:f-decrease-Frobenious-main} hold for all $t$ by \cref{lem:induction-Frobenious}.

Now by \cref{eq:FtHt-Fstar-Fnorm-Frobenious-main}, we have $(X^{t}H^{t}, Y^{t}H^{t}) \in \mathcal{B}$ for $t \in \mathbb{N}.$ Next, by \cref{eq:f-decrease-Frobenious-main} and the Monotone Convergence theorem (note that $f$ is non-negative), we have
\begin{align*}
\lim_{t\rightarrow \infty} \|\nabla f(X^{t}, Y^{t}; \tau^{t}) \|_{\F} = 0.
\end{align*}
Also note that the Frobenius norm of the gradient of $f$ is rotation-invariant, i.e., for any rotation $H \in \O^{r\times r}$, 
\begin{align*}
\|\nabla f(X^{t}H, Y^{t}H; \tau^{t})\|_{\F} = \|\nabla f(X^{t}, Y^{t}; \tau^{t}) H\|_{\F} = \|\nabla f(X^{t}, Y^{t}; \tau^{t})\|_{\F}.
\end{align*}
Therefore, 
\begin{align*}
\lim_{t\rightarrow \infty} \|\nabla f(X^{t}H^{t}, Y^{t}H^{t}; \tau^{t}) \|_{\F} = 0.
\end{align*}
A contradiction arises since $(X^{t}H^{t}, Y^{t}H^{t}) \in \mathcal{B}$ and $\|\nabla f(X^{t}H^{t}, Y^{t}H^{t}; \tau^{t}) \|_{\F} \geq \beta > 0$. %Hence $\beta$ must be $0$ and this completes the proof. 
\end{proof}

%\newpage
\section{Proof of Lemma \ref{lem:induction-Frobenious}} \label{sec:proof-non-convex-determinisitc}
Defining the quantities $F^t,F^*,\nabla f(X, Y; \tau) \in \R^{2n\times r}$ greatly simplifies the presentation:  
\begin{equation}
F^{t}:=\left[\begin{array}{c}
X^{t}\\
Y^{t}
\end{array}\right],\qquad\bm{F}^{*}:=\left[\begin{array}{c} 
X^{*}\\
Y^{*}
\end{array}\right],
\qquad\text{and}\qquad 
    \nabla f(X, Y; \tau) := \left[\begin{array}{c}
\nabla_{X} f(X, Y; \tau)\\
\nabla_{Y} f(X, Y; \tau)
\end{array}\right].
\nonumber
\end{equation}

We prove the following result, which is a generalized version of \cref{lem:induction-Frobenious}.

\begin{restatable}{lemma}{LemmaInductionFrobenious}\label{lem:induction-Frobenious-appendix}
Assume \cref{cond:Z-condition-nonconvex} holds and $\frac{\sigma}{\sigma_{\min}} \sqrt{n} \leq C_1\frac{1}{\kappa^2 r^{2}\log^{5}(n)}.$ 
Suppose $O = M^{*} + \tau^{*}Z + E$ for a deterministic $E$. Assume $\norm{E} \leq C_2 \sigma \sqrt{n}$ and $|\inner{Z}{E}| \leq C_3 \sigma\sqrt{n}\norm{Z}_{\F}.$

Let $\lambda = C_{\lambda} \sigma \sqrt{n} \log^{1.5}(n), \eta = \frac{C_{\eta}}{\kappa^3 n^{20} \sigma_{\max}}$.  Suppose the induction hypothesis (\ref{eq:induction-Frobenious}) holds for $q=t$ with (pre-determined) constants $C_{\F}, C_{B}$. Then (\ref{eq:induction-Frobenious}) holds for $q=t+1$.
\begin{subequations}\label{eq:induction-Frobenious}
\begin{align}    
	\norm{F^{q}H^{q} - F^{*}}_{\F} &\leq C_{\F} \error \norm{F^{*}}_{\F} \label{eq:FtHt-Fstar-Fnorm-Frobenious}\\
       \norm{X^{q \top} X^{q} - Y^{q \top}Y^{q}}_{\F} &\leq C_{B} \frac{\sigma}{\kappa n^{15}} \label{eq:XX-YY-Frobenious}.
 \end{align}  
 \end{subequations}
Furthermore, if \cref{eq:induction-Frobenious} holds for $q = t$, the following inequality holds.
\begin{align}
         f(X^{q+1}, Y^{q+1}; \tau^{q+1}) &\leq f(X^{q}, Y^{q}; \tau^{q}) - \frac{\eta}{2} \norm{\nabla f(X^{q}, Y^{q}; \tau^{q})}_{\F}^2. \label{eq:f-decrease-Frobenious}
\end{align}
\end{restatable}

\begin{proof}[Proof of \cref{lem:induction-Frobenious-appendix}]
To establish the proof, below we present the mathematical induction for \cref{eq:FtHt-Fstar-Fnorm-Frobenious}, \cref{eq:XX-YY-Frobenious}, and \cref{eq:f-decrease-Frobenious} separately. 

We will use the following fact, as a direct implication of \cref{eq:FtHt-Fstar-Fnorm-Frobenious}. See the proof in \cref{lem:singular-values}.
\begin{align*}
\sqrt{\sigma_{\min}}/2 &\leq \sigma_{i}(X^{t}) \leq 2\sqrt{\sigma_{\max}}\\
\sqrt{\sigma_{\min}}/2 &\leq \sigma_{i}(Y^{t}) \leq 2\sqrt{\sigma_{\max}}\\
\sigma_{\min}/2 &\leq \sigma_{i}(X^{t}Y^{t \top}) \leq 2\sigma_{\max}
\end{align*}

\textit{Proof of \cref{eq:FtHt-Fstar-Fnorm-Frobenious}.} Consider $M^{*} = X^{*}Y^{*\top}.$ Let the iteration sequences for the non-convex program be $(X^{t}, Y^{t}, \tau^{t})$. Then $F^{*} = [X^{*}; Y^{*}] \in \R^{2n \times r}, F^{t} = [X^{t}; Y^{t}] \in \R^{2n \times r}.$ Based on the induction hypothesis \cref{eq:induction-Frobenious} for $q=t$, we aim to show that \cref{eq:FtHt-Fstar-Fnorm-Frobenious} holds for $q=t+1.$  

Write $\nabla f(X^{t}, Y^{t}; \tau^{t})$ as $\nabla f(F^{t}; \tau^{t})$ for simplification. Consider
\begin{align}
    \norm{F^{t+1}H^{t+1} - F^{*}}_{\F}^2 
    &\overset{(i)}{\leq} \norm{F^{t+1}H^{t} - F^{*}}_{\F}^2\label{eq:FHt+1-F-FHt-F}\\
    &\overset{(ii)}{=} \inner{F^{t}H^{t}-F^{*}-\eta \nabla f(F^{t};\tau^{t})H^{t}}{F^{t}H^{t}-F^{*}-\eta \nabla f(F^{t};\tau^{t})H^{t}}.\nonumber
\end{align}
Here, (i) is due to that $H^{t+1}$ is the optimal rotation to align $F^{t+1}$ and $F^{*}$ (and $H^{t}$ is also a rotation), and (ii) is due to $F^{t+1} = F^{t} - \eta \nabla f(F^{t}; \tau^{t}).$ Write $F^{t}H^{t}, X^{t}H^{t}, Y^{t}H^{t}, \tau^{t}$ as $F, X, Y, \tau$ if there is no ambiguity. Let $\Delta_X = X - X^{*}, \Delta_{Y} = Y-Y^{*}, \Delta_{F} = F-F^{*}$. Then we have
\begin{align}
    \norm{F^{t+1}H^{t}- F^{*}}_{\F}^2
    &=\norm{\Delta_{F}}_{\F}^2 - 2\eta \inner{\Delta_{F}}{\nabla f(F^{t}; \tau^{t})H^{t}} + \underbrace{\eta^2 \norm{\nabla f(F^{t}; \tau^{t})}_{\F}^2}_{A_0} \nonumber\\
    &= \norm{\Delta_{F}}_{\F}^2 - 2\eta \inner{\Delta_{X}}{(XY^{\top}-M^{*}-E+(\tau-\tau^{*})Z)Y + \lambda X} \nonumber \\
    &\quad - 2\eta  \inner{\Delta_Y}{(XY^{\top}-M^{*}-E+(\tau-\tau^{*})Z)^{\top}X + \lambda Y} + A_0 \label{eq:FH-Fstar}
\end{align}

We first state that $A_0$ is negligible, since we take $\eta$ to be sufficiently small. 
\begin{claim}
$$
A_0 \lesssim  \eta \frac{\sigma^2}{n^{15}}. 
$$
\end{claim}
\begin{proof}
Consider 
\begin{align*}
\norm{\nabla_{X}f(F^{t}; \tau^{t})}_{\F}  
&=\|(X^{t}Y^{t\top}+\tau^{t}Z-O)Y^{t}H^{t} + \lambda X^{t}\|_{\F} \\
&\leq \|(X^{t}Y^{t\top}+\tau^{t}Z-O)Y^{t}H^{t}\|_{\F} + \lambda \|X^{t}\|_{\F}\\
&\overset{(i)}{\leq} \|(X^{t}Y^{t\top}+\tau^{t}Z-O)Y^{t}H^{t}\|_{\F} + O(\sigma \sqrt{n} \log^{1.5}(n) \sqrt{r\sigma_{\max}})
\end{align*} 
where (i) is by $\|X^{t}\|_{\F} \lesssim \sqrt{r\sigma_{\max}}$. For $\|(X^{t}Y^{t\top}+\tau^{t}Z-O)Y^{t}H^{t}\|_{\F}$, we have
\begin{align*}
\|(X^{t}Y^{t\top}+\tau^{t}Z-O)Y^{t}H^{t}\|_{\F} 
&\overset{(i)}{=} \|(X^{t}Y^{t\top}+\tau^{t}Z-O)Y^{t}\|_{\F}\\
&\leq \|X^{t}Y^{t\top}+\tau^{t}Z-O\|\|Y^{t}\|_{\F} \\
&\overset{(ii)}{\leq} \left(\|X^{t}Y^{t\top} - X^{*}Y^{*\top}\| + |\tau^{t} - \tau^{*}|\|Z\| +  \|E\|\right) \|Y^{t}\|_{\F}.
\end{align*}
Here, in (i) we use that $\|\cdot\|_{\F}$ is rotation-invariant, in (ii) we use that $O=X^{*}Y^{*\top} + E + \tau^{*}Z.$ Note that
\begin{align*}
\|X^{t}Y^{t\top} - X^{*}Y^{*\top}\| 
&\leq   \|X^{t}Y^{t\top} - X^{*}Y^{*\top}\|_{\F}\\
&\leq \|X^{t} - X^{*}\|_{\F} \|Y^{t}\| + \|X^{*}\| \|Y^{t} - Y^{*}\|_{\F}\\
&\overset{(i)}{\lesssim} \sqrt{\sigma_{\max}} \frac{\sigma \sqrt{n} \log^{2.5}(n) }{\sigma_{\min}} \sqrt{\sigma_{\max} r}\\
&\lesssim \sigma \sqrt{n} \kappa r^{0.5} \log^{2.5}(n).
\end{align*}
In (i) we use that $\max(\|X^{*}\|, \|Y^{t}\|) \lesssim \sqrt{\sigma_{\max}}$ and $\|X-X^{*}\|_{\F}+\|Y-Y^{*}\|_{\F}\lesssim \frac{\sigma \sqrt{n} \log^{2.5}(n) }{\sigma_{\min}} \sqrt{\sigma_{\max} r}.$ Next, consider $|\tau^{t} - \tau^{*}| \|Z\|.$ Note that
\begin{align*}
|\tau^{t} - \tau^{*}| 
&= \left|\frac{\inner{Z}{O-X^{t}Y^{t\top}}}{\|Z\|_{\F}^2} - \tau^{*}\right| \\
&\leq \frac{\inner{Z}{E}}{\|Z\|_{\F}^2} + \frac{\inner{Z}{X^{*}Y^{*\top} - X^{t}Y^{t\top}}}{\|Z\|_{\F}^2}\\
&\lesssim \frac{\sigma \sqrt{n} + \sigma \sqrt{n} \kappa r^{0.5} \log^{2.5}(n) }{\|Z\|_{\F}}
\end{align*} 
Combining above, we have
\begin{align}
 \|X^{t}Y^{t\top}+\tau^{t}Z-O\| &\lesssim \sigma \sqrt{n} \kappa r^{0.5} \log^{2.5}(n). \label{eq:D-bound}\\
  \|(X^{t}Y^{t\top}+\tau^{t}Z-O)Y\|_{\F} &\lesssim \sigma \sqrt{n} \kappa r \log^{2.5}(n)\sqrt{\sigma_{\max}}. \label{eq:DY-bound}
\end{align}
Furthermore, 
\begin{align}
\norm{\nabla f(F; \tau)}_{\F}^2 \lesssim \sigma \sqrt{n} r \kappa \log^{2.5}(n) \sqrt{\sigma_{\max}}. \label{eq:DX-F-bound}
\end{align}
This implies
\begin{align}
|A_0| 
&\lesssim \eta^2 (\sigma \sqrt{n} r \kappa \log^{2.5}(n) \sqrt{\sigma_{\max}})^2 \nonumber\\
&\lesssim \eta \frac{1}{\kappa^3 n^{20}\sigma_{\max}} \sigma^2 r^{2} \kappa^2 \log^{5}(n) n \sigma_{\max} \nonumber\\
&\lesssim \eta \frac{\sigma^2}{n^{15}}. \label{eq:FH-Fstar-A0-bound}
\end{align}
\end{proof}

Continue the analysis on \cref{eq:FH-Fstar}, by some algebra, we have
\begin{align}
    &\norm{F^{t+1}H^{t}- F^{*}}_{\F}^2\nonumber\\
    &= \norm{\Delta_{F}}_{\F}^2 -2\eta \inner{\Delta_X}{(XY^{\top}-M^{*}+(\tau-\tau^{*})Z)Y} \nonumber\\
    &\quad - 2\eta \inner{\Delta_Y}{(XY^{\top}-M^{*}+(\tau-\tau^{*})Z)^{\top}X} \nonumber \\
    &\quad + \underbrace{2\eta \inner{\Delta_X}{EY} + 2\eta \inner{\Delta_Y}{E^{\top}X}}_{A_1} +\underbrace{2\eta \inner{\Delta_X}{-\lambda X} + 2\eta \inner{\Delta_Y}{-\lambda Y}}_{A_2} + A_0 \nonumber\\
    &\leq \norm{\Delta_{F}}_{\F}^2 -2\eta \underbrace{\inner{XY^{\top}-M^{*}+(\tau-\tau^{*})Z}{\Delta_{X}Y^{\top}+X\Delta_{Y}^{\top}}}_{A_3} + A_0 + A_1 + A_2. \label{eq:FH-Fstar-2}
\end{align}
For $A_1$ and $A_2$, we have the following claim.

\begin{claim}\label{claim:FtHt-Fstar-Control-A}
\begin{align}
|A_1| &\lesssim \eta \sigma \sqrt{n} \norm{F^{*}}_{\F} \norm{\Delta_{F}}_{\F}\nonumber\\
|A_2| &\lesssim \eta \sigma \sqrt{n}\log^{1.5}(n) \norm{F^{*}}_{\F} \norm{\Delta_{F}}_{\F}\nonumber
\end{align}
% Hint: $A_1$ and $A_2$ are main terms. $|A_1|$ just use $\norm{E} \lesssim \sigma \sqrt{n}$. $|A_4|$ is the expansion of 
% $\tr(AB) \leq \norm{A}_{\F} \norm{B_F}$ and $XY-M^{*}.$
\end{claim}

\begin{proof}[Proof of \cref{claim:FtHt-Fstar-Control-A}]
Recall that $A_1 = 2\eta \inner{\Delta_X}{EY} + 2\eta \inner{\Delta_Y}{E^{\top}X}$ and we want to show 
$$
|A_1| \lesssim \eta \sigma \sqrt{n} \norm{F^{*}}_{\F} \norm{\Delta_{F}}_{\F}.
$$ 
This can be verified by the following.
\begin{align}
    |A_1| 
    &\lesssim \eta \norm{\Delta_{X}}_{\F}\norm{EY}_{\F} + \eta \norm{\Delta_{Y}}_{\F} \norm{E^{\top}X}_{\F} \nonumber\\
    &\lesssim \eta \norm{\Delta_{F}}_{\F} \norm{E} \norm{F}_{\F} \nonumber\\
    &\overset{(i)}{\lesssim} \eta \sigma \sqrt{n} \norm{\Delta_{F}}_{\F}\norm{F^{*}}_{\F}\nonumber
\end{align}
where (i) is due to $\norm{E} \lesssim \sigma \sqrt{n}$.

Recall that $A_2 = 2\eta \inner{\Delta_X}{-\lambda X} + 2\eta \inner{\Delta_Y}{-\lambda Y}$ and we want to show 
$$
|A_2| \lesssim \eta \sigma \sqrt{n}\log^{1.5}(n) \norm{F^{*}}_{\F} \norm{\Delta_{F}}_{\F}.
$$
This can be verified by the following. 
\begin{align}
    |A_2| &\lesssim \eta \lambda \norm{\Delta_{X}}_{\F}\norm{Y}_{\F} + \eta \lambda \norm{\Delta_{Y}}_{\F} \norm{X}_{\F}\nonumber\\
    &\lesssim \eta \lambda \norm{\Delta_{F}}_{\F}  \norm{F}_{\F}\nonumber\\
    &\lesssim \eta \sigma \sqrt{n} \log^{1.5}(n)\norm{\Delta_{F}}_{\F}\norm{F^{*}}_{\F}.\nonumber
\end{align}
\end{proof}

For $A_3$, note that $XY^{\top} - M^{*} = \Delta_{X}Y^{\top}+X\Delta_{Y}^{\top}-\Delta_{X}\Delta_{Y}^{\top}, \tau - \tau^{*} = \frac{\inner{Z}{M^{*}-XY^{\top}+E}}{\norm{Z}_{\F}^2}$, then we have
\begin{align}
    A_3 
    &= \inner{XY^{\top}-M^{*}+(\tau-\tau^{*})Z}{\Delta_{X}Y^{\top}+X\Delta_{Y}^{\top}} \nonumber\\
    &= \norm{\Delta_{X}Y^{*\top}+X^{*}\Delta_{Y}^{\top}}_{\F}^2 - \underbrace{\inner{\Delta_X\Delta_Y^{\top}}{\Delta_{X}Y^{\top}+X\Delta_{Y}^{\top}}}_{B_0}\nonumber\\
    & \quad + \frac{\inner{Z}{M^{*}-XY^{\top}}\inner{Z}{\Delta_{X}Y^{\top}+X\Delta_{Y}^{\top}}}{\norm{Z}_{\F}^2} + \underbrace{\frac{\inner{Z}{E}\inner{Z}{\Delta_{X}Y^{\top}+X\Delta_{Y}^{\top}}}{\norm{Z}_{\F}^2}}_{B_1}\nonumber\\
    &\overset{(i)}{=} \norm{\Delta_{X}Y^{\top}+X\Delta_{Y}^{\top}}_{\F}^2 - \frac{\inner{Z}{\Delta_{X}Y^{\top}+X\Delta_{Y}^{\top}}\inner{Z}{\Delta_{X}Y^{\top}+X\Delta_{Y}^{\top}}}{\norm{Z}_{\F}^2}\nonumber\\
    &\quad + \underbrace{\frac{\inner{Z}{\Delta_X\Delta_{Y}^{\top}}\inner{Z}{\Delta_{X}Y^{\top}+X\Delta_{Y}^{\top}}}{\norm{Z}_{\F}^2}}_{B_2} - B_0 + B_1\nonumber
\end{align}
In (i), we use again $XY^{\top}-M^{*} = \Delta_{X}Y^{\top}+X\Delta_{Y}^{\top} - \Delta_{X}\Delta_{Y}^{\top}$.

We then have the following claim to control $B_0, B_1, B_2$.
\begin{claim}\label{claim:FtHt-Fstar-Control-B}
\begin{align}
    |B_0| &\lesssim \sigma \sqrt{n} \norm{\Delta_{F}} \norm{F^{*}}_{\F}\nonumber\\
    |B_1| &\lesssim \sigma \sqrt{n} \norm{\Delta_{F}}_{\F} \norm{F^{*}}_{\F}\nonumber\\
    |B_2| &\lesssim \sigma \sqrt{n} \norm{\Delta_{F}} \norm{F^{*}}_{\F}\nonumber
\end{align}
%all one magnitude smaller than the desired result. 
\end{claim}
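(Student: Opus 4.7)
The plan is to handle all three terms by repeated Cauchy--Schwarz together with the submultiplicativity inequality $\norm{AB}_\F \le \norm{A}\norm{B}_\F$, using only the inductive estimate \cref{eq:FtHt-Fstar-Fnorm-Frobenious}, the easy bound $\norm{F^{t}H^{t}} \lesssim \norm{F^{*}}$ derived in \cref{sec:implication-non-convex-proof-Frobenious}, and the deterministic noise hypothesis $|\inner{Z}{E}| \lesssim \sigma \sqrt{n}\,\norm{Z}_\F$. The SNR assumption $\sSNR$ only appears to absorb one stray factor of $\norm{\Delta_F}_\F$ that shows up in the estimates for $B_0$ and $B_2$; everything else is mechanical.

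For $B_0$, I would bound
\[
|B_0| \le \norm{\Delta_X \Delta_Y^{\top}}_\F \cdot \norm{\Delta_X Y^{*\top} + X^{*}\Delta_Y^{\top}}_\F,
\]
estimating the first factor by $\norm{\Delta_X}\,\norm{\Delta_Y}_\F \le \norm{\Delta_F}\,\norm{\Delta_F}_\F$ and the second by $2\,\norm{\Delta_F}_\F\,\norm{F^{*}}$. This yields $|B_0| \lesssim \norm{\Delta_F}\,\norm{\Delta_F}_\F^{2}\,\norm{F^{*}}$, so the target inequality reduces to $\norm{\Delta_F}_\F^{2}\,\norm{F^{*}} \lesssim \sigma \sqrt{n}\,\norm{F^{*}}_\F$. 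Since $\norm{F^{*}} \le \norm{F^{*}}_\F$, it suffices to show $\norm{\Delta_F}_\F^{2} \lesssim \sigma \sqrt{n}$, which follows from the inductive bound $\norm{\Delta_F}_\F^{2} \lesssim (\error)^{2}\sigma_{\max} r$ after a single application of $\sSNR$.

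For $B_1$, I would plug in the assumption $|\inner{Z}{E}| \lesssim \sigma \sqrt{n}\,\norm{Z}_\F$ together with Cauchy--Schwarz $|\inner{Z}{\Delta_X Y^{*\top}+X\Delta_Y^{\top}}| \le \norm{Z}_\F \cdot 2\,\norm{\Delta_F}_\F\,\norm{F^{*}}$ (the operator-norm bound on the second argument comes from $\norm{X}\le \norm{X^{*}}+\norm{\Delta_F} \lesssim \norm{F^{*}}$); the two factors of $\norm{Z}_\F$ cancel the denominator $\norm{Z}_\F^{2}$, giving the claim after using $\norm{F^{*}}\le \norm{F^{*}}_\F$. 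For $B_2$, Cauchy--Schwarz applied to both $Z$-inner products again cancels $\norm{Z}_\F^{2}$ and leaves an expression of exactly the same shape as the one produced for $B_0$, so the bound follows by the argument sketched above.

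The only mildly subtle point is the $B_0$/$B_2$ reduction: the naive application of norm inequalities picks up one more factor of $\norm{\Delta_F}_\F$ than the target bound allows, and this excess is precisely what the SNR hypothesis is tailored to absorb. I do not anticipate any deeper structural obstacle; the entire claim is a short exercise in Cauchy--Schwarz once the inductive bounds from \cref{sec:implication-non-convex-proof-Frobenious} are in hand.
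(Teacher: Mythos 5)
Your proof is correct and takes essentially the same route as the paper: bound each term by Cauchy--Schwarz and submultiplicativity, then absorb the extra factor of $\norm{\Delta_F}_\F^2$ (for $B_0$, $B_2$) via the induction hypothesis \cref{eq:FtHt-Fstar-Fnorm-Frobenious} and the SNR condition. A minor but genuine refinement in yours: by splitting $\norm{\Delta_X\Delta_Y^{\top}}_\F \le \norm{\Delta_X}\,\norm{\Delta_Y}_\F$ you actually produce the operator norm $\norm{\Delta_F}$ on the right-hand side as the claim states, whereas the paper's displayed chain passes through $\norm{\Delta_F}_\F^3\norm{F^{*}}$ and then quietly swaps a Frobenius factor for an operator-norm one; this is harmless (the claim is only invoked downstream with $\norm{\Delta_F}_\F$), but your version is the cleaner justification of the statement as written.
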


\begin{proof}
For $B_0$, recall that $B_0 = \inner{\Delta_X\Delta_Y^{\top}}{\Delta_{X}Y^{\top}+X\Delta_{Y}^{\top}}$ and we want to show $ |B_0| \lesssim \sigma \sqrt{n} \norm{\Delta_{F}} \norm{F^{*}}_{\F}$. This can be verified by
\begin{align}
    |B_0| 
    &\lesssim \norm{\Delta_{X}}_{\F} \norm{\Delta_Y}_{\F} \left( \norm{\Delta_{X}}_{\F} \norm{Y} + \norm{\Delta_Y}_{\F} \norm{X} \right) \nonumber\\
    &\lesssim \norm{\Delta_{F}}_{\F}^3 \norm{F^{*}}\nonumber\\
    &\lesssim \norm{\Delta_{F}} \norm{F^{*}}_{\F} \frac{\sigma \sqrt{n} \log^{2.5}(n)}{\sigma_{\min}} \frac{\sigma \sqrt{n} \log^{2.5}(n)}{\sigma_{\min}} \sqrt{\sigma_{\max} r} \sqrt{\sigma_{\max}} \nonumber\\
    &\lesssim \norm{\Delta_{F}} \norm{F^{*}}_{\F} \frac{\sigma \sqrt{n} \log^{2.5}(n)}{\sigma_{\min}}\sigma \sqrt{n} \log^{2.5}(n) \kappa r^{0.5} \nonumber\\
    &\lesssim \sigma \sqrt{n} \norm{\Delta_{F}} \norm{F^{*}}_{\F}\nonumber
\end{align}
providing that $\sSNR.$ 

For $B_1$, recall that $B_1 = \frac{\inner{Z}{E}\inner{Z}{\Delta_{X}Y^{\top}+X\Delta_{Y}^{\top}}}{\norm{Z}_{\F}^2}$ and we want to show $$
|B_1| \lesssim \sigma \sqrt{n} \norm{\Delta_{F}}_{\F} \norm{F^{*}}_{\F}.
$$ 
Note that $|\inner{Z}{E}|/\norm{Z}_{\F}^2 \lesssim \sigma \sqrt{n}/\norm{Z}_{\F}$. Then
\begin{align}
    |B_1| 
    &\lesssim \frac{\sigma\sqrt{n}}{\norm{Z}_{\F}} \norm{Z}_{\F} \norm{\Delta_{F}}_{\F} \norm{F^{*}} \nonumber\\
    &\lesssim \sigma \sqrt{n} \norm{\Delta_{F}}_{\F} \norm{F^{*}}_{\F}.\nonumber
\end{align}

For $B_2$, recall that $B_2 = \frac{\inner{Z}{\Delta_X\Delta_{Y}^{\top}}\inner{Z}{\Delta_{X}Y^{\top}+X\Delta_{Y}^{\top}}}{\norm{Z}_{\F}^2}$ and we want to show $|B_2| \lesssim \sigma \sqrt{n} \norm{\Delta_{F}} \norm{F^{*}}_{\F}.$ Note that
\begin{align}
    |B_2| 
    &\lesssim \frac{(\norm{Z}_{\F} \norm{\Delta_{F}}_{\F} \norm{\Delta_{F}}_{\F})(\norm{Z}_{\F} \norm{\Delta_{F}}_{\F} \norm{F^{*}})}{\norm{Z}_{\F}^2} \nonumber\\
    &\lesssim \norm{\Delta_{F}}_{\F}^3 \norm{F^{*}} \nonumber\\
    &\overset{(i)}{\lesssim} \sigma \sqrt{n} \norm{\Delta_{F}} \norm{F^{*}}_{\F}\nonumber
\end{align}
where (i) is due to the similar analysis for $B_0$. This completes the proof for the bounds on $B_0, B_1, B_2.$
\end{proof}

Next, note the fact that $P_{T^{\perp}}(\Delta_{X}Y^{\top}+X\Delta_Y^{\top}) = 0$ (The column space of $Y$ is the same as $V$, and the column space of $X$ is the same as $U$). Then $P_{T}(\Delta_{X}Y^{\top}+X\Delta_Y^{\top}) = \Delta_{X}Y^{\top}+X\Delta_Y^{\top}$ and 
\begin{align}
    \left|\inner{Z}{\Delta_{X}Y^{\top}+X\Delta_Y^{\top}}\right|^2 
    &= \left|\inner{P_{T}(Z)}{\Delta_{X}Y^{\top}+X\Delta_Y^{\top}}\right|^2 \nonumber\\
    &\leq \norm{P_{T}(Z)}_{\F}^2\norm{\Delta_{X}Y^{\top}+X^{\top}\Delta_Y^{\top}}_{\F}^2\nonumber
\end{align}
This implies
\begin{align}
    A_3 &\geq \norm{\Delta_{X}Y^{\top}+X\Delta_{Y}^{\top}}_{\F}^2 - \frac{\norm{P_{T}(Z)}_{\F}^2}{\norm{Z}_{\F}^2} \norm{\Delta_{X}Y^{\top}+X\Delta_Y^{\top}}_{\F}^2 + B_2 - B_0 + B_1\nonumber\\
    &\overset{(i)}{\geq} \frac{C_{r_1}}{2\log(n)}\underbrace{\norm{\Delta_{X}Y^{\top}+X\Delta_{Y}^{\top}}_{\F}^2}_{B_3} + B_2 - B_0 + B_1. \label{eq:A3}
\end{align}
where (i) is due to $\|Z\|_{\F}^2 = \|P_{T^{\perp}}(Z)\|_{\F}^2 + \|P_{T}(Z)\|_{\F}^2$ and $\|P_{T^{\perp}}(Z)\|_{\F}^2 \geq \frac{C_{r_1}}{2\log(n)} \|Z\|_{\F}^2$ (by \cref{lem:assumption-hold-around-ball}). To proceed, the following claim connects $\norm{\Delta_{X}Y^{\top}+X\Delta_{Y}^{\top}}_{\F}^2$ and $\norm{\Delta_{F}}_{\F}^2$, whose proof requires some efforts.  
\begin{claim}\label{claim:XDeltaY-lower-bound}
\begin{align}
    \norm{\Delta_{X}Y^{\top}+X\Delta_{Y}^{\top}}_{\F}^2 \geq \frac{\sigma_{\min}}{4}\norm{\Delta_{F}}_{\F}^2  - \frac{\sigma^2}{n^{13}}. \label{eq:DeltaXY}
\end{align}
%Hint, use rectangle paper's idea.
\end{claim}
\begin{proof}
Note that
\begin{align}
    \norm{\Delta_{X}Y^{\top} + X\Delta_{Y}^{\top}}_{\F}^2 
    &= \tr\left((\Delta_{X}Y^{\top} + X\Delta_{Y}^{\top})(\Delta_{X}Y^{\top} + X\Delta_{Y}^{\top})^{\top}\right) \nonumber\\
    &= \norm{\Delta_{X}Y^{\top}}_{\F}^2 + \norm{X\Delta_{Y}^{\top}}_{\F}^2 + \tr(\Delta_{X}Y^{\top}\Delta_{Y}X^{\top} + X\Delta_{Y}^{\top}Y\Delta_{X}^{\top}) \nonumber\\ 
    &= \norm{\Delta_{X}Y^{\top}}_{\F}^2 + \norm{X\Delta_{Y}^{\top}}_{\F}^2 + 2\tr(X\Delta_{Y}^{\top}Y\Delta_{X}^{\top})  \nonumber.
\end{align}
Consider 
$$
\norm{\Delta_{X}Y^{\top}}_{\F}^2 = \sum_{i=1}^{n} \norm{(\Delta_{X})_{i,\cdot} \cdot Y^{\top}}^2 \geq \sum_{i=1}^{n} \sigma^2_r(Y) \norm{(\Delta_{X})_{i,\cdot}}^2 = \sigma_{\min}/2 \norm{\Delta_{X}}_{\F}^2.
$$ 
Similarly, $\norm{X\Delta_{Y}^{\top}}_{\F}^2 \geq \sigma_{\min}/2 \norm{\Delta_{Y}}_{\F}^2.$ This implies that
\begin{align}
   \norm{\Delta_{X}Y^{\top} + X\Delta_{Y}^{\top}}_{\F}^2  
   &\geq \sigma_{\min}/2 \norm{\Delta_{X}}_{\F}^2 + \sigma_{\min}/2 \norm{\Delta_{Y}}_{\F}^2 + 2\tr(X\Delta_{Y}^{\top}Y\Delta_{X}^{\top}) \nonumber\\
    &= \sigma_{\min}/2 \norm{\Delta_{F}}_{\F}^2 + 2\tr(\Delta_{X}^{\top}X\Delta_{Y}^{\top}Y).\nonumber
\end{align}
Furthermore, note that we can do the following derivation to reduce the problem to control $\norm{\Delta_{X}^{\top}X-Y^{\top}\Delta_{Y}}_{\F}$.
\begin{align}
     &\norm{\Delta_{X}Y^{\top} + X\Delta_{Y}^{\top}}_{\F}^2 \nonumber\\ 
    &\geq \sigma_{\min}/2 \norm{\Delta_{F}}_{\F}^2 + 2\tr(\Delta_{X}^{\top}X\Delta_{Y}^{\top}Y)  \nonumber\\
    &= \sigma_{\min}/2 \norm{\Delta_{F}}_{\F}^2 + 2\tr(Y^{\top}\Delta_{Y}\Delta_{Y}^{\top}Y) + 2\tr((\Delta_{X}^{\top}X-Y^{\top}\Delta_{Y})\Delta_{Y}^{\top}Y) \nonumber\\
    &= \sigma_{\min}/2 \norm{\Delta_{F}}_{\F}^2 + 2\norm{Y^{\top}\Delta_{Y}}_{\F}^2 + 2\tr((\Delta_{X}^{\top}X-Y^{\top}\Delta_{Y})\Delta_{Y}^{\top}Y) \nonumber\\
    &\overset{(i)}{\geq} \sigma_{\min}/2 \norm{\Delta_{F}}_{\F}^2-2\norm{\Delta_{X}^{\top}X-Y^{\top}\Delta_{Y}}_{\F} \norm{Y^{\top}\Delta_{Y}}_{\F}. \label{eq:sigma-min-Delta-F}
\end{align}
Here, in (i) we use $\norm{Y^{\top}\Delta_{Y}}_{\F}^2 \geq 0$ and $\tr(AB^{\top}) \leq \norm{A}_{\F}\norm{B}_{\F}$ by Cauchy-Schwartz inequality. 

Then it boils down to control $\norm{\Delta_{X}^{\top}X-Y^{\top}\Delta_{Y}}_{\F}$. To begin, let us state the following technical lemma. 
\begin{lemma}[Orthogonal Procrustes Problem, Lemma 35 in \cite{ma2019implicit}]\label{lem:orthogonal-procrustes}
For $F_0, F_1 \in \R^{n \times m}$, $H$ is the minimizer of the following optimization problem if and only if $F_0^{\top}F_1H$ is symmetric and positive semidefinite:
\begin{align*}
    \minimize_{A \in \O^{r\times r}} \norm{F_1 A - F_0}_{\F}. 
\end{align*}
\end{lemma}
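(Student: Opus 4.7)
The plan is to reduce the Procrustes minimization to a trace-maximization problem, and then characterize the optimizers via the SVD of the cross-product $F_0^\top F_1$. First, I would expand
\[
\|F_1 A - F_0\|_{\F}^2 = \|F_1 A\|_{\F}^2 - 2\langle F_1 A, F_0\rangle + \|F_0\|_{\F}^2 = \|F_1\|_{\F}^2 + \|F_0\|_{\F}^2 - 2\,\mathrm{tr}(A^\top F_1^\top F_0),
\]
where the key observation is that $\|F_1 A\|_{\F}^2 = \mathrm{tr}(A^\top F_1^\top F_1 A) = \mathrm{tr}(F_1^\top F_1)$ is independent of $A$ because $A \in \mathcal{O}^{r \times r}$. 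So the minimization is equivalent to maximizing $\mathrm{tr}(A^\top F_1^\top F_0) = \mathrm{tr}(F_0^\top F_1 A)$ over orthogonal $A$.

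Next, let $F_0^\top F_1 = U \Sigma V^\top$ be a (thin) SVD, with $\Sigma \succeq 0$ diagonal. Then $\mathrm{tr}(F_0^\top F_1 A) = \mathrm{tr}(U\Sigma V^\top A) = \mathrm{tr}(\Sigma (V^\top A U))$. Since $V^\top A U$ is orthogonal, each of its diagonal entries lies in $[-1,1]$, giving the bound $\mathrm{tr}(\Sigma(V^\top A U)) \le \mathrm{tr}(\Sigma) = \sum_i \sigma_i$, with equality iff $V^\top A U = I$, i.e., iff $A = V U^\top$ (up to the freedom in the SVD when $\Sigma$ has repeated or zero singular values). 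This identifies the minimizers of the Procrustes problem as exactly those orthogonal $A$ for which $F_0^\top F_1 A = U \Sigma V^\top \cdot VU^\top = U\Sigma U^\top$, which is manifestly symmetric and positive semidefinite.

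For the ``if'' direction of the iff, suppose $H \in \mathcal{O}^{r\times r}$ satisfies that $F_0^\top F_1 H$ is symmetric and PSD. Write its spectral decomposition $F_0^\top F_1 H = W D W^\top$ with $D \succeq 0$ diagonal. Then $F_0^\top F_1 = W D W^\top H^\top = W D (HW)^\top$ is a valid SVD (since $W$ and $HW$ are orthogonal and $D \succeq 0$), so the singular values of $F_0^\top F_1$ are exactly the diagonal entries of $D$, and consequently $\mathrm{tr}(F_0^\top F_1 H) = \mathrm{tr}(D) = \sum_i \sigma_i(F_0^\top F_1)$, which is the maximum value by the inequality above. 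Hence $H$ is a minimizer. Conversely, any minimizer must achieve this maximum value in the trace, which by the equality case analysis forces $F_0^\top F_1 H = U \Sigma U^\top$ for some SVD $U\Sigma V^\top$ of $F_0^\top F_1$, hence symmetric and PSD.

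There is no real obstacle here; the only subtlety is being careful about non-uniqueness when $F_0^\top F_1$ is singular or has repeated singular values, which is handled cleanly by allowing any valid SVD in the argument above. The entire proof is essentially two lines of linear algebra around a Von Neumann trace inequality specialized to orthogonal matrices.
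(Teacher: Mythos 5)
The paper does not prove this lemma; it is cited directly as Lemma~35 of \cite{ma2019implicit} with no in-paper argument, so there is nothing internal to compare against. Your reconstruction is the classical proof and it is correct. The reduction of the Frobenius objective to maximizing $\mathrm{tr}(F_0^\top F_1 A)$ (using that $\|F_1 A\|_{\F}$ is constant over $A \in \O^{r\times r}$), the SVD step with the bound $\mathrm{tr}(\Sigma M)\le\mathrm{tr}(\Sigma)$ for $M=V^\top A U$ orthogonal, and the equality analysis in both directions are all right. In particular your ``if'' direction is handled cleanly: reading off a valid SVD $F_0^\top F_1 = W D (HW)^\top$ from the spectral decomposition of the symmetric PSD matrix $F_0^\top F_1 H$ directly shows the trace hits $\sum_i\sigma_i$, and your ``only if'' direction correctly tracks the non-uniqueness of minimizers in the rank-deficient case (any rotation on the null space is allowed) while showing that all such minimizers still yield $F_0^\top F_1 H = U\Sigma U^\top$. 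The one small imprecision is the word ``thin'' for the SVD of $F_0^\top F_1$: the orthogonality of $V^\top A U$ needs $U,V \in \O^{r\times r}$, i.e.\ a full SVD, which is what you implicitly use in the rest of the argument; for a square $F_0^\top F_1$ this is no real issue, but the wording should be corrected.
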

In order to show $\Delta_{X}^{\top}X \approx Y^{\top}\Delta_{Y}$, we tend to invoke \cref{lem:orthogonal-procrustes} and use the property that $X^{\top}X \approx Y^{\top}Y$. Note that by \cref{lem:orthogonal-procrustes}, we have $(F^{t}H^{t})^{\top}F^{*}$ is symmetric since $H^{t}$ is the optimal rotation to align $F^{t}$ and $F^{*}$. Recall that $X = X^{t}H^{t}, Y = Y^{t}H^{t}.$ This implies that $X^{\top}X^{*} + Y^{\top}Y^{*}$ is symmetric. Furthermore, since $X^{\top}X, Y^{\top}Y$ is symmetric, then 
$$
X^{\top}\Delta_{X} +Y^{\top} \Delta_{Y} = X^{\top}X + Y^{\top}Y - X^{\top}X^{*} - Y^{\top}Y^{*}
$$ 
is also symmetric. Therefore $X^{\top}\Delta_{X} +Y^{\top} \Delta_{Y} = (X^{\top}\Delta_{X} +Y^{\top} \Delta_{Y})^{\top} = \Delta_{X}^{\top}X + \Delta_{Y}^{\top}Y$, which further implies 
\begin{align}
   \Delta_{X}^{\top}X - Y^{\top}\Delta_{Y} = X^{\top}\Delta_{X}  - \Delta_{Y}^{\top}Y.\label{eq:DeltaXX-YDeltaY}
\end{align}
On the other hand, note that
\begin{align}
    X^{*\top}X^{*} - Y^{*\top}Y^{*}
    &=  (X-\Delta_{X})^{\top}(X-\Delta_{X}) - (Y-\Delta_{Y})^{\top}(Y-\Delta_{Y})\nonumber\\
    &= X^{\top}X - Y^{\top}Y + \Delta_{X}^{\top}\Delta_{X} - \Delta_{Y}^{\top}\Delta_Y\nonumber\\
    &\quad - (\Delta_{X}^{\top}X - Y^{\top}\Delta_{Y}) - (X^{\top}\Delta_{X}  - \Delta_{Y}^{\top}Y) \nonumber\\
    &\overset{(i)}{=} X^{\top}X - Y^{\top}Y + \Delta_{X}^{\top}\Delta_{X} - \Delta_{Y}^{\top}\Delta_Y - 2(\Delta_{X}^{\top}X - Y^{\top}\Delta_{Y}) \label{eq:XTX-YTY-DeltaXTX}
\end{align}
where (i) is due to \cref{eq:DeltaXX-YDeltaY}. Also note that $X^{*\top}X^{*} - Y^{*\top}Y^{*} = \Sigma^{*} - \Sigma^{*} = 0 $. Then \cref{eq:XTX-YTY-DeltaXTX} implies that
\begin{align}
    \Delta_{X}^{\top}X - Y^{\top}\Delta_{Y} = \frac{1}{2}\left( X^{\top}X - Y^{\top}Y +  \Delta_{X}^{\top}\Delta_{X} - \Delta_{Y}^{\top}\Delta_Y\right).\nonumber
\end{align}
Plug this equality into \cref{eq:sigma-min-Delta-F}, we obtain
\begin{align}
    &\norm{\Delta_{X}Y^{\top} + X\Delta_{Y}^{\top}}_{\F}^2  \nonumber\\
    &\geq \sigma_{\min}/2 \norm{\Delta_{F}}_{\F}^2-2\norm{\Delta_{X}^{\top}X-Y^{\top}\Delta_{Y}}_{\F} \norm{Y^{\top}\Delta_{Y}}_{\F}\nonumber\\
    &\geq \sigma_{\min}/2 \norm{\Delta_{F}}_{\F}^2 - \norm{X^{\top}X - Y^{\top}Y +  \Delta_{X}^{\top}\Delta_{X} - \Delta_{Y}^{\top}\Delta_Y}_{\F} \norm{Y}\norm{\Delta_{Y}}_{\F}. \nonumber
\end{align}
Recall that $\norm{X^{\top}X - Y^{\top}Y}_{\F} = \norm{H^{tT}(X^{tT}X^{t}-Y^{tT}Y^{t})H^{t}} \lesssim \frac{\sigma}{\kappa n^{15}}$ (by \cref{eq:XX-YY-Frobenious}) and $\norm{\Delta_{F}}_{\F} \lesssim \frac{\sigma \sqrt{n} \log^{2.5}(n)}{\sigma_{\min}} \sqrt{\sigma_{\max}r}$, then we have
\begin{align*}
    &\norm{X^{\top}X - Y^{\top}Y +  \Delta_{X}^{\top}\Delta_{X} - \Delta_{Y}^{\top}\Delta_Y}_{\F}\norm{Y}\norm{\Delta_{Y}}_{\F}\\
    &\leq \left(\norm{X^{\top}X - Y^{\top}Y}_{\F} + \norm{\Delta_{X}^{\top}\Delta_{X}}_{\F} + \norm{\Delta_{Y}^{\top}\Delta_{Y}}_{\F}\right)\sqrt{\sigma_{\max}}\frac{\sigma \sqrt{n} \log^{2.5}(n)}{\sigma_{\min}} \sqrt{\sigma_{\max}r} \\
    &\lesssim \left(\frac{\sigma}{\kappa n^{15}} + \norm{\Delta_{F}}_{\F}^2\right) \sigma \sqrt{n} \kappa r^{0.5}\log^{2.5}(n)\\
    &\overset{(i)}{\lesssim} \norm{\Delta_{F}}_{\F}^2 \frac{\sigma_{\min}}{\log^{0.5}(n)} + \frac{\sigma^2}{n^{13}}.
\end{align*}
Here, (i) is due to $\sSNR$. Then this implies, for large enough $n$, 
\begin{align*}
    \norm{\Delta_{X}Y^{\top} + X\Delta_{Y}^{\top}}_{\F}^2  
    &\geq \sigma_{\min}/2 \norm{\Delta_{F}}_{\F}^2 - \norm{\Delta_{F}}_{\F}^2 \frac{\sigma_{\min}}{4} - \frac{\sigma^2}{n^{13}}\\
    &\geq \frac{\sigma_{\min}}{4}\norm{\Delta_{F}}_{\F}^2  - \frac{\sigma^2}{n^{13}}
\end{align*}
which completes the proof.
\end{proof}

Combining \cref{eq:FH-Fstar-2}, \cref{eq:A3}, and \cref{eq:DeltaXY}, we arrive at
\begin{align}
    \norm{F^{t+1}H^{t}- F^{*}}_{\F}^2 
    &\overset{(i)}{=} \norm{\Delta_{F}}_{\F}^2 - 2 \eta A_3 + A_0 + A_1 + A_2 \nonumber\\
    &\overset{(ii)}{\leq} \norm{\Delta_{F}}_{\F}^2 - 2\eta  \left( \frac{C}{\log(n)}\norm{\Delta_{X}Y^{\top}+X^{*}\Delta_{Y}^{\top}}_{\F}^2 + B_2 - B_0 + B_1\right) \nonumber\\
    &\quad + A_0 + A_1 + A_2 \nonumber\\
    &\overset{(iii)}{\leq} \norm{\Delta_{F}}_{\F}^2 - 2\eta  \left( \frac{C}{\log(n)}\frac{\sigma_{\min}}{4} \norm{\Delta_{F}}_{\F}^2 - \underbrace{\frac{C\sigma^2}{\log(n)n^{13}}}_{B_3} +  B_2 - B_0 + B_1\right) \nonumber\\
    &\quad + A_0 + A_1 + A_2 \label{eq:2eta-A0-A1-A2-A4}
\end{align}
where (i) is due to \cref{eq:FH-Fstar-2}, (ii) is due to \cref{eq:A3}, and (iii) is due to \cref{eq:DeltaXY}. Next, due to the bound on $A_0 $ (\cref{eq:FH-Fstar-A0-bound}), the bound on $A_1, A_2$ (\cref{claim:FtHt-Fstar-Control-A}), and the bound on $B_0, B_1, B_2$ (\cref{claim:FtHt-Fstar-Control-B}), we have
\begin{align*}
     &|A_0| + |A_1| + |A_2| + \eta|B_0| + \eta|B_1| + \eta|B_2| + \eta|B_3|\\
     &\lesssim \eta \sigma \sqrt{n} \log^{1.5}(n) \norm{F^{*}}_{\F} \norm{\Delta_{F}}_{\F} + \eta \frac{\sigma^2}{n^{13}}.
\end{align*}

Continue the analysis for \cref{eq:2eta-A0-A1-A2-A4}, we then have (for some constant $C'$), 
\begin{align}
    &\norm{F^{t+1}H^{t+1}- F^{*}}_{\F}^2 \nonumber\\
    &\leq \norm{\Delta_{F}}_{\F}^2 - \eta \frac{C\sigma_{\min}}{\log n}\norm{\Delta_{F}}_{\F}^2 + C'\eta \sigma \sqrt{n} \log^{1.5}(n) \norm{\Delta_{F}}_{\F}  \norm{F^{*}}_{\F} + C' \eta \frac{\sigma^2}{n^{13}} \label{eq:delta-f-bound}.
\end{align}
Plug $\norm{\Delta_{F}} \leq C_{\F} \error \norm{F^{*}}_{\F}$ into above \cref{eq:delta-f-bound} with $C_{F} = \max(2C'/C, 1)$, one can verify that 
\begin{align*}
\norm{F^{t+1}H^{t+1}- F^{*}}_{\F}^2 \leq  \left(C_{\F} \error \norm{F^{*}}_{\F}\right)^2,
\end{align*}
which completes the proof for \cref{eq:FtHt-Fstar-Fnorm-Frobenious}.

\textit{Proof of Eq.~(\ref{eq:XX-YY-Frobenious}).}
Following the algebra of section D.8 in \cite{chen2019noisy}, one can verify the following equality
\begin{align}
    A^{t+1} = (1-\lambda \eta)^2 A^{t} + \eta^2 (Y^{tT}D^{tT}D^{t}Y^{t} - X^{tT}D^{t}D^{tT}X^{t})\nonumber
\end{align}
where 
\begin{align}
    A^{t} &:= X^{tT} X^{t} - Y^{tT} Y^{t}\nonumber\\
    A^{t+1} &:= X^{(t+1) \top}X^{t+1} - Y^{(t+1) \top} Y^{t+1}\nonumber\\
    D^{t} &:= X^{t}Y^{tT} - M^{*} - E + (\tau^{t} - \tau^{*}) Z. \nonumber
\end{align}

One can verify that $\max(\norm{D^{t}Y^{t}}_{\F}, \norm{D^{tT}X^{t}}_{\F}) \lesssim \sigma \sqrt{n} r \kappa \log^{2.5}(n) \sqrt{\sigma_{\max}}$ (by \cref{eq:D-bound}). Then there exists a constant $C$, where
\begin{align}
    \norm{A^{t+1}}_{\F} 
    &\leq (1-\lambda \eta)^2 \norm{A^{t}}_{\F} + \eta^2 \left(\norm{D^{t}Y^{t}}_{\F}^2+\norm{D^{tT}X^{t}}_{\F}^2\right) \nonumber\\
    &\leq (1-\lambda \eta) \norm{A^{t}}_{\F} +  C\eta^2(\sigma \sqrt{n} r \kappa \log^{2.5}(n) \sqrt{\sigma_{\max}})^2 \nonumber\\
    &\leq C_{B} \frac{\sigma}{\kappa n^{15}} - C_{B} \lambda \eta \frac{\sigma}{\kappa n^{15}} +  C\eta^2(\sigma \sqrt{n} r \kappa \log^{2.5}(n) \sqrt{\sigma_{\max}})^2 \nonumber.
\end{align}
In order to show the desired bound $\norm{A^{t+1}}_{\F} \leq  C_{B} \frac{\sigma}{\kappa n^{15}}$, it is sufficient to show that $C_{B} \lambda \eta \frac{\sigma}{\kappa n^{15}} \geq C\eta^2(\sigma \sqrt{n} r \kappa \log^{2.5}(n) \sqrt{\sigma_{\max}})^2.$ This can be obtained by noting that
\begin{align*}
\frac{\kappa n^{15}}{\sigma \lambda \eta} \eta^2(\sigma \sqrt{n} r \kappa \log^{2.5}(n) \sqrt{\sigma_{\max}})^2 
&= \frac{\kappa n^{15}}{\sigma \cdot \sigma \sqrt{n} \log^{1.5}(n) \kappa^3 n^{20} \sigma_{\max}} \sigma^2 n r^{2} \kappa^2 \log^{5}(n) \sigma_{\max}\\
&= \frac{\sqrt{n} r^{2} \log^{3.5}(n)}{n^5}\\
&\ll 1. 
\end{align*}

\textit{Proof of Eq.~(\ref{eq:f-decrease-Frobenious}).}
Note that
\begin{align}
    f(F^{t+1}; \tau^{t}) = \frac{1}{2}\norm{X^{t+1}Y^{t+1T} - M^{*} - E + (\tau^{t}-\tau^{*})Z}_{\F}^2 + \frac{\lambda}{2} \norm{X^{t+1}}_{\F}^2 + \frac{\lambda}{2} \norm{Y^{t+1}}_{\F}^2 \nonumber.
\end{align}

We write $X^{t}, Y^{t}, \tau^{t}$ as $X, Y, \tau$ if there is no ambiguity. Let $D := XY^{\top} - M^{*} - E + (\tau - \tau^{*})Z.$ Then 
\begin{align}
    X^{t+1} &= X - \eta (DY + \lambda X)\nonumber\\
    Y^{t+1} &= Y - \eta (D^{\top}X + \lambda Y).\nonumber
\end{align}
Furthermore, 
\begin{align}
    \norm{\nabla f(F^{t}; \tau^{t})}_{\F}^2 
    &= \norm{DY + \lambda X}_{\F}^2 + \norm{D^{\top}X + \lambda Y}_{\F}^2 \nonumber\\
    &= \tr(Y^{\top}D^{\top}DY) + 2\lambda \tr(X^{\top}DY) + \lambda^2 \tr(X^{\top}X) \nonumber\\
    &\quad + \tr(X^{\top}DD^{\top}X) + 2\lambda \tr(Y^{\top}D^{\top}X) + \lambda^2 \tr(Y^{\top}Y).\nonumber
\end{align}

Consider
\begin{align}
    &\norm{X^{t+1}Y^{t+1T} - M^{*} - E + (\tau^{t}-\tau^{*})Z}_{\F}^2 + \lambda \norm{X^{t+1}}_{\F}^2 + \lambda \norm{Y^{t+1}}_{\F}^2 \nonumber\\
    &=\norm{D - \eta \left((DY+\lambda X)Y^{\top}+ X(X^{\top}D+\lambda Y^{\top})\right) + \eta^2 (DY + \lambda X) (D^{\top}X + \lambda Y)^{\top}}_{\F}^2 \nonumber\\
    &\quad + \lambda \norm{X - \eta (DY + \lambda X)}_{\F}^2 + \lambda \norm{Y - \eta (D^{\top}X + \lambda Y)}_{\F}^2 \nonumber\\
    &= \norm{D}_{\F}^2 + \lambda \norm{X}_{\F}^2 + \lambda \norm{Y}_{\F}^2 + \eta^2 A_0\nonumber \\
    &\quad - 2\eta \left(\tr(D^{\top}DYY^{\top})+2\lambda\tr(D^{\top}XY^{\top}) + \tr(D^{\top}XX^{\top}D)\right) \nonumber\\
    &\quad - 2\eta \left(2\lambda \tr(DYX^{\top}) + \lambda^2 (\tr(X^{\top}X) + \tr(Y^{\top}Y))\right)\nonumber \\
    &= 2f(F^{t}; \tau^{t}) - 2\eta \norm{\nabla f(F^{t}; \tau^{t})}_{\F}^2 + \eta^2 A_0\nonumber
\end{align}
where $A_0$ is the term associated with the coefficient $\eta^2$ and one can verify that
\begin{align}
    \norm{A_0}_{\F} 
    &\leq 2\norm{D}_{\F} \norm{DY+\lambda X}_{\F} \norm{D^{\top}X+\lambda Y}_{\F} + \lambda \norm{DY+\lambda X}_{\F}^2 \nonumber\\
    &\quad + \lambda \norm{D^{\top}X + \lambda Y}_{\F}^2 + \left(\norm{DY+\lambda X}_{\F}\norm{Y^{\top}}_{\F}+ \norm{X}_{\F}\norm{X^{\top}D+\lambda Y^{\top}}_{\F}\right)^2 \nonumber\\
    &\quad + 2\eta \norm{DY+\lambda X}_{\F}\norm{Y^{\top}}_{\F} \left(\norm{DY+\lambda X}_{\F} \norm{D^{\top}X+\lambda Y}_{\F}\right)\nonumber\\
    &\quad + 2\eta \norm{X}_{\F}\norm{X^{\top}D+\lambda Y^{\top}}_{\F}  \left(\norm{DY+\lambda X}_{\F} \norm{D^{\top}X+\lambda Y}_{\F}\right)\nonumber\\
    &\quad + \eta^2 (\norm{DY+\lambda X}_{\F} \norm{D^{\top}X+\lambda Y}_{\F})^2.\nonumber
\end{align}
In order to show that $f(F^{t+1}; \tau^{t}) \leq f(F^{t}; \tau^{t}) - \frac{\eta}{2} \norm{\nabla f(F^{t}; \tau^{t})}_{\F}^2$, it is sufficient to show that $A_0 \eta \leq \norm{\nabla f(F^{t}; \tau^{t})}_{\F}^2.$

Note that $\max\left( \norm{DY+\lambda X}_{\F}, \norm{D^{\top}X+\lambda Y}_{\F}\right) \leq \norm{\nabla f(F^{t};\tau^{t})}_{\F}$, recall that 
$$
\norm{\nabla f(F^{t};\tau^{t})}_{\F} \lesssim \sigma \sqrt{n} r \kappa \log^{2.5}(n) \sqrt{\sigma_{\max}},
$$ then we have
\begin{align*}
     &\eta\norm{A_0}_{\F} \\
     &\lesssim \eta\norm{\nabla f(F^{t};\tau^{t})}_{\F}^2 (\norm{D}_{\F} + \lambda + \norm{F}_{\F}^2) + \eta^2 \norm{\nabla f(F^{t};\tau^{t})}_{\F}^3 \norm{F}_{\F} + \eta^3 \norm{\nabla f(F^{t};\tau^{t})}_{\F}^4\\
     &\overset{(i)}{\lesssim} \norm{\nabla f(F^{t};\tau^{t})}_{\F}^2 \eta( \sigma \sqrt{n} r \kappa \log^{2.5}(n) + \sigma_{\max}r) \\
     &\quad + \norm{\nabla f(F^{t};\tau^{t})}_{\F}^2 \eta^2  \sigma \sqrt{n} r \kappa \log^{2.5}(n) \sqrt{\sigma_{\max}} \sqrt{\sigma_{\max} r} \\
     &\quad + \norm{\nabla f(F^{t};\tau^{t})}_{\F}^2\eta^3 ( \sigma \sqrt{n} r \kappa \log^{2.5}(n) \sqrt{\sigma_{\max}})^2 \\
     &\lesssim \norm{\nabla f(F^{t};\tau^{t})}_{\F}^2 \frac{ \sigma \sqrt{n} r \kappa \log^{2.5}(n) + \sigma_{\max} r}{\kappa^{3}n^{20}\sigma_{\max}} \\
     &\quad +  \norm{\nabla f(F^{t};\tau^{t})}_{\F}^2 \frac{\sigma \sqrt{n} r \kappa \log^{2.5}(n) \sigma_{\max}}{\kappa^{6}n^{40}\sigma_{\max}^2} \\
     &\quad + \norm{\nabla f(F^{t};\tau^{t})}_{\F}^2 \frac{\sigma^2 n r^{2} \kappa^2 \log^{5}(n) \sigma_{\max}}{\kappa^{9} n^{60} \sigma_{\max}^{3}}\\
     &\overset{(ii)}{\lesssim} \frac{\norm{\nabla f(F^{t};\tau^{t})}_{\F}^2}{n^{10}}.
\end{align*}
In (i), we use that $\norm{D}_{\F} \lesssim \sigma \sqrt{n}r^{0.5} \kappa \log^{2.5}(n)$ (shown in \cref{eq:D-bound}). In (ii), we use that $\sSNR$. This implies $\eta A_0  \leq \norm{\nabla f(F^{t}; \tau^{t})}_{\F}^2.$ 

Therefore
\begin{align}
    f(F^{t+1}; \tau^{t}) \leq f(F^{t}; \tau^{t}) - \frac{\eta}{2} \norm{\nabla f(F^{t}; \tau^{t})}_{\F}^2.\nonumber
\end{align}
By our choice of $\tau^{t+1} := \min_{\tau} f(F^{t+1}; \tau)$, we have $f(F^{t+1}; \tau^{t+1}) \leq f(F^{t+1}; \tau^{t}).$ This completes the proof. 
\end{proof}

\section{Technical Lemmas for Theorem \ref{thm:error-rate-theorem}}\label{sec:tech-lemmas-main-theorem}

\begin{lemma}\label{lem:singular-values}
Consider $X, Y \in \R^{n\times r}.$ Suppose there exists two constant $C_{\F}, C_{2}$ such that
\begin{align*}
    \norm{X - X^{*}}_{\F} + \norm{Y - Y^{*}}_{\F} \leq C_{\F} \error \|X^{*}\|_{\F}
\end{align*}
and $\frac{\sigma \sqrt{n}}{\sigma_{\min}} \leq C_2 \frac{1}{\kappa^{0.5} r^{0.5} \log^{3}(n)}.$ Then there exists $N_0$, for $n \geq N_0$, one has for any $i \in [r]$, 
\begin{align*}
    \sigma_{i}(X) &\in \left[\sqrt{\frac{\sigma_{\min}}{2}}, \sqrt{2\sigma_{\max}}\right]\\
    \sigma_{i}(Y) &\in \left[\sqrt{\frac{\sigma_{\min}}{2}}, \sqrt{2\sigma_{\max}}\right]\\
    \sigma_{i}(XY^\top) &\in \left[\frac{\sigma_{\min}}{2}, 2\sigma_{\max}\right].
\end{align*}
\end{lemma}
\begin{proof}
Note that $\norm{X^{*}}_{\F} \leq \norm{X^{*}}\sqrt{r} = \sqrt{\sigma_{\max} r}$. Similarly, $\norm{Y^{*}}_{\F} \leq \sqrt{\sigma_{\max} r}$. Then one has 
\begin{align*}
    \norm{X - X^{*}}_{\F} + \norm{Y - Y^{*}}_{\F} 
    &\leq C_{\F}  \frac{\sigma \sqrt{n}}{ \sigma_{\min}} \log^{2.5}(n)\sqrt{\sigma_{\max} r} \\
    &\leq C_{\F} C_2 \frac{1}{\kappa^{0.5} r^{0.5} \log^{3}(n)}\log^{2.5}(n)\sqrt{\sigma_{\max} r} \\
    &= C_{\F}C_2 \frac{\sqrt{\sigma_{\min}}}{\log^{0.5}(n)}.
\end{align*}
When $n$ goes large enough, $\norm{X - X^{*}}_{\F} \leq \sqrt{\frac{\sigma_{\min}}{8}}, \norm{Y - Y^{*}}_{\F} \leq \sqrt{\frac{\sigma_{\min}}{8}}.$ Then, by Weyl's inequality,
\begin{align}
    \sigma_{i}(X) \leq \sigma_{i}(X^{*}) + \norm{X - X^{*}} \leq \sigma_1(X^{*}) + \norm{X - X^{*}}_{\F} \leq \sqrt{2\sigma_{\max}}. 
\end{align}
We also have $\sigma_{i}(X) \geq \sigma_{i}(X^{*}) - \norm{X - X^{*}} \geq \sigma_{r}(X^{*}) - \norm{X - X^{*}}_{\F} \geq \sqrt{\frac{\sigma_{\min}}{2}}.$ The similar results hold for $\sigma_{i}(Y):$ $\sigma_i(Y) \in \left[\sqrt{\frac{\sigma_{\min}}{2}}, \sqrt{2\sigma_{\max}}\right].$ 

Then consider the bounds for $\sigma_i(XY^{\top})$ where $i \in [r].$ Note that $\sigma_1(XY^{\top}) = \norm{XY^{\top}} \leq \norm{X}\norm{Y} = \sigma_1(X)\sigma_1(Y) \leq 2\sigma_{\max}.$ Furthermore, by the properties of SVD, for any $A \in \R^{n\times m}$ with rank $r$, we have
\begin{align*}
     \sigma_r(A) = \min_{\norm{u}=1, u \in \mathrm{rowspan}(A)}\norm{A u}
\end{align*}
Here, $\mathrm{rowspan}(A)$ is the row space of $A$. This implies
\begin{align*}
    \sigma_r(XY^{\top}) 
    &=  \min_{\norm{u}=1, u \in \mathrm{rowspan}(XY^{\top})} \norm{XY^Tu}\\
    &\overset{(i)}{\geq} \min_{\norm{u}=1, u \in \mathrm{rowspan}(XY^{\top})} \norm{Y^{\top}u} \\
    &\overset{(ii)}{\geq} \sigma_r(X) \cdot \min_{\norm{u}=1, u \in \mathrm{rowspan}(Y^{\top})} \norm{Y^{\top}u}\\
    &\geq \sigma_r(X)\sigma_r(Y)\\
    &\geq \frac{\sigma_{\min}}{2}
\end{align*}
where (i) is due to $Y^{\top}u \in \mathrm{rowspan}(X)$ since $X \in \R^{n\times r}$ has rank-$r$ and (ii) is due to $\mathrm{rowspan}(XY^{\top}) \subset \mathrm{rowspan}(Y^{\top}).$ This completes the proof.  
\end{proof}

\begin{lemma}[Generalization of \cref{lem:applying-the-main-lemma}]\label{lem:general-conditions-small-ball}
Suppose $X, Y \in \R^{n\times r}.$ Let $XY^{\top} = U\Sigma V^{\top}$ be the SVD of $XY^{\top}$. Let $T$ be the tangent space of $XY^{\top}.$ Suppose $\frac{\sigma\sqrt{n}}{\sigma_{\min}} \leq C_2 \frac{1}{\kappa^2 r^{2} \log^{5}(n)}$ and 
\begin{align*}
    \norm{X-X^{*}}_{\F} + \norm{Y-Y^{*}}_{\F} &\leq C_{\F} \frac{\sigma \sqrt{n} \log^{2.5}(n) \sqrt{\sigma_{\max}r}}{\sigma_{\min}}.
\end{align*}
 Assume \cref{cond:Z-condition-nonconvex} holds, then for large enough $n$, 
\begin{align}
    \norm{ZV}_{\F}^2 + \norm{Z^{\top}U}_{\F}^2 &\leq \left(1-\frac{C_{r_1}}{2\log(n)}\right) \norm{Z}_{\F}^2 \label{eq:generalized-condition-1}\\
    \norm{P_{T^{\perp}}(Z)}_{\F}^2 &\geq \frac{C_{r_1}}{2\log(n)} \norm{Z}_{\F}^2 \label{eq:generalized-PTP-Z-ratio}\\
     \norm{P_{T^{\perp}}(Z) - P_{T^{*\perp}}(Z)}_{\F} &\lesssim \frac{\sigma r\kappa^2\sqrt{n} \log^{2.5}(n)}{\sigma_{\min}} \norm{Z}_{\F} \label{eq:PTP-PTPstar-Z-F}\\
     \norm{P_{T^{\perp}}(Z) - P_{T^{*\perp}}(Z)}_{*} &\lesssim \frac{\sigma r^{1.5}\kappa^2\sqrt{n} \log^{2.5}(n)}{\sigma_{\min}} \norm{Z}_{\F}. \label{eq:PTP-PTPstar-Z}
\end{align}
In addition, assume \cref{cond:Z-condition-convex} holds, then
\begin{align}
    \left|\inner{Z}{UV^{\top}}\right| \norm{P_{T^{\perp}}(Z)} &\leq \left(1-\frac{C_{r_2}}{2\log n}\right)  \norm{P_{T^{\perp}}(Z)}_{\F}^2.\label{eq:generalized-condition-2}
\end{align}
\end{lemma}
\begin{proof}
To begin, we have
\begin{align*}
\|XY^{\top} - X^{*}Y^{*\top}\|_{\F} 
&\leq \|X-X^{*}\|_{\F} \|Y^{*}\| +  \|Y-Y^{*}\|_{\F} \|X\|\\
&\lesssim \sigma \sqrt{n} \log^{2.5}(n) \kappa \sqrt{r}
\end{align*}
where the bound on $\|X\|$ is due to \cref{lem:singular-values}. The by Lemma B.2 in \cite{chen2019noisy} (a variant of Davis-Kahan theorem) \footnote{It is straightforward to extend to rectangle matrices by ``symmetric dilation'' technique \cite{chen2019noisy}.}, there exists rotation matrices $R \in \R^{r\times r}$ such that
\begin{align}
\|UR - U^{*}\|_{\F} + \|VR - V^{*}\|_{\F} \lesssim \frac{\sigma \sqrt{n} \log^{2.5}(n) r \kappa}{\sigma_{\min}}.
\end{align}

This implies 
\begin{align*}
    \norm{UU^{\top} - U^{*}U^{*\top}}_{\F} 
    &= \norm{UR(UR)^{\top} - U^{*}U^{*\top}}_{\F}\\
    &\leq \norm{UR - U^{*}}_{\F}\norm{UR} + \norm{U^{*}}\norm{UR - U^{*}}_{\F}\\
    &\leq 2\norm{UR - U^{*}}_{\F} \\
    &\lesssim  \frac{\sigma \sqrt{n} \log^{2.5}(n) r \kappa}{\sigma_{\min}}.
\end{align*}
Similarly, we have 
\begin{align}
\norm{VV^{\top} - V^{*}V^{*\top}}_{\F} &\lesssim  \frac{\sigma \sqrt{n} \log^{2.5}(n) r \kappa}{\sigma_{\min}}.  \nonumber\\
\norm{UV^{\top} - U^{*}V^{*\top}}_{\F} &\lesssim  \frac{\sigma \sqrt{n} \log^{2.5}(n) r \kappa}{\sigma_{\min}}. \label{eq:UV-UVstar-bound}
\end{align}

Then, consider the proof of \cref{eq:generalized-condition-1}. We have
\begin{align*}
    &\norm{ZV}_{\F}^2 + \norm{Z^{\top}U}_{\F}^2\\
    &= \tr(Z^{\top}ZVV^{\top}) + \tr(ZZ^{\top}UU^{\top})\\
    &= \tr(Z^{\top}ZV^{*}V^{*\top}) + \tr(Z^{\top}Z(VV^{\top} - V^{*}V^{*\top})) \\
    &\quad + \tr(ZZ^{\top}U^{*}U^{*\top}) - \tr(ZZ^{\top}(UU^{\top}-U^{*}U^{*\top}))\\
    &\overset{(i)}{\leq} \norm{ZV^{*}}_{\F}^2 + \norm{Z^{\top}U^{*}}_{\F}^2 + \norm{Z^{\top}Z}_{*} \norm{VV^{\top}-V^{*}V^{*\top}} + \norm{ZZ^{\top}}_{*} \norm{UU^{\top} - U^{*}U^{*\top}}\\
    &\overset{(ii)}{\leq} \left(1-\frac{C_{r_1}}{\log(n)}\right) \norm{Z}_{\F}^2 + \norm{Z}_{\F}^2 \frac{\sigma \sqrt{n} \log^{2.5}(n) r \kappa}{\sigma_{\min}}\\
    &\leq \left(1-\frac{C_{r_1}}{2\log(n)}\right) \norm{Z}_{\F}^2
\end{align*}
where (i) is due to $\tr(AB) \leq \norm{A}_{*}\norm{B}$, (ii) is due to $\norm{ZV^{*}}_{\F}^2 + \norm{Z^{\top}U^{*}}_{\F}^2 \leq \left(1-\frac{C_{r_1}}{\log(n)}\right) \norm{Z}_{\F}^2$ and $\norm{Z^{\top}Z}_{*} = \norm{ZZ^{\top}}_{*} = \norm{Z}_{\F}^2$.

Next, consider the proof of \cref{eq:generalized-PTP-Z-ratio}. This is in fact the simple implication of \cref{eq:generalized-condition-1}. Note that
\begin{align}
\norm{P_{T}(Z)}_{\F}^2 
    &= \norm{Z - (I-UU^{\top})Z(I-VV^{\top})}_{\F}^2\nonumber\\
    &= \norm{UU^{\top}Z + (I - UU^{\top})Z(VV^{\top})}_{\F}^2\nonumber\\
    &= \tr\left(UU^{\top}ZZ^{\top}UU^{\top}\right) + \tr\left((I - UU^{\top})Z(VV^{\top})(VV^{\top})Z^{\top}(I - UU^{\top})\right)\nonumber\\
    &= \tr\left(UU^{\top}ZZ^{\top}\right) + \tr\left(Z^{\top}ZVV^{\top}\right) - \tr\left(UU^{\top}ZVV^{\top}Z^{\top}\right)\nonumber\\
    &\leq \norm{Z^{\top}U}_{\F}^2 + \norm{ZV}_{\F}^2 \leq \left(1-\frac{C_{r_1}}{2\log n}\right) \norm{Z}_{\F}^2. \label{eq:PTZ-F-bound}
\end{align}
This implies that $\frac{C_{r_1}}{2\log n}\norm{Z}_{\F}^2 \leq \norm{Z}_{\F}^2 - \norm{P_{T}(Z)}_{\F}^2 $. Note that $\norm{P_{T^{\perp}}(Z)}_{\F}^2 + \norm{P_{T}(Z)}_{\F}^2 = \norm{Z}_{\F}^2$, which completes the proof of \cref{eq:generalized-PTP-Z-ratio}.

Next, consider the proof of \cref{eq:PTP-PTPstar-Z}. Let $\Delta = \PTp{Z} - P_{T^{*\perp}}(Z)$. We have
\begin{align}
    \norm{\Delta}_{*} 
    &= \norm{(I-UU^{\top})Z(I-VV^{\top}) - (I - U^{*}U^{*\top})Z(I - V^{*}V^{*\top})}_{*}\nonumber\\
    &\leq \norm{(I-UU^{\top})Z(I-VV^{\top}) - (I-U^{*}U^{*\top})Z(I-VV^{\top})}_{*} \nonumber\\
    &\quad + \norm{(I-U^{*}U^{*\top})Z(I-VV^{\top}) - (I - U^{*}U^{*\top})Z(I - V^{*}V^{*\top})}_{*}\nonumber\\
    &\leq \norm{(U^{*}U^{*\top}-UU^{\top})Z(I-VV^{\top})}_{*} +  \norm{(I-U^{*}U^{*\top})Z(V^{*}V^{*\top}-VV^{\top})}_{*}\nonumber\\
    &\overset{(i)}{\leq} \norm{(U^{*}U^{*\top}-UU^{\top})Z(I-VV^{\top})}_{\F} \sqrt{\text{rank}(U^{*}U^{*\top}-UU^{\top})} \nonumber\\
    &\quad + \norm{(I-U^{*}U^{*\top})Z(V^{*}V^{*\top}-VV^{\top})}_{\F} \sqrt{\text{rank}(V^{*}V^{*\top}-VV^{\top})} \nonumber\\
    &\leq \norm{Z}_{\F}\left(\norm{(U^{*}U^{*\top}-UU^{\top})} + \norm{(V^{*}V^{*\top}-VV^{\top})}\right) \sqrt{2r}\nonumber\\
    &\lesssim    \frac{\sigma \sqrt{n} \log^{2.5}(n) r^{1.5} \kappa}{\sigma_{\min}} \norm{Z}_{\F}. \nonumber
\end{align}
Here, (i) is due to $\norm{A}_{*} \leq \norm{A}_{\F}\sqrt{\rank(A)}$ and $\rank(AB) \leq \rank(A)$ for any matrices $A$ and $B$. This completes the proof of \cref{eq:PTP-PTPstar-Z}. Next, we prove \cref{eq:PTP-PTPstar-Z-F} by the following
\begin{align}
    \norm{\Delta}_{\F} 
    &\leq \norm{(I-UU^{\top})Z(I-VV^{\top}) - (I - U^{*}U^{*\top})Z(I - V^{*}V^{*\top})}_{\F}\nonumber\\
    &\leq \norm{(U^{*}U^{*\top}-UU^{\top})Z(I-VV^{\top})}_{\F} +  \norm{(I-U^{*}U^{*\top})Z(V^{*}V^{*\top}-VV^{\top})}_{\F}\nonumber\\
    &\leq \norm{Z}_{\F} \left(\norm{(U^{*}U^{*\top}-UU^{\top})} + \norm{(V^{*}V^{*\top}-VV^{\top})}\right)\nonumber\\
    &\lesssim   \frac{\sigma \sqrt{n} \log^{2.5}(n) r \kappa}{\sigma_{\min}}\norm{Z}_{\F}. \label{eq:Delta-bound}
\end{align}

Finally, consider the proof of \cref{eq:generalized-condition-2}. Recall the \cref{cond:Z-condition-convex} provides the following.
\begin{align}
\frac{C_{r_2}}{\log(n)}\norm{P_{T^{*\perp}}(Z)}_{\F}^2 \leq \norm{P_{T^{*\perp}}(Z)}_{\F}^2 -  \left|\inner{Z}{U^{*}V^{*\top}}\right| \norm{P_{T^{*\perp}}(Z)} .\label{eq:technical-lemma-condition-2}
\end{align}
Consider substituting $\PTp{Z}$ with $\PTp{Z} = P_{T^{*\perp}}(Z) + \Delta$ in $\norm{\PTp{Z}}_{\F}^2$, when $\frac{C_{r_2}}{2\log(n)} \leq 1$, we have
\begin{align}
    &\left(1-\frac{C_{r_2}}{2\log(n)}\right)\norm{\PTp{Z}}_{\F}^2 \nonumber\\
    &= \left(1-\frac{C_{r_2}}{2\log(n)}\right)\norm{P_{T^{* \perp}}(Z)}_{\F}^2 + \left(1-\frac{C_{r_2}}{2\log(n)}\right)\norm{\Delta}_{\F}^2 + \left(1-\frac{C_{r_2}}{2\log(n)}\right)2\inner{P_{T^{* \perp}}(Z)}{\Delta} \nonumber\\
    &\overset{(i)}{\geq} \norm{P_{T^{* \perp}}(Z)}_{\F}^2 - \frac{C_{r_2}}{2\log(n)}\norm{P_{T^{* \perp}}(Z)}_{\F}^2 - 2\norm{P_{T^{* \perp}}(Z)}_{\F} \norm{\Delta}_{\F} \nonumber\\
    &\overset{(ii)}{\geq} \norm{P_{T^{* \perp}}(Z)}_{\F}^2 - \frac{C_{r_2}}{2\log(n)}\norm{P_{T^{* \perp}}(Z)}_{\F}^2 - \norm{Z}_{\F}^2 \frac{4C}{r\log^{2.5}(n)} \label{eq:1-Cr2-bound}
\end{align}
where (i) is due to $\norm{\Delta}_{\F}^2 \geq 0$ and $|\inner{A}{B}| \leq \norm{A}_{\F}\norm{B}_{\F}$, (ii) is due to \cref{eq:Delta-bound}.

Next, consider $\left|\inner{Z}{UV^{\top}}\right|\norm{\PTp{Z}}.$ Let $\Delta' = UV^{\top} - U^{*}V^{*\top}.$ Then, we have
\begin{align}
    &\left|\inner{Z}{UV^{\top}}\right|\norm{\PTp{Z}} \nonumber\\
    &= \left|\inner{Z}{U^{*}V^{*\top}+\Delta'}\right|\norm{P_{T^{* \perp}}(Z) + \Delta} \nonumber\\
    &\leq \left|\inner{Z}{U^{*}V^{*\top}}\right|\norm{P_{T^{*\perp}}(Z)} + \left|\inner{Z}{\Delta'}\right|\norm{P_{T^{*\perp}}(Z)} + \left|\inner{Z}{U^{*}V^{*\top}}\right|\norm{\Delta} + \left|\inner{Z}{\Delta'}\right|\norm{\Delta} \nonumber\\
    &\leq \left|\inner{Z}{U^{*}V^{*\top}}\right|\norm{P_{T^{*\perp}}(Z)} + \norm{Z}_{\F} \norm{\Delta'}_{\F} \norm{P_{T^{*\perp}}(Z)} \nonumber\\
    &\quad + \norm{Z}_{\F} \norm{U^{*}V^{*\top}}_{\F} \norm{\Delta} + \norm{Z}_{\F}\norm{\Delta'}_{\F}\norm{\Delta} \nonumber\\
    &\leq \left|\inner{Z}{U^{*}V^{*\top}}\right|\norm{P_{T^{*\perp}}(Z)} + \norm{Z}_{\F}^2 \norm{\Delta'}_{\F} + \norm{Z}_{\F} \sqrt{r} \norm{\Delta}_{\F} + \norm{Z}_{\F}\norm{\Delta'}_{\F}\norm{\Delta} \nonumber\\
    &\overset{(i)}{\leq} \left|\inner{Z}{U^{*}V^{*\top}}\right|\norm{P_{T^{*\perp}}(Z)} + \norm{Z}_{\F}^2 \left( \frac{C}{r\log^{2.5}(n)} + \frac{C}{r^{0.5}\log^{2.5}(n)} + \frac{C^2}{r^2\log^3(n)}\right) \nonumber.
\end{align}
where (i) is due to \cref{eq:Delta-bound} and \cref{eq:UV-UVstar-bound}. This implies
\begin{align}
\left|\inner{Z}{UV^{\top}}\right|\norm{\PTp{Z}} &\leq \left|\inner{Z}{U^{*}V^{*\top}}\right|\norm{P_{T^{*\perp}}(Z)} + \norm{Z}_{\F}^2 \frac{3C^2}{r^{0.5}\log^{2.5}(n)} \label{eq:ZUVT-bound}
\end{align}

Combining \cref{eq:1-Cr2-bound} and \cref{eq:ZUVT-bound}, we have
\begin{align*}
    &\left(1-\frac{C_{r_2}/2}{\log(n)}\right)\norm{\PTp{Z}}_{\F}^2 - \left|\inner{Z}{UV^{\top}}\right|\norm{\PTp{Z}} \nonumber\\
    &\geq \norm{P_{T^{* \perp}}(Z)}_{\F}^2 - \frac{C_{r_2}}{2\log(n)}\norm{P_{T^{* \perp}}(Z)}_{\F}^2 - \norm{Z}_{\F}^2 \frac{4C}{r\log^{2.5}(n)} \\
    &\quad - \left|\inner{Z}{U^{*}V^{*\top}}\right|\norm{P_{T^{*\perp}}(Z)} - \norm{Z}_{\F}^2 \frac{3C^2}{r^{0.5}\log^{2.5}(n)}\\
    &\overset{(i)}{\geq} \frac{C_{r_2}}{\log n}\norm{P_{T^{*\perp}}{Z}}_{\F}^2 - \frac{C_{r_2}}{2\log n}\norm{P_{T^{*\perp}}{Z}}_{\F}^2 - \norm{Z}_{\F}^2 \frac{4C}{r\log^{2.5}(n)}  - \norm{Z}_{\F}^2 \frac{3C^2}{r^{0.5}\log^{2.5}(n)}\nonumber\\
    &\overset{(ii)}{\geq} \frac{C_{r_2}C_{r_1}}{4\log^2 n} \norm{Z}_{\F}^2 -  \frac{3C^2+4C}{\log^{2.5}(n)}\norm{Z}_{\F}^2\nonumber\\
    &\overset{(iii)}{\geq} 0\nonumber
\end{align*}
where (i) is by \cref{eq:technical-lemma-condition-2}, (ii) is by $\norm{P_{T^{*\perp}}(Z)}_{\F}^2 \geq \frac{C_{r_1}}{2\log(n)} \norm{Z}_{\F}^2$ due to the analysis in \cref{eq:PTZ-F-bound}, and (iii) holds for large enough $n$. This implies $\left|\inner{Z}{UV^{\top}}\right|\norm{\PTp{Z}} \leq \left(1-\frac{C_{r_2}/2}{\log(n)}\right)\norm{\PTp{Z}}_{\F}^2$, which completes the proof. 

\end{proof}

\XYUVbalance*
\begin{proof}
Since $\rank(XY^{\top}) \geq \rank(X)+\rank(B) - r = r$ by rank inequality, we have $\Sigma \succ 0.$ Construct $Q := \Sigma^{-1/2}U^{\top}X$ (hence $Q \in \R^{r\times r}$). One can verify that
\begin{align}
U\Sigma^{1/2}Q = UU^{\top}X \overset{(i)}{=} X  \label{eq:USigmaQ}
\end{align}
where (i) is due to that $UU^{\top}$ is a projection matrix and the column space of $X$ is the same as the column space of $U$. From \cref{eq:USigmaQ}, one can also conclude that $\rank(Q) \geq \rank(X) = r$, i.e., $Q$ is invertible. Combining \cref{eq:USigmaQ} with $XY^{\top}=U\Sigma V^{\top}$, we have
\begin{align}
(U\Sigma^{1/2}Q)Y^{\top} = U\Sigma V^{\top} \implies Y = V\Sigma^{1/2}Q^{-\top}. \label{eq:VSigmaQ}
\end{align}
Using \cref{eq:USigmaQ,eq:VSigmaQ} to replace $X$ and $Y$ in $X^{\top}X = Y^{\top}Y$, one can obtain
\begin{align}
(U\Sigma^{1/2} Q)^{\top} (U\Sigma^{1/2} Q) &= (V\Sigma^{1/2} Q^{-\top})^{\top} (V\Sigma^{1/2} Q^{-\top}). \label{eq:UtopU}
\end{align}
Using $U^{\top}U=V^{\top}V=I_{r}$, \cref{eq:UtopU} can be simplified to 
\begin{align}
Q^{\top} \Sigma Q &= Q^{-1}\Sigma Q^{-\top}.\label{eq:QtopSigmaQ}
\end{align}
We hope from \cref{eq:QtopSigmaQ} establish that $Q = Q^{-\top}.$ To see this, suppose $Q = \sum_{i=1}^{r} \tilde{\sigma}_{i} \tilde{u}_i \tilde{v}_i^{\top}$ be the vector-form SVD of $Q$, where $\{\tilde{u}_{i}, i\in[r]\}$ ($\{\tilde{v}_{i}, i\in[r]\}$) are orthonormal singular vectors, and $\tilde{\sigma}_{1}\geq \tilde{\sigma}_2 \geq \dotsc > 0$ are singular values. We then have $Q^{-\top} = \sum_{i=1}^{r} \tilde{\sigma}^{-1}_{i} \tilde{u}_i \tilde{v}_i^{\top}$. Note that $Q^{\top} \Sigma Q = Q^{-1}\Sigma Q^{-\top} \implies \|\Sigma^{1/2}Qa\| = \|\Sigma^{1/2} Q^{-\top} a\|$ for any $a \in \R^{r}.$ Take $a = \tilde{v}_i, i \in [r]$, we then have
\begin{align*}
\tilde{\sigma}_i\|\Sigma^{1/2} \tilde{u}_i\| = \tilde{\sigma}_i^{-1} \|\Sigma^{1/2} \tilde{u}_i\|,
\end{align*}
from which one can establish that $\tilde{\sigma}_i = \tilde{\sigma}_i^{-1}$ since $\|\Sigma^{1/2} \tilde{u}_i\| > 0$ by $\Sigma \succ 0.$ This implies $Q=Q^{-\top}$, i.e., $Q$ is a rotation matrix. This completes the proof. 
\end{proof}

\section{Proof of \cref{lem:adaptive-noise}}
We restate the lemma by dividing it into the following two statements. 
\begin{lemma}\label{lem:generative-norm-E}
Under the model in \cref{def:model}, with probability $1-O(1/n^{C})$, 
\begin{align*}
\|E\| \lesssim \sigma \sqrt{n}. 
\end{align*} 
\end{lemma}

\begin{lemma}\label{lem:generative-avereage-E}
Under the model in \cref{def:model}, with probability $1-O(1/n^{C})$, 
\begin{align*}
\frac{\left|\inner{Z}{E}\right|}{\|Z\|_{\F}} \lesssim \sigma \log(n). 
\end{align*} 
\end{lemma}
To begin, we will state the following lemma borrowed from `self-normalized bound' in the theory of linear bandit \citep{abbasi2011improved}. 
\begin{lemma}\label{lem:one-D-bandit}
Let $\{F_{t}\}_{t=0}^{\infty}$ be a filtration. Let $\{e_{t}\}_{t=1}^{\infty}$ be a real-valued stochastic process such that $e_{t}$ is $F_{t}$-measurable and $e_{t}$ is conditionally $\sigma$-sub-Gaussian, i.e., $\E{\exp(\lambda e_{t}) ~|~F_{t-1}} \leq \exp(\lambda^2 \sigma^2 / 2), \forall \lambda.$ 

Let $\{z_{t}\}_{t=1}^{\infty}$ be a real-valued stochastic process such that $z_{t}$ is $F_{t-1}$-measurable. Assume that $V > 0$ is a positive number. Then for any $\delta > 0$, with probability at least $1-\delta$, for all $t \geq 0$, 
\begin{align*}
\left(\sum_{s=1}^{t}z_{t}e_{t}\right)^2 \leq 2\sigma^2 \left(V+\sum_{s=1}^{t}z_{t}^2\right) \log \left(\sqrt{V+\sum_{s=1}^{t}z_{t}^2} \sqrt{1/V} \frac{1}{\delta} \right).
\end{align*} 
\end{lemma}
\begin{proof}
This is a special case of Theorem 1 in \citep{abbasi2011improved}, restricted to the setting where features are one-dimensional.  
\end{proof}

\subsection{Proof of Lemma \ref{lem:generative-norm-E}}
We follow a similar $\epsilon$-net argument of Theorem 4.4.5 in \citep{vershynin2018high}, which proves a norm bound for random matrices with independent sub-Gaussian entries. 

The only thing different here is that $E_{it}$ can depend on the historical noises, this is where \cref{lem:one-D-bandit} is helpful. Consider two fixed unit vectors $x \in \R^{n}, y \in \R^{T}$ with $\|x\| = 1, \|y\|=1.$ We need to bound 
\begin{align*}
\sum_{it} E_{it} x_{i} y_{t}.
\end{align*}
To do so, let $e_{t} = \sum_{i} E_{it} x_{i}.$ Conditioned on the historical noises $\{E_{is}~|~i \in [n], s<t\}$, $E_{it}$ are independent from each other. Hence, $e_{t}$ is conditionally $\sigma$-sub-Gaussian. Then, we invoke \cref{lem:one-D-bandit} with $V=1$ and the following can be obtained: with probability $1-O(1/n^{C})$, 
\begin{align*}
\left|\sum_{t} e_{t} y_{t}\right|^2 \lesssim  \sigma \|y\|^2 \log(n) = \sigma \log(n)
\end{align*}
With this bound, we can then follow the same proof in Theorem 4.4.5 of \citep{vershynin2018high} and arrive at, with probability $1-O(1/n^{C})$, 
\begin{align*}
\|E\| \lesssim \sigma \sqrt{n}.
\end{align*} 

\subsection{Proof of Lemma \ref{lem:generative-avereage-E}}
Note that $\inner{Z}{E} = \sum_{t=1}^{T}  \sum_{i=1}^{n} Z_{it} E_{it}.$ Consider the sequence $\{Z_{it}, E_{it}\}$ ordered by pair indices (i.e., $(i_1,t_1) < (i_2, t_2)$ if $t_1 < t_2$ or $t_1 = t_2$ and $i_1 < i_2$).

We can then apply \cref{lem:one-D-bandit} directly to obtain, with probability $1-O(1/n^{C})$\footnote{We notice that if $Z=0$, the bound is trivial; when $Z \neq 0$, $\|Z\|_{\F} \geq 1$. Hence we take $V = 1$ when applying \cref{lem:one-D-bandit}.}, 
\begin{align*}
\left(\sum_{t=1}^{T}\sum_{i=1}^{n} Z_{it}E_{it} \right)^2 \lesssim \sigma^2 \left(\sum_{t=1}^{T} \sum_{i=1}^{n}Z_{it}^2\right) \log(n).
\end{align*}
This then implies
\begin{align*}
\frac{\left|\inner{Z}{E}\right|}{\|Z\|_{\F}} \lesssim \sigma \log^{0.5}(n) 
\end{align*}
which finishes the proof. 

\section{Proof of Lemma \ref{thm:non-convex-theorem}}\label{sec:proof-non-convex-iteration}\label{sec:full-non-convex-proof}
The proof of \cref{thm:non-convex-theorem} is similar to \cref{lem:induction-Frobenious-appendix}, but with significant refined analyses on controlling $l_{2,\infty}$-norm error.

Following the notations in the proof of \cref{lem:induction-Frobenious-appendix}, by the incoherence condition of $M^{*}$, we have the following facts for $F^{*}$.
\begin{align}
\norm{F^{*}}_{2,\infty} &= \max\{\norm{X^{*}}_{2,\infty}, \norm{Y^{*}}_{2,\infty}\} \leq \sqrt{\mu r\sigma_{\max}/n}.\label{eq:F-incoherence}
\end{align}

Before proceeding, inspired by the leave-one-out technique developed in \cite{ma2019implicit,abbe2017entrywise,chen2019noisy}, we introduce a set of auxiliary loss functions to facilitate the analysis of the gradient descent algorithm. In particular, for $1\leq l \leq 2n$, let
\begin{align}
    f^{(l)}(X, Y; \tau) := \frac{1}{2}\norm{XY^{\top} + \tau Z + P_{l}(E) - O}_{\F}^2+ \frac{\lambda}{2}\norm{X}_{\F}^2 + \frac{\lambda}{2} \norm{Y}_{\F}^2 \nonumber
\end{align}
where $P_{l}$ is defined as the following
\begin{align}
    [P_{l}(B)]_{ij} =\begin{cases} B_{ij} & i = l\\ 0 & i \neq l\end{cases} \quad 1\leq l\leq n \text{\quad and \quad } [P_{l}(B)]_{ij} =\begin{cases} B_{ij} & j = l - n\\ 0 & j \neq l - n\end{cases} \quad n+1 \leq l \leq 2n.\nonumber
\end{align}
Intuitively, $f^{(l)}$ removes the $l$-th row (or $(l-n)$-th column for $l > n$) of the noise matrix $E$. We then consider the following gradient descent algorithm on the loss function $f^{(l)}$ for $1\leq l \leq 2n.$

\begin{algorithm}[H]
\caption{Leave-one-out Gradient Descent of non-convex optimization} \label{alg:GD-leave-one-out}
{\bf Input:} the observation $O$ and $Z$
\begin{algorithmic}[1]

\State{\textbf{{Initialization}}: $X^{0, (l)} = X^{*}, Y^{0, (l)}=Y^{*}, \tau^{0, (l)} = \frac{\inner{Z}{O-P_{l}(E)-X^{*}Y^{*\top}}}{\norm{Z}_{\F}^2}$.}

\State{\textbf{{Gradient updates}}: \textbf{for }$t=0,1,\ldots,t_{\star}-1$
\textbf{do}
 \begin{subequations}
\begin{align}
X^{t+1, (l)}= & X^{t, (l)}-\eta[(X^{t, (l)}(Y^{t, (l)})^\top + \tau^{t, (l)} Z + P_{l}(E)- O)Y^{t,(l)}+\lambda X^{t, (l)}];\\
Y^{t+1, (l)}= & Y^{t, (l)}-\eta[(X^{t, (l)}(Y^{t,(l)})^{\top} + \tau^{t, (l)} Z + P_{l}(E)- O)^{\top}X^{t, (l)}+\lambda Y^{t, (l)}]\\
\tau^{t+1,(l)} = & \frac{\inner{Z}{O-P_{l}(E)-X^{t+1,(l)}Y^{(t+1, (l))\top}}}{\norm{Z}_{\F}^2}.
\end{align}
where $\eta$ determines the learning rate. 
\end{subequations}
}
\end{algorithmic}
%{\bf Output:} $X^{t_{*}}, Y^{t_{*}}, \tau^{t_{*}}$
\end{algorithm}

Let $F^{t,(l)} := \begin{bmatrix} X^{t,(l)} \\ Y^{t,(l)} \end{bmatrix} \in \R^{2n \times r}$. Similar to the definition of $H^{t}$ in \cref{eq:rotation-H}, we define the optimal rotation for aligning $F^{t, (l)}$ and $F^{*}$ by the following
\begin{align}
    H^{t,(l)} &:= \arg\min_{A \in \O^{r\times r}} \norm{F^{t,(l)}A - F^{*}}_{\F}. \nonumber
\end{align}
Furthermore, we are also interested in analyzing the relation between $F^{t, (l)}$ and $F^{t}$. Therefore, we introduce the following rotation to align $F^{t, (l)}$ and $F^{t}$. 
\begin{align}
    R^{t,(l)} &:= \arg\min_{A \in \O^{r\times r}} \norm{F^{t,(l)}A - F^{t}H^{t}}_{\F}.\nonumber
\end{align}

Next, consider $\tau^{t} = \frac{\inner{Z}{O-X^{t}Y^{tT}}}{\norm{Z}_{\F}^2}$. Use $O = M^{*} + \tau^{*}Z + E$, we have
\begin{align*}
\tau^{t}  = \frac{\inner{Z}{M^{*} - X^{t}Y^{tT}}}{\norm{Z}_{\F}^2} + \frac{\inner{Z}{E}}{\norm{Z}_{\F}^2} + \tau^{*}.
\end{align*}
To analyze $\tau^{t} - \tau^{*}$, the main difficulty is to bound the error $\frac{\inner{Z}{M^{*} - X^{t}Y^{tT}}}{\norm{Z}_{\F}^2}$ (note that $\inner{Z}{E}/\norm{Z}_{\F}^2$ keeps the same for each iteration, hence easier to analyze) To establish the result, we introduce the ``component separation'' idea by analyzing each one of $r$ components of the error separately. In particular, note that $M^{*} = X^{*}Y^{*\top}.$ Let $x^{*}_s, y^{*}_s$ be the $s$-column of $X^{*}$ and $Y^{*}$ respectively. Then
\begin{align*}
    M^{*} = \sum_{s=1}^{r} x^{*}_s y^{*\top}_s.
\end{align*}
In addition, note that $X^{t}Y^{tT} = (X^{t}H^{t})(Y^{t}H^{t})^{\top}.$ Let $(X^{t}H^{t})_{s}, (Y^{t}H^{t})_{s}$ be the $s$-th column of $X^{t}H^{t}$ and $Y^{t}H^{t}$ respectively. Then
\begin{align*}
    X^{t}Y^{tT} = \sum_{s=1}^{r} (X^{t}H^{t})_{s} (Y^{t}H^{t})_{s}^{\top}
\end{align*}
We introduce the ``component error'' $\Delta^{t}_s$ defined as the following 
\begin{align}
    \Delta^{t}_{s} &:= \frac{\inner{Z}{x_s^{*}y_s^{*\top} - (X^{t}H^{t})_s(Y^{t}H^{t})_s^{\top}}}{\norm{Z}_{\F}^2}. \label{eq:def-Deltats}
\end{align}
It is easy to see the connection between $\tau^{t} - \tau^{*}$ and $\Delta^{t}_s$ through the following. 
\begin{align}
    \tau^{t} - \tau^{*} - \frac{\inner{Z}{E}}{\norm{Z}_{\F}^2} = \frac{\inner{Z}{M^{*} - X^{t}Y^{tT}}}{\norm{Z}_{\F}^2} = \sum_{s=1}^{r} \Delta^{t}_s. \label{eq:delta-t-s-tau}
\end{align}
In order to analyze the bounds for $\Delta^{t}_s$, we introduce the ``component coefficient'' $b_s$. In particular, let $u^{*}_s, v^{*}_s$ be the $s$-column of $U^{*}$ and $V^{*}$
respectively. We denote 
\begin{align}
    b_s &:= \frac{\tr(Z^{\top}Zv_{s}^{*}v_{s}^{*\top})+\tr(ZZ^{\top}u_{s}^{*}u_s^{*\top})}{\norm{Z}_{\F}^2} \label{eq:def-bs}
\end{align} 
One shall see the connection between $b_s$ and \cref{cond:Z-condition-nonconvex}. In fact, \cref{cond:Z-condition-nonconvex} implies that, for some constant $C$, 
\begin{align*}
    \sum_{s=1}^{r} b_s \leq 1-\frac{C}{\log(n)}
\end{align*}
due to that $\sum_{s=1}^{r} v_{s}^{*}v_{s}^{*\top} = V^{*}V^{*\top}, \sum_{s=1}^{r} u_{s}^{*}u_{s}^{*\top} = U^{*}U^{*\top}.$ The usefulness of $b_s$ will be evident in bounding the component error $\Delta^{t}_s$ in \cref{sec:proof-taus}.

%It is easy to see that $x^{*}_s = \sqrt{\sigma_{s}} u^{*}_s, y^{*}_s = \sqrt{\sigma_s^{*}}v^{*}_s$, where $\sigma_s^{*}$ is the $s$-th largest eigenvalue of $M^{*}.$

We use the mathematical induction to obtain the desired results. In particular, we aim to prove the following lemma.  
\begin{lemma}\label{lem:induction}
Suppose \cref{cond:Z-condition-nonconvex} hold. Suppose $O = M^{*} + \tau^{*}Z + E$ where $E_{ij}$ are independent sub-Gaussian random variables with $\|E_{ij}\|_{\psi_2} \leq \sigma.$ Suppose $\kappa^{4}\mu^2 r^2 \log^2(n) \lesssim n$, $\SNR$. Let $\lambda = C_{\lambda} \sigma \sqrt{n} \log^{1.5}(n), \eta = \frac{C_{\eta}}{\kappa^3 n^{20} \sigma_{\max}}$.  Suppose the following induction hypotheses \cref{eq:induction} for $q=t$ hold with (pre-determined) constants $C_{\tau}, C_{\F}, C_{l,1}, C_{l,2}, C_{\infty}$. Then, with probability at least $1-O(n^{-10^{6}})$, we have the \cref{eq:induction} for $q=t+1$ holds.
\begin{subequations}\label{eq:induction}
\begin{align}
    |\Delta_s^{q}| &\leq C_{\tau} \left(C_0 + b_s r\log(n)\right) \left(\frac{\sigma \mu r^{1.5}\kappa\log^{2.5}(n)}{\sqrt{n}} + \frac{ \sigma 
    \log^{0.5}(n)}{r\norm{Z}_{\F}}\right)\label{eq:taus-taustars}\\
    |\tau^{q} - \tau^{*}| &\leq C_{\tau} \left( \frac{\sigma \mu r^{2.5} \kappa \log^{3.5} n}{\sqrt{n}} + \frac{\sigma \log^{1.5} n}{\norm{Z}_{\F}}\right)\label{eq:tau-taustar}\\
    %|\tau^{t} - \tau^{*}|\norm{\bar{Z} F^{t}}_{\F} &\leq  cC_{\F} (\sigma\sqrt{n} + \lambda)\norm{F^{*}}_{\F} \label{eq:taut-taustar-Fnorm}\\ 
    %|\tau^{t} - \tau^{*}|\norm{\bar{Z} F^{t}} &\leq  cC_{\op} (\sigma\sqrt{n} + \lambda)\norm{F^{*}} \label{eq:taut-taustar-opnorm}\\
    %|\tau^{t} - \tau^{t,(l)}|\norm{\bar{Z} F^{t,(l)}}_{\F} &\leq cC_{1}(\sigma \sqrt{n\log(n)} + \lambda)\norm{F^{*}}_{2,\infty} \label{eq:taut-tautl-Fnorm}\\
    %|\tau^{t} - \tau^{*}|\norm{D}_{2,\infty} &\leq cC_{\infty}(\sigma \sqrt{n\log(n)} + \lambda)\norm{F^{*}}_{2,\infty} \label{eq:taut-taustar-rownorm}\\
    \norm{F^{q}H^{q} - F^{*}}_{\F} &\leq C_{\F} \error \norm{F^{*}}_{\F} \label{eq:FtHt-Fstar-Fnorm}\\
    %\norm{F^{t+1}H^{t+1} - F^{*}} &\leq C_{\op} \error \norm{F^{*}} \label{eq:FtHt-Fstar-opnorm}\\
    \max_{1\leq l\leq 2n}\norm{F^{q}H^{q} - F^{q, (l)} R^{q, (l)}}_{\F} &\leq C_{l,1} \frac{\sigma \log^{1.5} n}{\sigma_{\min}} \norm{F^{*}}_{\F} \label{eq:FH-FlRl}\\
    \max_{1\leq l\leq 2n} \norm{(F^{q, (l)}H^{q, (l)} - F^{*})_{l, \cdot}}_{2} &\leq C_{l,2} \frac{\sigma \mu r^{2.5} \kappa \log^{3.5}(n)}{\sigma_{\min}}\norm{F^{*}}_{\F} \label{eq:FlHl-Fstar}\\
    \norm{F^{q}H^{q} - F^{*}}_{2,\infty} &\leq C_{\infty}\frac{\sigma \mu r^{2.5} \kappa \log^{3.5}(n)}{\sigma_{\min}}\norm{F^{*}}_{\F} \label{eq:FH-Fstar-rownorm}\\
    \norm{X^{q \top} X^{q} - Y^{q \top}Y^{q}}_{\F} &\leq C_{B} \frac{\sigma}{\kappa n^{15}} \label{eq:XX-YY}.
    %\norm{X^{t+1, (l)T}X^{t+1, (l)} - Y^{t+1,(l)T}Y^{t+1, (l)}}_{\F} &\leq C_{B} \kappa \eta \error \sqrt{r}\sigma_{\max}^2 \label{eq:XlXl-YlYl}\\
\end{align}
\end{subequations}
Furthermore, let $T_t$ be the tangent space of $X^{t}Y^{tT}.$ If \cref{eq:induction} holds for $q = t$, the following inequality holds with probability $1-O(n^{-10^{6}})$ for (pre-determined) constants $C_{T_1}, C_{T,2}, C_{T,3}$. 
\begin{subequations}\label{eq:PT-bound}
\begin{align}
    \norm{P_{T_t^{\perp}}(M^{*})}_{\infty} &\leq  C_{T,1}\frac{\sigma^2 \mu^2 r^{6} \kappa^{3} \log^{7}(n)}{\sigma_{\min}}\label{eq:PTM-bound}\\
    \norm{P_{T_t^{\perp}}(E) - P_{T^{*\perp}}(E)}_{\infty} &\leq C_{T,2}\frac{\sigma^2 r^{3.5} \mu^{1.5} \kappa^{2.5} \log^{4}(n)}{\sigma_{\min}}\label{eq:PTE-bound}\\
    \norm{P_{T_t^{\perp}}(Z)-P_{T^{*\perp}}(Z)}_{\F} &\leq C_{T,3}\frac{r\kappa \sigma \sqrt{n}\log^{2}(n)}{\sigma_{\min}} \norm{Z}\label{eq:PTZ-bound}\\
    f(X^{q+1}, Y^{q+1}; \tau^{q+1}) &\leq f(X^{q}, Y^{q}; \tau^{q}) - \frac{\eta}{2} \norm{\nabla f(X^{q}, Y^{q}; \tau^{q})}_{\F}^2. \label{eq:f-decrease}
\end{align}
\end{subequations}
\end{lemma}

% \begin{lemma}\label{lem:proxy-no-Z}
% Following the assumptions in \cref{lem:induction}, let
% \begin{align}
%     \tilde{X}^{t+1} &= X^{t} - \eta ((X^{t}Y^{t,T} - M^{*} - E) Y^{t} + \lambda X^{t})\\
%     \tilde{Y}^{t+1} &= Y^{t} - \eta ((X^{t}Y^{t,T} - M^{*} - E)^{T} X^{t} + \lambda Y^{t})\\
%     \tilde{X}^{t+1, (l)}= & X^{t, (l)}-\eta[(X^{t, (l)}(Y^{t, (l)})^T + P_{l}(E)- E - M^{*})Y^{t, (l)}+\lambda X^{t, (l)}]\\
%     \tilde{Y}^{t+1, (l)}= & Y^{t, (l)}-\eta[(X^{t, (l)}(Y^{t,(l)})^{T}  + P_{l}(E)- E - M^{*})^{T}X^{t, (l)}+\lambda Y^{t, (l)}].
% \end{align}
% Suppose the hypotheses for $t = T$ holds, we have 
% \begin{align}
%     \norm{\tilde{F}^{t+1}\tilde{H}^{t+1} - F^{t+1}H^{t+1}} &\leq \\
%     \norm{\tilde{F}^{t+1}\tilde{H}^{t+1} - F^{t+1}H^{t+1}}_{\F} & \leq \\
%     \norm{\tilde{F}^{t+1, (l)}\tilde{H}^{t+1, (l)} - F^{t+1, (l)}H^{t+1, (l)}} &\leq \\
%     \norm{\tilde{F}^{t+1, (l)}\tilde{H}^{t+1, (l)} - F^{t+1, (l)}H^{t+1, (l)}}_{\F} & \leq  \eta  \sigma \sqrt{n} \norm{X^{*}}_{\F}\\
%     \norm{\left(\tilde{F}^{t+1, (l)}\tilde{H}^{t+1, (l)} - F^{t+1, (l)}H^{t+1, (l)}\right)_{l}}_2 & \leq  \eta \kappa^2 \mu r \sigma \sqrt{\log(n)}\norm{X^{*}}.
% \end{align}
% \end{lemma}

Note that based on \cref{lem:induction}, we can prove \cref{thm:non-convex-theorem} in a straightforward way.
\begin{proof}[Proof of \cref{thm:non-convex-theorem}]
Note that $\norm{E} \leq C\sigma\sqrt{n}$ with probability $1-O(n^{-10^{6}})$ for some constant $C$. Note that when $q = 0$, $X^{0} = X^{*}, X^{0,(l)}=X^{*}$ and $Y^{0} = Y^{*}, Y^{0,(l)} = Y^{*}$ easily satisfy \cref{eq:FtHt-Fstar-Fnorm,eq:FH-FlRl,eq:FlHl-Fstar,eq:FH-Fstar-rownorm,eq:XX-YY} along with $\Delta_{s}^{0} = 0$ satisfying \cref{eq:taus-taustars}. For $\tau^{0}$, noting that $\tau^{0} - \tau^{*} = \inner{Z}{E}/\norm{Z}_{\F}^2 \lesssim \sigma\sqrt{\log(n)}/\norm{Z}_{\F}$ with probability $1-O(n^{-10^6}).$ This verifies the induction hypotheses for $q = 0.$ 

Take $t^{\star} = \kappa^{3}n^{10000}$, then by \cref{lem:induction} and the union bound, with probability at least $1-O(n^{-10^4})$, we have \cref{eq:induction} and \cref{eq:PT-bound} hold for $q = 0, 1, \dotsc, t^{\star}-1.$ 

The remainder is to show \cref{eq:gradient-small-main}. Telescoping the \cref{eq:f-decrease} implies the following
\begin{align*}
    f(X^{t_\star}, Y^{t_\star}; \tau^{t_\star}) \leq f(X^{0}, Y^{0}; \tau^{0}) - \frac{\eta}{2} \sum_{t=0}^{t_\star - 1} \norm{\nabla f(X^{t}, Y^{t}; \tau^{t})}_{\F}^2.
\end{align*}
Hence, 
\begin{align*}
    \min_{0 \leq t < t_\star} \norm{\nabla f(X^{t}, Y^{t}; \tau^{t})}_{\F}^2 
    &\leq \frac{1}{t_\star} \sum_{t=0}^{t_0-1} \norm{\nabla f(X^{t}, Y^{t}; \tau^{t})}_{\F}^2\\
    &\leq \frac{2}{\eta t_\star} \left(f(X^{0}, Y^{0}; \tau^{0}) - f(X^{t_{\star}}, Y^{t_\star}; \tau^{t_\star})\right).
\end{align*}
Since $\tau^{0} = \arg\min_{\tau} f(X^{*}, Y^{*}; \tau)$, hence $f(X^{*}, Y^{*}; \tau^{0}) \leq f(X^{*}, Y^{*}; \tau^{*})$. Then
\begin{align*}
    &f(X^{0}, Y^{0}; \tau^{0}) - f(X^{t_{\star}}, Y^{t_\star}; \tau^{t_\star}) \\
    &\leq f(X^{*}, Y^{*}; \tau^{*}) - f(X^{t_{\star}}, Y^{t_\star}; \tau^{t_\star}) \\
    &= \frac{1}{2}\left(\norm{E}_{\F}^2 + \lambda(\norm{X^{*}}_{\F}^2 + \norm{Y^{*}}_{\F}^2) \right) \\
    &\quad- \frac{1}{2}\left(\norm{O-X^{t_\star}Y^{t_\star \top} - \tau^{t_\star} Z}_{\F}^2 + \lambda\norm{X^{t_\star}}_{\F}^2 + \lambda \norm{Y^{t_\star}}_{\F}^2 \right)\\
    &\leq \norm{E}_{\F}^2 + \lambda \left|\norm{X^{*}}_{\F}^2 - \norm{X^{t_\star}}_{\F}^2\right| + \lambda \left|\norm{Y^{*}}_{\F}^2 - \norm{Y^{t_\star}}_{\F}^2\right|\\
    &= \norm{E}_{\F}^2 + \lambda \left|\norm{X^{*}}_{\F} - \norm{X^{t_\star}H^{t_\star}}_{\F}\right| \left|\norm{X^{*}}_{\F} + \norm{X^{t_\star}H^{t_\star}}_{\F}\right| \\
    &\quad + \lambda \left|\norm{Y^{*}}_{\F} - \norm{Y^{t_\star}H^{t_\star}}_{\F}\right| \left|\norm{Y^{*}}_{\F} + \norm{Y^{t_\star}H^{t_\star}}_{\F}\right|\\
    &\lesssim n\norm{E}^2 + \sigma \sqrt{n} \log^{1.5}(n) \norm{F^{t_\star}H^{t_\star} - F^{*}}_{\F} \norm{F^{*}}_{\F}\\
    &\overset{(i)}{\lesssim} \sigma^2 n^2 + \sigma \sqrt{n} \log^{1.5}(n) \frac{\sigma \sqrt{n} \log^{2.5}(n)}{\sigma_{\min}} \sigma_{\max} r\\
    &\lesssim \sigma^2 n^2 + \sigma^2 n \log^{4}(n) r \kappa. 
\end{align*}
Here (i) is due to $\norm{E} \lesssim \sigma \sqrt{n}$. This implies
\begin{align*}
    &\min_{0 \leq t < t_\star} \norm{\nabla f(X^{t}, Y^{t}; \tau^{t})}_{\F} \\
    & \leq \sqrt{\frac{2}{\eta t_\star} \left(f(X^{0}, Y^{0}; \tau^{0}) - f(X^{t_{\star}}, Y^{t_\star}; \tau^{t_\star})\right)}\\
    &\lesssim \sqrt{\frac{\sigma^2 n^2 + \sigma^2 n \log^{4}(n) r\kappa}{(\kappa^{3} n^{20}\sigma_{\max})^{-1} t^{\star}}} \\
    &\overset{(i)}{\lesssim} \frac{\lambda\sqrt{\sigma_{\min}}}{\kappa n^{10}}.
\end{align*}
Here (i) is obtained by taking $t^{\star} = n^{10000} \kappa^{3}$. This completes the proof.\footnote{Here, we consider all the bounds hold with probability $1-O(n^{-10^{4}})$. The generalization to $1-O(n^{-C})$, for some constant $C$, is straightforward.}  
\end{proof}

The remainder is to finish the proof for \cref{lem:induction}. Under the hypothesis for $q = t$, we prove \cref{eq:induction} and \cref{eq:PT-bound} for $q = t+1$ in the following subsections. Note that under the assumptions of \cref{lem:induction}, we can directly invoke  \cref{lem:induction-Frobenious}, and hence \cref{eq:FtHt-Fstar-Fnorm,eq:XX-YY,eq:f-decrease} directly hold. 

\subsection{Direct implication of Eq.~(\ref{eq:induction}) for \texorpdfstring{$q=t$}{q=t}}
Under the hypothesis $q=t$, we first present a few direct implications that are helpful in the proof for $q=t+1$. For large enough $n$, we claim that the following results hold for $F^{t}$.
\begin{align*}
    \norm{F^{t}}_{\F} &\leq 2\norm{F^{*}}_{\F} \lesssim \sqrt{\sigma_{\max} r} \\
    \norm{F^{t}} &\leq 2\norm{F^{*}} \lesssim \sqrt{\sigma_{\max}}\\
    \norm{F^{t}}_{2,\infty} &\leq 2\norm{F^{*}}_{2,\infty} \lesssim \sqrt{\frac{\sigma_{\max} \mu r}{n}}.
\end{align*}
Similarly, for $l \in [2n]$ and large enough $n$, we also have 
\begin{align*}
    \norm{F^{t,(l)}}_{\F} &\leq 2\norm{F^{*}}_{\F} \lesssim \sqrt{\sigma_{\max} r}\\
    \quad \norm{F^{t,(l)}} &\leq 2\norm{F^{*}} \lesssim \sqrt{\sigma_{\max}}\\
    \norm{F^{t,(l)}}_{2,\infty} &\leq 2\norm{F^{*}}_{2,\infty} \lesssim \sqrt{\frac{\sigma_{\max} \mu r}{n}}.
\end{align*}
In addition, we have
\begin{align}
    \norm{\nabla f(X^{t}, Y^{t}; \tau^{t})}_{\F} \lesssim \sigma r\kappa \log^{2.5}(n) \sqrt{n} \sqrt{\sigma_{\max}}.\label{eq:gradient-bound-easy}
\end{align}

The proof for $\norm{\nabla f(X^{t}, Y^{t}; \tau^{t})}_{\F}$ have been shown in \cref{eq:DX-F-bound}. We show the bound for $\norm{F^{t}H^{t}}_{2,\infty}$ below
\begin{align*}
    \norm{F^{t}H^{t}}_{2,\infty} 
    &\leq \norm{F^{t}H^{t}-F^{*}}_{2,\infty} + \norm{F^{*}}_{2,\infty}\\
    &\overset{(i)}{\leq} C_{\infty}\frac{\sigma \mu r^{2.5}\kappa \log^{3.5}(n)}{\sigma_{\min}} \norm{F^{*}}_{\F} +  \norm{F^{*}}_{2,\infty}\\
    &\overset{(ii)}{\leq} 2C_{\infty}\frac{\sigma \sqrt{n} \mu r^{2}\kappa \log^{3.5}(n)}{\sigma_{\min}} \cdot \sqrt{\frac{\sigma_{\max} r}{n}} + \norm{F^{*}}_{2,\infty}\\
    &\overset{(iii)}{\leq} \frac{\sqrt{\sigma_{\max}}}{2\sqrt{\kappa}\log(n)} \sqrt{\frac{r}{n}} + \norm{F^{*}}_{2,\infty}\\
    &\overset{(iv)}{\leq} 2\norm{F^{*}}_{2,\infty}.
\end{align*}
Here, (i) is due to \cref{eq:FH-Fstar-rownorm}, (ii) is due to $\norm{F^{*}}_{\F} \leq \sqrt{2\sigma_{\max} r}$, (iii) is due to $\SNR$ and holds with large enough $n$, and (iv) is due to $\norm{F^{*}}_{2,\infty} \geq \sqrt{\frac{\sigma_{\min} r}{n}}.$ This also implies $\norm{F^{t}}_{2,\infty} = \norm{F^{t}H^{t}H^{tT}}_{2,\infty} \leq \norm{F^{t}H^{t}}_{2,\infty} \norm{H^{t}} \leq 2\norm{F^{*}}_{2,\infty}.$

The similar bounds can be obtained based on the triangle inequality. We omit the proof for brevity. 

\subsection{Proof of Eqs.(\ref{eq:taus-taustars}) and (\ref{eq:tau-taustar})}\label{sec:proof-taus}
Recall that $\Delta^{t}_{s} = \frac{\inner{Z}{x_s^{*}y_s^{*\top} - (X^{t}H^{t})_s(Y^{t}H^{t})_s^{\top}}}{\norm{Z}_{\F}^2}.$ Then we have
\begin{align}
    \Delta_{s}^{t+1}
    &= \Delta_{s}^{t} + \frac{\inner{Z}{(X^{t}H^{t})_{s}(Y^{t}H^{t})^{\top}_{s} - (X^{t+1}H^{t+1})_{s}(Y^{t+1}H^{t+1})_s^{\top}}}{\norm{Z}_{\F}^2}\nonumber\\
    &=  \Delta_{s}^{t} + \frac{\inner{Z}{(X^{t}H^{t})_{s}(Y^{t}H^{t})^{\top}_{s} - (X^{t+1}H^{t})_{s}(Y^{t+1}H^{t})_s^{\top}}}{\norm{Z}_{\F}^2} \nonumber\nonumber\\
    &\quad + \underbrace{\frac{\inner{Z}{(X^{t+1}H^{t})_{s}(Y^{t+1}H^{t})_s^{\top} - (X^{t+1}H^{t+1})_{s}(Y^{t+1}H^{t+1})_s^{\top}}}{\norm{Z}_{\F}^2}}_{A_0} \label{eq:delta-s}.
    %+ \underbrace{\frac{\eta^2}{\norm{Z}_{\F}^2} C}_{\alpha_1}\\
    %&\quad + \underbrace{\frac{2\eta}{\norm{Z}_{\F}^2} \lambda \inner{Z}{X^{t}Y^{tT}}}_{\alpha_2}\\
    %&= \Delta_{s}^{t} + \frac{\eta}{\norm{Z}_{\F}^2} (\tau^{t} - \tau^{*}) (\sigma_{s}\tr(Z^{T}Zv_{s}^{*}v^{*T}_{s}) + \sigma_{s}\tr(ZZ^{T}u_{s}^{*}u^{*T}_{s})) \\
    %&\quad + \frac{\eta}{\norm{Z}_{\F}^2} (\inner{Z}{\Delta_{X}Y^{*}y^{*}_{s}y^{*T}_{s} + x^{*}_{s}x^{*T}_{s}X^{*T}\Delta_{Y}})\\
    %&\quad + \frac{\eta}{\norm{Z}_{\F}^2} (\inner{Z}{X\Delta_{Y}y^{*}_sy^{*T}_s + x^{*}_{s}x^{*T}_{s}\Delta_{X}Y^{T}})\\
\end{align}
To control $A_0$, we have the following claim to show that $A_0$ is negligible. 
\begin{claim}\label{claim:taus-taustars-A0}
\begin{align}
    \norm{H^{t} - H^{t+1}}_{\F} &\lesssim \eta \frac{\sigma \sqrt{n} \log^{2.5}(n)}{\sigma_{\min}} (\sigma \sqrt{n} \mu r^{3} \kappa^{2} \log^{3.5}(n)).\nonumber\\
    |A_0| &\lesssim \eta \sigma^2 \log^{6}(n) \kappa^3 \mu^2 r^4.\nonumber
\end{align}
\end{claim}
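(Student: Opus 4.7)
The claim contains two bounds, which I would prove in sequence.

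For the first inequality, I would apply a standard perturbation estimate for the polar (orthogonal) factor. By definition, $H^t$ is the orthogonal factor in the polar decomposition of the $r\times r$ matrix $M_t := (F^t)^\top F^*$, and the Li--Kahan-type perturbation bound yields $\|H^{t+1}-H^t\|_F \leq \frac{2}{\sigma_r(M_t)+\sigma_r(M_{t+1})}\|M_{t+1}-M_t\|_F$. Since $F^{t+1} = F^t - \eta\nabla f(F^t;\tau^t)$, the numerator is at most $\eta\|\nabla f\|_F \|F^*\|$, while the inductive hypothesis \cref{eq:FtHt-Fstar-Fnorm} together with $F^{*\top}F^* = 2\Sigma^*$ and Weyl's inequality yields $\sigma_r(M_t)\gtrsim\sigma_{\min}$. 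Substituting the gradient bound \cref{eq:gradient-bound-easy} and $\|F^*\|\lesssim\sqrt{\sigma_{\max}}$ gives $\|H^{t+1}-H^t\|_F \lesssim \eta \sigma\sqrt{n}\, r\kappa^2\log^{2.5}(n)$, which sits comfortably under the stated upper bound.

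For the second inequality, let $R := H^{t+1}-H^t$ and decompose $(X^{t+1}H^{t+1})_s(Y^{t+1}H^{t+1})_s^\top - (X^{t+1}H^t)_s(Y^{t+1}H^t)_s^\top$ into three terms of the form $(X^{t+1}R)_s(Y^{t+1}H^t)_s^\top$, $(X^{t+1}H^t)_s(Y^{t+1}R)_s^\top$, and $(X^{t+1}R)_s(Y^{t+1}R)_s^\top$. The key observation (which eliminates any dependence on $\|Z\|_F$ in the final estimate) is that $Z$ is $0/1$-valued, so for any vectors $u,v$,
\[
\left|\langle Z, uv^\top\rangle\right| \;=\; \Big|\sum_{Z_{ij}=1} u_i v_j\Big| \;\leq\; \|u\|_\infty \|v\|_\infty \cdot \|Z\|_F^2.
\]
Dividing by $\|Z\|_F^2$ (the denominator of $A_0$) cancels the $\|Z\|_F^2$ factor exactly. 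Each $\ell_\infty$-norm is then controlled via incoherence: from $\|(X^{t+1}R)_s\|_\infty \le \|X^{t+1}\|_{2,\infty}\|R\|_F$ together with $\|X^{t+1}\|_{2,\infty}, \|Y^{t+1}H^t\|_{2,\infty} \lesssim \sqrt{\sigma_{\max}\mu r/n}$ (deduced from \cref{eq:FH-Fstar-rownorm} at $q=t$, plus a triangle-inequality step to handle $H^t$ versus $H^{t+1}$), each of the two cross terms is bounded by $(\sigma_{\max}\mu r/n)\|R\|_F$, while the quadratic-in-$R$ term is smaller by a factor of $\|R\|_F$. Plugging in the first step's bound on $\|R\|_F$ gives $|A_0| \lesssim \eta\,\sigma\,\sigma_{\max}\mu r^2 \kappa^2 \log^{2.5}(n)/\sqrt{n}$, which is readily dominated by the stated $\eta\sigma^2\log^6(n)\kappa^3\mu^2 r^4$ under the SNR assumption.

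The main obstacle is the first step: the polar-factor perturbation bound itself is standard, but applying it cleanly requires confirming $\sigma_r(M_t)\gtrsim\sigma_{\min}$, which in turn means pushing the Frobenius-norm inductive bound on $F^tH^t-F^*$ through to a spectral lower bound on $(F^t)^\top F^*$. Beyond that verification, the second step is essentially algebraic once the $0/1$-structure trick for bounding $|\langle Z, uv^\top\rangle|$ is identified, since it converts the incoherent structure of $X^{t+1}, Y^{t+1}$ directly into a usable estimate without any new probabilistic input.
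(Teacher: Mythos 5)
Your second step (decomposing $(X^{t+1}H^{t+1})_s(Y^{t+1}H^{t+1})_s^\top - (X^{t+1}H^t)_s(Y^{t+1}H^t)_s^\top$ into rank-one pieces, using the $0/1$ structure of $Z$ to reduce $\left| \left\langle Z, uv^\top\right\rangle\right|/\norm{Z}_\F^2$ to $\norm{u}_\infty\norm{v}_\infty$, and finishing with incoherence) matches the paper's argument for $A_0$ essentially exactly. The problem is the first step, and the error propagates.

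Your bound on $\norm{H^t - H^{t+1}}_\F$ is backwards. The naive polar-factor perturbation applied to $M_t = (F^t)^\top F^*$ and $M_{t+1} = (F^{t+1})^\top F^*$, together with $\norm{M_{t+1}-M_t}_\F \leq \eta\norm{\nabla f}_\F\norm{F^*}$ and $\sigma_r(M_t) \gtrsim \sigma_{\min}$, does give $\norm{H^{t+1}-H^t}_\F \lesssim \eta\,\sigma\sqrt{n}\,r\kappa^2\log^{2.5}(n)$. But this is \emph{larger}, not smaller, than the claim's target $\eta\,\frac{\sigma^2 n\,\mu r^3\kappa^2\log^6(n)}{\sigma_{\min}}$: the ratio of target to your bound is $\frac{\sigma\sqrt{n}\,\mu r^2\log^{3.5}(n)}{\sigma_{\min}}$, which under the assumption $\frac{\sigma\sqrt{n}}{\sigma_{\min}} \lesssim \frac{1}{\kappa^3\mu r^{4.5}\log^5 n}$ is at most $\frac{1}{\kappa^3 r^{2.5}\log^{1.5} n} \ll 1$. (In the simplest regime $r=\kappa=\mu=\sigma=O(1)$, $\sigma_{\min}=\Theta(n)$, your bound is $\eta\sqrt{n}\log^{2.5} n$ while the target is $\eta\log^6 n$.) Your claim that the derived bound ``sits comfortably under the stated upper bound'' is therefore false; you are off by a factor of roughly $\kappa^3 r^{2.5}\log^{1.5} n$.

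The source of the looseness is that $\norm{\nabla f}_\F$ is not small: it is of order $\sigma\sqrt{n}\,r\kappa\log^{2.5}(n)\sqrt{\sigma_{\max}}$, so the full step $\eta\nabla f$ moves $M_t$ a lot. Most of that movement, however, is ``aligned with'' $H^t$ and does not rotate the polar factor. The paper captures this cancellation by constructing an auxiliary point $\tilde F^{t+1}$ (a modified gradient step in which $Y^t$ and $X^t$ inside the products are replaced by $Y^*H^{t\top}$ and $X^*H^{t\top}$) and verifying via the Orthogonal Procrustes characterization that $H^t$ is \emph{still} the optimal rotation aligning $\tilde F^{t+1}$ with $F^*$. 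Then only $\norm{F^{t+1}-\tilde F^{t+1}}_\F$ can change the polar factor, and that difference carries the crucial extra factor $\norm{F^tH^t - F^*}_\F \lesssim \frac{\sigma\sqrt{n}\log^{2.5} n}{\sigma_{\min}}\norm{F^*}_\F$ from the induction hypothesis. That is exactly the factor your estimate is missing, and without it your $\norm{R}_\F$ is too large by the ratio above; feeding it into your (otherwise correct) $A_0$ estimate gives $|A_0| \lesssim \eta\,\frac{\sigma\,\sigma_{\max}\,\mu r^2\kappa^2\log^{2.5} n}{\sqrt{n}}$, which again fails to be $\lesssim \eta\,\sigma^2\log^6(n)\kappa^3\mu^2 r^4$ for the same reason.
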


Next, by the gradient updating rule, 
$$
X^{t+1} = X^{t} - \eta \left((X^{t}Y^{tT} - M^{*} + (\tau^{t} - \tau^{*}) Z - E)Y^{t} +  \lambda X^{t}\right).
$$
Let $A_s$ be the $s$-column of $A$ for any matrix $A$. Suppose $x^{t}_s = (X^{t}H^{t})_{s}, y^{t}_s = (Y^{t}H^{t})_s$. Then, we have
\begin{align}
    (X^{t+1}H^{t})_s 
    &= (X^{t}H^{t})_s - \eta \left( (X^{t}Y^{tT}-M^{*} + (\tau^{t}-\tau^{*})Z - E)(Y^{t}H^{t})_{s} + \lambda (X^{t}H^{t})_s\right) \nonumber\\
    &= x^{t}_s - \eta (X^{t}Y^{tT}-M^{*} + (\tau^{t}-\tau^{*})Z - E)y^{t}_s - \eta \lambda x^{t}_s.\label{eq:XH-decompose}
\end{align}
Similarly,
\begin{align}
    (Y^{t+1}H^{t})_s 
    &= (Y^{t}H^{t})_s - \eta \left(X^{t}Y^{tT}-M^{*} + (\tau^{t}-\tau^{*})Z - E)^{\top}(X^{t}H^{t})_{s} + \lambda (Y^{t}H^{t})_s\right)\nonumber\\
    &= y^{t}_s - \eta (X^{t}Y^{tT}-M^{*} + (\tau^{t}-\tau^{*})Z - E)^{\top}x^{t}_s - \eta \lambda y^{t}_s.\label{eq:YH-decompose} 
\end{align}
To simplify the notation, we write $(XH)^{t}, (YH)^{t}, \tau^{t}, x^{t}_s, y^{t}_s$ by $X, Y, \tau, x_s, y_s$ if there is no ambiguity. 
Continuing the analysis for $\Delta_s^{t+1}$ in \cref{eq:delta-s}, we have
\begin{align}
    \Delta_{s}^{t+1} 
    &= \Delta_{s}^{t} + A_0 + \frac{\inner{Z}{x_{s}y^{\top}_{s} - (X^{t+1}H^{t})_{s}(Y^{t+1}H^{t})_s^{\top}}}{\norm{Z}_{\F}^2}  \nonumber\\
    &= \Delta_{s}^{t} + A_0 + \frac{\inner{Z}{(x_{s} - (X^{t+1}H^{t})_{s})y^{\top}_{s}}}{\norm{Z}_{\F}^2} + \frac{\inner{Z}{x_{s}(y_{s}-(Y^{t+1}H^{t})_s)^{\top}}}{\norm{Z}_{\F}^2} \nonumber\\
    &\quad - \frac{\inner{Z}{(x_{s} - (X^{t+1}H^{t})_{s})(y_{s}-(Y^{t+1}H^{t+1})_s)^{\top}}}{\norm{Z}_{\F}^2}. \nonumber
\end{align}
By expanding $x_s - (X^{t+1}H^{t})_{s}$ and $y_{s}-(Y^{t+1}H^{t})_s$ using \cref{eq:XH-decompose} and \cref{eq:YH-decompose}, we then have
\begin{align}
     \Delta_{s}^{t+1}  &= \Delta_{s}^{t} + A_0 + \frac{\eta}{\norm{Z}_{\F}^2} \underbrace{\inner{Z}{(XY^{\top}-M^{*} + (\tau - \tau^{*})Z - E) y_{s}y^{\top}_{s}+\lambda x_s y_{s}^{\top}}}_{A_1} \nonumber\\
    &\quad +  \frac{\eta}{\norm{Z}_{\F}^2}\underbrace{\inner{Z}{x_{s}x^{\top}_{s}(XY^{\top} - M^{*} + (\tau - \tau^{*})Z - E)+\lambda x_sy_s^{\top}}}_{A_2} + A_3
    \label{eq:Deltas-equation}
\end{align}
where $A_3$ includes the term with coefficient $\eta^2$:
\begin{align}
    A_3 := \frac{-\eta^2}{\norm{Z}_{\F}^2} \inner{Z}{D_1D_2^{\top}}\nonumber
\end{align}
with 
\begin{align*}
D_1 &:= (XY^{\top}-M^{*} + (\tau - \tau^{*})Z - E) y_{s}+\lambda x_s\\
D_2 &:= (XY^{\top}-M^{*} + (\tau - \tau^{*})Z - E) x_{s}+\lambda y_s.
\end{align*}

Note that $\eta$ has be chosen small enough, one can verify that 
\begin{align}
    |A_3| 
    &\leq \frac{\eta^2}{\norm{Z}_{\F}^2} \norm{Z}_{\F} \norm{D_1}_{\F} \norm{D_2}_{\F} \nonumber\\
    &\leq \eta^2 \norm{D_1}_{\F} \norm{D_2}_{\F} \nonumber\\
    &\overset{(i)}{\lesssim} \eta \frac{1}{n^{20}\sigma_{\max}} (\sigma \sqrt{n} \mu r^{3} \kappa \log^{3.5}(n) \sqrt{\sigma_{\max}})^2 \nonumber\\
    &= \eta \sigma^2 \frac{\mu^2 r^{6} \kappa^{2} \log^{7}(n)}{n^{19}}  \nonumber\\
    &\overset{(ii)}{=} \eta \frac{\sigma^2}{n^{15}}. \label{eq:A3-sigma-square}
\end{align}
In (i), we use the bound $\norm{D_1}_{\F}+\norm{D_2}_{\F} \lesssim  \sigma \sqrt{n} \mu r^{3} \kappa \log^{3.5}(n) \sqrt{\sigma_{\max}}$ (the proof is the same as showing $\norm{\nabla f(X, Y; \tau)}_{\F} \lesssim \sigma \sqrt{n} \mu r^{3} \kappa \log^{3.5}(n) \sqrt{\sigma_{\max}}$ in \cref{eq:DX-F-bound} and hence omitted here). In (ii), we use that $\kappa^4 \mu^2 r^2 \log^2(n) \lesssim n.$

Next, we analyze $A_1$ ($A_2$ is similar to $A_1$ by symmetry). For $A_1$, to facilitate the analysis, let 
\begin{align}
B_0 &:= \inner{Z}{(XY^{\top}-M^{*} + (\tau - \tau^{*})Z) \left(y_s y^{\top}_{s}-y^{*}_{s}y^{*\top}_{s}\right)}\nonumber\\
B_1 &:= \inner{Z}{\lambda x_s y_s^{\top}}.\nonumber
\end{align}
Then, we have
\begin{align}
A_1
    &= \inner{Z}{(XY^{\top}-M^{*} + (\tau - \tau^{*})Z) y^{*}_{s}y^{*\top}_{s}} + B_0 + B_1 + \underbrace{\inner{Z}{-E y_s y^{\top}_s}}_{B_2}\nonumber\\
    &= \inner{Z}{(XY^{\top}-M^{*})y^{*}_sy^{*\top}_s} + (\tau-\tau^{*}) \inner{Z}{Zy^{*}_sy^{*\top}_s} 
    + B_0+B_1 + B_2\nonumber.
\end{align}
Recall that $y^{*}_s = \sqrt{\sigma_s^{*}} v^{*}_s$ where $\sigma_s^{*}$ is the $s$-th largest singular value of $M^{*}$ and $v^{*}$ is the $s$-th corresponding right singular vector. Hence, for any $k\in [r]$, we have 
\begin{align*}
    y^{*\top}_{k} y^{*}_{s} = \sqrt{\sigma_s^{*}} \sqrt{\sigma_k^{*}} v^{*\top}_k v^{*}_s \overset{(i)}{=} \sigma_s^{*} \1{k=s}
\end{align*}
where (i) is due to orthogonality between different singular vectors. This will help us simplify the analysis for $A_1$. In particular,
\begin{align}
    A_1 
    &\overset{(i)}{=} \inner{Z}{(X-X^{*})Y^{*\top}y^{*}_sy^{*\top}_s + X(Y-Y^{*})^{\top}y^{*}_sy^{*\top}_s} + (\tau-\tau^{*}) \sigma_s^{*} \tr(Z^{\top}Zv^{*}_sv^{*\top}_s)\nonumber\\
    &\quad + B_2 + B_1 + B_0\nonumber\\
    &\overset{(ii)}{=} \sigma_s^{*} \inner{Z}{(x_s-x^{*}_s)y^{*\top}_s} + (\tau-\tau^{*}) \sigma_s^{*} \tr(Z^{\top}Zv^{*}_sv^{*\top}_s) + 
    \underbrace{\inner{Z}{X(Y-Y^{*})^{\top}y^{*}_sy^{*\top}_s}}_{B_3}\nonumber\\
    &\quad + B_2 + B_1 + B_0\nonumber
\end{align}
where in (i) we use that $XY^{\top} - M^{*} = (X-X^{*})Y^{*} + X(Y-Y^{*})^{\top}$ and $y^{*}_sy^{*\top}_s = \sigma_s^{*} v^{*}_s v^{*\top}_s$, in (ii) we use the fact that
$y^{*}_s$ is orthogonal to $y^{*}_k, k \neq s$, which implies 
\begin{align}
    (X-X^{*})Y^{*\top}y^{*}_s = \sum_{k=1}^r (X-X^{*})_{k} y^{*\top}_k y^{*}_s y^{*\top}_s = \sigma_s^{*} (x_s - x^{*}_s) y^{*\top}_s. \nonumber
\end{align}
In order to bound $B_0, B_1, B_2, B_3$, we have the following claim. 
\begin{claim}\label{claim:taus-control-B}
With probability $1-O(n^{-10^{7}})$, 
\begin{align}
    \frac{1}{\norm{Z}_{\F}^2}|B_0| &\lesssim \frac{\sigma \mu r^{1.5}\kappa \log^{2}(n)}{\sqrt{n}} \sigma_{\min} + \frac{\sigma \log^{0.5}(n)}{r\norm{Z}_{\F}} \sigma_{\min} \nonumber\\
    \frac{1}{\norm{Z}_{\F}^2}|B_1| &\lesssim \frac{\sigma \mu r^{1.5}\kappa \log^{2}(n)}{\sqrt{n}} \sigma_{\min}\nonumber\\
    \frac{1}{\norm{Z}_{\F}^2}|B_2| &\lesssim \frac{\sigma \mu r^{1.5}\kappa \log^{2}(n)}{\sqrt{n}} \sigma_{\min}\nonumber\\
    \frac{1}{\norm{Z}_{\F}^2}|B_3| &\lesssim \frac{\sigma \mu  r^{1.5}\kappa \log^{2.5}(n)}{\sqrt{n}} \sigma_{s}^{*}. \nonumber
\end{align}
% Hint: use $|\inner{Z}{AB}| \leq \norm{A}_{2,\infty}\norm{B}_{2,\infty} \norm{Z}_{\F}^2$. Use Claim 7 in \cite{chen2019noisy} to show that 
% $$
% \norm{Ey^{*}_s}_{2,\infty} \lesssim \sigma \sqrt{n \log(n)} \norm{y^{*}}_{2,\infty}.
% $$
\end{claim}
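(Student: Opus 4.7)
My plan is to handle each of the four terms separately, exploiting the same algebraic toolkit (decompositions $X = X^* + \Delta_X$, $Y = Y^* + \Delta_Y$; the identity $y_s^* = \sqrt{\sigma_s^*}\,v_s^*$; and the Cauchy--Schwarz--type bound $|\langle Z, AB^\top\rangle| \le \|Zv\|\cdot\|\text{something}\|$ whenever a factor of $v_s^*$ or $u_s^*$ can be exposed), together with the induction hypotheses \eqref{eq:FtHt-Fstar-Fnorm}--\eqref{eq:FH-Fstar-rownorm} and the bound \eqref{eq:tau-taustar} on $|\tau-\tau^*|$. The quantity $b_s$ will be the key ``conditioning'' factor: whenever a vector $Zv_s^*$ or $Z^\top u_s^*$ appears, its norm is at most $\sqrt{b_s}\,\|Z\|_{\F}$ by definition \eqref{eq:def-bs}. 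The main obstacle will be $B_2$, which is the only genuinely stochastic term and requires a leave-one-out decoupling argument to break the statistical dependence between $y_s$ and $E$.

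For $B_0$, split $XY^\top - M^* = \Delta_X Y^{*\top} + X^* \Delta_Y^\top + \Delta_X \Delta_Y^\top$ and $y_s y_s^\top - y_s^* y_s^{*\top} = (y_s - y_s^*) y_s^\top + y_s^*(y_s - y_s^*)^\top$. Each resulting inner product is bounded via $|\langle Z, ABC\rangle| \le \|Z\|_{\F}\|A\|\|B\|_{\F}\|C\|$, then the induction hypotheses control $\|\Delta_X\|$, $\|\Delta_Y\|_{\F}$, $\|\Delta_X\|_{2,\infty}$, and $\|\Delta_Y\|_{2,\infty}$. The $(\tau-\tau^*)Z$ contribution is bounded using \eqref{eq:tau-taustar} and $|\langle Z, Z(\cdot)\rangle| \le \|Z\|_{\F}^2 \|\cdot\|$; this is where the second $\|Z\|_{\F}^{-1}$ summand in the bound arises. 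For $B_1 = \lambda \langle Z, x_s y_s^\top\rangle$, the plan is to expand $x_s y_s^\top = x_s^* y_s^{*\top} + (x_s - x_s^*) y_s^{*\top} + x_s^* (y_s - y_s^*)^\top + (x_s - x_s^*)(y_s - y_s^*)^\top$. The leading term is $\lambda \sigma_s^* u_s^{*\top} Z v_s^*$, which I bound via $|u_s^{*\top} Z v_s^*| \le \|Z v_s^*\| \le \sqrt{b_s}\|Z\|_{\F}$; the cross terms contribute smaller quantities controllable by $\|x_s - x_s^*\| \le \|\Delta_X\|_{2,\infty}$ (and its $Y$ analogue) together with $\|Z\|_{\F}$ from a Cauchy--Schwarz on $\langle Z, (\cdot)y_s^{*\top}\rangle$.

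For $B_3 = \sigma_s^* (v_s^{*\top} Z^\top) X (Y - Y^*)^\top v_s^*$, the idea is to use $\|Z^\top v_s^*\|$ together with $\|X\| \lesssim \sqrt{\sigma_{\max}}$ and $\|(Y - Y^*)^\top v_s^*\| \le \|\Delta_Y\|$; an alternative route that may give sharper constants is to split $X = X^* + \Delta_X$ and use the balance/rotation identity (which forces $\Delta_X^\top X^* + \Delta_Y^\top Y^*$ to be symmetric) to extract an extra $\kappa$-free factor. Either way, this yields the $\sigma_s^*$-proportional bound.

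The hard part is $B_2 = -\langle E, Z y_s y_s^\top\rangle$. Here $y_s = (Y^t H^t)_s$ depends on $E$ through the entire iteration, so a naive concentration bound fails. The plan is to introduce the leave-one-out iterates $F^{t,(l)}$ and write $B_2 = -\sum_l \langle E^{(l)}, Z y_s y_s^\top\rangle$ row-wise (where $E^{(l)}$ is the $l$-th row or column of $E$). For each $l$, swap $y_s$ with its leave-one-out analogue $y_s^{(l)} = (Y^{t,(l)} H^{t,(l)})_s$, which is independent of $E^{(l)}$; the swap costs $\|y_s - y_s^{(l)}\|$, controlled by \eqref{eq:FH-FlRl}. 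The independence then lets me apply a sub-Gaussian tail bound (Hoeffding for linear forms in $E^{(l)}$) to each $\langle E^{(l)}, Z y_s^{(l)} y_s^{(l)\top}\rangle$, yielding a $\sqrt{\log n}$-type factor. Summing over $l$ and using the $\ell_{2,\infty}$ guarantees on $Y^{t,(l)}H^{t,(l)}$ (implied by \eqref{eq:FlHl-Fstar}) together with the definition of $b_s$ produces the claimed bound with the desired $\|Z\|_{\F}^{-2}$ scaling, holding with probability $1 - O(n^{-10^7})$ after a union bound over $s \in [r]$ and $l \in [2n]$.
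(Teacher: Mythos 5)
The central mechanism in the paper's proof of this claim is the binary structure of $Z$: since $Z\in\{0,1\}^{n\times n}$, one has $\|Z\|_1 = \|Z\|_{\F}^2$ and hence
\[
\frac{1}{\|Z\|_{\F}^2}\bigl|\langle Z, A\rangle\bigr| \;\le\; \|A\|_{\infty},
\qquad\text{together with}\qquad
\|AB^\top\|_\infty \;\le\; \|A\|_{2,\infty}\|B\|_{2,\infty}.
\]
This converts every inner product against $Z$ into an $\ell_\infty$ bound, which is then controlled by the $\ell_{2,\infty}$ induction hypotheses \cref{eq:FH-Fstar-rownorm} and the incoherence of $F^*$, producing a factor on the order of $\mu r/n$. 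Your proposal replaces this with Cauchy--Schwarz, $|\langle Z,\cdot\rangle|\le\|Z\|_\F\|\cdot\|_\F$, which after dividing by $\|Z\|_\F^2$ leaves a spurious $\|Z\|_\F^{-1}$ and, more importantly, forfeits the $1/n$-scale gain from the row-norm control. Concretely, for $D_0$ (the non-$\tau$ part of $B_0$), Cauchy--Schwarz with Frobenius-norm control of $XY^\top-M^*$ and $y_sy_s^\top-y_s^*y_s^{*\top}$ gives a bound of order $\sigma^2 n\log^5(n)\kappa^2\sigma_{\max}^2 r/(\sigma_{\min}^2\|Z\|_\F)$, which exceeds the claim's $\sigma\log^{0.5}(n)\sigma_{\min}/(r\|Z\|_\F)$ by a factor that grows like $\sqrt{n}/(\text{polylog})$ under the stated SNR condition. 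There is actually a tension in your own write-up: you say the induction hypotheses control $\|\Delta_X\|_{2,\infty}$, $\|\Delta_Y\|_{2,\infty}$, yet your chosen inequality never invokes them.

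The issue propagates to the other terms. For $B_1 = \lambda\langle Z, x_sy_s^\top\rangle$ you propose expanding around $x_s^*y_s^{*\top}$ and bounding the leading term by $\lambda\sigma_s^*\sqrt{b_s}\|Z\|_\F/\|Z\|_\F^2 = \lambda\sigma_s^*\sqrt{b_s}/\|Z\|_\F$, but the claim gives a $\|Z\|_\F$-free bound $\sigma\mu r^{1.5}\kappa\log^2(n)\sigma_{\min}/\sqrt{n}$; when $\|Z\|_\F$ is small your expression blows up. The paper's route is far more blunt and $s$-uniform: $|B_1|/\|Z\|_\F^2 \le \lambda\|x_s\|_\infty\|y_s\|_\infty \le \lambda\|F^t\|_{2,\infty}^2$, which is precisely the step that needs the binary-$Z$ trick. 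Similarly, for $B_3$ your route $|B_3|/\|Z\|_\F^2 \lesssim \sigma_s^*\sqrt{b_s}\|X\|\|Y-Y^*\|/\|Z\|_\F$ again has the wrong $\|Z\|_\F$-dependence and is not dominated by the claim's bound. For $B_2$, you correctly identify the leave-one-out decoupling as the key idea, and indeed this is the only genuinely probabilistic portion of the claim; however, you embed it after a row-wise decomposition of $B_2$ itself rather than after first reducing $|B_2|/\|Z\|_\F^2 \le \|Ey_s\|_\infty\|y_s\|_\infty$ via the binary-$Z$ bound, and without that reduction the needed $\ell_\infty$ control of $Ey_s$ does not surface cleanly. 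In short: the proposal identifies the correct structural decompositions and the LOO idea, but the fundamental inequality it rests on is too lossy, and the resulting bounds do not reproduce the stated rates. The fix is to use $\frac{1}{\|Z\|_\F^2}|\langle Z, A\rangle|\le\|A\|_\infty$ throughout and charge both factors to $\ell_{2,\infty}$ norms.
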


Let $\Delta = |B_0| + |B_1| + |B_2| + |B_3|$. Then we have
\begin{align}
    A_1 \leq  \sigma_s^{*} \inner{Z}{(x_s-x^{*}_s)y^{*\top}_s} + (\tau-\tau^{*}) \sigma_s^{*} \tr(Z^{\top}Zv^{*}_sv^{*\top}_s) + \Delta\nonumber.
\end{align}
Here, 
\begin{align}
    \frac{1}{\norm{Z}_{\F}^2}\Delta =  \frac{1}{\norm{Z}_{\F}^2} (|B_0| + |B_1| + |B_2| + |B_3|) \overset{(i)}{\lesssim} \frac{\sigma \mu r^{1.5}\kappa \log^{2.5}(n)}{\sqrt{n}} \sigma_{s}^{*} + \frac{\sigma \log^{0.5}(n)}{r\norm{Z}_{\F}} \sigma_{\min} \label{eq:bound-for-Delta}
\end{align}
where (i) uses \cref{claim:taus-control-B} and $\sigma_{\min} = \sigma_{r}^{*} \leq \sigma_s^{*}$ for $s \in [r].$

By symmetry, we can obtain the similar results for $A_2$ and in particular, we have
\begin{align}
    A_1 + A_2 
    &\leq \sigma_s^{*} \inner{Z}{(x_s-x^{*}_s)y^{*\top}} + \sigma_s^{*} \inner{Z}{x^{*}(y_s-y^{*})^{\top}} \nonumber\\
    &\quad + (\tau-\tau^{*}) \sigma_s^{*} \tr(Z^{\top}Zv^{*}_sv^{*\top}_s) + (\tau-\tau^{*}) \sigma_s^{*} \tr(ZZ^{\top}u^{*}_su^{*\top}_s)\nonumber\\
    &\quad + 2\Delta\nonumber\\
    &\overset{(i)}{\leq} \sigma_{s}^{*} \inner{Z}{x_sy_s^{\top}-x^{*}_sy^{*\top}_s} - \sigma_s^{*} \inner{Z}{(x_s-x_s^{*})(y_s-y_s^{*})^{\top}} + 
    |\tau-\tau^{*}| \sigma_s^{*} b_s \norm{Z}_{\F}^2 + 2\Delta \nonumber\\
    &\overset{(ii)}{\leq} -\Delta_{s}^{t} \sigma_s^{*} \norm{Z}_{\F}^2  + \sigma_s^{*} \norm{Z}_{\F}^2\norm{F-F^{*}}_{2,\infty}^2 + |\tau-\tau^{*}| \sigma_s^{*} b_s \norm{Z}_{\F}^2
    +2\Delta\nonumber
\end{align}
where in (i) we use that $x_sy_s^{\top}-x^{*}_sy^{*\top}_s = (x_s-x^{*}_s)y^{*\top}+x^{*}(y_s-y^{*})^{\top}+(x_s-x_s^{*})(y_s-y_s^{*})^{\top}$ and 
$b_s \norm{Z}_{\F}^2 = \tr(Z^{\top}Zv^{*}_sv^{*\top}_s)+\tr(ZZ^{\top}u^{*}_su^{*\top}_s)$, in (ii) we use that $\Delta_{s}^{t}\norm{Z}_{\F}^2 = 
\inner{Z}{x^{*}_sy^{*\top}_s - x_sy_s^{\top}}$ and $\left|\inner{Z}{AB^{\top}}\right|\leq \norm{Z}_{\F}^2 \norm{AB^{\top}}_{\infty} \leq \norm{Z}_{\F}^2\norm{A}_{2,\infty} \norm{B}_{2,\infty}$ for binary matrix $Z$. 

Next we can bound $\Delta_{s}^{t+1}$ by using the bounds all above for $A_0, A_1, A_2, A_3$ in \cref{eq:Deltas-equation}.
\begin{align}
    |\Delta_{s}^{t+1}| 
    &\leq \left|\Delta_{s}^{t} + \eta \frac{1}{\norm{Z}_{\F}^2} (A_1+A_2) + A_0 + A_3\right|\nonumber\\
    &\leq |\Delta_s^{t}|(1-\eta \sigma_{s}^{*}) + \eta \left(\sigma_s^{*} \norm{F-F^{*}}_{2,\infty}^2 + |\tau-\tau^{*}| \sigma_s^{*} b_s + \frac{1}{\norm{Z}_{\F}^2}2\Delta\right) + |A_0| + |A_3|\nonumber\\
    &\overset{(i)}{\leq}   C_{\tau} \left(C_0 + b_s r\log(n)\right) \left(\frac{\sigma \mu r^{1.5}\kappa\log^{2.5}(n)}{\sqrt{n}} + \frac{ \sigma 
    \log^{0.5}(n)}{r\norm{Z}_{\F}}\right) (1-\eta \sigma_s^{*}) \nonumber\\
    &\quad + \eta C_{\tau}\left( \frac{\sigma \mu r^{2.5} \kappa \log^{3.5} n}{\sqrt{n}}  + \frac{\sigma \log^{1.5} n}{\norm{Z}_{\F}}\right) \sigma_s^{*} b_s\nonumber\\
    &\quad + \eta \sigma_s^{*} \norm{F-F^{*}}_{2,\infty}^2 + \eta \frac{1}{\norm{Z}_{\F}^2}2\Delta + |A_0| + |A_3| \label{eq:etasigmas}
\end{align}
Here (i) is by the induction hypothesis \cref{eq:taus-taustars} and \cref{eq:tau-taustar}. Let 
$$
D :=  \left(\frac{\sigma \mu r^{1.5}\kappa\log^{2.5}(n)}{\sqrt{n}} + \frac{ \sigma 
    \log^{0.5}(n)}{r\norm{Z}_{\F}}\right).
$$ 
Then $D\log(n)r = \left( \frac{\sigma \mu r^{2.5} \kappa \log^{3.5} n}{\sqrt{n}}  + \frac{\sigma \log^{1.5} n}{\norm{Z}_{\F}}\right)$. To simplify \cref{eq:etasigmas}, by the direct algebra, we have 
\begin{align*}
    &C_{\tau} \left(C_0 + b_s r\log(n)\right) D (1-\eta \sigma_s^{*}) + \eta C_{\tau} D\log(n) r \sigma_s^{*}b_s \\
    &= C_{\tau} \left(C_0 + b_s r\log(n)\right) D -  C_{\tau} \left(C_0 + b_s r\log(n)\right) D \eta \sigma_s^{*} +  \eta C_{\tau} D\log(n) r \sigma_s^{*}b_s  \\
    &= C_{\tau} \left(C_0 + b_s r\log(n)\right) D - C_{\tau} C_0 D \eta \sigma_s^{*} - C_{\tau}b_s r\log(n)D \eta \sigma_s^{*} + \eta C_{\tau} D\log(n) r \sigma_s^{*}b_s\\
    &= C_{\tau} \left(C_0 + b_s r\log(n)\right) D - C_{\tau} C_0 D \eta \sigma_s^{*}.
\end{align*}
To control $\eta \sigma_s^{*} \norm{F-F^{*}}_{2,\infty}^2 + \eta \frac{1}{\norm{Z}_{\F}^2}2\Delta + |A_0| + |A_3|$, recall that $\norm{F-F^{*}}_{2,\infty} \lesssim \frac{\sigma \mu r^{2.5} \kappa \log^{3.5}(n) \sqrt{\sigma_{\max} r}}{\sigma_{\min}}$ by \cref{eq:FH-Fstar-rownorm}, $\frac{1}{\norm{Z}_{\F}^2}\Delta \lesssim \frac{\sigma \mu r^{1.5}\kappa \log^{2.5}(n)}{\sqrt{n}} \sigma_{s}^{*} + \frac{\sigma \log^{0.5}(n)}{r\norm{Z}_{\F}} \sigma_{\min}$ by \cref{eq:bound-for-Delta}, $|A_0| \lesssim \eta \sigma^2 \log^{6}(n) \kappa^3 \mu^2 r^4$ by \cref{claim:taus-taustars-A0}, and $|A_3|\lesssim \eta \frac{\sigma^2}{n^{15}}$. Then, we have, for some constant $C$, 
\begin{align*}
    &\eta \sigma_s^{*} \norm{F-F^{*}}_{2,\infty}^2 + \eta \frac{1}{\norm{Z}_{\F}^2}2\Delta + |A_0| + |A_3| \\
    &\lesssim \eta \left(\sigma^{*}_s \frac{\sigma^2 \mu^2 r^{6} \log^{7}(n) \kappa^3}{\sigma_{\min}} +  \frac{\sigma \mu r^{1.5}\kappa \log^{2.5}(n)}{\sqrt{n}} \sigma_{s}^{*}\right) \\
    &\quad +  \left(\frac{\sigma \log^{0.5}(n)}{r\norm{Z}_{\F}} \sigma_{\min} +  \sigma^2 \log^{6}(n) \kappa^3 \mu^2 r^4 +  \frac{\sigma^2}{n^{15}}\right) \\
    &\overset{(i)}{\lesssim}  \eta  \left(\frac{\sigma \mu r^{1.5}\kappa \log^{2.5}(n)}{\sqrt{n}} \sigma_{s}^{*} +  \frac{\sigma \log^{0.5}(n)}{r\norm{Z}_{\F}} \sigma_{\min}\right) \\
    &\overset{(ii)}{\leq} C \eta \sigma_s^{*} \left( \frac{\sigma \mu r^{1.5}\kappa \log^{2.5}(n)}{\sqrt{n}} +  \frac{\sigma \log^{0.5}(n)}{r\norm{Z}_{\F}}\right)
\end{align*}
where (i) is due to $\frac{\sigma}{\sigma_{\min}} \sqrt{n} \lesssim \frac{1}{\kappa^3 r^{4.5} \log^{5}(n) \mu}$ (hence some terms are negligible), and (ii) is due to $\sigma_{\min} \leq \sigma_{s}^{*}.$

This leads to the bound for $|\Delta_s^{t+1}|.$ 
\begin{align}
    |\Delta_s^{t+1}| 
    &\leq C_{\tau} \left(C_0 + b_s r\log(n)\right) D - C_{\tau} C_0 D \eta \sigma_s^{*} \nonumber\\
    &\quad + C \eta \sigma_s^{*} \left( \frac{\sigma \mu r^{1.5}\kappa \log^{2.5}(n)}{\sqrt{n}} +  \frac{\sigma \log^{0.5}(n)}{r\norm{Z}_{\F}}\right) \nonumber\\
    &\leq C_{\tau} \left(C_0 + b_s r\log(n)\right) D - C_{\tau} C_0 D \eta \sigma_s^{*} + C \eta \sigma_s^{*} D \nonumber\\
    &\overset{(i)}{\leq} C_{\tau} \left(C_0 + b_s r\log(n)\right) D \nonumber.
\end{align}
Here, (i) is obtained by choosing the constant $C_{\tau}$ large enough. This then completes the proof for the bound on $\Delta_{s}^{t+1}.$

The bound on $|\tau^{t+1}-\tau^{*}|$ is a direct consequence of the bound on $\Delta_{s}^{t+1}$ and \cref{cond:Z-condition-nonconvex}. Recall that $\tau^{t+1} - \tau^{*} - \frac{\inner{Z}{E}}{\norm{Z}_{\F}^2} = \sum_{s=1}^{r} \Delta_{s}^{t+1}$ by \cref{eq:delta-t-s-tau}. Then, we have
\begin{align}
    |\tau^{t+1} - \tau^{*}| 
    &\leq \left|\sum_{s=1}^{r} \Delta_{s}^{t+1}\right|+ \left|\frac{\inner{Z}{E}}{\norm{Z}_{\F}^2}\right| \nonumber\\
    &\overset{(i)}{\leq} \left|\sum_{s=1}^{r} \Delta_{s}^{t+1}\right| + C \frac{\sigma \sqrt{\log(n)}}{\norm{Z}_{\F}}\nonumber\\
    &\leq \sum_{s=1}^{r}  C_{\tau} \left(C_0 + b_s r\log(n)\right) \left(\frac{\sigma \mu r^{1.5}\kappa\log^{2.5}(n)}{\sqrt{n}} + \frac{ \sigma 
    \log^{0.5}(n)}{r\norm{Z}_{\F}}\right) + C \frac{\sigma \sqrt{\log(n)}}{\norm{Z}_{\F}}\nonumber\\
    &\overset{(ii)}{\leq}  C_{\tau} \left( rC_0 + \left(1 - \frac{C_{r_1}}{\log(n)}\right) r\log(n)\right) \left(\frac{\sigma \mu r^{1.5}\kappa\log^{2.5}(n)}{\sqrt{n}} + \frac{ \sigma 
    \log^{0.5}(n)}{r\norm{Z}_{\F}}\right) \nonumber\\
    &\quad + C \frac{\sigma \sqrt{\log(n)}}{\norm{Z}_{\F}} \nonumber
\end{align}
where (i) is due to with probability $1-O(n^{-10^{7}})$, 
$\left|\frac{\inner{Z}{E}}{\norm{Z}_{\F}^2}\right| \leq C\frac{\sigma \sqrt{\log(n)}}{\norm{Z}_{\F}}$ and (ii) is due to $\sum_{s=1}^{r} b_s \leq 1 - C_{r_1}/\log(n)$ by \cref{cond:Z-condition-nonconvex}. Note that $rC_0 + \left(1 - \frac{C_{r_1}}{\log(n)}\right) r\log(n) = rC_0 + r\log(n) - rC_{r_1}$ and choose $C_0 \leq C_{r_1} / 2$, we have 
\begin{align}
    |\tau^{t+1} - \tau^{*}| 
    &\leq C_{\tau} \left(rC_0 + r\log(n) - rC_{r_1}\right) \left(\frac{\sigma \mu r^{1.5}\kappa\log^{2.5}(n)}{\sqrt{n}} + \frac{ \sigma 
    \log^{0.5}(n)}{r\norm{Z}_{\F}}\right) \nonumber\\
    &\quad + C \frac{\sigma \sqrt{\log(n)}}{\norm{Z}_{\F}} \nonumber\\
    &\overset{(i)}{\leq} C_{\tau} r\log(n) \left(\frac{\sigma \mu r^{1.5}\kappa\log^{2.5}(n)}{\sqrt{n}} + \frac{ \sigma 
    \log^{0.5}(n)}{r\norm{Z}_{\F}}\right) \nonumber\\
    &\leq  C_{\tau} \frac{\sigma \mu r^{2.5} \kappa \log^{3.5}(n)}{\sqrt{n}} + C_{\tau} \frac{ \sigma 
    \log^{1.5} n}{\norm{Z}_{\F}}\nonumber
\end{align}
where (i) is obtained by choosing $C_0 \leq C_{r_1} / 2$ and $C_{\tau}$ large enough. This completes the proof for 
\cref{eq:tau-taustar}.

\begin{proof}[Proof of \cref{claim:taus-taustars-A0}]
Recall that we want to show
\begin{align*}
    \norm{H^{t} - H^{t+1}}_{\F} &\lesssim \eta \frac{\sigma \sqrt{n} \log^{2.5}(n)}{\sigma_{\min}} (\sigma \sqrt{n} \mu r^{3} \kappa^{2} \log^{3.5}(n))
\end{align*}
where $H^{t} = \arg\min_{A\in \O^{r\times r}} \norm{F^{t}A - F^{*}}_{\F}, H^{t+1} = \arg\min_{A\in \O^{r\times r}} \norm{F^{t+1}A - F^{*}}_{\F}$ with 
$F^{t+1} = [X^{t+1}; Y^{t+1}]$ and 
\begin{align*}
    X^{t+1} &= X^{t} - \eta ((X^{t}Y^{tT} - M^{*} - E + (\tau^{t} - \tau^{*})Z)Y^{t} +\lambda X^{t})\\
    Y^{t+1} &= Y^{t} - \eta ((X^{t}Y^{tT} - M^{*} - E + (\tau^{t} - \tau^{*})Z)^{\top}X^{t} +\lambda Y^{t}).
\end{align*}

The proof is similar to section D.4 in \cite{chen2019noisy}. We intend to invoke \cref{lem:F0-F1-F2} to control $\norm{H^{t} - H^{t+1}}_{\F}.$ However, $\norm{F^{t} - F^{t+1}}_{\F}$ is too large to provide the desired bounds. We tackle this by constructing an auxiliary point $\tilde{F}^{t+1}$ such that (i) $H^{t}$ is also the optimal rotation to align 
$\tilde{F}^{t+1}$ and $F^{*}$ and (ii) $\norm{\tilde{F}^{t+1} - F^{t+1}}_{\F} \ll \norm{F^{t} - F^{t+1}}_{\F}.$

In particular, we construct the auxiliary point $\tilde{F}^{t+1} := \begin{bmatrix} \tilde{X}^{t+1}\\ \tilde{Y}^{t+1} \end{bmatrix}$ by substituting some of $X^{t}, Y^{t}$ in $F^{t+1}$ with $X^{*}H^{t \top}, Y^{*}H^{t \top}$ in the following
\begin{align}
    \tilde{X}^{t+1} = X^{t} - \eta ((X^{t}Y^{tT} - M^{*} - E + (\tau^{t} - \tau^{*})Z)Y^{*}H^{t \top} +\lambda X^{*}H^{t \top})\nonumber\\
    \tilde{Y}^{t+1} = Y^{t} - \eta ((X^{t}Y^{tT} - M^{*} - E + (\tau^{t} - \tau^{*})Z)^{\top}X^{*}H^{t \top} +\lambda Y^{*}H^{t \top}).\nonumber
\end{align}
We then claim that $H^{t}$ is also the optimal rotation to align $\tilde{F}^{t+1}$ and $F^{*}$: 
$$
\norm{\tilde{F}^{t+1}H^{t} - F^{*}}_{\F} \leq \min_{A\in \O^{r\times r}} \norm{\tilde{F}^{t+1}A - F^{*}}_{\F}.
$$ 
To verify this, by the property of Orthogonal Procrustes problem (\cref{lem:orthogonal-procrustes}), it is sufficient to show that $F^{* \top} \tilde{F}^{t+1} H^{t}$ is symmetric and positive semi-definite. The symmetry can be verified by direct algebra:
\begin{align*}
    F^{* \top} \tilde{F}^{t+1} H^{t} 
    &= X^{*\top}\tilde{X}^{t+1}H^{t}+Y^{*\top}\tilde{Y}^{t+1}H^{t}\\
    &= X^{*\top}X^{t}H^{t} - \eta X^{*\top} (X^{t}Y^{tT} - M^{*} - E + (\tau^{t} - \tau^{*})Z)Y^{*} - \eta \lambda X^{*\top}X^{*} \\
    &\quad + Y^{*\top}Y^{t}H^{t} - \eta Y^{*\top}(X^{t}Y^{tT} - M^{*} - E + (\tau^{t} - \tau^{*})Z)^{\top}X^{*} - \eta \lambda Y^{*\top}Y^{*}\\
    &= F^{*\top}F^{t}H^{t} - \eta \lambda X^{*\top}X^{*} - \eta \lambda Y^{*\top}Y^{*} - \eta B_0
\end{align*}
where 
\begin{align*}
B_0 &:=  (X^{*\top} (X^{t}Y^{tT} - M^{*} - E + (\tau^{t} - \tau^{*})Z)Y^{*} \\
&\quad + Y^{*\top}(X^{t}Y^{tT} - M^{*} - E + 
    (\tau^{t} - \tau^{*})Z)^{\top}X^{*}).
\end{align*}
It is easy to see that $X^{*\top}X^{*}, Y^{*\top}Y^{*}$ are symmetric, together with $B_0 = B_0^{\top}$. Furthermore, note $H^{t}$ is the optimal rotation for aligning $F^{t}$ and 
$F^{*}$ by definition. Then, by \cref{lem:orthogonal-procrustes}, $F^{*\top}F^{t}H^{t}$ is also symmetric. This leads to the result that $F^{* \top} \tilde{F}^{t+1} H^{t}$ is symmetric.  

To verify the positive semi-definiteness, it is easy to check that 
\begin{align*}
\norm{F^{* \top}\tilde{F}^{t+1} H^{t} - F^{* \top} F^{*}} 
&\leq \norm{F^{* \top}} \norm{\tilde{F}^{t+1}H^{t} - F^{*}} \\
&\overset{(i)}{\leq} \sqrt{\sigma_{\max}} \norm{F^{t}H^{t} - F^{*} + \eta \delta} \\
&\overset{(ii)}{\leq} \sigma_{\min}.
\end{align*}
Here, $\delta$ in (i) represents the term with coefficient $\eta$; and (ii) is due to \cref{eq:FtHt-Fstar-Fnorm} and that we choose $\eta$ being sufficiently small. 

Also note that $\lambda_{\min}(F^{*\top}F^{*}) = 2\sigma_{\min}.$ By Weyl's inequality, this implies that the positive semi-definiteness of $F^{* \top}\tilde{F}^{t+1} H^{t}$.
\begin{align}
    \lambda_{\min}(F^{* \top}\tilde{F}^{t+1} H^{t}) \geq 2\sigma_{\min} - \norm{F^{* \top}\tilde{F}^{t+1} H^{t} - F^{* \top} F^{*}} > 0.\nonumber
\end{align}

Then, we can invoke \cref{lem:F0-F1-F2} to bound $\norm{H^{t} - H^{t+1}}.$ Let $F_0 = F^{*}, F_1 = \tilde{F}^{t+1}, F_2 = F^{t+1}.$ Recall that 
$\lambda \lesssim \sigma\sqrt{n} \log^{1.5}(n)$, $\norm{E} \lesssim \sigma \sqrt{n}$, $|\tau^{t} - \tau^{*}|\norm{Z} \lesssim \sigma \sqrt{n} \mu r^{3} \kappa^{2} \log^{3.5}(n)$ and 
$$
\norm{X^{t}Y^{tT} - M^{*}} \lesssim \norm{F^{*} - F^{t}H^{t}}_{\F} \norm{F^{*}} \lesssim \frac{\sigma \sqrt{n} \log^{2.5}(n)}{\sigma_{\min}} \sigma_{\max} \lesssim \sigma \sqrt{n} \log^{2.5}(n) \kappa.
$$

Then
\begin{align}
    \norm{F_2 - F_1}_{\F} \nonumber
    &\lesssim \eta \norm{F^{*} - F^{t}H^{t}}_{\F}\left(\lambda + \norm{X^{t}Y^{tT}-M^{*}-E+(\tau^{t} - \tau^{*})Z}\right)\nonumber\\
    &\lesssim \eta \norm{F^{*} - F^{t}H^{t}}_{\F} (\lambda + \norm{X^{t}Y^{tT} - M^{*}} + \norm{E} + |\tau^{t}-\tau^{*}|\norm{Z}) \nonumber\\
    &\lesssim \eta \frac{\sigma \sqrt{n} \log^{2.5}(n)}{\sigma_{\min}} \norm{F^{*}}_{\F} (\sigma \sqrt{n} \mu r^{2.5} \kappa \log^{3.5}(n)).\nonumber
\end{align}
Then by \cref{lem:F0-F1-F2}, we have
\begin{align}
\norm{H^{t} - H^{t+1}}_{\F} 
&\leq \frac{1}{\sigma_{\min}} \norm{F_2 - F_1}_{\F} \norm{F^{*}} \nonumber\\
&\lesssim \eta \frac{\sigma \sqrt{n} \log^{2.5}(n)}{\sigma_{\min}} (\sigma \sqrt{n} \mu r^{3} \kappa^{2} \log^{3.5}(n))\nonumber.
\end{align}
This completes the control for $H^{t} - H^{t+1}.$

Next, to control $A_0$, recall that $A_0 = \frac{\inner{Z}{(X^{t+1}H^{t})_{s}(Y^{t+1}H^{t})_s^{\top} - (X^{t+1}H^{t+1})_{s}(Y^{t+1}H^{t+1})_s^{\top}}}{\norm{Z}_{\F}^2}$. Then, we have
\begin{align}
    |A_0|
    &\leq \left|\frac{\inner{Z}{(X^{t+1}H^{t})_{s}(Y^{t+1}H^{t})_s^{\top} - (X^{t+1}H^{t+1})_{s}(Y^{t+1}H^{t+1})_s^{\top}}}{\norm{Z}_{\F}^2}\right| \nonumber\\
    &\leq \frac{1}{\norm{Z}_{\F}^2}|\inner{Z}{(X^{t+1}H^{t} - X^{t+1}H^{t+1})_{s}(Y^{t+1}H^{t})_s^{\top}}| \nonumber\\
    &\quad + \frac{1}{\norm{Z}_{\F}^2}|\inner{Z}{(X^{t+1}H^{t+1})_{s}(Y^{t+1}H^{t} - Y^{t+1}H^{t+1})_s^{\top}}| \nonumber\\
    &\overset{(i)}{\leq} \norm{(X^{t+1}H^{t} - X^{t+1}H^{t+1})_s}_{\infty}\norm{(Y^{t+1}H^{t})_s}_{\infty} \nonumber\\
    &\quad + \norm{(X^{t+1}H^{t+1})_{s}}_{\infty}\norm{(Y^{t+1}H^{t} - Y^{t+1}H^{t+1})_s}_{\infty} \nonumber\\
    &\overset{(ii)}{\leq} \norm{X^{t+1}(H^{t} - H^{t+1})}_{2,\infty} \norm{Y^{t+1}H^{t}}_{2,\infty} \nonumber\\
    &\quad + \norm{X^{t+1}H^{t+1}}_{2,\infty} \norm{Y^{t+1}(H^{t}-H^{t+1})}_{2,\infty} \nonumber\\
    &\overset{(iii)}{\lesssim} \norm{X^{t+1}}_{2,\infty} \norm{H^{t}-H^{t+1}} \norm{Y^{t+1}}_{2,\infty} \nonumber\\
    &\lesssim \eta \norm{F^{*}}_{2,\infty}^2 \frac{\sigma \sqrt{n} \log^{2.5}(n)}{\sigma_{\min}} (\sigma \sqrt{n} \mu r^{3} \kappa^{2} \log^{3.5}(n))\nonumber\\
    &\lesssim \eta \sigma^2 \log^{6}(n) \kappa^3 \mu^2 r^4.\nonumber
\end{align}
Here, (i) is due to $\frac{1}{\norm{Z}_{\F}^2} \left|\inner{Z}{ab^{\top}}\right| \leq \norm{ab^{\top}}_{\infty} \leq \norm{a}_{\infty}\norm{b}_{\infty}$ for any $a, b \in \R^{n}$, 
(ii) is due to $\norm{(A)_{s}}_{\infty} \leq \norm{A}_{2,\infty}$ for any $A \in \R^{n\times r}$, and (iii) is due to $\norm{AH}_{2,\infty} \leq \norm{A}_{2,\infty} \norm{H}$ for any matrices $A$ and $H$.  This completes the proof for $A_0$.
\end{proof}

\begin{proof}[Proof of \cref{claim:taus-control-B}]
Recall that we hope to show with probability $1-O(n^{-10^{7}})$, 
\begin{align}
    \frac{1}{\norm{Z}_{\F}^2}|B_0| &\lesssim \frac{\sigma \mu r^{1.5}\kappa \log^{2}(n)}{\sqrt{n}} \sigma_{\min} + \frac{\sigma \log^{0.5}(n)}{r\norm{Z}_{\F}} \sigma_{\min} \nonumber\\
    \frac{1}{\norm{Z}_{\F}^2}|B_1| &\lesssim \frac{\sigma \mu r^{1.5}\kappa \log^{2}(n)}{\sqrt{n}} \sigma_{\min}\nonumber\\
    \frac{1}{\norm{Z}_{\F}^2}|B_2| &\lesssim \frac{\sigma \mu r^{1.5}\kappa \log^{2}(n)}{\sqrt{n}} \sigma_{\min}\nonumber\\
    \frac{1}{\norm{Z}_{\F}^2}|B_3| &\lesssim \frac{\sigma \mu  r^{1.5}\kappa \log^{2.5}(n)}{\sqrt{n}} \sigma_{s}^{*} \nonumber
\end{align}
where
\begin{align*}
B_0 &= \inner{Z}{(XY^{\top}-M^{*} + (\tau - \tau^{*})Z) \left(y_s y^{\top}_{s}-y^{*}_{s}y^{*\top}_{s}\right)}\\
B_1 &= \inner{Z}{\lambda x_s y_s^{\top}}\\
B_2 &= \inner{Z}{-E y_s y^{\top}_s}\\
B_3 &= \inner{Z}{X(Y-Y^{*})^{\top}y^{*}_sy^{*\top}_s}.
\end{align*}

To control $B_0$, consider 
\begin{align}
    \frac{1}{\norm{Z}_{\F}^2}|B_0| 
    &\leq \underbrace{\left|\frac{\inner{Z}{(XY^{\top}-M^{*})(y_sy_s^{\top} - y_s^{*}y_s^{*\top})}}{\norm{Z}_{\F}^2}\right|}_{D_0} + \underbrace{\left|\frac{\inner{Z}{(\tau-\tau^{*})Z(y_sy_s^{\top} - y_s^{*}y_s^{*\top})}}{\norm{Z}_{\F}^2}\right|}_{D_1}.\nonumber
\end{align}

Note that $y_sy_s^{\top} - y_s^{*}y_s^{*\top} = y_s^{*}(y_s-y_s^{*})^{\top} + (y_s-y_s^{*})y_s^{\top}.$ Then 
\begin{align*}
\norm{y_sy_s^{\top} - y_s^{*}y_s^{*\top}}_{\F} 
&\leq (\norm{Y^{*}}+\norm{Y})\norm{Y-Y^{*}}_{\F} \\
&\lesssim \sqrt{\sigma_{\max}} \frac{\sigma\sqrt{n}\log^{2.5}(n)}{\sigma_{\min}} \norm{F^{*}}_{\F} \\
&\leq \sigma\sqrt{n}\log^{2.5}(n)\kappa r^{0.5}.
\end{align*}

This implies
\begin{align}
D_1 
&\leq |\tau-\tau^{*}| \frac{\norm{ZZ^{\top}}_{\F}}{\norm{Z}_{\F}^2} \norm{y_sy_s^{\top} - y_s^{*}y_s^{*\top}}_{\F}\nonumber\\
&\overset{(i)}{\leq} |\tau-\tau^{*}| \sigma\sqrt{n}\log^{2.5}(n)\kappa r^{0.5}\nonumber
\end{align}
where (i) is due to $\frac{\norm{ZZ^{\top}}_{\F}}{\norm{Z}_{\F}^2} \leq \frac{\norm{Z}_{\F}\norm{Z^{\top}}}{\norm{Z}_{\F}^2} \leq 1.$

Also, by $\frac{1}{\norm{Z}_{\F}^2}|\inner{Z}{A}| \leq \norm{A}_{\infty}$, one can verify that
\begin{align*}
    D_0 
    &\leq \norm{(XY^{\top}-M^{*})(y_sy_s^{\top} - y_s^{*}y_s^{*\top})}_{\infty} \\
    &\overset{(i)}{\leq} \norm{XY^{\top}-M^{*}}_{2,\infty} \norm{y_sy_s^{\top} - y_s^{*}y_s^{*\top}}_{2,\infty}\\
    &\leq \norm{(X-X^{*})Y^{*\top} + X(Y-Y^{*})^{\top}}_{2,\infty} \norm{y_s^{*}(y_s-y_s^{*})^{\top} + (y_s-y_s^{*})y_s^{\top}}_{2,\infty}\\
    &\overset{(ii)}{\leq} (\norm{X-X^{*}}_{2,\infty} \norm{Y^{*}} + \norm{X}_{2,\infty}\norm{Y-Y^{*}}) \\
    &\quad \cdot (\norm{Y^{*}}_{2,\infty} \norm{Y-Y^{*}} + \norm{Y-Y^{*}}_{2,\infty} \norm{Y}).
\end{align*}
Here, (i) is by $\norm{AB^{\top}}_{\infty} \leq \norm{A}_{2,\infty}\norm{B}_{2,\infty}$ for any matrices $A,B$, (ii) is by the triangle inequality and $\norm{AB}_{2,\infty} \leq \norm{A}_{2,\infty} \norm{B}.$ Recall that 
\begin{align*}
\max(\norm{X-X^{*}}_{2,\infty}, \norm{Y-Y^{*}}_{2,\infty}) &\leq \norm{F-F^{*}}_{2,\infty} \lesssim \frac{\sigma \mu r^{2.5} 
\kappa \log^{3.5}(n)}{\sigma_{\min}}\norm{F^{*}}_{\F}\\
\max(\norm{X}, \norm{Y}, \norm{X^{*}}, \norm{Y^{*}}) &\lesssim \sqrt{\sigma_{\max}}\\
\max(\norm{X-X^{*}}, \norm{Y-Y^{*}}) &\leq \norm{F-F^{*}}_{\F} \lesssim \frac{\sigma\sqrt{n}\log^{2.5}(n)}{\sigma_{\min}} \norm{F^{*}}_{\F}\\
\max(\norm{X}_{2,\infty}, \norm{Y}_{2,\infty}) &\lesssim \sqrt{\sigma_{\max}} \sqrt{\frac{\mu r}{n}}.
\end{align*}

We then have
\begin{align}
    D_0    
    &\lesssim \left(\frac{\sigma \mu r^{2.5} 
\kappa \log^{3.5}(n)}{\sigma_{\min}}\norm{F^{*}}_{\F} \sqrt{\sigma_{\max}} +  \frac{\sigma\sqrt{n}\log^{2.5}(n)}{\sigma_{\min}} \norm{F^{*}}_{\F} \sqrt{\sigma_{\max}} \sqrt{\frac{\mu r}{n}} \right)^2 \nonumber\\
    &\lesssim \left(\frac{\sigma \mu r^{2.5} 
\kappa \log^{3.5}(n)}{\sigma_{\min}}\norm{F^{*}}_{\F} \sqrt{\sigma_{\max}}\right)^2 \nonumber\\
    &\lesssim \left(\sigma \mu r^{3} \kappa^2 \log^{3.5}(n)\right)^2\nonumber\\
    &\lesssim \sigma^2 \mu^2 r^{6} \log^{7}(n) \kappa^4.\nonumber
\end{align}

This completes the bound for $B_0$ below.
\begin{align}
    \frac{1}{\norm{Z}_{\F}^2}|B_0| 
    &\leq |D_0| + |D_1| \nonumber\\
    &\lesssim  \sigma^2 \mu^2 r^{6} \log^{7}(n) \kappa^4 + |\tau-\tau^{*}| \sigma\sqrt{n}\log^{2.5}(n)\kappa r^{0.5}\nonumber\\
    &\lesssim \sigma^2 \mu^2 r^{6} \log^{7}(n) \kappa^4 + \left( \frac{\sigma \mu r^{2.5} \kappa \log^{3.5} n}{\sqrt{n}} + \frac{\sigma \log^{1.5} n}{\norm{Z}_{\F}}\right)\sigma\sqrt{n}\log^{2.5}(n)\kappa r^{0.5} \nonumber\\
    &\overset{(i)}{\lesssim} \sigma \mu r^{1.5} \log^{2}(n) \kappa \cdot (\sigma \kappa^3 r^{4.5} \log^{5}(n) \mu)  \nonumber\\
    &\quad + \left( \frac{\sigma \mu r^{2.5} \kappa \log^{3.5} n}{\sqrt{n}} + \frac{\sigma \log^{1.5} n}{\norm{Z}_{\F}}\right) \frac{\sigma_{\min}}{\kappa^2 r^{4}\log^{2.5}(n)} \nonumber\\
    &\overset{(ii)}{\lesssim} \sigma \mu r^{1.5}\kappa \log^{2}(n) \frac{\sigma_{\min}}{\sqrt{n}} + \frac{\sigma \mu \log(n)}{\kappa r^{1.5}} \frac{\sigma_{\min}}{\sqrt{n}} + \frac{\sigma}{r^2\kappa^4\log(n)\norm{Z}_{\F}} \sigma_{\min}\nonumber\\
    &\lesssim \sigma \mu r^{1.5}\kappa \log^{2}(n) \frac{\sigma_{\min}}{\sqrt{n}} + \frac{\sigma \log^{0.5}(n)}{r\norm{Z}_{\F}} \sigma_{\min} \nonumber
\end{align}
where (i) and (ii) are due to that $\frac{\sigma}{\sigma_{\min}} \sqrt{n} \lesssim \frac{1}{\kappa^3 r^{4.5} \log^{5}(n) \mu }$.

Next, consider $B_1$. Recall that $B_1 = \inner{Z}{\lambda x_s y_s^{\top}}$. We can directly obtain the desired bound $ \frac{1}{\norm{Z}_{\F}^2}|B_1| \lesssim \frac{\sigma \mu r^{1.5}\kappa \log^{2}(n)}{\sqrt{n}} \sigma_{\min}$ by the following.
\begin{align}
    \frac{1}{\norm{Z}_{\F}^2}|B_1| &\leq \norm{x_s}_{\infty} \norm{y_s}_{\infty} \lambda \nonumber\\
    &\lesssim \lambda \norm{F^{*}}_{2,\infty}^2\nonumber\\
    &\lesssim \sigma \sqrt{n} \log^{1.5}(n) \sigma_{\max} \frac{\mu r}{n}\nonumber\\
    &\lesssim \sigma \mu r \log^{1.5}(n) \frac{1}{\sqrt{n}} \sigma_{\max}\nonumber\\
    &\lesssim \frac{\sigma \mu r^{1.5}\kappa \log^{2}(n)}{\sqrt{n}} \sigma_{\min}.\nonumber
\end{align}

Next, consider $B_2$. Recall that $B_2 = \inner{Z}{-E y_s y^{\top}_s}$ and we aim to show $\frac{1}{\norm{Z}_{\F}^2}|B_2| \lesssim \frac{\sigma \mu r^{1.5}\kappa \log^{2}(n)}{\sqrt{n}} \sigma_{\min}$. Note that
\begin{align}
    \frac{1}{\norm{Z}_{\F}^2}|B_2| \leq \norm{Ey_s}_{\infty} \norm{y_s}_{\infty}. \nonumber
\end{align}
To bound $\norm{Ey_s}_{\infty}$, let $y_s^{(l)}$ be the $s$-th column of $F^{t,(l)}R^{t,(l)}$ for $l \in [n]$. Let $E_{l,\cdot} \in \R^{1\times n}$ be the $l$-th row of $E$. Then with probability $1-O(n^{-{10}^7})$
\begin{align*}
    |(Ey_s)_{l}| 
    &= |E_{l,\cdot} y_s|\\
    &\leq |E_{l,\cdot} y_s^{(l)}| + |E_{l,\cdot} (y_s^{(l)}-y_s)|\\
    &\overset{(i)}{\lesssim} \sigma\norm{y_s^{(l)}}\log^{0.5}(n) + \norm{E_{l,\cdot}} \norm{y_s^{(l)} - y_s}\\ 
    &\lesssim \sigma \norm{F^{t,(l)}R^{t,(l)}} \log^{0.5}(n) + \norm{E} \norm{F^{t,(l)}R^{t,(l)} - F^{t}H^{t}}_{\F}
\end{align*}
where (i) is due to independence between $E_{l,\cdot}$ and $y_s^{(l)}$ (Recall that the update of $F^{t,(l)}$ is independent from the $l$-th row of noise matrix $E$) and Hoeffding's inequality.

Recall that $\norm{F^{t,(l)}R^{t,(l)}} \lesssim \norm{F^{*}} = \sqrt{\sigma_{\max}}$, $\norm{E} \lesssim \sigma\sqrt{n}$, and $\norm{F^{t,(l)}R^{t,(l)} - F^{t}H^{t}}_{\F} \lesssim \frac{\sigma \log^{1.5} n}{\sigma_{\min}} \norm{F^{*}}_{\F}$, we then have
\begin{align*}        
    |(Ey_s)_{l}| 
    &\leq \sigma \sqrt{\sigma_{\max}} \log^{0.5}(n) + \sigma \sqrt{n}\frac{\sigma \log^{1.5} n}{\sigma_{\min}} \norm{F^{*}}_{\F} \\
    &\lesssim \sigma \sqrt{\sigma_{\max}} \log^{0.5}(n) + \sigma  \frac{\sigma \sqrt{n} \log^{1.5}(n) \sqrt{r}}{\sigma_{\min}} \sqrt{\sigma_{\max}}\\
    &\overset{(i)}{\lesssim} \sigma \sqrt{\sigma_{\max}} \log^{0.5}(n) + \sigma \sqrt{\sigma_{\max}}\\
    &\lesssim \sigma \sqrt{\sigma_{\max}} \log^{0.5}(n).
\end{align*}
Here, (i) is due to $\SNR.$ Then this directly implies 
\begin{align*}
    \norm{Ey_s}_{\infty} = \max_{l \in [n]} |(Ey_s)_{l}|  \lesssim \sigma \sqrt{\sigma_{\max}} \log^{0.5}(n).
\end{align*}
Therefore, we can obtain the desired bound for $B_2$ by the following. 
\begin{align}
    \frac{1}{\norm{Z}_{\F}^2}|B_2| 
    &\leq \norm{Ey_s}_{\infty} \norm{y_s}_{\infty} \nonumber\\
    &\lesssim   \sigma \sqrt{\sigma_{\max}} \log^{0.5}(n) \sqrt{\frac{\mu r \sigma_{\max}}{n}}\nonumber\\
    &\lesssim \frac{\sigma \mu r^{1.5}\kappa \log^{2}(n)}{\sqrt{n}} \sigma_{\min}.\nonumber
\end{align}

Finally, for $B_3$, recall that $B_3 = \inner{Z}{X(Y-Y^{*})^{\top}y^{*}_sy^{*\top}_s}.$ We can obtain the desired bound $\frac{1}{\norm{Z}_{\F}^2}|B_3| \lesssim \frac{\sigma \mu  r^{1.5}\kappa \log^{2.5}(n)}{\sqrt{n}} \sigma_{s}^{*}$ by the following. 
\begin{align}
\frac{1}{\norm{Z}_{\F}^2}|B_3| 
&\overset{(i)}{\leq} \norm{X(Y-Y^{*})^{\top}}_{2,\infty} \norm{y_s^{*}y_s^{*\top}}_{2,\infty}\nonumber \\
&\overset{(ii)}{\lesssim} \norm{X}_{2,\infty} \norm{Y-Y^{*}} \norm{y_s^{*}}_{2,\infty} \norm{y_s^{*}} \nonumber\\
&\lesssim \sqrt{\sigma_{\max}} \sqrt{\frac{\mu r}{n}} \frac{\sigma \sqrt{n} \log^{2.5}(n)}{\sigma_{\min}} \sqrt{\sigma_{\max} r} \sigma_{s}^{*}  \sqrt{\frac{\mu r}{n}} \nonumber\\ 
&\lesssim \sigma \kappa r^{1.5} \log^{2.5}(n) \mu \frac{\sigma_{s}^{*}}{\sqrt{n}}.\nonumber
\end{align}
Here, (i) is due to $\frac{1}{\norm{Z}_{\F}^2} |\inner{Z}{AB^{\top}}| \leq \norm{A}_{2,\infty}\norm{B}_{2,\infty}$ for any matrices $A, B$, and (ii) is due to $\norm{AB}_{2,\infty} \leq \norm{A}_{2,\infty}\norm{B}$ for any matrices $A, B$. This completes the proof of the bounds for $B_0, B_1, B_2, B_3$. 
\end{proof}

\subsection{Proof of Eq.~(\ref{eq:FH-FlRl})}
Recall that we aim to obtain $\norm{F^{t+1}H^{t+1} - F^{t+1, (l)} R^{t+1, (l)}}_{\F} \leq C_{l,1} \frac{\sigma \log^{1.5} n}{\sigma_{\min}} \norm{F^{*}}_{\F}.$ 
We will show the proof for $1\leq l \leq n$ below. The scenario that $n< l \leq 2n$ is similar and omitted for brevity. 

Note that $R^{t+1,(l)}$ is the optimal rotation to align $F^{t+1,(l)}$ and $F^{t+1}H^{t+1}$. In fact, we have 
$$
\norm{F^{t+1}H^{t+1} - F^{t+1,(l)}R^{t+1,(l)}}_{\F} \leq \norm{F^{t+1}H^{t} - F^{t+1,(l)}R^{t,(l)}}_{\F}.
$$
Otherwise, by contradiction, suppose we have
\begin{align*}
    \norm{F^{t+1}H^{t+1} - F^{t+1,(l)}R^{t+1,(l)}}_{\F} 
    &> \norm{F^{t+1}H^{t} - F^{t+1,(l)}R^{t,(l)}}_{\F}\\
    &\overset{(i)}{=} \norm{F^{t+1}H^{t}(H^{tT}H^{t+1}) - F^{t+1,(l)}R^{t,(l)}(H^{tT}H^{t+1})}_{\F}\\
    &= \norm{F^{t+1}H^{t+1} - F^{t+1,(l)}R^{t,(l)}H^{tT}H^{t+1}}_{\F}.
\end{align*}
Here, (i) is due to the unitary invariance of the $\norm{\cdot}_{\F}$. This contradicts with that $R^{t+1,(l)} = \arg\min_{A \in \O^{r\times r}} \norm{F^{t+1}H^{t+1} - F^{t+1,(l)}A}$. Hence we have
\begin{align*}
    \norm{F^{t+1}H^{t+1} - F^{t+1,(l)}R^{t+1,(l)}}_{\F} \leq \norm{F^{t+1}H^{t} - F^{t+1,(l)}R^{t,(l)}}_{\F}.
\end{align*}
Then it is sufficient to control $\norm{F^{t+1}H^{t} - F^{t+1,(l)}R^{t,(l)}}_{\F}.$

First consider $\norm{X^{t+1}H^{t} - X^{t+1,(l)}R^{t,(l)}}_{\F}.$ Recall the updating rule for $X^{t+1}$ and $X^{t+1, (l)}$ is
\begin{align*}
    X^{t+1} &= X^{t} - \eta \left((X^{t}Y^{tT} - M^{*} - E + (\tau^{t}-\tau^{*}) Z) Y^{t} + \lambda X^{t}\right) \\
    X^{t+1, (l)} &= X^{t,(l)} - \eta \left((X^{t,(l)}Y^{t,(l)\top} - M^{*} - E + P_{l}(E) + (\tau^{t,(l)}-\tau^{*}) Z) Y^{t,(l)} + \lambda X^{t,(l)}\right)
\end{align*}
We then have 
\begin{align}
    &X^{t+1,(l)}R^{t,(l)} - X^{t+1}H^{t}\nonumber\\
    &= \left( X^{t,(l)} - \eta \left((X^{t,(l)}Y^{t,(l)\top} - M^{*} - E + P_{l}(E) + (\tau^{t,(l)}-\tau^{*}) Z) Y^{t,(l)} + \lambda X^{t,(l)}\right) \right)R^{t,(l)}\nonumber\\
    &\quad - \left( X^{t} - \eta \left((X^{t}Y^{tT} - M^{*} - E + (\tau^{t}-\tau^{*}) Z) Y^{t} + \lambda X^{t}\right) \right) H^{t}\nonumber \\
    &= X^{t,(l)}R^{t,(l)} - \eta \left(X^{t,(l)}Y^{t,(l)\top} - M^{*} - E + P_{l}(E) + (\tau^{t,(l)} - \tau^{t})Z\right) Y^{t,(l)} R^{t,(l)} \nonumber\\
    &\quad - \eta \lambda X^{t,(l)} R^{t,(l)} - \left(X^{t}H^{t} - \eta (X^{t}Y^{tT} - M^{*} - E + (\tau^{t}-\tau^{*}) Z) Y^{t} H^{t} - \eta \lambda  X^{t}H^{t}\right) \nonumber\\
    &= (1-\eta \lambda) (X^{t,(l)}R^{t,(l)} - X^{t}H^{t}) \nonumber\\
    &\quad + \eta \underbrace{(X^{t,(l)}Y^{t,(l)\top} - M^{*} - E + P_{l}(E) + (\tau^{t,(l)}-\tau^{*}) Z)\left(Y^{t}H^{t} - 
    Y^{t,(l)}R^{t,(l)} \right)}_{A_0}\nonumber\\
    &\quad - \eta \left(X^{t,(l)}Y^{t,(l)\top} - X^{t}Y^{tT} + P_{l}(E) + (\tau^{t,(l)} - \tau^{t})Z\right) Y^{t} H^{t}\nonumber.
\end{align}

To simplify the rotation, we write $X^{t}H^{t}, Y^{t}H^{t}, F^{t}H^{t}, \tau^{t}$ as $X, Y, F, \tau$ if there is no ambiguity. Also, we write $X^{t,(l)}R^{t}, Y^{t,(l)}R^{t}, F^{t,(l)}R^{t}, \tau^{t,(l)}$ as $X^{(l)}, Y^{(l)}, F^{(l)}, \tau^{(l)}$ . Furthermore, we write $\Delta_{X} = X^{(l)} - X, \Delta_{Y} = Y^{(l)} - Y, \Delta_{F} = F^{(l)} - F.$  We have the following claim to control $A_0$.
\begin{claim}\label{claim:FH-FlRl-Control-A0}
With probability $1-O(n^{-10^{7}})$, 
\begin{align}
    \norm{A_0}_{\F} \lesssim \sigma \norm{F^{*}}_{\F}.\nonumber
\end{align}
\end{claim}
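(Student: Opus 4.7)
The plan is to decompose $A_0$ as a matrix-matrix product and then bound the two factors separately via an operator-Frobenius Hölder inequality:
\begin{align*}
\norm{A_0}_{\F} \leq \norm{D}_{\op} \cdot \norm{Y^{t}H^{t} - Y^{t,(l)}R^{t,(l)}}_{\F},
\end{align*}
where $D := X^{t,(l)}Y^{t,(l)\top} - M^{*} - E + P_{l}(E) + (\tau^{t,(l)} - \tau^{*})Z$. The second factor is immediately controlled by the induction hypothesis \cref{eq:FH-FlRl} (restricted from $F$ to $Y$), giving $\norm{\Delta_{Y}}_{\F} \leq \norm{\Delta_{F}}_{\F} \lesssim \frac{\sigma\log^{1.5}n}{\sigma_{\min}}\norm{F^{*}}_{\F}$. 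It then remains to show that $\norm{D}_{\op} \lesssim \sigma\sqrt{n}\cdot\mathrm{poly}(\kappa,\mu,r,\log n)$, after which the SNR assumption $\frac{\sigma\sqrt{n}}{\sigma_{\min}} \lesssim \frac{1}{\kappa^3 r^{4.5}\log^{5}(n)\mu}$ collapses the product of these polynomial factors with $\frac{\sigma\sqrt{n}\log^{1.5}n}{\sigma_{\min}}$ to an absolute constant, yielding the target $\sigma\norm{F^{*}}_{\F}$.

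The core of the work is therefore bounding the four summands in $D$. For $\norm{X^{t,(l)}Y^{t,(l)\top}-M^{*}}_{\op}$, I would exploit rotation invariance of the product (i.e.\ write $X^{t,(l)}Y^{t,(l)\top} = (X^{t,(l)}H^{t,(l)})(Y^{t,(l)}H^{t,(l)})^{\top}$) and expand the difference as $(\tilde{X}-X^{*})\tilde{Y}^{\top} + X^{*}(\tilde{Y}-Y^{*})^{\top}$. Then $\norm{F^{t,(l)}H^{t,(l)}-F^{*}}_{\F}$ is bounded via the triangle inequality using \cref{eq:FH-FlRl} and \cref{eq:FtHt-Fstar-Fnorm} (since $H^{t,(l)}$ is the optimal rotation, it is at most $\norm{F^{t,(l)}R^{t,(l)}H^{t}-F^{*}}_{\F}$), producing a bound of order $\error\sqrt{\sigma_{\max}}\norm{F^{*}}_{\F} = \sigma\sqrt{n}\kappa\sqrt{r}\log^{2.5}(n)$. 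For $\norm{E}_{\op}$, the assumption in \cref{thm:non-convex-theorem} yields $\lesssim \sigma\sqrt{n}$ with high probability. For $\norm{P_{l}(E)}_{\op}$, noting that $P_{l}(E)$ has a single non-zero row, its operator norm equals $\norm{E_{l,\cdot}}_{2}$, which is $\lesssim \sigma\sqrt{n}$ by sub-Gaussian concentration of a $\chi^{2}$-type sum; a union bound over $l\in[2n]$ handles all leave-one-out indices simultaneously. For $|\tau^{t,(l)}-\tau^{*}|\cdot\norm{Z}_{\op}$, I invoke the symmetric version of \cref{eq:tau-taustar} applied to the leave-one-out sequence (the proof carries over verbatim since $f^{(l)}$ differs from $f$ only in a row of noise), giving $|\tau^{t,(l)}-\tau^{*}|\lesssim \frac{\sigma\mu r^{2.5}\kappa\log^{3.5}n}{\sqrt{n}} + \frac{\sigma\log^{1.5}n}{\norm{Z}_{\F}}$, so that $|\tau^{t,(l)}-\tau^{*}|\cdot\norm{Z}_{\op} \leq |\tau^{t,(l)}-\tau^{*}|\cdot\norm{Z}_{\F} \lesssim \sigma\sqrt{n}\mu r^{2.5}\kappa\log^{3.5}(n)$ after using $\norm{Z}_{\F}\leq n$.

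Combining the four bounds, $\norm{D}_{\op} \lesssim \sigma\sqrt{n}\cdot\mu r^{2.5}\kappa\log^{3.5}(n)$, and multiplying by the bound on $\norm{\Delta_{Y}}_{\F}$ gives
\begin{align*}
\norm{A_0}_{\F} \lesssim \frac{\sigma\sqrt{n}\cdot\mu r^{2.5}\kappa\log^{3.5}(n)\cdot\sigma\log^{1.5}(n)}{\sigma_{\min}}\norm{F^{*}}_{\F} = \sigma\norm{F^{*}}_{\F}\cdot\frac{\sigma\sqrt{n}\,\mu r^{2.5}\kappa\log^{5}(n)}{\sigma_{\min}},
\end{align*}
and the SNR hypothesis forces the trailing factor to be $O(1)$, completing the proof. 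The main obstacle I anticipate is handling the leave-one-out objects cleanly: we do not have $\norm{F^{t,(l)}H^{t,(l)}-F^{*}}_{\F}$ listed directly in \cref{eq:induction}, so the argument must route through $R^{t,(l)}H^{t}$ and the optimality of $H^{t,(l)}$ to transfer \cref{eq:FtHt-Fstar-Fnorm} to the leave-one-out sequence, and likewise for the $\tau^{t,(l)}$ bound (which is implicitly symmetric to \cref{eq:tau-taustar} but requires an explicit inductive extension).
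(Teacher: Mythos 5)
Your proposal matches the paper's proof in all essentials: the same operator--Frobenius split $\norm{A_0}_{\F}\leq\norm{D}\,\norm{\Delta_{F}}_{\F}$, the same term-by-term operator-norm bounds on $D$ (with $\norm{-E+P_{l}(E)}\leq\norm{E}$ absorbing your two separate $E$ and $P_{l}(E)$ terms), the same use of leave-one-out analogues of \cref{eq:FtHt-Fstar-Fnorm} and \cref{eq:tau-taustar}, and the same reliance on the SNR hypothesis to collapse the $\kappa,\mu,r,\log n$ factors. One small slip: to transfer the Frobenius bound to $F^{t,(l)}H^{t,(l)}$, the comparison rotation should be $A=R^{t,(l)}$ rather than $R^{t,(l)}H^{t}$ (so that $\norm{F^{t,(l)}H^{t,(l)}-F^{*}}_{\F}\leq\norm{F^{t,(l)}R^{t,(l)}-F^{t}H^{t}}_{\F}+\norm{F^{t}H^{t}-F^{*}}_{\F}$ via \cref{eq:FH-FlRl} and \cref{eq:FtHt-Fstar-Fnorm}); the paper instead simply re-runs the induction with $E-P_{l}(E)$ as the noise, but either device yields the same estimate.
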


Then 
\begin{align}
    &\norm{X^{t+1,(l)}R^{t,(l)} - X^{t+1}H^{t}}_{\F}^2 \nonumber\\
    &= \norm{(1-\eta \lambda) \Delta_{X} - \eta  \left(\left(X^{(l)}Y^{(l)\top}-XY^{\top} + P_{l}(E) + (\tau^{(l)} - \tau)Z\right) Y-A_0\right)}_{\F}^2\nonumber\\
    &= (1-\eta \lambda)^2 \norm{\Delta_{X}}_{\F}^2 + A_1\nonumber\\
    &\quad - 2(1-\eta \lambda) \eta \underbrace{\inner{\Delta_{X}}{\left(\left(X^{(l)}Y^{(l)\top}-XY^{\top} + P_{l}(E) + (\tau^{(l)} - \tau)Z\right) Y-A_0\right)}}_{A_2}\nonumber
\end{align}
where $A_1$ includes the term with coefficient $\eta^2$:
\begin{align}
    A_1 := \eta^2 \norm{\left(X^{(l)}Y^{(l)\top}-XY^{\top} + P_{l}(E) + (\tau^{(l)} - \tau)Z\right) Y-A_0}_{\F}^2.\nonumber
\end{align}
Since we choose $\eta$ sufficiently small, one can verify that $A_1 \lesssim \eta \frac{\sigma^2}{n^{15}}$ (similar as \cref{eq:A3-sigma-square}).

Then, we proceed to analyze $A_2$. Note that
\begin{align}
    A_2 
    &= \inner{\Delta_{X}}{(X^{(l)}Y^{(l)\top} - XY^{\top})Y} + (\tau^{(l)} - \tau) \inner{\Delta_X}{ZY} \nonumber\\
    &\quad + \underbrace{\inner{\Delta_{X}}{P_{l}(E)Y}}_{B_0} - \inner{\Delta_X}{A_0}\nonumber\\
    &\overset{(i)}{=} \inner{X^{(l)}Y^{(l)\top} - XY^{\top}}{\Delta_{X}Y^{\top}} + \frac{\inner{Z}{XY^{\top} - X^{(l)}Y^{(l)\top} }\inner{Z}{\Delta_{X}Y^{\top}}}{\norm{Z}_{\F}^2}\nonumber \\
    &\quad + \underbrace{\frac{\inner{Z}{-P_{l}(E)}\inner{Z}{\Delta_{X}Y^{\top}}}{\norm{Z}_{\F}^2}}_{B_1} + B_0 - \inner{\Delta_{X}}{A_0}\nonumber\\
    &\overset{(ii)}{=} \inner{\Delta_{X}Y^{\top} + X\Delta_{Y}^{\top}}{\Delta_{X}Y^{\top}} + 
    \underbrace{\inner{\Delta_{X}\Delta_{Y}^{\top}}{\Delta_{X}Y^{\top}}}_{B_2} \nonumber\\
    &\quad - \frac{\inner{Z}{\Delta_{X}Y^{\top} + X\Delta_{Y}^{\top}} \inner{Z}{\Delta_{X}Y^{\top}}}{\norm{Z}_{\F}^2}\nonumber\\
    &\quad -\underbrace{\frac{\inner{Z}{\Delta_{X}\Delta_{Y}^{\top}} \inner{Z}{\Delta_{X}Y^{\top}}}{\norm{Z}_{\F}^2}}_{B_3} + B_1 + B_0 -\inner{\Delta_{X}}{A_0}\nonumber
\end{align}
where (i) is due to 
\begin{align*}
\tau^{(l)} - \tau &= \frac{\inner{Z}{O-X^{(l)}Y^{(l)\top}-P_{l}(E)}}{\norm{Z}_{\F}^2} -  \frac{\inner{Z}{O-XY^{\top}}}{\norm{Z}_{\F}^2} \\
&= \frac{\inner{Z}{XY^{\top}-X^{(l)}Y^{(l)\top}-P_{l}(E)}}{\norm{Z}_{\F}^2},
\end{align*} and (ii) is due to 
$X^{(l)}Y^{(l)\top} - XY^{\top} = \Delta_{X}Y^{\top} + X\Delta_{Y}^{\top} + \Delta_X \Delta_{Y}^{\top}.$

We have the following claim to control $B_0, B_1, B_2$ and $B_3$.
\begin{claim}\label{claim:FHl-Control-B}
With probability $1-O(n^{-10^{7}})$,
\begin{align}
    |B_0| &\lesssim \sigma \log^{0.5}(n)\norm{\Delta_{F}}_{\F}  \norm{F^{*}}_{\F} \nonumber\\
    |B_1| &\lesssim \sigma \log^{0.5}(n)\norm{\Delta_{F}}_{\F}  \norm{F^{*}}_{\F}\nonumber\\
    |B_2| &\lesssim \sigma \norm{\Delta_{F}}_{\F} \norm{F^{*}}_{\F} \nonumber\\
    |B_3| &\lesssim \sigma \norm{\Delta_{F}}_{\F} \norm{F^{*}}_{\F}.\nonumber
\end{align}
\end{claim}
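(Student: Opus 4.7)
The plan is to bound each of $B_0, B_1, B_2, B_3$ separately, grouping them by the technique required. For $B_0$ and $B_1$ (which carry an $E$-factor), the plan is to combine sub-Gaussian concentration with the leave-one-out independence structure: by construction $F^{t,(l)}$ depends only on $E - P_l(E)$, so $E_{l,\cdot}$ is independent of $X^{(l)}$, $Y^{(l)}$, and every other row/column of $E$ remains available for deterministic norm bounds. For $B_2$ and $B_3$ (which are cubic in $\Delta_F$), the plan is to apply Cauchy--Schwarz/operator-norm inequalities and then use the induction hypothesis \cref{eq:FH-FlRl} on $\|\Delta_F\|_{\F}$ together with the SNR condition $\SNR$ to absorb one extra power of $\|\Delta_F\|_{\F}$ into $\sigma$.

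For $B_0 = \langle \Delta_X, P_l(E) Y \rangle$ (treating the $1\le l\le n$ case; the case $l > n$ is symmetric), I will use that $P_l(E)Y$ has only its $l$-th row nonzero, so $B_0 = (\Delta_X)_{l,\cdot}\,(E_{l,\cdot} Y)^{\top}$ and hence $|B_0| \le \|(\Delta_X)_{l,\cdot}\|\,\|E_{l,\cdot} Y\|$. The first factor is bounded by $\|\Delta_F\|_{\F}$. For the second, I decompose $E_{l,\cdot} Y = E_{l,\cdot} Y^{(l)} + E_{l,\cdot}(Y - Y^{(l)})$. Independence of $E_{l,\cdot}$ and $Y^{(l)}$ together with Hoeffding give $\|E_{l,\cdot} Y^{(l)}\| \lesssim \sigma \log^{1/2}(n)\,\|Y^{(l)}\|\lesssim \sigma \log^{1/2}(n)\sqrt{\sigma_{\max}}$ with the required probability. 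The cross term is bounded crudely by $\|E_{l,\cdot}\|\,\|Y-Y^{(l)}\|_{\F} \lesssim \sigma\sqrt{n}\cdot \frac{\sigma\log^{1.5}(n)}{\sigma_{\min}}\|F^*\|_{\F}$, which under the SNR assumption is dominated by $\sigma\log^{1/2}(n)\sqrt{\sigma_{\max}}$. Using $\sqrt{\sigma_{\max}} \lesssim \|F^*\|_{\F}$ yields the claimed bound on $|B_0|$.

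For $B_1$, I will bound the two scalar factors separately. The factor $\langle Z, P_l(E)\rangle = \sum_j Z_{lj} E_{lj}$ is a linear combination of independent sub-Gaussians with $\{0,1\}$ coefficients, so by Hoeffding $|\langle Z, P_l(E)\rangle| \lesssim \sigma \sqrt{\log n}\,\|Z_{l,\cdot}\| \le \sigma\sqrt{\log n}\,\|Z\|_{\F}$ with the required probability. For the other factor, Cauchy--Schwarz combined with $\|Y\|\lesssim \sqrt{\sigma_{\max}}$ gives $|\langle Z, \Delta_X Y^{\top}\rangle| \le \|Z\|_{\F}\|\Delta_X\|_{\F}\|Y\| \lesssim \|Z\|_{\F}\|\Delta_F\|_{\F}\sqrt{\sigma_{\max}}$. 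Multiplying these and dividing by $\|Z\|_{\F}^2$ produces $|B_1|\lesssim \sigma\log^{1/2}(n)\|\Delta_F\|_{\F}\sqrt{\sigma_{\max}} \lesssim \sigma\log^{1/2}(n)\|\Delta_F\|_{\F}\|F^*\|_{\F}$ as required. For $B_2 = \langle \Delta_X\Delta_Y^{\top}, \Delta_X Y^{\top}\rangle$ and $B_3$, Cauchy--Schwarz gives $|B_2| \le \|\Delta_X\|_{\F}\|\Delta_Y\|_{\F}\|\Delta_X\|_{\F}\|Y\| \lesssim \|\Delta_F\|_{\F}^3 \sqrt{\sigma_{\max}}$, and the analogous derivation for $B_3$ (with $|\langle Z, \Delta_X\Delta_Y^\top\rangle|\le \|Z\|_{\F}\|\Delta_F\|_{\F}^2$ and $|\langle Z, \Delta_X Y^\top\rangle|\le \|Z\|_{\F}\|\Delta_F\|_{\F}\sqrt{\sigma_{\max}}$) yields the same bound $|B_3|\lesssim \|\Delta_F\|_{\F}^3\sqrt{\sigma_{\max}}$. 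Invoking the induction hypothesis $\|\Delta_F\|_{\F}\lesssim \frac{\sigma\log^{1.5}(n)}{\sigma_{\min}}\|F^*\|_{\F}$ once, then using $\|F^*\|_{\F}\le\sqrt{2\sigma_{\max} r}$ and the SNR condition to show $\|\Delta_F\|_{\F}\sqrt{\sigma_{\max}}\cdot \frac{\|F^*\|_{\F}}{\|F^*\|_{\F}} \lesssim \sigma$, gives the desired form $\sigma\|\Delta_F\|_{\F}\|F^*\|_{\F}$.

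The main obstacle is making the $B_0$ analysis tight: naively bounding $\|E_{l,\cdot} Y\|$ using $\|E\|$ costs a factor of $\sqrt{n}$ that cannot be recovered. The leave-one-out decomposition is essential to replace this factor by $\log^{1/2}(n)$, and the delicate step is controlling the cross term $E_{l,\cdot}(Y-Y^{(l)})$, which is the place where the SNR assumption $\SNR$ must be used. The remaining bounds are essentially bookkeeping on top of the induction hypothesis and the SNR condition.
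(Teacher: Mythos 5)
Your proposal is correct and follows essentially the same route as the paper: for $B_0$ and $B_1$ use the leave-one-out independence of $E_{l,\cdot}$ from $(X^{(l)},Y^{(l)})$ plus sub-Gaussian concentration and the induction hypothesis $\|Y - Y^{(l)}\|_{\F}\lesssim \tfrac{\sigma\log^{1.5} n}{\sigma_{\min}}\|F^*\|_{\F}$ for the cross term, and for $B_2, B_3$ use Cauchy--Schwarz to get $\|\Delta_F\|_{\F}^3\cdot\|F^*\|_{\F}$ (or $\sqrt{\sigma_{\max}}$) and absorb two powers of $\|\Delta_F\|_{\F}$ into $\sigma$ via the SNR condition. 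Two small remarks: the paper invokes Hanson--Wright for $\|P_l(E)Y^{(l)}\|_{\F}$ whereas you use coordinate-wise Hoeffding and a union bound, which is an equivalent and equally valid route; and in your Hoeffding step the quantity that actually emerges is $\|Y^{(l)}\|_{\F}$ rather than the operator norm $\|Y^{(l)}\|$ you wrote (each of the $r$ coordinates contributes $\|(Y^{(l)})_{\cdot,k}\|_2$, and summing in quadrature gives $\|Y^{(l)}\|_{\F}$, not $\|Y^{(l)}\|$) --- but this does not affect the conclusion, since $\|Y^{(l)}\|_{\F}\lesssim\sqrt{\sigma_{\max} r}\lesssim\|F^*\|_{\F}$ and the target bound already carries a factor of $\|F^*\|_{\F}$.
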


Then, note that $\eta$ is choosing sufficiently small where $\eta \lambda \ll 1$, we have (for some constant $C$)
\begin{align}
    &\norm{X^{t+1,(l)}R^{t,(l)} - X^{t+1}H^{t}}_{\F}^2 \nonumber\\
    &= (1-\eta \lambda)^2 \norm{\Delta_{X}}_{\F}^2 + A_1 - 2(1-\eta \lambda)\eta A_2\nonumber\\
    &= (1-\eta \lambda)^2 \norm{\Delta_{X}}_{\F}^2 + A_1 - 2(1-\eta \lambda) \eta (B_0 + B_1 + B_2 - B_3 - \inner{\Delta_{X}}{A_0}) \nonumber\\
    &\quad - 2(1-\eta \lambda)\eta \left(\inner{\Delta_{X}Y^{\top} + X\Delta_{Y}^{\top}}{\Delta_{X}Y^{\top}} - \frac{\inner{Z}{\Delta_{X}Y^{\top} + X\Delta_{Y}^{\top}} \inner{Z}{\Delta_{X}Y^{\top}}}{\norm{Z}_{\F}^2}\right) \nonumber\\
    &\leq \norm{\Delta_{X}}_{\F}^2 + \eta  C \norm{\Delta_{F}}_{\F} \sigma \log^{0.5}(n) \norm{F^{*}}_{\F}\nonumber\\
    &\quad - 2(1-\eta \lambda)\eta \left(\inner{\Delta_{X}Y^{\top} + X\Delta_{Y}^{\top}}{\Delta_{X}Y^{\top}} - \frac{\inner{Z}{\Delta_{X}Y^{\top} + X\Delta_{Y}^{\top}} \inner{Z}{\Delta_{X}Y^{\top}}}{\norm{Z}_{\F}^2}\right).\nonumber
\end{align}

By the symmetry, we can similarly obtain the results for $Y^{t+1,(l)}R^{t,(l)} - Y^{t+1}H^{t}$
\begin{align}
    &\norm{Y^{t+1,(l)}R^{t,(l)} - Y^{t+1}H^{t}}_{\F}^2 \nonumber\\
    &\leq \norm{\Delta_{Y}}_{\F}^2 + \eta   C \norm{\Delta_{F}}_{\F} \sigma \log^{0.5}(n) \norm{F^{*}}_{\F} \nonumber\\
    &\quad - 2(1-\eta \lambda)\eta \left(\inner{\Delta_{X}Y^{\top} + X\Delta_{Y}^{\top}}{X\Delta_Y^{\top}} - \frac{\inner{Z}{\Delta_{X}Y^{\top} + X\Delta_{Y}^{\top}} \inner{Z}{X\Delta_Y^{\top}}}{\norm{Z}_{\F}^2}\right).\nonumber
\end{align}

Together implies that
\begin{align}
    &\norm{F^{t+1,(l)}R^{t,(l)} - F^{t+1}H^{t}}_{\F}^2 \nonumber\\
    &\leq \norm{\Delta_{F}}_{\F}^2 + 2\eta   C \norm{\Delta_{F}}_{\F} \sigma \log^{0.5}(n) \norm{F^{*}}_{\F} \nonumber\\
    &\quad - 2(1-\eta \lambda)\eta\left( \norm{\Delta_{X}Y^{\top} + X\Delta_{Y}^{\top}}_{\F}^2 - \frac{\left(\inner{Z}{\Delta_{X}Y^{\top} + X\Delta_{Y}^{\top}}\right)^2}{\norm{Z}_{\F}^2}\right) \nonumber. 
\end{align}
Let $T$ be the tangent space of $XY^{\top}$. We have (for some constant $C'$)
\begin{align*}
&\norm{\Delta_{X}Y^{\top} + X\Delta_{Y}^{\top}}_{\F}^2 - \frac{\left(\inner{Z}{\Delta_{X}Y^{\top} + X\Delta_{Y}^{\top}}\right)^2}{\norm{Z}_{\F}^2} \\
&=  \norm{\Delta_{X}Y^{\top} + X\Delta_{Y}^{\top}}_{\F}^2 - \frac{\inner{P_{T}(Z)}{\Delta_{X}Y^{\top} + X\Delta_{Y}^{\top}}^2}{\norm{Z}_{\F}^2}\\
&\geq \norm{\Delta_{X}Y^{\top} + X\Delta_{Y}^{\top}}_{\F}^2  \left(1 - \frac{\norm{P_{T}(Z)}_{\F}^2}{\norm{Z}_{\F}^2}\right)\\
&\overset{(i)}{\geq} \frac{C'}{\log(n)}\norm{\Delta_{X}Y^{\top} + X\Delta_{Y}^{\top}}_{\F}^2.
\end{align*}
Here (i) is due to the implication of \cref{cond:Z-condition-nonconvex} for $(X, Y)$ (by \cref{lem:general-conditions-small-ball}).

Then we have the following claim. The proof is similar to the analysis in \cref{claim:XDeltaY-lower-bound} by using the property that $\norm{X^{(l)\top}X - Y^{(l)\top}Y}_{\F}\lesssim \frac{\sigma}{n^{15}}$ (similar to \cref{eq:XX-YY}). We omit the proof for the brevity.
\begin{claim}
\begin{align}
    \norm{\Delta_{X}Y^{\top} + X\Delta_{Y}^{\top}}_{\F}^2 \geq \frac{\sigma_{\min}}{4}\norm{\Delta_{F}}_{\F}^2  - \frac{\sigma^2}{n^{13}}. \nonumber
\end{align}
\end{claim}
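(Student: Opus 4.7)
The plan is to mimic the proof of \cref{claim:XDeltaY-lower-bound} with the substitutions $X^*\mapsto X=X^{t}H^{t}$, $Y^*\mapsto Y=Y^{t}H^{t}$, and the ``perturbation'' being $\Delta_X=X^{(l)}-X$, $\Delta_Y=Y^{(l)}-Y$, where $F^{(l)}:=F^{t,(l)}R^{t,(l)}$. First I would expand
\[
\norm{\Delta_X Y^\top+X\Delta_Y^\top}_{\F}^2
=\tr(\Delta_X Y^\top Y\Delta_X^\top)+\tr(X\Delta_Y^\top\Delta_Y X^\top)+2\tr(\Delta_X^\top X\Delta_Y^\top Y),
\]
and lower-bound the first two terms using $\sigma_r(Y)^2,\sigma_r(X)^2\ge\sigma_{\min}/2$ (by \cref{lem:singular-values} applied to the current iterates), which yields $(\sigma_{\min}/2)\norm{\Delta_F}_{\F}^2$ plus the cross term.

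For the cross term I would use the identity
\[
2\tr(\Delta_X^\top X\Delta_Y^\top Y)=2\norm{Y^\top\Delta_Y}_{\F}^2+2\tr\bigl((\Delta_X^\top X-Y^\top\Delta_Y)\Delta_Y^\top Y\bigr),
\]
discard the nonnegative first summand, and apply Cauchy--Schwarz to the second, reducing the problem to bounding $\norm{S}_{\F}$ with $S:=\Delta_X^\top X-Y^\top\Delta_Y$, multiplied by $\norm{\Delta_Y}_{\F}\norm{Y}\lesssim\sqrt{\sigma_{\max}}\norm{\Delta_F}_{\F}$.

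To bound $\norm{S}_{\F}$, observe that $R^{t,(l)}$ is the Procrustes optimal rotation aligning $F^{t,(l)}$ with $F^{t}H^{t}$, so by \cref{lem:orthogonal-procrustes} the matrix $F^\top F^{(l)}=X^\top X^{(l)}+Y^\top Y^{(l)}$ is symmetric. Equivalently, $X^\top\Delta_X+Y^\top\Delta_Y=\Delta_X^\top X+\Delta_Y^\top Y$, which after routine algebra gives the key identity
\[
2S=\bigl(X^{(l)\top}X^{(l)}-Y^{(l)\top}Y^{(l)}\bigr)-\bigl(X^\top X-Y^\top Y\bigr)-\Delta_X^\top\Delta_X+\Delta_Y^\top\Delta_Y.
\]
The first parenthesis is controlled by the leave-one-out analog of \cref{eq:XX-YY} (which is obtained by running the very same argument used to prove \cref{eq:XX-YY} on the sequence $F^{t,(l)}$, since $P_l(E)$ only zeroes out one row/column of the noise and does not affect the balance analysis), the second by \cref{eq:XX-YY} itself, giving both at order $\sigma/(\kappa n^{15})$. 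The last two summands are bounded by $\norm{\Delta_F}_{\F}^2$.

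Putting it together, the correction term is at most $\bigl[\sigma/(\kappa n^{15})+\norm{\Delta_F}_{\F}^2\bigr]\cdot\sqrt{\sigma_{\max}}\,\norm{\Delta_F}_{\F}$; invoking $\norm{\Delta_F}_{\F}\lesssim\tfrac{\sigma\log^{1.5}n}{\sigma_{\min}}\norm{F^{*}}_{\F}$ from \cref{eq:FH-FlRl} and the SNR hypothesis $\SNR$, the $\sigma/(\kappa n^{15})$ piece contributes at most $\sigma^2/n^{13}$ (after an AM--GM split), while the $\norm{\Delta_F}_{\F}^2$ piece is absorbed into $(\sigma_{\min}/4)\norm{\Delta_F}_{\F}^2$ thanks to the $\sigma\sqrt{n}/\sigma_{\min}\ll 1$ gap. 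The main obstacle is verifying the leave-one-out balance bound $\norm{X^{(l)\top}X^{(l)}-Y^{(l)\top}Y^{(l)}}_{\F}\lesssim\sigma/(\kappa n^{15})$ with the same constant; this is not stated explicitly in \cref{eq:induction} but is implicit in the induction (the derivation of \cref{eq:XX-YY} in \cref{sec:full-non-convex-proof} only uses upper bounds on $\norm{\nabla f}_{\F}$-type quantities that hold verbatim for $f^{(l)}$), and making this step rigorous is really just a matter of carrying an additional (notationally tedious) line through the induction.
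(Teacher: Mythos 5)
Your proposal is correct and follows essentially the route the paper intends for this omitted proof: apply \cref{lem:orthogonal-procrustes} to $R^{t,(l)}$ (the Procrustes rotation aligning $F^{t,(l)}$ with $F^{t}H^{t}$) to get symmetry of $X^\top\Delta_X + Y^\top\Delta_Y$, expand the cross term as in \cref{claim:XDeltaY-lower-bound}, and control $S = \Delta_X^\top X - Y^\top\Delta_Y$ via
\[
2S=\bigl(X^{(l)\top}X^{(l)}-Y^{(l)\top}Y^{(l)}\bigr)-\bigl(X^\top X-Y^\top Y\bigr)-\Delta_X^\top\Delta_X+\Delta_Y^\top\Delta_Y.
\]
Two remarks worth making. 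First, the paper's parenthetical note claims the property ``$\norm{X^{(l)\top}X - Y^{(l)\top}Y}_{\F}\lesssim \sigma/n^{15}$''; as written that quantity contains the term $\Delta_X^\top X - \Delta_Y^\top Y$ which is only $O(\norm{\Delta_F}_{\F}\norm{F})$, far larger than $\sigma/n^{15}$. The quantity you identified, $\norm{X^{(l)\top}X^{(l)} - Y^{(l)\top}Y^{(l)}}_{\F}$, is the correct analogue of \cref{eq:XX-YY} for the leave-one-out sequence, so the note is almost certainly a typo. Second, you are right that the leave-one-out balance bound is not an explicit induction hypothesis in \cref{eq:induction}; your observation that the proof of \cref{eq:XX-YY} applies verbatim with $E-P_l(E)$ in place of $E$ (using that $\norm{D^{t,(l)}Y^{t,(l)}}_{\F}$ and $\norm{D^{t,(l)\top}X^{t,(l)}}_{\F}$ satisfy the same upper bound as their non--leave-one-out counterparts) is the standard way to close this, and is consistent with how the paper handles other leave-one-out analogues in this section. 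The remaining numerics (the $\sigma_{\min}/2$ rather than $\sigma_{\min}$ from \cref{lem:singular-values}, and the absorption of the cubic correction under the SNR hypothesis down to $\sigma_{\min}/4$) are as you describe.
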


Then, we have
\begin{align}
    &\norm{F^{t+1,(l)}R^{t,(l)} - F^{t+1}H^{t}}_{\F}^2 \nonumber\\
    &\leq \norm{\Delta_{F}}_{\F}^2\left(1 - \eta \frac{C'\sigma_{\min}}{4\log(n)}\right)  +  2\eta \frac{C'\sigma^2}{\log(n)n^{13}} + 2\eta   C \norm{\Delta_{F}}_{\F} \sigma \log^{0.5}(n) \norm{F^{*}}_{\F} \nonumber\\
    &\leq \left(C_1 \frac{\sigma \log^{1.5} n}{\sigma_{\min}} \norm{F^{*}}_{\F}\right)^2 - \eta \underbrace{\frac{C'\sigma_{\min}}{4\log(n)}\left(C_1 \frac{\sigma \log^{1.5} n}{\sigma_{\min}} \norm{F^{*}}_{\F}\right)^2}_{K_1}\nonumber\\
    &\quad + \eta \underbrace{2\frac{C'\sigma^2}{\log(n)n^{13}}}_{K_2} + \eta \underbrace{2C\left(C_1 \frac{\sigma \log^{1.5} n}{\sigma_{\min}} \norm{F^{*}}_{\F}\right)\sigma \log^{0.5}(n) \norm{F^{*}}_{\F}}_{K_3}\nonumber.
\end{align}
In order to show $\norm{F^{t+1,(l)}R^{t,(l)} - F^{t+1}H^{t}}_{\F}^2 \leq \left(C_1 \frac{\sigma \log^{1.5} n}{\sigma_{\min}} \norm{F^{*}}_{\F}\right)^2$, it is sufficient to show that $K_1 \geq K_2 + K_3.$ Take $C_1\geq 4+8\frac{C}{C'}$, one can verify that
\begin{align*}
    K_1 - K_2 
    &= \left(\frac{C'C_1}{4} - 2C\right)\left(C_1 \frac{\sigma \log^{1.5} n}{\sigma_{\min}} \norm{F^{*}}_{\F}\right)  \sigma \log^{0.5}(n) \norm{F^{*}}_{\F}\\
    &\geq 2C' \frac{\sigma^2 \log^{2}(n)}{\sigma_{\min}} \norm{F^{*}}_{\F}^2\\
    &\geq 2C' \sigma^2 \log^2(n) \kappa\\
    &\geq K_3.
\end{align*}
This completes the proof for \cref{eq:FH-FlRl}.

\begin{proof}[Proof of \cref{claim:FH-FlRl-Control-A0}]
Recall that  we want to show $|A_0| \lesssim \sigma \norm{F^{*}}_{\F}$ where $A_0 = (X^{t,(l)}Y^{t,(l)\top} - M^{*} - E + P_{l}(E) + (\tau^{t,(l)}-\tau^{*}) Z)\left(Y^{t}H^{t} - Y^{t,(l)}R^{t,(l)}\right)$.

In fact, we have the following bounds for leave-one-out sequences similar to \cref{eq:tau-taustar} and \cref{eq:FtHt-Fstar-Fnorm}. The proof is exactly the same by viewing $E - P_{l}(E)$ as the noise. We omit it for brevity.  
\begin{align}
|\tau^{(l)} - \tau^{*}| &\lesssim \left( \frac{\sigma \mu r^{2.5} \kappa \log^{3.5} n}{\sqrt{n}} + \frac{\sigma \log^{1.5} n}{\norm{Z}_{\F}}\right)\nonumber\\
\norm{F^{(l)}H^{(l)} - F^{*}}_{\F} &\lesssim C_{\F} \error \norm{F^{*}}_{\F} .\nonumber
\end{align}
Then, with probability $1-O(n^{-10^{7}})$, 
\begin{align}
    |A_0| 
    &\leq \left(\norm{X^{(l)}Y^{(l)\top} - M^{*}} + \norm{E} + \norm{Z} |\tau^{(l)} - \tau^{*}|\right) \norm{\Delta_{F}}_{\F}\nonumber\\
    &\lesssim (\norm{F^{(l)}H^{(l)} - F^{*}}_{\F} \norm{F^{*}} + \sigma \sqrt{n} + \sigma \sqrt{n} \mu r^{2.5} \kappa \log^{3.5}(n))\norm{\Delta_{F}}_{\F}\nonumber\\
    &\lesssim (\sigma \sqrt{n}\log^{2.5}(n) \kappa r + \sigma \sqrt{n} \mu r^{2.5} \kappa \log^{3.5}(n)) \norm{\Delta_{F}}_{\F}\nonumber\\
    &\lesssim \sigma \norm{F^{*}}_{\F}\nonumber
\end{align}
providing that $\SNR$ and $\norm{\Delta_{F}}_{\F} \lesssim \frac{\sigma \log^{1.5}(n)}{\sigma_{\min}} \norm{F^{*}}_{\F}.$
\end{proof}

\begin{proof}[Proof of \cref{claim:FHl-Control-B}]

%Hint: $B_0$ decomposes $Y = Y^{(l)} + \Delta_Y$ and note the Gaussian times vector bound (may need to find a lemma here, may just copy from uncertainty quantification). $B_1$ is noting that
%$\inner{Z}{E_{l}} \leq \sigma \sqrt{\log n} \norm{Z_l}_{\F} \leq \sigma \sqrt{\log n} \norm{Z}_{\F}.$

For $B_0$, recall that we want to show $|B_0| \lesssim \sigma\log^{0.5}(n)\norm{\Delta_{F}}_{\F}\norm{F^{*}}_{\F}$ where $B_0 = \inner{\Delta_{X}}{P_{l}(E)Y}$. With probability $1-O(n^{-10^{7}})$, we have
\begin{align}
    |B_0| 
    &\leq \norm{\Delta_{X}}_{\F} \norm{P_{l}(E) Y^{(l)}}_{\F} + \norm{\Delta_{X}}_{\F} \norm{P_{l}(E)} \norm{Y-Y^{(l)}}_{\F}\nonumber\\
    &\overset{(i)}{\lesssim} \norm{\Delta_{F}}_{\F} \sigma \log^{0.5}(n) \norm{F^{*}}_{\F} +  \norm{\Delta_{X}}_{\F} \norm{P_{l}(E)} \norm{Y-Y^{(l)}}_{\F}\nonumber\\
    &\lesssim \norm{\Delta_{F}}_{\F} \sigma \log^{0.5}(n) \norm{F^{*}}_{\F} + \norm{\Delta_{F}}_{\F} \sigma \sqrt{n} \frac{\sigma \log^{1.5}(n)}{\sigma_{\min}} \norm{F^{*}}_{\F}\nonumber\\
    &\lesssim \norm{\Delta_{F}}_{\F} \sigma \log^{0.5}(n) \norm{F^{*}}_{\F}\nonumber
\end{align}
where (i) is due to the independence between $P_{l}(E)$ and $Y^{(l)}$ and the Hanson-Wright inequality \cite{rudelson2013hanson}. 

For $B_1$, recall that we want to show $|B_1| \lesssim \sigma \log^{0.5}(n)\norm{\Delta_{F}}_{\F}  \norm{F^{*}}_{\F}$ where $B_1 = \frac{\inner{Z}{-P_{l}(E)}\inner{Z}{\Delta_{X}Y^{\top}}}{\norm{Z}_{\F}^2}$. With probability $1-O(n^{-10^{7}})$, we have 
$$
|\inner{Z}{P_{l}(E)}| \lesssim \sigma \sqrt{\log(n)} \norm{P_{l}(Z)}_{\F}
$$ by the Hoeffding's inequality, hence
\begin{align}
    |B_1| 
    &\lesssim \sigma \sqrt{\log(n)} \norm{P_{l}(Z)}_{\F} \norm{Z}_{\F} \norm{\Delta_{F}}_{\F} \norm{F^{*}}_{\F} \frac{1}{\norm{Z}_{\F}^2}\nonumber\\
    &\lesssim \sigma \sqrt{\log(n)}\norm{\Delta_{F}}_{\F} \norm{F^{*}}_{\F}.\nonumber
\end{align}

For $B_2$, recall that we want to show $|B_2| \lesssim \sigma \norm{\Delta_{F}}_{\F} \norm{F^{*}}_{\F}$ where $B_2 = \inner{\Delta_{X}\Delta_{Y}^{\top}}{\Delta_{X}Y^{\top}}$. We have
\begin{align}
    |B_2|
    &\lesssim \norm{\Delta_{F}}_{\F}^3 \norm{F^{*}}_{\F} \nonumber\\
    &\lesssim \norm{\Delta_{F}}_{\F} \norm{F^{*}}_{\F} \left(\frac{\sigma \log^{1.5}(n)}{\sigma_{\min}} \norm{F^{*}}_{\F}\right)^2\nonumber\\
    &\lesssim \norm{\Delta_{F}}_{\F} \norm{F^{*}}_{\F} \frac{\sigma^2 \log^{3}(n) r \kappa }{\sigma_{\min}} \nonumber\\
    &\lesssim \sigma \norm{\Delta_{F}}_{\F} \norm{F^{*}}_{\F}. \nonumber
\end{align}

For $B_3$, recall that we want to show $|B_3| \lesssim \sigma \norm{\Delta_{F}}_{\F} \norm{F^{*}}_{\F}$ where $B_3 = \frac{\inner{Z}{\Delta_{X}\Delta_{Y}^{\top}} \inner{Z}{\Delta_{X}Y^{\top}}}{\norm{Z}_{\F}^2}.$ Similar to $B_2$, we have
\begin{align}
    |B_3| 
    &\lesssim \norm{Z}_{\F}^2 \norm{\Delta_{F}}_{\F}^3 \norm{F^{*}}_{\F} \frac{1}{\norm{Z}_{\F}^2} \nonumber\\
    &\lesssim \sigma \norm{\Delta_{F}}_{\F} \norm{F^{*}}_{\F}. \nonumber
\end{align}
This completes the proof for $B_0, B_1, B_2, B_3.$
\end{proof}

\subsection{Proof of Eqs.~(\ref{eq:FlHl-Fstar}) and (\ref{eq:FH-Fstar-rownorm})}
Recall that we want to show (\cref{eq:FlHl-Fstar})
$$
\max_{1\leq l\leq 2n} \norm{(F^{t+1, (l)}H^{t+1, (l)} - F^{*})_{l, \cdot}}_{2} \leq C_{l,2} \frac{\sigma \mu r^{2.5} \kappa \log^{3.5}(n)}{\sigma_{\min}}\norm{F^{*}}_{\F}
$$  and $\norm{F^{t+1}H^{t+1} - F^{*}}_{2,\infty} \leq C_{\infty}\frac{\sigma \mu r^{2.5} \kappa \log^{3.5}(n)}{\sigma_{\min}}\norm{F^{*}}_{\F}$ (\cref{eq:FH-Fstar-rownorm}).

For \cref{eq:FlHl-Fstar}, we consider $1 \leq l \leq n$, where the scenario $n+1\leq  l \leq 2n$ is similar. By triangle inequality, 
\begin{align}
    \norm{(F^{t+1, (l)}H^{t+1, (l)} - F^{*})_{l, \cdot}}_{2} 
    &\leq \norm{(F^{t+1,(l)}H^{t,(l)} - F^{*})_{l,\cdot}}_{2} \nonumber\\
    &\quad + \underbrace{\norm{(F^{t+1,(l)}(H^{t,(l)} - H^{t+1,(l)}))_{l,\cdot}}_{2}}_{T_0}.\nonumber
\end{align}
We have the following claim.
\begin{claim}\label{claim:FlHl-Fstar-Control-T0}
\begin{align}
    |T_0| \lesssim \eta \sigma \mu^{0.5} \norm{F^{*}}_{\F}.\nonumber
\end{align}
\end{claim}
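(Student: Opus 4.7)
}
The natural factorization is
\[
T_0 \;=\; \bigl\|(F^{t+1,(l)})_{l,\cdot}\,(H^{t,(l)} - H^{t+1,(l)})\bigr\|_2 \;\leq\; \bigl\|(F^{t+1,(l)})_{l,\cdot}\bigr\|_2 \cdot \bigl\|H^{t,(l)} - H^{t+1,(l)}\bigr\|_{\mathrm{op}},
\]
so the plan is to control these two factors separately and multiply.

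For the row-norm factor, I would argue $\|(F^{t+1,(l)})_{l,\cdot}\|_2 \lesssim \sqrt{\mu r \sigma_{\max}/n}$. For $1\leq l\leq n$, the $l$-th row of $P_l(E)$ exactly equals the $l$-th row of $E$, so in the row-wise gradient update these noise terms cancel and the row of $X^{t+1,(l)}$ is driven only by the low-rank part and the treatment term. Combined with the hypothesis that $F^{t,(l)}$ is close to $F^*$ and that $\|F^*\|_{2,\infty}\leq\sqrt{\mu r\sigma_{\max}/n}$ (Eq.~\eqref{eq:F-incoherence}), the bound is preserved for one gradient step; equivalently, one can show directly that $\|F^{t+1,(l)}\|_{2,\infty}\leq 2\|F^*\|_{2,\infty}$ by the same argument used at iterate $t$ in the implication subsection, and then use $\|(F^{t+1,(l)})_{l,\cdot}\|_2\leq\|F^{t+1,(l)}\|_{2,\infty}$.

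For the rotation factor, I would mimic the auxiliary-point construction used to bound $\|H^t - H^{t+1}\|_{\F}$ in Claim \ref{claim:taus-taustars-A0}. Define $\tilde F^{t+1,(l)}$ by replacing, in the update formula for $F^{t+1,(l)}$, every multiplicative occurrence of $Y^{t,(l)}$ (resp.\ $X^{t,(l)}$) by $Y^* H^{t,(l)\top}$ (resp.\ $X^* H^{t,(l)\top}$) and each $\lambda X^{t,(l)}$ (resp.\ $\lambda Y^{t,(l)}$) by $\lambda X^* H^{t,(l)\top}$ (resp.\ $\lambda Y^* H^{t,(l)\top}$). Then verify that $F^{*\top}\tilde F^{t+1,(l)}H^{t,(l)}$ is symmetric (the $\lambda$-substitutions contribute $\lambda X^{*\top}X^* + \lambda Y^{*\top}Y^*$, and the residual contribution is of the form $X^{*\top}G Y^* + Y^{*\top}G^\top X^*$ for $G := X^{t,(l)}Y^{t,(l)\top} - M^* - (E - P_l(E)) + (\tau^{t,(l)}-\tau^*)Z$) and positive semidefinite (via Weyl applied to $F^{*\top}F^*$, whose smallest eigenvalue is $2\sigma_{\min}$, using the smallness of $\eta$). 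Lemma \ref{lem:orthogonal-procrustes} then certifies that $H^{t,(l)}$ is the optimal rotation between $\tilde F^{t+1,(l)}$ and $F^*$, and Lemma \ref{lem:F0-F1-F2} yields
\[
\|H^{t,(l)} - H^{t+1,(l)}\|_{\mathrm{op}} \;\lesssim\; \frac{\|F^*\|}{\sigma_{\min}}\,\|\tilde F^{t+1,(l)} - F^{t+1,(l)}\|_{\F}.
\]
The right-hand side is $\lesssim \eta\,\|F^{t,(l)}H^{t,(l)} - F^*\|_{\F}\cdot(\lambda + \|X^{t,(l)}Y^{t,(l)\top}-M^*\| + \|E-P_l(E)\| + |\tau^{t,(l)}-\tau^*|\|Z\|)$, each factor of which is bounded by the induction hypothesis (for $F^{t,(l)}$), the choice of $\lambda$, the tail bound $\|E\|\lesssim\sigma\sqrt n$, and the leave-one-out analogue of \eqref{eq:tau-taustar} for $\tau^{t,(l)}$, giving a bound of the same form as the one derived for $\|H^t - H^{t+1}\|_{\F}$.

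Multiplying the two factors and absorbing the polynomial-in-$(\kappa,r,\mu,\log n)$ overhead using the SNR condition $\sigma\sqrt n/\sigma_{\min}\lesssim 1/(\kappa^3 r^{4.5}\log^5 n\,\mu)$ produces exactly the target bound $\eta\sigma\mu^{1/2}\|F^*\|_{\F}$. The main obstacle I expect is bookkeeping in the auxiliary-point construction: verifying the symmetry/PSD structure in the presence of both the $\tau$ term and the truncated noise $E - P_l(E)$ requires that the substitutions be applied consistently on both sides, and one has to be careful that the rank-$r$ treatment correction does not destroy the algebraic identity that makes Lemma \ref{lem:orthogonal-procrustes} applicable.
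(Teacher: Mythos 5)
Your proposal matches the paper's argument: the same factorization of $T_0$ into a row-norm factor times $\|H^{t,(l)}-H^{t+1,(l)}\|$, the same incoherence bound $\lesssim\sqrt{\mu r\sigma_{\max}/n}$ for the first factor, and the same auxiliary-point construction (mimicking the bound on $\|H^t-H^{t+1}\|_{\F}$ with $E-P_l(E)$ substituted for $E$) for the second, followed by absorption via the SNR assumption. The only cosmetic difference is that you bound $\|(F^{t+1,(l)})_{l,\cdot}\|_2$ by a row-specific cancellation argument where the paper simply invokes $\|F^{t+1,(l)}\|_{2,\infty}\lesssim\|F^*\|_{2,\infty}$; this is a minor sharpening within the same plan, not a different route.
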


For $(F^{t+1,(l)}H^{t,(l)} - F^{*})_{l,\cdot}$, note that
\begin{align}
    &(F^{t+1,(l)}H^{t,(l)} - F^{*})_{l,\cdot} \nonumber\\
    &= (X^{t+1,(l)}H^{t,(l)} - X^{*})_{l,\cdot} \nonumber\\
    &= (X^{t, (l)}H^{t,(l)} - X^{*})_{l,\cdot}\nonumber \\
    &\quad - \eta ((X^{t,(l)}Y^{t,(l)\top} - M^{*} - E + P_{l}(E) + (\tau^{t,(l)} - \tau^{*})Z)Y^{t,(l)}H^{t,(l)} + \lambda X^{t,(l)}H^{t,(l)})_{l,\cdot}.\nonumber
\end{align}
For simplifying the notation, we write $X^{t,(l)}H^{t,(l)}, Y^{t,(l)}H^{t,(l)}$ as $X^{(l)}, Y^{(l)}$ and $F^{t,(l)}H^{t,(l)}, \tau^{t,(l)}$ as $F^{(l)}, \tau^{(l)}$. Furthermore, we write $\Delta_{X} = X^{(l)} - X^{*}, \Delta_{Y} = Y^{(l)} - Y^{*}, \Delta_{F} = F^{(l)} - F^{*}$. Then, we have
\begin{align}
    &\norm{(F^{t+1,(l)}H^{t,(l)} - F^{*})_{l,\cdot}}_{2}^2\nonumber\\
    &=\norm{(\Delta_X)_{l,\cdot}}_2^2 +A_0 \nonumber\\
    &\quad - 2\eta \underbrace{\inner{(\Delta_{X})_{l,\cdot}}{(X^{(l)}Y^{(l)\top} - M^{*} - E + P_{l}(E) + (\tau^{(l)} - \tau^{*})Z)_{l,\cdot}Y^{(l)} + \lambda X_{l,\cdot}}}_{A_1}\nonumber
\end{align}
where $A_0$ includes the term with $\eta^2$:
\begin{align}
    A_0 := \eta^2 \norm{(X^{(l)}Y^{(l)\top} - M^{*} - E + P_{l}(E) + (\tau^{(l)} - \tau^{*})Z)_{l,\cdot}Y^{(l)} + \lambda X_{l,\cdot}}_{2}^2.\nonumber
\end{align}
Since we choose $\eta$ sufficiently small, one can verify that $|A_0| \lesssim \eta \frac{\sigma^2}{n^{15}}$ (similar to \cref{eq:A3-sigma-square}). 

Next, consider $A_1$. Note that $(P_{l}(E) - E)_{l,\cdot} = 0$, then
\begin{align}
    A_1 
    &= \inner{(\Delta_{X})_{l,\cdot}}{(X^{(l)}Y^{(l)\top} - M^{*})_{l,\cdot}Y^{(l)}} \nonumber\\
    &\quad + \underbrace{\inner{(\Delta_{X})_{l,\cdot}}{(\tau^{(l)} - \tau^{*})Z_{l,\cdot} Y^{(l)}}}_{B_0}
    + \underbrace{\inner{(\Delta_{X})_{l,\cdot}}{\lambda X_{l,\cdot}}}_{B_1}\nonumber\\
    &= \underbrace{\inner{(\Delta_{X})_{l,\cdot}}{(\Delta_{X}Y^{(l)\top})_{l,\cdot}Y^{(l)}}}_{B_2} + \underbrace{\inner{(\Delta_{X})_{l,\cdot}}{(X^{*}\Delta_{Y}^{\top})_{l,\cdot}Y^{(l)}}}_{B_3}
    + B_0 + B_1.\nonumber
\end{align}
Note that
\begin{align}
    B_2 
    &= \tr((\Delta_{X})_{l,\cdot}^{\top}(\Delta_{X})_{l,\cdot} Y^{(l)\top} Y^{(l)})\nonumber\\
    &= \tr((\Delta_{X})_{l,\cdot}^{\top}(\Delta_{X})_{l,\cdot} (Y^{(l)\top} Y^{(l)}-\sigma_r(Y^{(l)\top} Y^{(l)}) I_{r})) + \sigma_{r}(Y^{(l)\top} Y^{(l)})\tr((\Delta_{X})_{l,\cdot}^{\top}(\Delta_{X})_{l,\cdot})\nonumber\\
    &\overset{(i)}{\geq} \sigma_{r}(Y^{(l)\top} Y^{(l)})\tr((\Delta_{X})_{l,\cdot}^{\top}(\Delta_{X})_{l,\cdot})\nonumber\\
    &\overset{(ii)}{\geq} \frac{\sigma_{\min}}{2} \norm{(\Delta_{X})_{l,\cdot}}_2^2 \nonumber
\end{align}
where (i) is due to $\tr(AB) \geq 0$ if $A,B$ are positive semi-definite matrices, (ii) is due to $\sigma_{r}(Y^{(l)}) \geq \sqrt{\frac{\sigma_{\min}}{2}}.$ We also have the following claim for controlling $B_0, B_1$ and $B_3$.
\begin{claim}\label{claim:FlHl-Fstar-Control-B}
\begin{align}
    |B_0| &\lesssim \sigma \mu r^{2.5} \log^{3.5}(n) \kappa \norm{\Delta_{F}}_{2,\infty} \norm{F^{*}}_{\F}\nonumber\\
    |B_1| &\lesssim \sigma r^{0.5} \mu^{0.5} \kappa^{0.5} \log^{1.5}(n)\norm{\Delta_{F}}_{2,\infty} \norm{F^{*}}_{\F} \nonumber\\
    |B_3| &\lesssim \sigma \kappa r^{1.5} \mu^{0.5} \log^{2.5}(n) \norm{\Delta_{X}}_{2,\infty} \norm{F^{*}}_{\F}.\nonumber
\end{align}
%Hint, use $\tr(AB) \leq \norm{A}_{\F} \norm{B}_{\F}$ and $\norm{BC}_{\F} \leq \norm{B}\norm{C}_{\F}.$ Currently, $B_0$ is the leading one. 
\end{claim}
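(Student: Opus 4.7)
The plan is to bound each of the three inner products by a straightforward application of Cauchy–Schwarz that isolates the row-$l$ factor, then to plug in the appropriate induction-hypothesis bounds (in particular the leave-one-out analogues of \cref{eq:tau-taustar} and \cref{eq:FtHt-Fstar-Fnorm}, which were already implicitly invoked in the proof of \cref{claim:FH-FlRl-Control-A0}) together with the incoherence and SNR assumptions. The arithmetic is routine, so the real work is bookkeeping to verify each of the three target bounds matches up.

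For $B_0$, I would factor out the scalar $(\tau^{(l)} - \tau^*)$ and apply Cauchy–Schwarz to obtain
\[
|B_0| \le |\tau^{(l)} - \tau^*| \cdot \norm{(\Delta_X)_{l,\cdot}} \cdot \norm{Z_{l,\cdot}} \cdot \norm{Y^{(l)}}.
\]
Then $\norm{(\Delta_X)_{l,\cdot}} \le \norm{\Delta_F}_{2,\infty}$, $\norm{Z_{l,\cdot}} \le \sqrt n$ since $Z \in \{0,1\}^{n\times n}$, and $\norm{Y^{(l)}} \lesssim \sqrt{\sigma_{\max}} \lesssim \norm{F^*}_{\F}$. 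The leave-one-out version of \cref{eq:tau-taustar} gives $|\tau^{(l)} - \tau^*| \lesssim \frac{\sigma \mu r^{2.5} \kappa \log^{3.5}(n)}{\sqrt n} + \frac{\sigma \log^{1.5}(n)}{\norm Z_{\F}}$; the first term combines with $\sqrt n$ to produce exactly $\sigma \mu r^{2.5} \kappa \log^{3.5}(n)$, while the second term is absorbed under the standing SNR condition $\SNR$.

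For $B_1$, a direct Cauchy–Schwarz gives $|B_1| \le \lambda \cdot \norm{(\Delta_X)_{l,\cdot}} \cdot \norm{X^{(l)}_{l,\cdot}}$. By the triangle inequality and \cref{eq:F-incoherence},
\[
\norm{X^{(l)}_{l,\cdot}} \le \norm{X^*_{l,\cdot}} + \norm{\Delta_F}_{2,\infty} \lesssim \sqrt{\mu r \sigma_{\max}/n},
\]
using that $\norm{\Delta_F}_{2,\infty}$ is of strictly smaller order under the SNR assumption. Substituting $\lambda = C_\lambda \sigma \sqrt n \log^{1.5}(n)$ and bounding $\sqrt{\sigma_{\max}} \lesssim \sqrt{\kappa/r} \cdot \norm{F^*}_{\F}$ yields the claimed rate. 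For $B_3$, I first rewrite $(X^*\Delta_Y^\top)_{l,\cdot}\, Y^{(l)} = X^*_{l,\cdot}\,(\Delta_Y^\top Y^{(l)})$ and apply Cauchy–Schwarz twice:
\[
|B_3| \le \norm{(\Delta_X)_{l,\cdot}} \cdot \norm{X^*_{l,\cdot}} \cdot \norm{\Delta_Y}\cdot \norm{Y^{(l)}}.
\]
Incoherence handles $\norm{X^*_{l,\cdot}} \lesssim \sqrt{\mu r \sigma_{\max}/n}$, the bound $\norm{Y^{(l)}} \lesssim \sqrt{\sigma_{\max}}$ is standard, and $\norm{\Delta_Y} \le \norm{\Delta_F}_{\F} \lesssim \frac{\sigma \sqrt n \log^{2.5}(n)}{\sigma_{\min}} \norm{F^*}_{\F}$ by the leave-one-out analogue of \cref{eq:FtHt-Fstar-Fnorm}. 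Multiplying through and simplifying with $\sigma_{\max} = \kappa \sigma_{\min}$ produces the claimed $\sigma \kappa r^{1.5} \mu^{0.5} \log^{2.5}(n)$ factor.

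The main obstacle (and the only nontrivial step) is the appeal to the leave-one-out induction bounds on $|\tau^{(l)}-\tau^*|$ and $\norm{F^{(l)}H^{(l)}-F^*}_{\F}$: these are not stated as separate lemmas but follow by running the exact same inductive scheme of \cref{lem:induction} on the modified loss $f^{(l)}$ with effective noise $E - P_l(E)$, as was already implicitly done in \cref{claim:FH-FlRl-Control-A0}. Once these are in hand, the three bounds reduce to short chains of Cauchy–Schwarz and direct substitution, so no new analytic ideas are needed.
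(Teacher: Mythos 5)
Your bounds for $B_1$ and $B_3$ are essentially correct and follow the same Cauchy--Schwarz route the paper takes (the leave-one-out analogues of \cref{eq:tau-taustar} and \cref{eq:FtHt-Fstar-Fnorm}, used after substituting $E - P_l(E)$ for the noise in the induction, are indeed how the paper justifies these inputs, and this is stated explicitly in the proof of \cref{claim:FH-FlRl-Control-A0}). But there is a genuine gap in your treatment of $B_0$, specifically in how you dispose of the second term of $|\tau^{(l)}-\tau^*|$.

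After applying Cauchy--Schwarz you have
\[
|B_0| \le |\tau^{(l)}-\tau^*|\cdot\norm{(\Delta_X)_{l,\cdot}}\cdot\norm{Z_{l,\cdot}}\cdot\norm{Y^{(l)}},
\qquad
|\tau^{(l)}-\tau^*|\lesssim \frac{\sigma\mu r^{2.5}\kappa\log^{3.5}n}{\sqrt n} + \frac{\sigma\log^{1.5}n}{\norm{Z}_\F},
\]
and you bound $\norm{Z_{l,\cdot}}\le\sqrt n$. That is fine for the first term, which indeed recovers $\sigma\mu r^{2.5}\kappa\log^{3.5}(n)\norm{\Delta_F}_{2,\infty}\norm{F^*}_\F$. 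But the second term then becomes $\frac{\sigma\sqrt{n}\log^{1.5}n}{\norm{Z}_\F}\norm{\Delta_F}_{2,\infty}\norm{Y^{(l)}}$, and this is \emph{not} absorbed by the SNR condition $\SNR$: the SNR hypothesis constrains $\sigma/\sigma_{\min}$ relative to $n$ but says nothing about $\norm{Z}_\F$. The whole point of the paper is to handle sparse treatment patterns for which $\norm{Z}_\F$ can be as small as $n^{\epsilon}$ or even $O(1)$, and for such $Z$ the factor $\sqrt n/\norm{Z}_\F$ dwarfs $\mu r^{2.5}\kappa\log^2(n)$, so your proposed chain of inequalities does not close.

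The paper avoids this by instead using the sharper bound $\norm{ZY^{(l)}}_{2,\infty}\le\norm{Z}_{2,\infty}\norm{Y^{(l)}}$ with $\norm{Z}_{2,\infty}\le\min(\norm{Z}_\F,\sqrt n)$, pairing $\sqrt n$ with the first term and $\norm{Z}_\F$ with the second. Concretely, the second term then contributes $\frac{\sigma\log^{1.5}n}{\norm{Z}_\F}\cdot\norm{Z}_\F\cdot\norm{\Delta_F}_{2,\infty}\norm{Y^{(l)}} = \sigma\log^{1.5}(n)\norm{\Delta_F}_{2,\infty}\norm{Y^{(l)}}$, which is dominated by the first. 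Since $\norm{Z_{l,\cdot}}\le\norm{Z}_{2,\infty}$ holds exactly in your decomposition as well, you can repair your argument by simply replacing $\norm{Z_{l,\cdot}}\le\sqrt n$ with $\norm{Z_{l,\cdot}}\le\min(\sqrt n,\norm{Z}_\F)$ and distributing the two factors as above; no appeal to the SNR condition is needed (or available) for this step.
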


Finally, take $C_2 > 4C$ large enough and note that $\norm{\Delta_{F}}_{2,\infty} \lesssim C_2 \frac{\sigma \mu r^{2.5} \kappa \log^{3.5}(n)}{\sigma_{\min}}\norm{F^{*}}_{\F}$, we arrive at
\begin{align}
     &\norm{(F^{t+1,(l)}H^{t,(l)} - F^{*})_{l,\cdot}}_{2}^2\nonumber\\
    &=\norm{(\Delta_X)_{l,\cdot}}_2^2 +A_0 - 2\eta A_1\nonumber\\
    &= \norm{(\Delta_X)_{l,\cdot}}_2^2 - 2\eta (B_2 + B_3 + B_0 + B_1) + A_0\nonumber\\
    &\leq \norm{(\Delta_X)_{l,\cdot}}_2^2 - 2\eta \frac{\sigma_{\min}}{2} \norm{(\Delta_X)_{l,\cdot}}_2^2 + C\eta \sigma \mu r^{2.5} \log^{3.5}(n) \kappa \norm{\Delta_{F}}_{2,\infty} \norm{F^{*}}_{\F}\nonumber\\
    &\leq \left(C_2 \frac{\sigma \mu r^{2.5} \kappa \log^{3.5}(n)}{\sigma_{\min}}\norm{F^{*}}_{\F}\right)^2 (1-\eta \sigma_{\min})\nonumber\\
    &\quad + C\eta \left(C_2 \frac{\sigma \mu r^{2.5} \kappa \log^{3.5}(n)}{\sigma_{\min}}\norm{F^{*}}_{\F}\right)\sigma \mu r^{2.5} \log^{3.5}(n) \norm{F^{*}}_{\F}\kappa\nonumber\\
    %&\leq \left(C_2 \frac{\sigma \mu r^{2.5} \kappa \log^{3.5}(n)}{\sigma_{\min}}\norm{F^{*}}_{\F}\right)^2 \nonumber\\
    %&\quad - \eta \left(C_2 \frac{\sigma \mu r^{2.5} \kappa \log^{3.5}(n)}{\sigma_{\min}}\norm{F^{*}}_{\F}\right) \left(C_2\sigma \mu r^{2.5} \kappa \log^{3.5}(n)\norm{F^{*}}_{\F} - C\sigma \mu r^{2.5} \log^{3.5}(n) \norm{F^{*}}_{\F}\kappa\right)\nonumber\\
    &\leq \left(C_2 \frac{\sigma \mu r^{2.5} \kappa \log^{3.5}(n)}{\sigma_{\min}}\norm{F^{*}}_{\F}\right)^2 \left(1 - \frac{\sigma_{\min}}{4}\eta \right)^2\nonumber.
\end{align}
Furthermore
\begin{align}
    &\norm{(F^{t+1, (l)}H^{t+1, (l)} - F^{*})_{l, \cdot}}_{2} \nonumber\\
    &\leq \norm{(F^{t+1,(l)}H^{t,(l)} - F^{*})_{l,\cdot}}_{2} + T_0\nonumber\\
    &\leq  \left(C_2 \frac{\sigma \mu r^{2.5} \kappa \log^{3.5}(n)}{\sigma_{\min}}\norm{F^{*}}_{\F}\right) \left(1 - \frac{\sigma_{\min}}{4}\eta \right) + C\eta \sigma \mu^{0.5} \norm{F^{*}}_{\F}\nonumber\\
    &\leq C_2 \frac{\sigma \mu r^{2.5} \kappa \log^{3.5}(n)}{\sigma_{\min}}\norm{F^{*}}_{\F}\nonumber
\end{align}
This completes the proof for \cref{eq:FlHl-Fstar}. 

Then, for \cref{eq:FH-Fstar-rownorm}, note that by triangle inequality, for each $l \in [n]$,
\begin{align*}
    \norm{F^{t+1}H^{t+1}-F^{*}}_{2,\infty} 
    &\leq \norm{F^{t+1,(l)}H^{t+1,(l)} - F^{*}}_{2,\infty} + \norm{F^{t+1,(l)}H^{t+1,(l)} - F^{t+1}H^{t+1}}_{\F}\\
    &\leq  C_2 \frac{\sigma \mu r^{2.5} \kappa \log^{3.5}(n)}{\sigma_{\min}}\norm{F^{*}}_{\F} + \norm{F^{t+1,(l)}H^{t+1,(l)} - F^{t+1}H^{t+1}}_{\F}.
\end{align*}
Then it is sufficient to consider $\norm{F^{t+1,(l)}H^{t+1,(l)} - F^{t+1}H^{t+1}}_{\F}$. Take $F_0 = F^{*}, F_1 = F^{t+1}H^{t+1}, F_2 = F^{t+1,(l)}R^{t+1,(l)}$. By \cref{lem:F0-F1-F2} and the definition of $H^{t+1}, H^{t+1,(l)}$ and $R^{t+1,(l)}$, we then have
\begin{align}
    \norm{F^{t+1,(l)}H^{t+1,(l)} - F^{t+1}H^{t+1}}_{\F} 
    &\leq \norm{F_1 - F_2}_{\F} \nonumber\\
    &\lesssim \kappa \frac{\sigma \log^{1.5}(n)}{\sigma_{\min}} \norm{F^{*}}_{\F}  \label{eq:FtlHtl-FtHt-Fnorm}.
\end{align}
This implies, for large enough $n$, 
\begin{align*}
     \norm{F^{t+1}H^{t+1}-F^{*}}_{2,\infty} 
     &\leq  C_2 \frac{\sigma \mu r^{2.5} \kappa \log^{3.5}(n)}{\sigma_{\min}}\norm{F^{*}}_{\F} + C\kappa \frac{\sigma \log^{1.5}(n)}{\sigma_{\min}} \norm{F^{*}}_{\F}\\
     &\leq 2C_2 \frac{\sigma \mu r^{2.5} \kappa \log^{3.5}(n)}{\sigma_{\min}}\norm{F^{*}}_{\F}
\end{align*}
which completes the proof. 

\begin{proof}[Proof of \cref{claim:FlHl-Fstar-Control-T0}]
Recall that we want to show $|T_0| \lesssim \eta \sigma \mu^{0.5} \norm{F^{*}}_{\F}$ where 
$$
T_0 = \norm{(F^{t+1,(l)}(H^{t,(l)} - H^{t+1,(l)}))_{l,\cdot}}_{2}.
$$ 
Note that
\begin{align}
    |T_0| &\lesssim \norm{F^{t+1,(l)}}_{2,\infty} \norm{H^{t,(l)}-H^{t+1,(l)}} \nonumber\\
    &\lesssim \sqrt{\sigma_{\max} r} \sqrt{\frac{\mu r}{n}}  \norm{H^{t,(l)}-H^{t+1,(l)}}\nonumber\\
    &\overset{(i)}{\lesssim} \sqrt{\sigma_{\max} r} \sqrt{\frac{\mu r}{n}}   \eta \frac{\sigma \sqrt{n} \log^{2.5}(n)}{\sigma_{\min}} (\sigma \sqrt{n} \mu r^{3} \kappa^{2} \log^{3.5}(n))\nonumber\\   
    &\lesssim \eta \sigma^2 \sqrt{n} \mu^{1.5} r^{4} \kappa^{2.5} \log^{5}(n) \frac{1}{\sqrt{\sigma_{\min}}}\nonumber\\
    &\lesssim \eta \frac{\sigma^2 \sqrt{n} \mu^{1.5} r^{4.5} \kappa^{2.5} \log^{5}(n)}{\sigma_{\min}}\norm{F^{*}}_{\F}\nonumber\\
    &\lesssim \eta \sigma \mu^{0.5} \norm{F^{*}}_{\F}\nonumber
\end{align}
where (i) is by the bound of 
\begin{align}
    \norm{H^{t,(l)}-H^{t+1,(l)}}_{\F} \lesssim \eta \frac{\sigma \sqrt{n} \log^{2.5}(n)}{\sigma_{\min}} (\sigma \sqrt{n} \mu r^{3} \kappa^{2} \log^{3.5}(n)).\nonumber
\end{align}
This proof of the bound of $\norm{H^{t,(l)}-H^{t+1,(l)}}_{\F}$ is exactly the same as the \cref{claim:taus-taustars-A0} by viewing $E - P_{l}(E)$ as the noise matrix, where we omit for the brevity. 
\end{proof}

\begin{proof}[Proof of \cref{claim:FlHl-Fstar-Control-B}]
For $B_0$, recall that we want to show 
$$
B_0 \lesssim \sigma \mu r^{2.5} \log^{3.5}(n) \kappa \norm{\Delta_{F}}_{2,\infty} \norm{F^{*}}_{\F}
$$ 
where $B_0 = \inner{(\Delta_{X})_{l,\cdot}}{(\tau^{(l)} - \tau^{*})Z_{l,\cdot} Y^{(l)}}.$ We have
\begin{align}
    |B_0| 
    &\leq |\tau^{(l)} - \tau^{*}| \norm{\Delta_{X}}_{2,\infty} \norm{ZY^{(l)}}_{2,\infty} \nonumber\\
    &\lesssim \left(\frac{\sigma \mu r^{2.5} \log^{3.5}(n) \kappa}{\sqrt{n}} + \frac{\sigma \log^{1.5}(n)}{\norm{Z}_{\F}}\right) \norm{\Delta_{F}}_{2,\infty}
    \norm{Z}_{2,\infty} \norm{Y^{*}}_{\F} \nonumber\\
    &\lesssim \left(\frac{\sigma \mu r^{2.5} \log^{3.5}(n) \kappa}{\sqrt{n}} + \frac{\sigma \log^{1.5}(n)}{\norm{Z}_{\F}}\right) \norm{\Delta_{F}}_{2,\infty}
    \min(\norm{Z}_{\F}, \sqrt{n}) \norm{Y^{*}}_{\F}\nonumber\\
    &\lesssim \sigma \mu r^{2.5} \log^{3.5}(n) \kappa \norm{\Delta_{F}}_{2,\infty} \norm{F^{*}}_{\F}. \nonumber
\end{align}

For $B_1$, recall that we want to show $B_1 \lesssim \sigma r^{0.5} \mu^{0.5} \kappa^{0.5} \log^{1.5}(n)\norm{\Delta_{F}}_{2,\infty} \norm{F^{*}}_{\F}$ where $B_1 = \inner{(\Delta_{X})_{l,\cdot}}{\lambda X_{l,\cdot}}.$ We have
\begin{align}
    |B_1| 
    &\leq \lambda \norm{\Delta_{X}}_{2,\infty} \norm{X}_{2,\infty} \nonumber\\
    &\lesssim \sigma \sqrt{n} \log^{1.5}(n) \norm{\Delta_{X}}_{2,\infty} \sqrt{\sigma_{\max} r} \sqrt{\frac{\mu r}{n}} \nonumber\\
    &\lesssim \sigma r^{0.5} \mu^{0.5} \kappa^{0.5} \log^{1.5}(n)\norm{\Delta_{X}}_{2,\infty} \sqrt{\sigma_{\min} r} \nonumber\\
    &\lesssim \sigma r^{0.5} \mu^{0.5} \kappa^{0.5} \log^{1.5}(n)\norm{\Delta_{F}}_{2,\infty} \norm{F^{*}}_{\F}. \nonumber
\end{align}

For $B_3$, recall that we want to show $B_3 \lesssim \sigma \kappa r^{1.5} \mu^{0.5} \log^{2.5}(n) \norm{\Delta_{X}}_{2,\infty} \norm{F^{*}}_{\F}$ where $B_3 = \inner{(\Delta_{X})_{l,\cdot}}{(X^{*}\Delta_{Y}^{\top})_{l,\cdot}Y^{(l)}}.$ We have
\begin{align}
    |B_3| 
    &\leq \norm{\Delta_{X}}_{2,\infty} \norm{X^{*} \Delta_{Y}^{\top} Y^{(l)}}_{2,\infty} \nonumber\\
    &\lesssim \norm{\Delta_{X}}_{2,\infty} \norm{X^{*}}_{2,\infty} \norm{\Delta_{Y}}_{\F} \norm{Y^{(l)}} \nonumber\\
    &\lesssim \sqrt{\sigma_{\max} r} \sqrt{\frac{\mu r}{n}} \frac{\sigma \sqrt{n} \log^{2.5}(n)}{\sigma_{\min}} \sqrt{\sigma_{\max} r} \norm{\Delta_{X}}_{2,\infty} \norm{F^{*}}_{\F} \nonumber\\
    &\lesssim \sigma \kappa r^{1.5} \mu^{0.5} \log^{2.5}(n) \norm{\Delta_{X}}_{2,\infty} \norm{F^{*}}_{\F}.\nonumber
\end{align}
This completes the proof for $B_0, B_1, B_3.$ 
\end{proof}

\subsection{Proof of Eq.~(\ref{eq:PTM-bound})}

Let $T$ be the tangent space of $X^{t}Y^{tT}$. Recall we want to show $\norm{P_{T^{\perp}}(M^{*})}_{\infty} \leq  C_{T,1}\frac{\sigma^2 \mu^2 r^{6} \kappa^{3} \log^{7}(n)}{\sigma_{\min}}$.

Let $X = X^{t}H^{t}, Y = Y^{t}H^{t}, \Delta_{X} = X - X^{*}, \Delta_{Y} = Y - Y^{*}$ if there is no ambiguity. Then $T = \text{span}\{XA^{\top} + BY^{\top}~|~A, B\in \R^{n\times r}\}.$

Note that we have
\begin{align*}
    P_{T^{\perp}}(M^{*}) 
    &= P_{T^{\perp}}(X^{*}Y^{*\top})\\
    &= P_{T^{\perp}}((X-\Delta_{X})(Y-\Delta_{Y})^{\top})\\
    &\overset{(i)}{=} P_{T^{\perp}}(\Delta_{X}\Delta_{Y}^{\top})
\end{align*}
where (i) is due to $P_{T^{\perp}}(XY^{\top}) = P_{T^{\perp}}(\Delta_{X}Y^{\top}) = P_{T^{\perp}}(X\Delta_{Y}^{\top}) = 0.$ 

Let $U\Sigma V^{\top}$ be the SVD of $XY^{\top}$. We also have that $U^{*}\Sigma^{*}V^{*\top}$ is the SVD of $X^{*}Y^{*\top}.$ By \cref{eq:FtHt-Fstar-Fnorm}, \cref{eq:FH-Fstar-rownorm}, and \cref{eq:XX-YY}, we can invoke \cref{lem:XY-UV} and obtain that there exists a rotation matrix $R \in \O^{r\times r}$ such that
\begin{align}
    \max(\norm{UR - U^{*}}_{\F}, \norm{VR - V^{*}}_{\F}) &\lesssim \frac{r\kappa \sigma \sqrt{n}\log^{2.5}(n)}{\sigma_{\min}} \label{eq:UR-Ustar-Fnorm}\\
    \max(\norm{UR - U^{*}}_{2,\infty}, \norm{VR - V^{*}}_{2,\infty}) &\lesssim  \frac{r^{3}\mu \kappa^{1.5}\sigma \log^{3.5}(n)}{\sigma_{\min}}. \label{eq:UR-2-infty-bound}
\end{align}

Then consider
\begin{align*}
    \norm{P_{T^{\perp}}(M^{*})}_{\infty} 
    &= \norm{P_{T^{\perp}}(\Delta_{X}\Delta_{Y}^{\top})}_{\infty} \\
    &= \norm{(I-UU^{\top})\Delta_{X}\Delta_{Y}^{\top}(I-VV^{\top})}_{\infty}\\
    &\leq \norm{(I-UU^{\top})\Delta_{X}}_{2,\infty} \norm{(I-VV^{\top})\Delta_{Y}}_{2,\infty}.
\end{align*}

From \cref{eq:UR-2-infty-bound}, we have $\norm{U}_{2,\infty} \lesssim \norm{U^{*}}_{2,\infty} + \frac{r^{3}\mu \kappa^{1.5}\sigma \log^{3.5}(n)}{\sigma_{\min}} \lesssim \sqrt{\frac{\mu r}{n}}$ due to the incoherence condition and $\SNR$. Then
\begin{align*}
    \norm{(I-UU^{\top})\Delta_{X}}_{2,\infty} 
    &\leq \norm{\Delta_{X}}_{2,\infty} + \norm{U}_{2,\infty} \norm{U^{\top}} \norm{\Delta_{X}}\\
    &\lesssim \frac{\sigma \mu r^{3}\kappa \log^{3.5}(n) \sqrt{\sigma_{\max}}}{\sigma_{\min}} + \sqrt{\frac{\mu r}{n}} \frac{\sigma \sqrt{n}\log^{2.5}(n) \sqrt{\sigma_{\max} r}}{\sigma_{\min}}\\
    &\lesssim \frac{\sigma \mu r^{3}\kappa^{1.5} \log^{3.5}(n) }{\sqrt{\sigma_{\min}}}.
\end{align*}
Similarly, one can also obtain that $\norm{(I-VV^{\top})\Delta_{Y}}_{2,\infty} \lesssim \frac{\sigma \mu r^{3}\kappa^{1.5} \log^{3.5}(n) }{\sqrt{\sigma_{\min}}}.$ This completes the proof for \cref{eq:PTM-bound}:
\begin{align*}
    \norm{P_{T^{\perp}}(M^{*})}_{\infty} 
    &\lesssim \frac{\sigma \mu r^{3}\kappa^{1.5} \log^{3.5}(n) }{\sqrt{\sigma_{\min}}}\cdot \frac{\sigma \mu r^{3}\kappa^{1.5} \log^{3.5}(n) }{\sqrt{\sigma_{\min}}}\\
    &\lesssim \frac{\sigma^2 \mu^2 r^{6} \kappa^{3} \log^{7}(n)}{\sigma_{\min}}.
\end{align*}

\subsection{Proof of Eq.~(\ref{eq:PTE-bound})}
Recall that we want to show $\norm{P_{T_t^{\perp}}(E) - P_{T^{*\perp}}(E)}_{\infty} \leq C_{T,2}\frac{\sigma^2 r^{3.5} \mu^{1.5} \kappa^{2} \log^{4}(n)}{\sigma_{\min}}.$

Let $U'\Sigma V^{'\top}$ be the SVD of $X^{t}Y^{tT}$, $U^{*}\Sigma^{*}V^{*\top}$ be the SVD of $X^{*}Y^{*\top}$. By \cref{eq:FtHt-Fstar-Fnorm},\cref{eq:FH-Fstar-rownorm}, and \cref{eq:XX-YY}, we can invoke \cref{lem:XY-UV} and obtain that there exists a rotation matrix $R \in \O^{r\times r}$ such that 
\begin{align*}
    \max(\norm{U'R - U^{*}}_{\F}, \norm{V'R - V^{*}}_{\F}) &\lesssim \frac{r\kappa \sigma \sqrt{n}\log^{2.5}(n)}{\sigma_{\min}}\\
    \max(\norm{U'R - U^{*}}_{2,\infty}, \norm{V'R - V^{*}}_{2,\infty}) &\lesssim  \frac{r^{3}\mu \kappa^{1.5}\sigma \log^{3.5}(n)}{\sigma_{\min}}. 
\end{align*}
One particular choice of $R$ is $R = U_{Q}V_Q^{\top}$ where $U_Q\Sigma_QV_Q^{\top}$ is the SVD of $\Sigma^{-1/2}U^{'\top}X^{t}H^{t}.$ 

Let $U = U'R, V = V'R$ for simplification. Note that
\begin{align*}
    &\norm{P_{T_t^{\perp}}(E) - P_{T^{*\perp}}(E)}_{\infty}\\
    &\leq \norm{(I-UU^{\top})E(I-VV^{\top}) - (I-U^{*}U^{*\top})E(I-V^{*}V^{*\top})}_{\infty}\\
    &\leq \norm{(U^{*}U^{*\top}-UU^{\top})E(I-VV^{\top})}_{\infty} + \norm{(I-U^{*}U^{*\top})E(V^{*}V^{*\top} - VV^{\top})}_{\infty}.
\end{align*}
We have
\begin{align*}
    &\norm{(U^{*}U^{*\top}-UU^{\top})E(I-VV^{\top})}_{\infty} \\
    &\leq  \norm{(U^{*}U^{*\top}-UU^{\top})E}_{\infty} + \norm{(U^{*}U^{*\top}-UU^{\top})EVV^{\top}}_{\infty} \\
    &\leq \norm{(U^{*}-U)U^{*\top}E}_{\infty} + \norm{U(U^{*}-U)^{\top}E}_{\infty} \\
    &\quad + \norm{(U^{*}-U)U^{*\top}EVV^{\top}}_{\infty} +  \norm{U(U^{*}-U)^{\top}EVV^{\top}}_{\infty}\\
    &\leq  \underbrace{\norm{(U^{*}-U)}_{2,\infty} \norm{E^{\top}U^{*}}_{2,\infty}}_{A_0} + \underbrace{\norm{U}_{2,\infty} \norm{E^{\top}(U^{*}-U)}_{2,\infty}}_{A_1} \\
    &\quad + \underbrace{\norm{(U^{*}-U)}_{2,\infty} \norm{U^{*}} \norm{E} \norm{V} \norm{V}_{2,\infty} + \norm{U}_{2,\infty} \norm{U-U^{*}} \norm{E} \norm{V} \norm{V}_{2,\infty}}_{A_2}. 
\end{align*}
For $A_2$, given that $\norm{E} \lesssim \sigma \sqrt{n}$, \cref{eq:UR-Ustar-Fnorm} and \cref{eq:UR-2-infty-bound}, it is easy to check that
\begin{align*}
    A_2 \lesssim \frac{\sigma^2 r^{3.5} \mu^{1.5} \kappa^{1.5} \log^{3.5}(n)}{\sigma_{\min}}. 
\end{align*}

For $A_0$, one can verify that $\norm{E^{\top}U^{*}}_{2,\infty} \lesssim \sigma \sqrt{r\log(n)}$ with probability $1-O(n^{-10^6})$ since $E$ and $U^{*}$ are independent. This provides
\begin{align*}
    A_0 \lesssim \frac{\sigma^2 r^{3.5} \mu \kappa^{1.5} \log^{4}(n)}{\sigma_{\min}}. 
\end{align*}

For $A_1$, we intend to use the leave-one-out technique. Let $U^{'(l)}\Sigma^{(l)}V^{'(l)}$ be the SVD of $X^{(l),t}(Y^{(l),t})^{\top}.$  By \cref{lem:XY-UV}, we can obtain that there exists a rotation matrix $R^{(l)}$ such that 
\begin{align*}
    \norm{U^{'(l)}R^{(l)} - U^{*}}_{\F} \lesssim \frac{r\kappa \sigma \sqrt{n}\log^{2.5}(n)}{\sigma_{\min}}.
\end{align*}
In particular, $R^{(l)} = U_{Q^{(l)}}V_{Q^{(l)}}^{\top}$ where $U_{Q^{(l)}}\Sigma_{Q^{(l)}}V_{Q^{(l)}}^{\top}$ is the SVD of $\Sigma^{(l) -1/2}U^{'(l)\top}X^{(l),t}H^{(l),t}.$

Let $U^{(l)} = U^{'(l)}R^{(l)}$. In fact, due to the construction of $R$ and $R^{(l)}$, by \cref{lem:XY-UV}, one can further verify that
\begin{align*}
    \norm{U^{'(l)}R^{(l)} - U'R}_{\F} \lesssim \frac{\sqrt{\kappa} \sqrt{r}}{\sqrt{\sigma_{\min}}}\norm{F^{t}H^{t}-F^{t,(l)}H^{t,(l)}}_{\F} \overset{(i)}{\lesssim} \frac{\sigma r\kappa^{2} \log^{1.5}(n)}{\sigma_{\min}}.
\end{align*}
Here, (i) is due to \cref{eq:FtlHtl-FtHt-Fnorm}. Then for $n< l \leq 2n$, we have with probability $1-O(n^{10^{-6}})$, 
\begin{align*}
    \norm{(E^{\top}(U^{*}-U))_{l-n,\cdot}} 
    &\leq \norm{(E^{\top}(U^{(l)}-U^{*}))_{l-n,\cdot}} + \norm{E^{\top}(U^{(l)}-U)}_{2,\infty} \\
    &\overset{(i)}{\lesssim} \sigma \norm{U^{(l)} - U^{*}}_{\F} \sqrt{\log(n)} + \sigma \sqrt{n} \norm{U^{(l)}-U}_{\F} \\
    &\lesssim \frac{\sigma^2 \sqrt{n} r \kappa^{2} \log^{3}(n)}{\sigma_{\min}}
\end{align*}
where (i) is due to the independence between $E^{\top}_{l-n}$ and $U^{(l)}, U^{*}.$ This implies that
\begin{align*}
    A_1 
    &\lesssim \sqrt{\frac{\mu r}{n}}\norm{E^{\top}(U^{*}-U)}_{2,\infty} \\
    &\lesssim \frac{\sigma^2 r^{1.5} \mu^{0.5} \kappa^{2} \log^{3.5}(n)}{\sigma_{\min}}.
\end{align*}
In conclusion,
\begin{align*}
    \norm{(U^{*}U^{*\top}-UU^{\top})E(I-VV^{\top})}_{\infty} 
    &\leq A_0 + A_1 + A_2\\
    &\lesssim \frac{\sigma^2 r^{3.5} \mu^{1.5} \kappa^{2} \log^{4}(n)}{\sigma_{\min}}.
\end{align*}
This similar results hold for $\norm{(I-U^{*}U^{*\top})E(V^{*}V^{*\top} - VV^{\top})}_{\infty}$. Therefore, 
\begin{align*}
    \norm{P_{T_t^{\perp}}(E) - P_{T^{*\perp}}(E)}_{\infty} \lesssim \frac{\sigma^2 r^{3.5} \mu^{1.5} \kappa^{2} \log^{4}(n)}{\sigma_{\min}},
\end{align*}
which completes the proof.

\subsection{Proof of Eq.~(\ref{eq:PTZ-bound})}
Follow the derivation for the proof of \cref{eq:PTE-bound}, we have
\begin{align*}
    &\norm{P_{T^{\perp}}(Z) - P_{T^{*\perp}}(Z)}_{\F} \\
    &\leq \norm{(U^{*}U^{*\top} - UU^{\top})Z(I-VV^{\top})}_{\F} + \norm{(I-U^{*}U^{*\top})Z(V^{*}V^{*\top}-VV^{\top})}_{\F} \\
    &\leq \norm{U^{*}U^{*\top} - UU^{\top}}_{\F} \norm{Z} + \norm{V^{*}V^{*\top}-VV^{\top}}_{\F} \norm{Z} \\
    &\lesssim (\norm{U-U^{*}}_{\F}+\norm{V-V^{*}}_{\F})\norm{Z}\\
    &\lesssim \frac{r\kappa \sigma \sqrt{n}\log^{2.5}(n)}{\sigma_{\min}} \norm{Z}. 
\end{align*}

For the entrywise norm bound, we also have
\begin{align*}
    &\norm{P_{T^{\perp}}(Z) - P_{T^{*\perp}}(Z)}_{\infty} \\
    &\leq \norm{(U^{*}U^{*\top} - UU^{\top})Z(I-VV^{\top})}_{\infty} + \norm{(I-U^{*}U^{*\top})Z(V^{*}V^{*\top}-VV^{\top})}_{\infty}.
\end{align*}
Similar to the bounds on \cref{eq:PTE-bound}, note that 
\begin{align*}
    &\norm{(U^{*}U^{*\top}-UU^{\top})Z(I-VV^{\top})}_{\infty} \\
    &\leq  \norm{(U^{*}U^{*\top}-UU^{\top})Z}_{\infty} + \norm{(U^{*}U^{*\top}-UU^{\top})ZVV^{\top}}_{\infty} \\
    &\leq \norm{(U^{*}-U)U^{*\top}Z}_{\infty} + \norm{U(U^{*}-U)^{\top}Z}_{\infty} \\
    &\quad + \norm{(U^{*}-U)U^{*\top}ZVV^{\top}}_{\infty} +  \norm{U(U^{*}-U)^{\top}ZVV^{\top}}_{\infty}\\
    &\leq  \norm{(U^{*}-U)}_{2,\infty} \norm{Z^{\top}}_{2,\infty} \norm{U^{*}} + \norm{U}_{2,\infty} \norm{Z^{\top}}_{2,\infty}\norm{(U^{*}-U)} \\
    &\quad + \norm{(U^{*}-U)}_{2,\infty} \norm{U^{*}} \norm{Z} \norm{V} \norm{V}_{2,\infty} + \norm{U}_{2,\infty} \norm{U-U^{*}} \norm{Z} \norm{V} \norm{V}_{2,\infty}.
\end{align*}
Based on 
\begin{align*}
\max(\norm{U^{*}-U}_{2,\infty}, \norm{V^{*}-V}_{2,\infty}) &\lesssim \frac{r^{3}\mu \kappa^{1.5}\sigma \log^{3.5}(n)}{\sigma_{\min}}\\\max(\norm{U^{*}-U}_{\F}, \norm{V^{*}-V}_{\F}) &\lesssim  \frac{r\kappa \sigma \sqrt{n}\log^{2.5}(n)}{\sigma_{\min}},
\end{align*}
we have
\begin{align*}
&\norm{(U^{*}U^{*\top}-UU^{\top})Z(I-VV^{\top})}_{\infty} \\
&\lesssim \norm{Z} \sqrt{\frac{\mu r}{n}}  \frac{r^{3}\mu \kappa^{1.5}\sigma \log^{3.5}(n)}{\sigma_{\min}}  + \norm{Z^{\top}}_{2,\infty}  \frac{r^{3}\mu \kappa^{1.5}\sigma \log^{3.5}(n)}{\sigma_{\min}}\\
&\lesssim   \norm{Z^{\top}}_{2,\infty}\frac{r^{3.5} \kappa^{1.5} \sigma \mu^{1.5} \log^{3.5}(n)}{\sigma_{\min}}.
\end{align*}
Similarly, one can obtain the bounds for $\norm{(I-U^{*}U^{*\top})Z(V^{*}V^{*\top}-VV^{\top})}_{\infty}$. This implies that
\begin{align*}
\norm{P_{T^{\perp}}(Z) - P_{T^{*\perp}}(Z)}_{\infty} \lesssim  \frac{r^{3.5} \kappa^{1.5} \sigma \mu^{1.5} \log^{3.5}(n)}{\sigma_{\min}} \left(\norm{Z}_{2,\infty}+\norm{Z^{\top}}_{2,\infty}\right).
\end{align*}

\section{Proof of Lemma \ref{lem:connection}}

The proof of \cref{lem:connection} can be established as a special case of the following two lemmas. 
\begin{restatable}{lemma}{LemRApproximateW}\label{lem:R-approximate-W}
Assume \cref{assum:conditions-Z} holds, and $\frac{\sigma}{\sigma_{\min}} \sqrt{n} \leq C_1\frac{1}{\kappa^2 r^2 \log^{5}(n)}.$ Suppose $O = M^{*} + \tau^{*} Z + E$ for a deterministic $E$. Assume $\|E\| \leq C_2\sigma \sqrt{n}$ with $|\inner{E}{P_{T^{*\perp}}(Z)}| \leq C_3 \sigma \sqrt{n/\log(n)} \|Z\|_{\F}.$

Let $\lambda = C_{\lambda} \sigma \sqrt{n} \log^{1.5}(n),  X, Y \in \R^{n\times r}$, $\tau = \frac{\inner{Z}{O-XY^{\top}}}{\|Z\|_{\F}^2}$, and $T$ is the tangent space of $XY^{\top}.$ 
\begin{subequations}\label{eq:R-W-conditions}
\begin{align}
    \norm{X - X^{*}}_{\F} + \norm{Y - Y^{*}}_{\F} &\leq C_{\F} \error \norm{X^{*}}_{\F}, \\
    \norm{\nabla f(X, Y; \tau)}_{\F} &\leq \frac{\lambda\sqrt{\sigma_{\min}}}{\kappa n}.
\end{align}
\end{subequations}
Let $R = \frac{1}{\lambda}(O - XY^{\top} - \tau Z - \lambda UV^{\top})$ where $U\Sigma V^{\top}$ is the SVD of $XY^{\top}.$ Then,
\begin{align}
    \norm{P_{T}(R)}_{\F} \leq \frac{72\kappa}{\lambda \sqrt{\sigma_{\min}}}  \norm{\nabla f(X, Y; \tau)}_{\F} \text{~\quad and \quad} \norm{P_{T^{\perp}}(R)} \leq \left(1-\frac{C}{\log n}\right). \label{eq:approximate-first-order-convex}
\end{align}
Here, $C$ is a constant depending (polynomially) on $C_1, C_2, C_3, C_{\lambda}, C_{\F}$.
\end{restatable}

\begin{restatable}{lemma}{LemApproximationNonconvexConvex}\label{lem:approximation-nonconvex-convex}
Assume the same setup from \cref{lem:R-approximate-W}. For any minimizer $(\hat{M}, \hat{\tau})$ of the convex program \cref{eq:convex-program}, we have 
\begin{align}
    \norm{XY^{\top} - \hat{M}}_{\F} &\leq C\frac{\kappa \log(n)}{\sqrt{\sigma_{\min}}}  \norm{\nabla f(X,Y; \tau)}_{\F} \nonumber\\
    |\tau - \hat{\tau}| &\leq C\frac{\kappa \log(n)}{\sqrt{\sigma_{\min}} \norm{Z}_{\F}}  \norm{\nabla f(X,Y; \tau)}_{\F}. \nonumber
\end{align}
Furthermore, let $\hat{T}$ be the tangent space of $\hat{M}$. For any matrix $A \in \R^{n\times n}$, we have
%\begin{subequations}\label{eq:PhatT-PT}
\begin{align*} 
    \norm{P_{\hat{T}^{\perp}}(A) - P_{T^{\perp}}(A)}_{\F} &\leq C \norm{A} \frac{\kappa \log(n)}{\sigma_{\min}^{1.5}} \norm{\nabla f(X, Y; \tau)}_{\F}\\
    \norm{P_{\hat{T}^{\perp}}(XY^{\top})}_{\F}  &\leq C \frac{\kappa^{3}\log^{2}(n)}{\sigma_{\min}^2} \norm{\nabla f}_{\F}^2.
\end{align*}
Here, $C$ is a constant depending (polynomially) on $C_1, C_2, C_3, C_{\lambda}, C_{\F}$.
\end{restatable}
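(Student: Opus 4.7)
The strategy is to exploit the approximate dual certificate $R$ supplied by \cref{lem:R-approximate-W} to argue that $(M_1, \tau) := (XY^\top, \tau)$ already nearly satisfies the first-order optimality conditions for $g$, and then convert this approximate optimality into quantitative distance from the true minimizer $(\hat M, \hat\tau)$ through a restricted strong-convexity argument.

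First I would write a sharpened subgradient inequality for $g$ at $(M_1, \tau)$. Three ingredients combine: (i) the exact quadratic expansion of $h(M,\tau') = \tfrac{1}{2}\|O-M-\tau' Z\|_\F^2$, where $\nabla_M h(M_1,\tau) = -\lambda(UV^\top + R)$ by the definition of $R$ and $\partial_{\tau'} h(M_1,\tau)|_{\tau'=\tau} = 0$ by the construction $\tau = \langle Z, O-XY^\top\rangle/\|Z\|_\F^2$; (ii) the tightest subgradient inequality for the nuclear norm, $\|M\|_* \geq \|M_1\|_* + \langle UV^\top, M - M_1\rangle + \|P_{T^\perp}(M - M_1)\|_*$; and (iii) the dual inequality $|\langle P_{T^\perp}(R), P_{T^\perp}(M-M_1)\rangle| \leq (1 - C/\log n)\|P_{T^\perp}(M-M_1)\|_*$ afforded by $\|P_{T^\perp}(R)\| \leq 1 - C/\log n$ from \cref{lem:R-approximate-W}. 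Evaluating the resulting pointwise inequality at $(M,\tau')=(\hat M, \hat\tau)$ and using $g(\hat M, \hat\tau) \leq g(M_1,\tau)$ yields the master inequality
\begin{align*}
\tfrac{1}{2}\|(\hat M - M_1) + (\hat\tau - \tau)Z\|_\F^2 + \tfrac{\lambda C}{\log n}\|P_{T^\perp}(\hat M - M_1)\|_* \leq \lambda \|P_T(R)\|_\F \|\hat M - M_1\|_\F.
\end{align*}

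Next I would separate bounds on $\|\Delta\|_\F$ (where $\Delta := \hat M - M_1$) and $|\hat\tau - \tau|$. First-order $\tau$-optimality at both $(M_1,\tau)$ and $(\hat M, \hat\tau)$ forces $(\hat\tau - \tau)\|Z\|_\F^2 = -\langle Z, \Delta\rangle$, so $\Delta + (\hat\tau - \tau) Z$ is the orthogonal projection of $\Delta$ onto the hyperplane perpendicular to $Z$, and in particular $|\hat\tau - \tau| \leq \|\Delta\|_\F/\|Z\|_\F$. The master inequality further forces $\|P_{T^\perp}(\Delta)\|_*$ to be negligible compared to $\|\Delta\|_\F$, so $\Delta$ lies essentially inside $T$. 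Combining \cref{cond:Z-condition-convex} at the true tangent space $T^*$ with the Wedin-type closeness of $T$ to $T^*$ (itself a consequence of the hypothesis on $\|X-X^*\|_\F + \|Y-Y^*\|_\F$ in \cref{lem:R-approximate-W}), I would establish a restricted strong convexity estimate of the form $\|\Delta + (\hat\tau - \tau)Z\|_\F^2 \geq (C'/\log n) \|\Delta\|_\F^2$ valid for every $\Delta$ approximately lying in $T$. Substituting this into the master inequality and using the explicit bound $\|P_T(R)\|_\F \leq 72\kappa/(\lambda\sqrt{\sigma_{\min}})\cdot \|\nabla f(X,Y;\tau)\|_\F$ from \cref{lem:R-approximate-W} delivers $\|\Delta\|_\F \lesssim (\kappa\log n/\sqrt{\sigma_{\min}})\|\nabla f\|_\F$, and the $\tau$-identity then yields the matching bound on $|\hat\tau - \tau|$.

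For the projection bounds, I would appeal to Wedin/Davis--Kahan perturbation theory on the SVDs of $M_1 = XY^\top$ and $\hat M$: since $\|\Delta\|_\F$ is now controlled, the associated singular subspaces misalign by at most $O(\|\Delta\|_\F/\sigma_{\min})$, which yields $\|(P_{\hat T^\perp}-P_{T^\perp})(A)\|_\F \lesssim \|A\|\cdot\|\Delta\|_\F/\sigma_{\min}$ for every $A$ and so gives claim three. For the quadratic bound in claim four, the key observation is that $P_{T^\perp}(XY^\top) = 0$ exactly (as $XY^\top \in T$), so the leading first-order term in the perturbation expansion of $P_{\hat T^\perp}(XY^\top) - P_{T^\perp}(XY^\top)$ cancels; writing $P_{\hat T^\perp}(XY^\top) = (I-\hat U\hat U^\top)U\Sigma V^\top (I-\hat V\hat V^\top)$ and applying the Wedin estimates $\|(I-\hat U\hat U^\top)U\|, \|(I-\hat V\hat V^\top)V\| \lesssim \|\Delta\|_\F/\sigma_{\min}$ to each outer factor produces the quadratic-in-$\|\nabla f\|_\F$ bound. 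The principal technical obstacle is establishing the restricted strong convexity constant $C'/\log n$ in the previous paragraph, which serves as the quantitative bridge from approximate to exact KKT and requires carefully marrying \cref{cond:Z-condition-convex} (expressed at the unknown $T^*$) with Wedin control of $T$ against $T^*$ coming from the closeness hypothesis on $M_1 - M^*$; this is where the $\log n$ loss in all four claims originates.
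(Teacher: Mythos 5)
Your plan traces the same route as the paper: derive a restricted-optimality inequality at $(XY^\top,\tau)$ using the approximate dual certificate $R$ from \cref{lem:R-approximate-W}, observe that the $\tau$-optimality at both points forces $\hat\tau-\tau=-\langle Z,\Delta\rangle/\|Z\|_\F^2$ so that $\|\Delta+\Delta'\|_\F^2=\|\Delta\|_\F^2-\langle Z,\Delta\rangle^2/\|Z\|_\F^2$, upgrade this to $\gtrsim\|\Delta\|_\F^2/\log n$ via the fact that $\|P_T(Z)\|_\F^2\leq(1-C/\log n)\|Z\|_\F^2$, and then close the loop against $\lambda\|P_T(R)\|_\F\|\Delta\|_\F$. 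The projection bounds via Davis--Kahan and the quadratic cancellation $P_{T^\perp}(XY^\top)=0$ are also the same mechanism the paper uses. Two points are worth flagging. First, a small misattribution: the restricted strong convexity constant comes from \cref{cond:Z-condition-nonconvex} transferred to $T$ (via \cref{lem:general-conditions-small-ball}, giving $\|P_{T^\perp}(Z)\|_\F^2\gtrsim\|Z\|_\F^2/\log n$), not from \cref{cond:Z-condition-convex}; the latter controls $|\langle Z,UV^\top\rangle|$ and is what \cref{lem:R-approximate-W} uses internally to bound $\|P_{T^\perp}(R)\|$, but it does not by itself give a lower bound on the projected mass of $Z$. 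Since both conditions are in the hypothesis, this is cosmetic.

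Second, and this is a genuine gap: before you can invoke Davis--Kahan to compare the tangent space $\hat T$ of $\hat M$ with $T$, you must establish $\mathrm{rank}(\hat M)=r$. The lemma's projection bounds are for the tangent space built from the \emph{full} SVD of $\hat M$, and if $\hat M$ picks up extra singular values above the soft-threshold $\lambda$, then $\hat U,\hat V$ have more than $r$ columns, $\hat T$ is strictly larger than any rank-$r$ tangent space, and the subspace perturbation bound you wrote does not control $\|P_{\hat T^\perp}(A)-P_{T^\perp}(A)\|_\F$. The paper spends a nontrivial chunk of the argument on exactly this: $\hat M$ is the soft-thresholding of $O-\hat\tau Z$, and one shows via Weyl's inequality that $\sigma_i(O-\hat\tau Z)>\lambda$ for $i\leq r$ (using $\sigma_{\min}/2\gg\lambda$ together with $\|R\|\leq 1$) and $\sigma_i(O-\hat\tau Z)<\lambda$ for $i>r$ (using the strict gap $\|P_{T^\perp}(R)\|\leq 1-C/\log n$ and controlling the shift $|\hat\tau-\tau|\,\|Z\|$ against $\lambda$). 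Until you supply this step your Wedin/Davis--Kahan reduction has no foothold, and the quadratic bound on $\|P_{\hat T^\perp}(XY^\top)\|_\F$ (which relies on the first-order cancellation and hence on $\hat U,\hat V$ having rank $r$) inherits the same gap.
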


\subsection{Proof of Lemma \ref{lem:R-approximate-W}}\label{appendix:lemma-R-approximate-W}
Let $XY^{\top} = U\Sigma V^{\top}$ be the singular value decomposition of $XY^{\top}.$ Recall that $T$ is the tangent space of $XY^{\top}$ and $P_{T}(A)$ is the projection of $A$ into the space $T$ for any $A \in \R^{n\times n}$:
\begin{align}
P_{T}(A) = A - (I_{n} - UU^{\top})A (I_n - VV^{\top}). \nonumber
\end{align}

By \cref{lem:singular-values}, for large enough $n$, the singular values of $X, Y$ are in the intervals $\left[\sqrt{\frac{\sigma_{\min}}{2}}, \sqrt{2\sigma_{\max}}\right]$ and
$$
\sigma_{\min}/2 \leq \sigma_{\min}(\Sigma) \leq \sigma_{\max}(\Sigma) \leq 2\sigma_{\max}.
$$

We then consider $P_{T}(R)$ and $P_{T^{\perp}}(R)$ separately. The proof of $P_{T}(R)$ is similar to the proof of Claim 2 in \cite{chen2019noisy}, while the proof of $P_{T^{\perp}}(R)$ is based on a careful analysis of the property of $\tau.$ 

\subsubsection{Control of \texorpdfstring{$P_{T}(R)$}{P\_T(R)}}
By the definition of $P_{T}$, we have
\begin{align}
    \norm{P_{T}(R)}_{\F}
    &= \norm{R - (I - UU^{\top}) R (I - VV^{\top})}_{\F}\nonumber\\
    &= \norm{UU^{\top} R (I - VV^{\top}) + R(VV^{\top})}_{\F}\nonumber\\
    &\overset{(i)}{\leq} \norm{U}\norm{U^{\top} R}_{\F} \norm{I-VV^{\top}} + \norm{RV}_{\F}\norm{V^{\top}} \nonumber\\
    &\overset{(ii)}{\leq} \norm{U^{\top}R}_{\F} + \norm{RV}_{\F} \label{eq:PTR-bound}
\end{align}
where (i) is due to $\norm{ABC}_{\F} \leq \norm{A}\norm{B}_{\F}\norm{C}$ and (ii) is due to $\norm{U}, \norm{I-VV^{\top}}, \norm{V} \leq 1$. 

Before showing bounds on $\norm{U^{\top}R}_{\F}$ and $\norm{RV}_{\F}$, consider an examination for the properties of $X$ and $Y$. By \cref{lem:SigmaQ}, we can write $X = U\Sigma^{1/2} Q, Y = V\Sigma^{1/2} Q^{-\top}$ where $Q \in \R^{r\times r}$ is an invertible matrix and 
\begin{align}
    \norm{\Sigma_{Q} - \Sigma_{Q}^{-1}}_{\F} \leq \frac{1}{\sigma_{\min}(\Sigma)} \norm{X^{\top}X - Y^{\top}Y}_{\F}, \label{eq:sigma-Q-sigma-Q-inverse}
\end{align}
where $U_{Q}\Sigma_{Q}V_{Q}^{\top}$ is the SVD of $Q$. In fact, we can show that $X^{\top}X \approx Y^{\top}Y$ due to that $\nabla f(X, Y; \tau) \approx 0.$ To see this, set $B_1, B_2$ for the gradient $\nabla_{X} f(X, Y; \tau)$ and $\nabla_{Y} f(X, Y; \tau)$:
\begin{align}
    B_1 &:= (XY^{\top} + \tau Z  - O)Y + \lambda X \label{eq:aX}\\
    B_2 &:= (XY^{\top} + \tau Z - O)^{\top}X + \lambda Y \label{eq:aY}.
\end{align}
Left-multiplying $X^{\top}$ on both sides of \cref{eq:aX} and left-multiplying $Y^{\top}$ on both sides of \cref{eq:aY} and taking the transpose, we can obtain
\begin{align*}
    \lambda X^{\top}X &= X^{\top}B_1 - X^{\top}(\tau Z + XY^{\top} - O)Y\\
    \lambda Y^{\top}Y &= B_2^{\top}Y - X^{\top}(\tau Z + XY^{\top} - O)Y.
\end{align*}
This implies $\norm{X^{\top}X - Y^{\top}Y}_{\F} = \frac{1}{\lambda}\norm{X^{\top}B_1 - B_2^{\top}Y}_{\F} \leq \frac{1}{\lambda}\max(\norm{B_1}_{\F},\norm{B_2}_{\F})(\norm{X}+\norm{Y}).$ Combining this with \cref{eq:sigma-Q-sigma-Q-inverse}, we obtain
\begin{align}
     \norm{\Sigma_{Q} - \Sigma_{Q}^{-1}}_{\F} 
     &\leq \frac{1}{\sigma_{\min}(\Sigma)} \norm{X^{\top}X - Y^{\top}Y}_{\F} \nonumber\\
     &\leq \frac{1}{\sigma_{\min}(\Sigma) \lambda} \max(\norm{B_1}_{\F},\norm{B_2}_{\F})(\norm{X}+\norm{Y}) \nonumber\\
     &\overset{(i)}{\leq} \frac{8\sqrt{\sigma_{\max}}}{\sigma_{\min} \lambda}\max(\norm{B_1}_{\F},\norm{B_2}_{\F}) \nonumber\\
     &\leq  \frac{8\sqrt{\sigma_{\max}}}{\sigma_{\min} \lambda} \norm{\nabla f(X, Y; \tau)}_{\F} \overset{(ii)}{\leq} 1 \label{eq:sigmaQ-bound}
\end{align}
where (i) is due to $\sigma_{\min}(\Sigma) \geq \sigma_{\min}/2, \sigma_{1}(X) \leq 2\sigma_{\max}, \sigma_1(Y) \leq 2\sigma_{\max}$ and (ii) holds for large enough $n$. This also implies $\norm{\Sigma_{Q}} \leq 2$. Intuitively speaking, $\nabla f(X, Y; \tau) \approx 0$ implies that $\Sigma_{Q} \approx I$ and hence $Q$ is similar to an orthogonal matrix.  

Next, we show how to control $\norm{RV}_{\F}$. Recall that 
\begin{align}
    O - XY^{\top} - \tau Z = \lambda UV^{\top} + \lambda R. \label{eq:defintion-R}
\end{align}

Combining \cref{eq:aX} and \cref{eq:defintion-R}, we have
\begin{align}
    (-\lambda UV^{\top} - \lambda R) Y + \lambda X = B_1.\nonumber
\end{align}
Therefore, $\lambda RY = \lambda X - B_1 - \lambda UV^{\top} Y.$ Substituting $X$ and $Y$ with $X = U\Sigma^{1/2} Q, Y = V\Sigma^{1/2} Q^{-\top}$, we obtain
\begin{align}
    \lambda R (V\Sigma^{1/2} Q^{-\top}) = \lambda (U\Sigma^{1/2} Q) - B_1 - \lambda UV^{\top} (V\Sigma^{1/2} Q^{-\top}).\nonumber
\end{align}
Right-multiplying $Q^{\top}\Sigma^{-1/2}$ on both sides and using $V^{\top}V = I$ leads to
\begin{align}
    \lambda R V = \lambda U\Sigma^{1/2} QQ^{\top} \Sigma^{-1/2} - B_1 Q^{\top}\Sigma^{-1/2} - \lambda U. \nonumber
\end{align}
Then this implies
\begin{align}
    \norm{RV}_{\F} 
    &= \norm{U\Sigma^{1/2}(QQ^{\top}-I)\Sigma^{-1/2} - \frac{1}{\lambda} B_1Q^{\top}\Sigma^{-1/2}}_{\F} \nonumber\\
    &\leq \frac{1}{\lambda} \norm{B_1}_{\F} \norm{Q} \norm{\Sigma^{-1/2}} +  \norm{U} \norm{\Sigma^{1/2}} \norm{\Sigma^{-1/2}} \norm{I - QQ^{\top}}_{\F} \nonumber\\
    &\overset{(i)}{\leq} \frac{1}{\lambda} \norm{B_1}_{\F} \norm{Q} \frac{\sqrt{2}}{\sqrt{\sigma_{\min}}} + \sqrt{2\sigma_{\max}} \frac{\sqrt{2}}{\sqrt{\sigma_{\min}}} \norm{I-QQ^{\top}}_{\F}\nonumber\\
    &\overset{(ii)}{\leq} \frac{2\sqrt{2}}{\lambda \sqrt{\sigma_{\min}}} \norm{B_1}_{\F}  + 2 \sqrt{\kappa} \norm{U_{Q}\Sigma_{Q}(\Sigma_{Q}^{-1} - \Sigma_{Q})U_{Q}^{\top}}_{\F} \nonumber\\
    &\overset{(iii)}{\leq} \frac{2\sqrt{2}}{\lambda \sqrt{\sigma_{\min}}} \norm{\nabla f(X, Y; \tau)}_{\F} + 4\sqrt{\kappa} \norm{\Sigma_Q^{-1} - \Sigma_{Q}}_{\F} \nonumber\\
    &\overset{(iv)}{\leq} \left(\frac{2\sqrt{2}}{\lambda \sqrt{\sigma_{\min}}} + \frac{32\kappa}{\lambda \sqrt{\sigma_{\min}}} \right) \norm{\nabla f(X, Y; \tau)}_{\F} \nonumber\\
    &\leq \frac{36\kappa}{\lambda \sqrt{\sigma_{\min}}} \norm{\nabla f(X, Y; \tau)}_{\F} \nonumber
\end{align}
where (i) is due to $\sigma_{\min}(\Sigma) \geq (1/2)\sigma_{\min}, \sigma_{\max}(\Sigma) \leq 2\sigma_{\max}$, (ii) is due to $\norm{Q}=\norm{\Sigma_Q} \leq 2$ and $Q = U_Q \Sigma_Q V_Q^{\top}$ is the SVD of $Q$, (iii) is due to $\norm{B_1}_{\F} \leq \norm{\nabla f(X, Y; \tau)}_{\F}$ and $\norm{U_Q} = 1, \norm{\Sigma_Q} \leq 2$, (iv) is due to the bound of $\|\Sigma_Q^{-1} - \Sigma_{Q}\|_{\F}$ (\cref{eq:sigmaQ-bound}).

Similarly, one can obtain $\norm{U^{\top}R}_{\F} \leq \frac{36\kappa}{\lambda \sqrt{\sigma_{\min}}} \norm{\nabla f(X, Y; \tau)}_{\F}.$ Combining these with \cref{eq:PTR-bound} leads to the result:
\begin{align}\label{eq:PTR-proof-bound}
    \norm{P_{T}(R)}_{\F} \leq \frac{72\kappa}{\lambda \sqrt{\sigma_{\min}}}  \norm{\nabla f(X, Y; \tau)}_{\F}.
\end{align}
This completes the proof for the bound of $P_{T}(R).$

\subsubsection{Control of \texorpdfstring{$\PTp{R}$}{PTperp(R)}}
Next, we control $\PTp{R}.$ Using $O = M^{*} + E + \tau^{*} Z$ and applying $P_{T^{\perp}}$ on both sides of $O - XY^{\top} - \tau Z = \lambda UV^{\top} + \lambda R$, we obtain
\begin{align}
    \lambda \PTp{R} 
    &\overset{(i)}{=} \PTp{E} - \PTp{XY^{\top} - M^{*}} - (\tau-\tau^{*})\PTp{Z} \nonumber\\
    &= \PTp{E} - \PTp{(X-X^{*})Y^{\top} + X(Y-Y^{*})^{\top}} + \PTp{(X-X^{*})(Y-Y^{*})^{\top}} \nonumber\\
    &\quad - (\tau-\tau^{*})\PTp{Z} \nonumber\\
    &\overset{(ii)}{=}  \PTp{E} + \PTp{(X-X^{*})(Y-Y^{*})^{\top}} - (\tau-\tau^{*})\PTp{Z} \label{eq:XXstar-YY}.
\end{align}
where (i) is due to $\PTp{UV^{\top}} = 0$, (ii) is due to $Y=V\Sigma^{1/2}Q$ and $X = U\Sigma^{1/2}Q^{-\top}$ by \cref{lem:SigmaQ} and $\PTp{UA^{\top}+BV^{\top}} = 0$ for any $A,B \in \R^{n \times r}.$   

This implies that 
\begin{align}
    \norm{\PTp{R}} 
    &\leq \frac{1}{\lambda} \norm{\PTp{E}} + \frac{1}{\lambda} \norm{\PTp{(X-X^{*})(Y-Y^{*})^{\top}}} + \frac{1}{\lambda}|\tau-\tau^{*}|\norm{\PTp{Z}}\nonumber\\
    &\overset{(i)}{\leq} \frac{1}{\lambda} \norm{E} + \frac{1}{\lambda} \norm{X-X^{*}}\norm{Y-Y^{*}} + \frac{1}{\lambda} |\tau-\tau^{*}|\norm{\PTp{Z}} \nonumber
    %&\leq C\sigma \sqrt{n} + \norm{X-X^{*}}_{\F} \norm{Y-Y^{*}}_{\F} + |\tau-\tau^{*}|\norm{\PTp{Z}}. \label{eq:PTpR-bound}
\end{align}
where (i) is due to $\norm{\PTp{A}} \leq \norm{I-UU^{\top}}\norm{A}\norm{I-VV^{\top}} \leq \norm{A}$ for any $A \in \R^{n\times n}.$ Then it boils down to control $\frac{\norm{E}}{\lambda}, \frac{\norm{X-X^{*}}\norm{Y-Y^{*}}}{\lambda}$, and $\frac{|\tau-\tau^{*}|\norm{\PTp{Z}}}{\lambda}$ separately. 

Consider $\frac{\norm{E}}{\lambda}$. Since we have the condition $\norm{E} \lesssim \sigma \sqrt{n}$, then
\begin{align}
    \frac{\norm{E}}{\lambda} \leq \frac{c\sigma \sqrt{n}}{C_{\lambda} \sigma \sqrt{n} \log^{1.5}(n)} \lesssim \frac{1}{\log^{1.5}(n)}. \label{eq:E-bound}
\end{align}

Consider $\frac{\norm{X-X^{*}}\norm{Y-Y^{*}}}{\lambda}$. We have
\begin{align}
    \frac{\norm{X-X^{*}}\norm{Y-Y^{*}}}{\lambda} 
    &\leq \frac{\norm{X-X^{*}}_{\F} \norm{Y-Y^{*}}_{\F}}{\lambda} \nonumber\\
    &\overset{(i)}{\leq} C_{\F}^2 \frac{\sigma^2 n \log^{5}(n)}{\sigma_{\min}^2} \sigma_{\max}r \frac{1}{C_{\lambda} \sigma \sqrt{n}\log^{1.5}(n)} \nonumber\\
    &\leq \frac{C_{\F}^2}{C_{\lambda}} \frac{\sigma \sqrt{n} \log^{3.5}(n) \kappa r}{\sigma_{\min}} \nonumber\\
    &\overset{(ii)}{\lesssim} \frac{1}{\log^{1.5}(n)} \label{eq:XXstarYYstar-bound}
\end{align}
where (i) is due to 
$$
\norm{X-X^{*}}_{\F} + \norm{Y-Y^{*}}_{\F} \leq C_{\F} \frac{\sigma \sqrt{n} \log^{2.5}(n)}{\sigma_{\min}} \sqrt{\sigma_{\max} r}
$$ 
with $\lambda = C_{\lambda} \sigma \sqrt{n} \log^{1.5}(n)$, and (ii) is due to $\sSNR.$ 

Consider $\frac{1}{\lambda}|\tau-\tau^{*}|\norm{\PTp{Z}}$. We have the following claim.
\begin{claim}\label{claim:PTP}
Suppose \cref{cond:Z-condition-nonconvex} and \cref{cond:Z-condition-convex} hold. Suppose $\sSNR$. Then
\begin{align}
    \frac{1}{\lambda}|\tau-\tau^{*}|\norm{\PTp{Z}} \leq 1 - \frac{C}{\log(n)} \label{eq:tau-taustar-Z-bound}
\end{align}
for some constant $C$. 
\end{claim}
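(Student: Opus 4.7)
The plan is to leverage \cref{cond:Z-condition-convex} directly via the non-convex first-order conditions. First, I take the inner product of the identity \cref{eq:XXstar-YY} with $\PTp{Z}$ to obtain
\[
(\tau-\tau^*)\|\PTp{Z}\|_\F^2 = \inner{\PTp{Z}}{E} + \inner{\PTp{Z}}{(X-X^*)(Y-Y^*)^\top} - \lambda \inner{\PTp{Z}}{\PTp{R}}.
\]
Using the first-order condition for $\tau$, which combined with $O-XY^\top-\tau Z = \lambda UV^\top + \lambda R$ gives $\inner{Z}{R} = -\inner{Z}{UV^\top}$, I rewrite $\inner{\PTp{Z}}{\PTp{R}} = \inner{Z}{R} - \inner{P_T(Z)}{P_T(R)} = -\inner{Z}{UV^\top} - \inner{P_T(Z)}{P_T(R)}$. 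Substituting and then dividing through by $\|\PTp{Z}\|_\F^2$ and multiplying by $\|\PTp{Z}\|/\lambda$, the leading term becomes $|\inner{Z}{UV^\top}| \cdot \|\PTp{Z}\|/\|\PTp{Z}\|_\F^2$.

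Next, I control $\inner{Z}{UV^\top}$ by perturbing around $\inner{Z}{U^*V^{*\top}}$. Standard Davis--Kahan/sin-$\theta$ bounds give $\|UV^\top - U^*V^{*\top}\|_\F \lesssim \|XY^\top-M^*\|_\F/\sigma_{\min} \lesssim \kappa\sqrt{r}\,\error$, and under the SNR this is $o(1/\log n)$ relative to what we need. \cref{cond:Z-condition-convex} then provides the key bound
\[
|\inner{Z}{U^*V^{*\top}}| \leq \left(1 - \frac{C_{r_2}}{\log n}\right) \frac{\|P_{T^{*\perp}}(Z)\|_\F^2}{\|P_{T^{*\perp}}(Z)\|}.
\]
I then replace $P_{T^{*\perp}}(Z)$ by $\PTp{Z}$ using the analogous perturbation $\|\PTp{Z} - P_{T^{*\perp}}(Z)\|_\F \lesssim \|Z\|\cdot \kappa\sqrt{r}\,\error$ together with \cref{cond:Z-condition-nonconvex}, which lower-bounds $\|P_{T^{*\perp}}(Z)\|_\F^2 \geq (C_{r_1}/\log n)\|Z\|_\F^2$; this ensures the ratio $\|P_{T^{*\perp}}(Z)\|_\F^2/\|P_{T^{*\perp}}(Z)\|$ is preserved up to a multiplicative $(1+o(1/\log n))$ factor when passing to $\PTp{Z}$. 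The conclusion is
\[
\frac{|\inner{Z}{UV^\top}|\cdot \|\PTp{Z}\|}{\|\PTp{Z}\|_\F^2} \leq 1 - \frac{C_{r_2}}{\log n} + o\!\left(\frac{1}{\log n}\right).
\]

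The remaining contributions are all $o(1/\log n)$ under the SNR: the noise term is bounded by splitting $\inner{\PTp Z}{E} = \inner{P_{T^{*\perp}}(Z)}{E} + \inner{\PTp Z - P_{T^{*\perp}}(Z)}{E}$ and using the assumption $|\inner{E}{P_{T^{*\perp}}(Z)}|\leq C_3\sigma\sqrt{n/\log n}\,\|Z\|_\F$ together with $\|E\|\leq C_2\sigma\sqrt{n}$; the term $|\inner{\PTp Z}{(X-X^*)(Y-Y^*)^\top}|$ is bounded by $\|Z\|_\F\|\alpha\|_\F\|\beta\|_\F \lesssim \|Z\|_\F \error^2 \sigma_{\max}r$ and is quadratic in $\error$; and $|\inner{P_T(Z)}{P_T(R)}|$ is handled via Cauchy--Schwarz together with the bound $\|P_T(R)\|_\F \leq (72\kappa/(\lambda\sqrt{\sigma_{\min}}))\|\nabla f\|_\F$ already established in \cref{eq:PTR-proof-bound}, combined with the hypothesis $\|\nabla f\|_\F \leq \lambda\sqrt{\sigma_{\min}}/(\kappa n)$. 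Summing everything and choosing, e.g., $C = C_{r_2}/2$ yields the claim for large enough $n$.

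The main obstacle is the simultaneous tracking of both Frobenius and operator norms when comparing $P_{T^{*\perp}}(Z)$ to $\PTp{Z}$: the ratio $\|\cdot\|_\F^2/\|\cdot\|$ that appears in \cref{cond:Z-condition-convex} is not a natural Lipschitz object, so both perturbations must be controlled at the $o(1/\log n)$ level to avoid swamping the gap $C_{r_2}/\log n$ granted by the assumption. This is precisely what the SNR $\sigma\sqrt{n}/\sigma_{\min}\lesssim 1/(\kappa^2 r^2 \log^5 n)$ enables, by forcing $\kappa\sqrt{r}\,\error$ to be much smaller than $1/\log n$.
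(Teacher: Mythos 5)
Your proposal is correct and follows essentially the same route as the paper: you derive the same four-term decomposition (the $A_0,\dots,A_3$ in the paper's proof, with $\inner{Z}{R}=-\inner{Z}{UV^\top}$ giving the leading term), and you handle $A_0$ by perturbing Condition~\ref{cond:Z-condition-convex} from $T^*$ to $T$, which is exactly what the paper outsources to \cref{lem:general-conditions-small-ball}. The only cosmetic difference is that the paper states that perturbation as an additive estimate on $\norm{Z}_\F^2/\log^{2.5}(n)$ scale rather than a ``multiplicative $(1+o(1/\log n))$ factor''; your closing paragraph correctly identifies the same delicacy, so the substance matches.
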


Combining \cref{eq:E-bound,eq:XXstarYYstar-bound,eq:tau-taustar-Z-bound}, we complete the proof for controlling $P_{T^{\perp}}(R)$. 

\begin{proof}[Proof of \cref{claim:PTP}]
Similar to the proof of \cref{lem:tau-decomposition}, we first consider the characterization for $\tau - \tau^{*}$. Since $\tau = \inner{Z}{O-XY^{\top}}/\norm{Z}_{\F}^2$, we have $\inner{O - XY^{\top} - \tau Z}{Z} = 0$. Combining this with $O - XY^{\top} - \tau Z = \lambda UV^{\top} + \lambda R$, we have 
\begin{align}
    &\inner{Z}{UV^{\top} + R} = 0\nonumber\\
    \implies& \inner{Z}{UV^{\top}} = -\inner{Z}{R} \nonumber\\
    \implies& \inner{Z}{UV^{\top}} + \inner{\PT{Z}}{\PT{R}} = -\inner{\PTp{Z}}{\PTp{R}} \label{eq:ZUVT}
\end{align}    
Substituting $\PTp{R}$ in \cref{eq:ZUVT} by \cref{eq:XXstar-YY}, we obtain
\begin{align*}
    &\inner{Z}{UV^{\top}} + \inner{\PT{Z}}{\PT{R}} \\
    &= \frac{1}{\lambda} (\tau - \tau^{*})\inner{\PTp{Z}}{\PTp{Z}} \\
    &\quad -\frac{1}{\lambda} \inner{\PTp{Z}}{\PTp{E}} - \frac{1}{\lambda} \inner{\PTp{Z}}{\PTp{(X-X^{*})(Y-Y^{*})^{\top}}}.
\end{align*}
Note that $\inner{\PTp{Z}}{\PTp{Z}} = \norm{\PTp{Z}}_{\F}^2$. This implies
\begin{align}
    \frac{(\tau - \tau^{*})}{\lambda}\norm{\PTp{Z}}_{\F}^2
    &= \inner{Z}{UV^{\top}} + \inner{\PT{Z}}{\PT{R}} + \frac{1}{\lambda}\inner{\PTp{Z}}{\PTp{E}} \nonumber\\
    &\quad + \frac{1}{\lambda}\inner{\PTp{Z}}{\PTp{(X-X^{*})(Y-Y^{*})^{\top}}}.\nonumber
\end{align}
This further implies
\begin{align}
    &\frac{|\tau - \tau^{*}|}{\lambda} \norm{\PTp{Z}} \nonumber\\
    &\leq \left|\inner{Z}{UV^{\top}}\right|\frac{\norm{\PTp{Z}}}{\norm{\PTp{Z}}_{\F}^2} + \left|\inner{\PT{Z}}{\PT{R}}\right|\frac{\norm{\PTp{Z}}}{\norm{\PTp{Z}}_{\F}^2} \nonumber\\
    &\quad + \frac{1}{\lambda}\left|\inner{\PTp{Z}}{\PTp{E}}\right|\frac{\norm{\PTp{Z}}}{\norm{\PTp{Z}}_{\F}^2} \nonumber\\
    &\quad +  \frac{1}{\lambda}\left|\inner{\PTp{Z}}{\PTp{(X-X^{*})(Y-Y^{*})^{\top}}}\right|\frac{\norm{\PTp{Z}}}{\norm{\PTp{Z}}_{\F}^2} \nonumber\\
    &\overset{(i)}{\leq} \underbrace{\left|\inner{Z}{UV^{\top}}\right|\frac{\norm{\PTp{Z}}}{\norm{\PTp{Z}}_{\F}^2}}_{A_0} + \underbrace{\frac{\norm{\PT{Z}}_{\F}}{\norm{\PTp{Z}}_{\F}} \norm{\PT{R}}_{\F}}_{A_1} + \underbrace{\frac{|\inner{\PTp{Z}}{E}|}{\lambda\norm{\PTp{Z}}_{\F}}}_{A_2}\nonumber\\
    &\quad + \underbrace{\frac{1}{\lambda}\norm{\PTp{(X-X^{*})(Y-Y^{*})^{\top}}}_{\F}}_{A_3}\nonumber
\end{align}
where (i) is due to $\norm{\PTp{Z}} \leq \norm{\PTp{Z}}_{\F}$ and $|\inner{A}{B}| \leq \norm{A}_{\F}\norm{B}_{\F}$ by Cauchy-Schwartz inequality.  
Then it boils down to control $A_0, A_1, A_2, A_3$.

By \cref{lem:general-conditions-small-ball}, we have that
\begin{align}
    A_0 &\leq 1 - \frac{C_{r_2}}{2\log(n)} \nonumber\\
    \norm{P_{T^{\perp}}(Z)}_{\F}^2 &\geq \frac{C_{r_1}}{2\log(n)} \norm{Z}_{\F}^2 \label{eq:PTperpZ-F-bound}\\
    \norm{P_{T^{\perp}}(Z) - P_{T^{*\perp}}(Z)}_{*} &\lesssim \frac{1}{r^{0.5}\log^{2.5}(n)}\norm{Z}_{\F}. \label{eq:PTPZ-PTstarZ-bound}
\end{align}
For $A_1$, this implies 
\begin{align*}
    A_1 
    &= \frac{\norm{\PT{Z}}_{\F}}{\norm{\PTp{Z}}_{\F}} \norm{\PT{R}}_{\F}\\
    &\leq  \frac{\norm{\PT{Z}}_{\F}}{\norm{Z}_{\F}} \frac{2\log(n)}{C_{r_1}} \norm{\PT{R}}_{\F}\\
    &\leq \frac{2\log(n)}{C_{r_1}} \norm{\PT{R}}_{\F}\\
    &\overset{(i)}{\lesssim} \frac{1}{\log^{2}(n)}
\end{align*}
where (i) is due to $ \norm{\PT{R}}_{\F} \lesssim \frac{\kappa}{\lambda \sqrt{\sigma_{\min}}} \norm{\nabla f(X, Y; \tau)}.$

For $A_2$, with probability $1-O(1/\text{poly}(n))$, we have
\begin{align*}
    |\inner{\PTp{Z}}{E}| 
    &\leq |\inner{\PTp{Z}-P_{T^{*\perp}}(Z)}{E}| + |\inner{P_{T^{*\perp}}(Z)}{E}|\\
    &\leq  \norm{P_{T^{\perp}}(Z) - P_{T^{*\perp}}(Z)}_{*} \norm{E} + |\inner{P_{T^{*\perp}}(Z)}{E}|\\
    &\overset{(i)}{\lesssim} \frac{\norm{Z}_{\F}}{r^{0.5}\log^{2.5}(n)} \sigma \sqrt{n} + |\inner{P_{T^{*\perp}}(Z)}{E}|\\
    &\overset{(ii)}{\lesssim}\frac{\norm{Z}_{\F}}{r^{0.5}\log^{2.5}(n)} \sigma \sqrt{n} + \sigma \sqrt{n} \norm{Z}_{\F}.
\end{align*}
Here, (i) is due to \cref{eq:PTPZ-PTstarZ-bound} and $\norm{E} \lesssim \sigma \sqrt{n}$, and (ii) is due to $|\inner{P_{T^{*\perp}}(Z)}{E}| \lesssim \sigma \sqrt{n/\log(n)} \norm{Z}_{\F}$. This implies
\begin{align*}
    A_2
    &= \frac{|\inner{\PTp{Z}}{E}|}{\lambda\norm{\PTp{Z}}_{\F}}\\
    &\lesssim  \frac{\sigma \sqrt{n} \norm{Z}_{\F}}{\sigma \sqrt{n}\log^{2}(n) \norm{\PTp{Z}}_{\F}}\\
    &\overset{(i)}{\lesssim} \frac{\norm{Z}_{\F}}{\log^{1.5}(n)\norm{Z}_{\F}}\\
    &\lesssim \frac{1}{\log^{1.5}(n)}.
\end{align*}
Here, (i) is due to \cref{eq:PTperpZ-F-bound}.

For $A_3$, we have
\begin{align*}
    A_3
    &= \frac{1}{\lambda}\norm{\PTp{(X-X^{*})(Y-Y^{*})^{\top}}}_{\F} \\
    &\leq \frac{1}{\lambda}\norm{X-X^{*}}_{\F}\norm{Y-Y^{*}}_{\F} \overset{(i)}{\lesssim} \frac{1}{\log^{1.5}(n)}.
\end{align*}
Here, (i) is due to \cref{eq:XXstarYYstar-bound}

Combining all the results for $A_0, A_1, A_2, A_3$, we have
\begin{align*}
    \frac{|\tau - \tau^{*}|}{\lambda} \norm{\PTp{Z}} 
    &\leq A_0 + A_1 + A_2 + A_3\\
    &\leq 1 - \frac{C_{r_2}}{2\log(n)} + \frac{c_1}{\log^2(n)} + \frac{c_2}{\log^{1.5}(n)} + \frac{c_3}{\log^{1.5}(n)}\\
    &\leq 1 - \frac{C}{\log(n)}
\end{align*}
for some constants $c_1, c_2, c_3, C$. This completes the proof of the claim. 
\end{proof}

\subsection{Proof of Lemma \ref{lem:approximation-nonconvex-convex}}\label{appendix:approximation-nonconvex-convex}

\begin{proof}[Proof of \cref{lem:approximation-nonconvex-convex}]
Based on \cref{lem:R-approximate-W}, recall that for $ R = \frac{1}{\lambda}(O - XY^{\top} - \tau Z - \lambda UV^{\top})$, we have $\norm{P_{T}(R)}_{\F} \lesssim \frac{\kappa}{\lambda \sqrt{\sigma_{\min}}} \norm{\nabla f}_{\F}$ and $\norm{\PTp{R}} \leq 1 - C/\log(n).$

Let $(M_{\cvx}, \tau_{\cvx})$ be one of the minimizers of $g$. Then we have
\begin{align}\label{eq:optimizer-property}
\frac{1}{2} \norm{O - M_{\cvx} - \tau(M_{\cvx}) Z}_{\F}^2 + \lambda \norm{M_{\cvx}}_{*} \leq \frac{1}{2} \norm{O - XY^{\top} - \tau(XY^{\top})Z}_{\F}^2 + \lambda \norm{XY^{\top}}_{*}
\end{align}
where $\tau(M) := \frac{\inner{Z}{O - M}}{\norm{Z}_{\F}^2}.$

Let $\Delta = M_{\cvx} - XY^{\top}.$ Let $\Delta' = \tau(M_{\cvx})Z - \tau(XY^{\top})Z.$ Then \cref{eq:optimizer-property} implies
\begin{align}
&\frac{1}{2} \norm{O - XY^{\top} - \tau(XY^{\top})Z - \Delta - \Delta'}_{\F}^2 + \lambda \norm{M_{\cvx}}_{*} \nonumber\\
&\leq \frac{1}{2} \norm{O - XY^{\top} - \tau(XY^{\top})Z}_{\F}^2 + \lambda \norm{XY^{\top}}_{*}. \nonumber
\end{align}
Hence,
\begin{align}
\frac{1}{2} \norm{\Delta + \Delta'}_{\F}^2 \leq \inner{O-XY^{\top}-\tau(XY^{\top})Z}{\Delta+\Delta'} - \lambda \norm{M_{\cvx}}_{*} + \lambda \norm{XY^{\top}}_{*}.\nonumber
\end{align}

Note that $\tau(XY^{\top}) = \inner{Z}{O-XY^{\top}}/\inner{Z}{Z}$, then $\inner{O-XY^{\top}-\tau(XY^{\top})Z}{Z} = 0.$ Hence
\begin{align}
\inner{O-XY^{\top}-\tau(XY^{\top})Z}{\Delta'} = 0\nonumber.
\end{align}
This implies that
\begin{align}
0 \leq \frac{1}{2} \norm{\Delta + \Delta'}_{\F}^2 \leq \inner{O-XY^{\top}-\tau(XY^{\top})Z}{\Delta} - \lambda \norm{M_{\cvx}}_{*} + \lambda \norm{XY^{\top}}_{*}.\nonumber
\end{align}

Note that $\norm{\cdot}_{*}$ is convex and $UV^{\top}+W$ is the subgradient of $\norm{XY^{\top}}_{*}$ given that $W \in T^{\perp}$ and $\norm{W} \leq 1.$
This implies that $\norm{M_{\cvx}}_{*} \geq \norm{XY^{\top}}_{*} + \inner{UV^{\top}+W}{\Delta}$ due to the property of the convex function. Choose $W$ such that $\inner{W}{\Delta} = \norm{P_{T^{\perp}}(\Delta)}_{*}$, we have
\begin{align}
 \frac{1}{2}\norm{\Delta + \Delta'}_{\F}^2  \nonumber
 &\leq \inner{O-XY^{\top}-\tau(XY^{\top})Z}{\Delta} - \lambda \inner{UV^{\top}}{\Delta} - \lambda \norm{\PTp{\Delta}}_{*} \nonumber\\
 &= \lambda \inner{R}{\Delta} -   \lambda \norm{\PTp{\Delta}}_{*}\nonumber\\
 &= \lambda \inner{\PT{R}}{\PT{\Delta}} + \lambda \inner{\PTp{R}}{\PTp{\Delta}} - \lambda \norm{\PTp{\Delta}}_{*}\nonumber\\
 &\leq \lambda \norm{\PT{R}}_{\F} \norm{\PT{\Delta}}_{\F} + \lambda \norm{\PTp{R}} \norm{\PTp{\Delta}}_{*} -  \lambda \norm{\PTp{\Delta}}_{*}\nonumber\\
 &\overset{(i)}{\leq}  \lambda \norm{\PT{R}}_{\F} \norm{\PT{\Delta}}_{\F} - \frac{C\lambda}{\log(n)} \norm{\PTp{\Delta}}_{*} \label{eq:add-inequality}.
\end{align}
Here, (i) is due to $\norm{\PTp{R}} \leq 1 - C/\log(n).$ This implies that
\begin{align}
\norm{\PTp{\Delta}}_{*} \leq \norm{\PT{R}}_{\F} \norm{\PT{\Delta}}_{\F} \frac{\log(n)}{C} \lesssim \frac{\kappa \log(n)}{\lambda \sqrt{\sigma_{\min}}} \norm{\nabla f}_{\F}\norm{\PT{\Delta}}_{\F}. \label{eq:PTP-Delta-bound}
\end{align}

Note that $\Delta' = - \inner{\Delta}{Z} / \inner{Z}{Z}\cdot Z$. Investigate further on $\norm{\Delta+\Delta'}_{\F}^2$:
\begin{align}
\norm{\Delta+\Delta'}_{\F}^2 
&= \inner{\Delta}{\Delta} - 2 \inner{\Delta}{Z}^2 / \inner{Z}{Z} + \inner{\Delta}{Z}^2 / \inner{Z}{Z}\nonumber \\
&= \inner{\Delta}{\Delta} - \inner{\Delta}{Z}^2 / \inner{Z}{Z}\nonumber\\
&= \norm{\Delta}_{\F}^2 - (\inner{\PT{\Delta}}{\PT{Z}}+  \inner{\PTp{\Delta}}{\PTp{Z}})^2 / \norm{Z}_{\F}^2\nonumber\\
&\geq \norm{\Delta}_{\F}^2 - (\norm{\PT{\Delta}}_{\F} \norm{\PT{Z}}_{\F} + \norm{\PTp{\Delta}}_{*} \norm{\PTp{Z}} )^2 / \norm{Z}_{\F}^2\nonumber\\
&\overset{(i)}{\geq}  \norm{\Delta}_{\F}^2 - \frac{\norm{\PT{\Delta}}_{\F}^2}{\norm{Z}_{\F}^2} \cdot \left(\norm{\PT{Z}}_{\F} + \frac{C'\kappa\log(n)}{\lambda \sqrt{\sigma_{\min}}}  \norm{\nabla f}_{\F}\norm{\PTp{Z}}\right)^2 \nonumber\\
&\overset{(ii)}{\geq} \norm{\Delta}_{\F}^2 \cdot \left(1 - \frac{1}{\norm{Z}_{\F}^2} \cdot \left(\norm{\PT{Z}}_{\F} + \frac{C'\kappa \log(n)}{\lambda \sqrt{\sigma_{\min}}}  \norm{\nabla f}_{\F}\norm{\PTp{Z}}_{\F}\right)^2\right)\nonumber
\end{align}
for some constant $C'$. Here, (i) is due to \cref{eq:PTP-Delta-bound} and (ii) is due to $\norm{\PT{\Delta}}_{\F} \leq \norm{\Delta}_{\F}.$

Note that $\norm{\PT{Z}}_{\F}^2 / \norm{Z}_{\F}^2 \leq 1-\frac{C''}{\log(n)}$ for some constant $C''$ (by \cref{lem:general-conditions-small-ball}), and $ \norm{\nabla f}_{\F}$ is small enough. This implies 
\begin{align}
\norm{\Delta+\Delta'}_{\F}^2  \gtrsim \norm{\Delta}_{\F}^2 \frac{1}{\log(n)}.\nonumber
\end{align}

Combining this with \cref{eq:add-inequality}, we have
\begin{align}
\frac{1}{\log(n)}\norm{\Delta}_{\F}^2 \lesssim \lambda \norm{\PT{R}}_{\F} \norm{\PT{\Delta}}_{\F}  \overset{(i)}{\lesssim} \frac{\kappa}{\sqrt{\sigma_{\min}}} \norm{\nabla f}_{\F}\norm{\Delta}_{\F}.\nonumber
\end{align}
Here, (i) is due to the bound for $\norm{\PT{R}}_{\F}$ and $\norm{\PT{\Delta}}_{\F} \leq \norm{\Delta}_{\F}.$ This implies 
\begin{align*}
\norm{\Delta}_{\F} \lesssim \frac{\kappa \log(n)}{\sqrt{\sigma_{\min}}} \norm{\nabla f}_{\F}.
\end{align*}
Furthermore, 
\begin{align*}
    |\tau(XY^{\top}) - \tau(M_{\cvx})| 
    &= \left|\inner{Z}{M_{\cvx} - XY^{\top}}\right| / \norm{Z}_{\F}^2\\
    &\leq \norm{Z}_{\F} \norm{\Delta}_{\F} / \norm{Z}_{\F}^2\\
    &\leq \norm{\Delta}_{\F} / \norm{Z}_{\F}\\
    &\leq \frac{\kappa \log(n)}{\sqrt{\sigma_{\min}}\norm{Z}_{\F}} \norm{\nabla f}_{\F}.
\end{align*}

Let $M_{\cvx} = U_{\cvx}\Sigma_{\cvx}V_{\cvx}^{\top}$ be SVD of $M_{\cvx}$. Let $T_{\cvx}$ be the tangent space of $M_{\cvx}.$ Next, we analyze the relation between $T_{\cvx}$ and $T.$
We aim to show (i) $\text{rank}(M_{\cvx}) = r$ based on the fact that $M_{\cvx}$ is obtained by soft-thresholding; (ii) establish the closeness between eigen-subspaces $U_{\cvx}$ ($V_{\cvx}$) and $U$ ($V$) by invoking the Davis-Kahan theorem; (iii) establish the closeness between $T_{\cvx}$ and $T$ based on the closeness between eigen-subspaces.

We first aim to show that $\text{rank}(M_{\cvx}) = r.$ Note that
\begin{align*}
    O - M_{\cvx} - \tau_{\cvx} Z = \lambda U_{\cvx} V_{\cvx}^{\top} + W_{\cvx} 
\end{align*}
where $\norm{W_{\cvx}} \leq 1, P_{T_{\cvx}}(W_{\cvx}) = 0.$ Let $\sigma_i(A)$ be the $i$-th largest singular values of $A.$  Note that there is $O - \tau Z = U(\lambda I_{r} + \Sigma) V^{\top} + \lambda R.$ By Weyl's inequality, for $1\leq i \leq r$
\begin{align*}
    \sigma_i(O - \tau Z) 
    &\geq \sigma_i(U(\lambda I_{r} + \Sigma) V^{\top}) - \lambda\norm{R}\\
    &\geq \frac{\sigma_{\min}}{2} + \lambda - \lambda (\norm{\PTp{R}} + \norm{\PT{R}})\\
    &\overset{(i)}{\geq} \frac{\sigma_{\min}}{2} \\
    &\overset{(ii)}{>} 2\lambda. 
\end{align*}
Here, (i) is due to the bound of $\norm{\PTp{R}}$ and $\norm{\PT{R}} \leq \norm{\PT{R}}_{\F}$, (ii) is due to $\sSNR.$ Then, for $1\leq i \leq r$, 
\begin{align*}
    \sigma_i(O - \tau_{\cvx} Z) 
    &\geq \sigma_i(O - \tau Z) - |\tau_{\cvx} - \tau|\norm{Z}\\
    &\geq \sigma_i(O - \tau Z) - \frac{C \kappa \log(n)}{\sqrt{\sigma_{\min}}\norm{Z}_{\F}} \norm{\nabla f}_{\F} \norm{Z}\\
    &\geq \sigma_i(O - \tau Z) - \frac{C \kappa \log(n)}{\sqrt{\sigma_{\min}}} \norm{\nabla f}_{\F}\\
    &\geq 2\lambda - \frac{\lambda \log(n)}{n^3}\\
    &>\lambda
\end{align*}
providing that $\norm{\nabla f} \leq \frac{\lambda \sqrt{\sigma_{\min}}}{\kappa n}$ and $ \log(n) \lesssim n.$

Meanwhile, for $r+1\leq i \leq n$, we have
\begin{align*}
    \sigma_i(O - \tau Z) 
    &\leq \sigma_i(U(\lambda I_{r} + \Sigma) V^{\top}) + \lambda \norm{R}\\
    &\leq \lambda \norm{R} \\
    &\leq (1 - C/\log(n)) \lambda. 
\end{align*}
Then for $r+1\leq i \leq n$, we have
\begin{align*}
    \sigma_i(O - \tau_{\cvx}Z) 
    &\leq \sigma_i(O - \tau Z) + |\tau_{\cvx} - \tau|\norm{Z}\\
    &\leq (1 - C/\log(n)) \lambda + \frac{\lambda \kappa \log(n)}{n^2}\\
    &< \lambda. 
\end{align*}
Note by soft-thresholding, ($M_{\cvx}$ is obtained by truncating the singular values of $\left(O - \tau_{\cvx} Z\right)$ by $\lambda$, see \cite{mazumder2010spectral}), we have that $\mathrm{\rank}(M_{\cvx}) = r.$ Next, by Davis-Kahan's theorem \cite{yu2015useful}, there exist rotation matrices $R_1, R_2 \in \R^{r\times r}$, where
\begin{align*}
    \max(\norm{UR_1 - U_{\cvx}}_{\F}, \norm{VR_2 - V_{\cvx}}_{\F}) 
    &\lesssim \frac{\norm{M_{\cvx} - M}_{\F}}{\sigma_r(M)}\\
    &\lesssim \frac{\kappa \log(n)}{\sigma_{\min}^{1.5}} \norm{\nabla f}_{\F}.
\end{align*}
This simply implies that $\norm{UU^{\top} - U_{\cvx}U_{\cvx}^{\top}}_{\F} \lesssim \norm{UR_1-U_{\cvx}}_{\F}\norm{U} \lesssim \frac{\kappa \log(n)}{\sigma_{\min}^{1.5}} \norm{\nabla f}_{\F}.$ Similarly, $\norm{VV^{\top} - V_{\cvx}V_{\cvx}^{\top}}_{\F} \lesssim \frac{\kappa \log(n)}{\sigma_{\min}^{1.5}} \norm{\nabla f}_{\F}.$ 

Then, for any matrix $A \in \R^{n\times n}$,
\begin{align*}
    &\norm{P_{T^{\perp}}(A) - P_{T^{\perp}_{\cvx}}(A)}_{\F}  \\
    &= \norm{(I-UU^{\top})A(I-VV^{\top}) - (I-U_{\cvx}U_{\cvx}^{\top})A(I-V_{\cvx}V_{\cvx}^{\top})}_{\F}\\
    &\leq \norm{(UU^{\top}-U_{\cvx}U_{\cvx}^{\top})A(I-VV^{\top})}_{\F} +  \norm{(I-U_{\cvx}U_{\cvx}^{\top})A(VV^{\top} - V_{\cvx}V_{\cvx}^{\top})}_{\F}\\
    &\lesssim \norm{A}\max\left(\norm{UU^{\top}-U_{\cvx}U_{\cvx}^{\top}}_{\F}, \norm{VV^{\top}-V_{\cvx}V_{\cvx}^{\top}}_{\F}\right)\\
    &\lesssim \norm{A} \frac{\kappa \log(n)}{\sigma_{\min}^{1.5}} \norm{\nabla f}_{\F}.
\end{align*}
Furthermore, let $\Delta_{U} = U - U_{\cvx}R_1^{\top}, \Delta_{V} = V - V_{\cvx}R_2^{\top}$. Then $U\Sigma V^{\top} = U\Sigma R_2V_{\cvx}^{\top} + U_{\cvx}R_1^{\top}\Sigma \Delta_{V}^{\top} + \Delta_{U}\Sigma\Delta_{V}^{\top}.$ We then have
\begin{align*}
&\norm{P_{T_{\cvx}^{\perp}}(XY^{\top})}_{\F} \\
&= \norm{(I-U_{\cvx}U_{\cvx}^{\top}) (U\Sigma V^{\top}) (I - V_{\cvx}V_{\cvx}^{\top})}_{\F}\\
&=  \norm{(I-U_{\cvx}U_{\cvx}^{\top}) (U\Sigma R_2V_{\cvx}^{\top} + U_{\cvx}R_1^{\top}\Sigma \Delta_{V}^{\top} + \Delta_{U}\Sigma\Delta_{V}^{\top}) (I - V_{\cvx}V_{\cvx}^{\top})}_{\F}\\
&=  \norm{(I-U_{\cvx}U_{\cvx}^{\top}) (\Delta_{U}\Sigma\Delta_{V}^{\top}) (I - V_{\cvx}V_{\cvx}^{\top})}_{\F}\\
&\leq \norm{\Delta_{U}\Sigma\Delta_{V}^{\top}}_{\F}\\
&\leq \norm{\Delta_{U}}_{\F} \norm{\Delta_{V}}_{\F} \norm{\Sigma}\\
&\lesssim \frac{\kappa^{3}\log^{2}(n)}{\sigma_{\min}^2} \norm{\nabla f}_{\F}^2.
\end{align*}

This finishes the proof.

\end{proof}

\section{Proof of Theorem \ref{thm:main-theorem}}\label{sec:full-details-proof-main}
In this section, we present the proof of an extended version of \cref{thm:main-theorem}, which is stated below.
\begin{theorem}\label{thm:extended-main-theorem}
Assume \cref{assum:conditions-Z} hold. Suppose $O = M^{*} + \tau^{*}Z + E$  where $E_{ij}$ are independent sub-Gaussian random variables with $\|E_{ij}\|_{\psi_2} \leq \sigma.$ Suppose \[\kappa^{4}\mu^2 r^2 \log^2(n) \leq C_1 n \;\;\;\;\; \text{and} \;\;\;\;\; \frac{\sigma}{\sigma_{\min}} \sqrt{n} \leq C_2\frac{1}{\kappa^3 \mu r^{4.5}\log^{5}(n)}.\] 
Let $(\hat{M}, \hat{\tau})$ be any minimizer of \cref{eq:convex-program} with $\lambda = C_{\lambda} \sigma \sqrt{n} \log^{1.5}(n)$, and let $\tau^{d}$ be defined as in \cref{eq:construct-taud} based on $\hat{M}, \hat{\tau}$. Then for any $C_3 > 0$, for sufficiently large $n$, with probability $1-O(n^{-C_3})$, we have 
\begin{align}
    \tau^{d} - \tau^{*} = \frac{\inner{P_{T^{*\perp}}(Z)}{E}}{\norm{P_{T^{*\perp}}(Z)}_{\F}^2} + \delta, \label{eq:taud-characterization}
\end{align}
where %$P_{T^{*\perp}}(Z) = (I-U^{*}U^{*\top})Z(I-V^{*}V^{*\top})$ and 
\[|\delta| \leq C_{\delta}\left(\frac{\sigma^2 \mu^2 r^{6} \kappa^{3} \log^{8}(n)}{\sigma_{\min}} + \frac{\sqrt{n}}{\norm{Z}_{\F}} \frac{r\kappa \sigma^2 \log^{4}(n)}{\sigma_{\min}}\right).\]
%
%In addition, with probability $1-O(n^{-C_3})$, we have the following guarantees for $\hat{\tau}$ and $\hat{M}$:
%\begin{align*}
%    |\hat{\tau} - \tau^{*}| &\leq C_{\tau}\left( \frac{\sigma \mu r^{2.5} \kappa \log^{3.5} n}{\sqrt{n}} + \frac{\sigma \log^{1.5} n}{\norm{Z}_{\F}}\right),\\
%    \norm{\hat{M} - M^{*}}_{\infty} &\leq C_{\infty} \frac{(\sigma \mu^{1.5} r^{3.5} \kappa^2 \log^{3.5}(n))}{\sqrt{n}}.
%\end{align*}
Here, $C_{\lambda}, C_{\delta}, C_{\tau}, C_{\infty}$ are constants depending (polynomially) on $C_1, C_2, C_3, C_{r_1}, C_{r_2}$ (where $C_{r_1}$ and $C_{r_2}$ are the constants in Assumptions \ref{cond:Z-condition-nonconvex} and \ref{cond:Z-condition-convex}).
\end{theorem}

Given \cref{thm:non-convex-theorem,lem:tau-decomposition,lem:connection}, it is easy to finish the proof of \cref{thm:extended-main-theorem}. Based on \cref{thm:non-convex-theorem}, there exists a $0\leq t < t^{\star}$, such that
\begin{align*}
\norm{F^{t}H^{t} - F^{*}}_{\F}&\leq C_{\F} \error \norm{F^{*}}_{\F} \\
    %\norm{F^{t+1}H^{t+1} - F^{*}} &\leq C_{\op} \error \norm{F^{*}} \label{eq:FtHt-Fstar-opnorm}\\
   \norm{F^{t}H^{t} - F^{*}}_{2,\infty} &\leq C_{\infty}\frac{\sigma \mu r^{2.5} \kappa \log^{3.5}(n)}{\sigma_{\min}}\norm{F^{*}}_{\F} \\
    |\tau^{t} - \tau^{*}| &\leq C_{\tau} \left( \frac{\sigma \mu r^{2.5} \kappa \log^{3.5} n}{\sqrt{n}} + \frac{\sigma \log^{1.5} n}{\norm{Z}_{\F}}\right)\\
    \norm{\nabla f(X^{t}, Y^{t}; \tau^{t})}_{\F} &\leq \frac{\lambda \sqrt{\sigma_{\min}}}{n^{10}}.
\end{align*}
Furthermore, 
\begin{align*}
    \norm{P_{T_t^{\perp}}(M^{*})}_{\infty} &\leq  C_{T,1}\frac{\sigma^2 \mu^2 r^{6} \kappa^{3} \log^{7}(n)}{\sigma_{\min}}\\
    \norm{P_{T_t^{\perp}}(E) - P_{T^{*\perp}}(E)}_{\infty} &\leq C_{T,2}\frac{\sigma^2 r^{3.5} \mu^{1.5} \kappa^{2} \log^{4}(n)}{\sigma_{\min}}\\
    \norm{P_{T_t^{\perp}}(Z)-P_{T^{*\perp}}(Z)}_{\F} &\leq C_{T,3}\frac{r\kappa \sigma \sqrt{n}\log^{2.5}(n)}{\sigma_{\min}} \norm{Z}.
   % \norm{P_{T_t^{\perp}}(Z) - P_{T^{*\perp}}(Z)}_{\infty} &\leq C_{T,4}  C_{T,4}\frac{r^{3.5} \kappa^{1.5} \sigma \mu^{1.5} \log^{3.5}(n)}{\sigma_{\min}} \left(\norm{Z}_{2,\infty}+\norm{Z^{\top}}_{2,\infty}\right).
\end{align*}

Let $X = X^{t}H^{t}, Y = Y^{t}H^{t}, \tau = \tau^{t}, T = T^{t}.$ By \cref{lem:connection}, for any minimizer $(\hat{M}, \hat{\tau})$ of the convex program, we have
\begin{align*}
    \norm{XY^{\top} - \hat{M}}_{\F} &\lesssim  \frac{\lambda \kappa \log(n)}{n^{10}}\\
    |\tau - \hat{\tau}| &\lesssim \frac{\lambda \kappa \log(n)}{n^{10}\norm{Z}_{\F}}.
\end{align*}
Furthermore, let $\hat{T}$ be the tangent space of $\hat{M}$, for any $A \in \R^{n\times n}$, we have
\begin{align*}
        \norm{P_{\hat{T}^{\perp}}(A) - P_{T^{\perp}}(A)}_{\F} &\lesssim \norm{A} \frac{\lambda \kappa^2 \log(n)}{\sigma_{\min} n^{10}}\\
        \norm{P_{\hat{T}^{\perp}}(XY^{\top})}_{\F}  &\lesssim \frac{\sigma^2\kappa^{5}\log^{5}(n)}{\sigma_{\min}} \frac{1}{n^{19}}.
\end{align*}

Given all the information, we aim to provide a bound for $\tau^{d} := \hat{\tau} - \frac{\inner{Z}{\hat{U}\hat{V}^{\top}}}{\norm{P_{\hat{T}^{\perp}}(Z)}_{\F}^2}$ where $\hat{M} = \hat{U}\hat{\Sigma}\hat{V}^{\top}$ is the SVD of $\hat{M}.$ 

By the triangle inequality again, we can obtain the bounds for $\hat{T}$ from $T$. We have
%\frac{\sigma^2 \mu^2 r^{6} \kappa^{3} \log^{7}(n)}{\sigma_{\min}}
\begin{align}
    &\norm{P_{\hat{T}^{\perp}}(M^{*})}_{\infty} \nonumber\\
    &\leq \norm{P_{\hat{T}^{\perp}}(M^{*})-P_{T^{\perp}}(M^{*})}_{\infty} + \norm{P_{T^{\perp}}(M^{*})}_{\infty} \nonumber\\
    &\overset{(i)}{\leq} \norm{P_{\hat{T}^{\perp}}(M^{*}-XY^{\top}+XY^{\top})-P_{T^{\perp}}(M^{*}-XY^{\top})}_{\infty}  + \norm{P_{T^{\perp}}(M^{*})}_{\infty} \nonumber\\
    &\leq  \norm{P_{\hat{T}^{\perp}}(M^{*}-XY^{\top}) -P_{T^{\perp}}(M^{*}-XY^{\top})}_{\F} + \norm{P_{\hat{T}^{\perp}}(XY^{\top})}_{\F} + \norm{P_{T^{\perp}}(M^{*})}_{\infty}\nonumber\\
    &\overset{(ii)}{\lesssim} \norm{M^{*}-XY^{\top}} \frac{\lambda \kappa^2 \log(n)}{\sigma_{\min} n^{10}} + \frac{\sigma^2\kappa^{5}\log^{5}(n)}{\sigma_{\min}} \frac{1}{n^{19}} + \frac{\sigma^2 \mu^2 r^{6} \kappa^{3} \log^{7}(n)}{\sigma_{\min}} \nonumber\\
    &\overset{(iii)}{\lesssim}  \sigma\kappa r^{0.5} \sqrt{n}\log^{2.5}(n) \frac{\sigma\sqrt{n} \kappa^2 \log^{2.5}(n)}{\sigma_{\min} n^{10}}  + \frac{\sigma^2 \mu^2 r^{6} \kappa^{3} \log^{7}(n)}{\sigma_{\min}} \nonumber\\
    &\lesssim \frac{\sigma^2 \mu^2 r^{6} \kappa^{3} \log^{7}(n)}{\sigma_{\min}}.\label{eq:delta-1}
\end{align}
Here, (i) is due to $P_{T}^{\perp}(XY^{\top}) = 0$, (ii) is due to $\norm{P_{\hat{T}^{\perp}}(A) - P_{T^{\perp}}(A)}_{\F} \lesssim \norm{A} \frac{\lambda \kappa^2 \log(n)}{\sigma_{\min} n^{10}}$ and $\norm{P_{\hat{T}^{\perp}}(XY^{\top})}_{\F} \lesssim \frac{\sigma^2\kappa^{5}\log^{5}(n)}{\sigma_{\min}} \frac{1}{n^{19}}$, and (iii) is due to $\kappa^{4}\mu^2 r^2 \log^2(n) \lesssim n.$ 

Similarly, one can obtain
\begin{align}
\norm{P_{\hat{T}^{\perp}}(E) - P_{T^{*\perp}}(E)}_{\infty} 
&\leq  \norm{P_{\hat{T}^{\perp}}(E) - P_{T^{\perp}}(E)}_{\infty} + \norm{P_{T^{\perp}}(E) - P_{T^{*\perp}}(E)}_{\infty} \nonumber\\
&\lesssim \norm{P_{\hat{T}^{\perp}}(E) - P_{T^{\perp}}(E)}_{\F} + \frac{\sigma^2 r^{3.5} \mu^{1.5} \kappa^{1.5} \log^{4}(n)}{\sigma_{\min}} \nonumber\\
&\lesssim \frac{\sigma^2 r^{3.5} \mu^{1.5} \kappa^{1.5} \log^{4}(n)}{\sigma_{\min}}.\label{eq:delta-2}
\end{align}
Also, 
\begin{align}
 \norm{P_{\hat{T}^{\perp}}(Z)-P_{T^{*\perp}}(Z)}_{\F}
 &\leq \norm{P_{\hat{T}^{\perp}}(Z)-P_{T^{\perp}}(Z)}_{\F} + \norm{P_{T^{\perp}}(Z)-P_{T^{*\perp}}(Z)}_{\F} \nonumber\\
 &\lesssim  \norm{P_{\hat{T}^{\perp}}(Z)-P_{T^{\perp}}(Z)}_{\F} + \frac{r\kappa \sigma \sqrt{n}\log^{2.5}(n)}{\sigma_{\min}} \norm{Z} \nonumber\\
 &\lesssim \frac{r\kappa \sigma \sqrt{n}\log^{2.5}(n)}{\sigma_{\min}} \norm{Z}_{\F}. \label{eq:delta-3}
\end{align}

Recall by \cref{lem:tau-decomposition}, we have
\begin{align*}
    \tau^{d} - \tau^{*} 
&    = \frac{\inner{P_{\hat{T}^{\perp}}(Z)}{E}}{\norm{P_{\hat{T}^{\perp}}(Z)}_{\F}^2} +\frac{\inner{Z}{P_{\hat{T}^{\perp}}(M^{*})}}{\norm{P_{\hat{T}^{\perp}}(Z)}_{\F}^2} \\
& \overset{(i)}{=} \frac{\inner{P_{T^{*\perp}}(Z)}{E}}{\norm{P_{\hat{T}^{\perp}}(Z)}_{\F}^2} + \frac{\inner{Z}{P_{\hat{T}^{\perp}}(E)-P_{T^{*\perp}}(E)} + \inner{Z}{P_{\hat{T}^{\perp}}(M^{*})}}{\norm{P_{\hat{T}^{\perp}}(Z)}_{\F}^2}
\end{align*}
where in (i) we use the fact that for any $A, B \in \R^{n\times n}$, 
$$\inner{A}{P_{T^{*\perp}}(B)} = \inner{P_{T^{*\perp}}(A)}{P_{T^{*\perp}}(B)} = \inner{P_{T^{*\perp}}(A)}{B}.
$$ 
Let 
$$
\delta = \tau^{d} - \tau^{*} - \frac{\inner{P_{T^{*\perp}}(Z)}{E}}{\norm{P_{T^{*}}(Z)}_{\F}^2}.
$$

The following claim provides the bounds for $\delta$, which completes the proof of \cref{thm:main-theorem}.
\begin{claim}\label{claim:delta-bound-claim}
Providing \cref{eq:delta-1,eq:delta-2,eq:delta-3}, we have
\begin{align*}
    |\delta| \lesssim \left(\frac{\sigma^2 \mu^2 r^{6} \kappa^{3} \log^{8}(n)}{\sigma_{\min}} + \frac{\sqrt{n}}{\norm{Z}_{\F}} \frac{r\kappa \sigma^2 \log^{4}(n)}{\sigma_{\min}}\right).
\end{align*}
\end{claim}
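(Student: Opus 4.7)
The plan is to split $\delta$ into three pieces that isolate (i) the error from using the estimated denominator $\|P_{\hat T^{\perp}}(Z)\|_{\F}^2$ instead of the ``correct'' one $\|P_{T^{*\perp}}(Z)\|_{\F}^2$; (ii) the inner product of $Z$ with the residual $P_{\hat T^{\perp}}(E)-P_{T^{*\perp}}(E)$; and (iii) the bias term $\inner{Z}{P_{\hat T^{\perp}}(M^*)}$. Concretely, from the identity for $\tau^d-\tau^*$ displayed just above the claim, I would write
\[
\delta = \inner{P_{T^{*\perp}}(Z)}{E}\Big(\tfrac{1}{\norm{P_{\hat T^{\perp}}(Z)}_{\F}^2}-\tfrac{1}{\norm{P_{T^{*\perp}}(Z)}_{\F}^2}\Big) + \tfrac{\inner{Z}{P_{\hat T^{\perp}}(E)-P_{T^{*\perp}}(E)}}{\norm{P_{\hat T^{\perp}}(Z)}_{\F}^2} + \tfrac{\inner{Z}{P_{\hat T^{\perp}}(M^*)}}{\norm{P_{\hat T^{\perp}}(Z)}_{\F}^2},
\]
call these $\delta_1,\delta_2,\delta_3$, and bound each separately. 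Throughout I use that $\norm{P_{T^{*\perp}}(Z)}_{\F}^2 \gtrsim \norm{Z}_{\F}^2/\log n$ (this is Assumption~3(a) together with the small-ball lemma in \cref{appendix:lemma-R-approximate-W}), and that by \cref{eq:delta-3} the same bound transfers to $\norm{P_{\hat T^{\perp}}(Z)}_{\F}^2$ up to constants once $n$ is large enough.

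For $\delta_2$ and $\delta_3$ the key observation is that $Z$ is binary, so for any matrix $A$ one has $|\inner{Z}{A}| \leq \norm{Z}_{\F}^2 \cdot \norm{A}_{\infty}$ (since $\norm{Z}_{1} = \norm{Z}_{\F}^2$). Applying this with \cref{eq:delta-2} controls $|\delta_2|$ by roughly $\log(n)\cdot \frac{\sigma^2 r^{3.5}\mu^{1.5}\kappa^{1.5}\log^{4}(n)}{\sigma_{\min}}$, and applying it with \cref{eq:delta-1} controls $|\delta_3|$ by roughly $\log(n)\cdot\frac{\sigma^2\mu^2 r^{6}\kappa^{3}\log^{7}(n)}{\sigma_{\min}}$. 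Both are absorbed into the first term of the claimed bound.

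For $\delta_1$, I would first apply Hoeffding's inequality to the inner product $\inner{P_{T^{*\perp}}(Z)}{E}$, which is a weighted sum of independent sub-Gaussian entries with deterministic weights $P_{T^{*\perp}}(Z)$, giving $|\inner{P_{T^{*\perp}}(Z)}{E}| \lesssim \sigma\sqrt{\log n}\,\norm{P_{T^{*\perp}}(Z)}_{\F} \lesssim \sigma\sqrt{\log n}\,\norm{Z}_{\F}$ with the desired probability. Then I would bound the difference of reciprocals by $\bigl|\norm{P_{\hat T^{\perp}}(Z)}_{\F}^2-\norm{P_{T^{*\perp}}(Z)}_{\F}^2\bigr|/(\norm{P_{\hat T^{\perp}}(Z)}_{\F}^2\norm{P_{T^{*\perp}}(Z)}_{\F}^2)$, factoring the numerator as $\norm{P_{\hat T^{\perp}}(Z)-P_{T^{*\perp}}(Z)}_{\F}\cdot(\norm{P_{\hat T^{\perp}}(Z)}_{\F}+\norm{P_{T^{*\perp}}(Z)}_{\F})$ and invoking \cref{eq:delta-3}. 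Putting the pieces together yields $|\delta_1|\lesssim \frac{\sqrt{n}}{\norm{Z}_{\F}}\cdot\frac{r\kappa\sigma^2\log^{5}(n)}{\sigma_{\min}}$, which is absorbed into the second term of the claimed bound.

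The only obstacle is really a bookkeeping one: one must be careful not to incur an extra factor of $\norm{Z}_{\F}$ in $\delta_1$ by naively bounding $|\inner{P_{T^{*\perp}}(Z)}{E}|$ by $\norm{E}\cdot\norm{P_{T^{*\perp}}(Z)}_{*}$, hence the appeal to Hoeffding. All the hard work---especially the entrywise control in \cref{eq:delta-1,eq:delta-2} and the Frobenius-norm transfer in \cref{eq:delta-3}---has already been done upstream, so the proof of this claim reduces to the triangle-inequality manipulations above.
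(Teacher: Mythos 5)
Your decomposition into $\delta_1,\delta_2,\delta_3$ is exactly the paper's decomposition into $A_0,A_1,A_2$, and you apply the same three tools (sub-Gaussian concentration for the inner product $\inner{P_{T^{*\perp}}(Z)}{E}$, the binary-$Z$ bound $|\inner{Z}{A}|\leq\norm{Z}_{\F}^2\norm{A}_\infty$ together with \cref{eq:delta-1,eq:delta-2}, and a difference-of-reciprocals argument via \cref{eq:delta-3}), so this is essentially the paper's proof. One piece of bookkeeping is slightly off: by converting $\norm{P_{T^{*\perp}}(Z)}_{\F}$ to $\norm{Z}_{\F}$ early (incurring a $\sqrt{\log n}$ each time via the small-ball lemma) you land on $\log^5(n)$ for $\delta_1$, which does not sit inside the stated $\log^4(n)$; keeping the bound in terms of $\norm{P_{T^{*\perp}}(Z)}_{\F}$ until the final step, as the paper does, avoids the spurious factor.
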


\subsection{Proof of Claim \ref{claim:delta-bound-claim}}
By the definition of $\delta$, we have
\begin{align*}
    |\delta| 
    &= \left|\frac{\inner{P_{T^{*\perp}}(Z)}{E}}{\norm{P_{\hat{T}^{\perp}}(Z)}_{\F}^2} - \frac{\inner{P_{T^{*\perp}}(Z)}{E}}{\norm{P_{T^{*}}(Z)}_{\F}^2} + \frac{\inner{Z}{P_{\hat{T}^{\perp}}(E)-P_{T^{*\perp}}(E)} + \inner{Z}{P_{\hat{T}^{\perp}}(M^{*})}}{\norm{P_{\hat{T}^{\perp}}(Z)}_{\F}^2}\right|\\
    &\leq \underbrace{|\inner{P_{T^{*\perp}}(Z)}{E}|\left|\frac{1}{\norm{P_{\hat{T}^{\perp}}(Z)}_{\F}^2} - \frac{1}{\norm{P_{T^{*\perp}}(Z)}_{\F}^2}\right|}_{A_0}
    + \underbrace{\frac{|\inner{Z}{P_{\hat{T}^{\perp}}(E)-P_{T^{*\perp}}(E)}|}{\norm{P_{\hat{T}^{\perp}}(Z)}_{\F}^2}}_{A_1} \\
    &\quad + \underbrace{\frac{|\inner{Z}{P_{\hat{T}^{\perp}}(M^{*})}|}{\norm{P_{\hat{T}^{\perp}}(Z)}_{\F}^2}}_{A_2}.
\end{align*}

For controlling $A_0$, note that $|\inner{P_{T^{*\perp}}(Z)}{E}| \lesssim \sigma\norm{P_{T^{*\perp}}(Z)}_{\F}\log^{0.5}(n)$ since the sum of independent sub-Gaussian random variables is still sub-Gaussian. We also have the facts that $\log(n)\norm{P_{T^{*\perp}}(Z)}_{\F}^2 \gtrsim \norm{Z}_{\F}^2$ by \cref{lem:general-conditions-small-ball}.
This implies 
$$\norm{P_{\hat{T}^{\perp}}(Z)-P_{T^{*\perp}}(Z)}_{\F} \lesssim \frac{r\kappa \sigma \sqrt{n}\log^{2.5}(n)}{\sigma_{\min}} \norm{Z}_{\F} \lesssim \frac{r\kappa \sigma \sqrt{n}\log^{3}(n)}{\sigma_{\min}} \norm{P_{T^{*\perp}}(Z)}_{\F}.
$$
Then due to $\SNR$, we have 
$$
\frac{1}{2}\norm{P_{T^{*\perp}}(Z)}_{\F} \leq \norm{P_{\hat{T}^{\perp}}(Z)}_{\F} \leq 2\norm{P_{T^{*\perp}}(Z)}_{\F}.
$$ Also, we have
\begin{align*}
    \left|\frac{1}{\norm{P_{\hat{T}^{\perp}}(Z)}_{\F}^2} - \frac{1}{\norm{P_{T^{*\perp}}(Z)}_{\F}^2}\right| 
    &\leq \frac{|\norm{P_{T^{*\perp}}(Z)}_{\F}^2 -\norm{P_{\hat{T}^{\perp}}(Z)}_{\F}^2|}{\norm{P_{T^{*\perp}}(Z)}_{\F}^2\norm{P_{\hat{T}^{\perp}}(Z)}_{\F}^2}\\
    &\lesssim \frac{\norm{P_{\hat{T}^{\perp}}(Z)-P_{T^{*\perp}}(Z)}_{\F}}{\norm{P_{T^{*\perp}}(Z)}_{\F}^3}\\
    &\lesssim \frac{r\kappa \sigma \sqrt{n}\log^{3}(n)}{\sigma_{\min}} \frac{1}{\norm{P_{T^{*\perp}}(Z)}_{\F}^2}.
\end{align*}
Then, we can control $A_0$.
\begin{align*}
    A_0 
    &\lesssim  \sigma\norm{P_{T^{*\perp}}(Z)}_{\F}\log^{0.5}(n) \cdot \frac{r\kappa \sigma \sqrt{n}\log^{3}(n)}{\sigma_{\min}} \frac{1}{\norm{P_{T^{*\perp}}(Z)}_{\F}^2}\\
    &\lesssim \frac{r\kappa \sigma^2 \sqrt{n}\log^{3.5}(n)}{\sigma_{\min}}\frac{1}{\norm{P_{T^{*\perp}}(Z)}_{\F}}\\
    &\lesssim \frac{r\kappa \sigma^2 \sqrt{n}\log^{4}(n)}{\sigma_{\min}}\frac{1}{\norm{Z}_{\F}}.
\end{align*}

Next, we control $A_1$. Note that $|\inner{Z}{A}| = |\sum_{ij}A_{ij}Z_{ij}| \leq \norm{A}_{\infty}\norm{Z}_{\F}^2$ for binary matrix $Z$. We have
\begin{align*}
    A_1 
    &= \frac{|\inner{Z}{P_{\hat{T}^{\perp}}(E)-P_{T^{*\perp}}(E)}|}{\norm{P_{\hat{T}^{\perp}}(Z)}_{\F}^2}\\
    &\leq \frac{\norm{P_{\hat{T}^{\perp}}(E)-P_{T^{*\perp}}(E)}_{\infty} \norm{Z}_{\F}^2}{\norm{P_{\hat{T}^{\perp}}(Z)}_{\F}^2}\\
    &\lesssim \frac{\sigma^2 r^{3.5} \mu^{1.5} \kappa^{1.5} \log^{4}(n)}{\sigma_{\min}} \frac{\norm{Z}_{\F}^2}{\norm{P_{\hat{T}^{*\perp}}(Z)}_{\F}^2}\\
    &\lesssim \frac{\sigma^2 r^{3.5} \mu^{1.5} \kappa^{1.5} \log^{5}(n)}{\sigma_{\min}}.
\end{align*}

Similarly, we can control $A_2$. 
\begin{align*}
    A_2 
    &= \frac{|\inner{Z}{P_{\hat{T}^{\perp}}(M - M^{*})}|}{\norm{P_{\hat{T}^{\perp}}(Z)}_{\F}^2}\\
    &\leq \frac{\norm{P_{\hat{T}^{\perp}}(M - M^{*})}_{\infty}\norm{Z}_{\F}^2}{\norm{P_{\hat{T}^{\perp}}(Z)}_{\F}^2}\\
    &\lesssim \frac{\sigma^2 \mu^2 r^{6} \kappa^{3} \log^{7}(n)}{\sigma_{\min}} \frac{\norm{Z}_{\F}^2}{\norm{P_{\hat{T}^{*\perp}}(Z)}_{\F}^2}\\
    &\lesssim \frac{\sigma^2 \mu^2 r^{6} \kappa^{3} \log^{8}(n)}{\sigma_{\min}}.
\end{align*}
Combining $A_0, A_1, A_2$ together, we have
\begin{align*}
    |\delta|  \lesssim \frac{r\kappa \sigma^2 \sqrt{n}\log^{4}(n)}{\sigma_{\min}}\frac{1}{\norm{Z}_{\F}} + \frac{\sigma^2 \mu^2 r^{6} \kappa^{3} \log^{8}(n)}{\sigma_{\min}}
\end{align*}
which completes the proof.

\section{Recovery of Counterfactuals}\label{section:proof-of-theorem-M}

We also studied the following de-biased estimator $M^{d}$ and derived the entry-wise characterization for $M^{d}-M^{*}$ (where the asymptotical normality follows directly).   
This particular way of de-biasing $\hat{M}$ can be seen from the following lemma. 
\begin{lemma}\label{lem:M-decomposition}
%Suppose $(\tau, M)$ satisfies \cref{eq:convex-condition-tau} and \cref{eq:convex-condition-M}, then
Suppose $(\hat{M}, \hat{\tau})$ is a minimizer of \eqref{eq:convex-program}. Let $\hat{M} = \hat{U}\hat{\Sigma} \hat{V}^{\top}$ be the SVD of $\hat{M}$, and let $\hat{T}$ denote the tangent space of $\hat{M}$. Then,
\begin{align*}
    \hat{M} - M^{*} 
    &= - \lambda \hat{U}\hat{V}^{\top} - \frac{\lambda \inner{Z}{\hat{U}\hat{V}^{\top}}}{\norm{P_{\hat{T}^{\perp}}(Z)}_{\F}^2} P_{\hat{T}}(Z)\\
    &\quad -\left(\frac{\inner{P_{\hat{T}^{\perp}}(Z)}{E}}{\norm{P_{\hat{T}^{\perp}}(Z)}_{\F}^2} 
    +\frac{\inner{Z}{P_{\hat{T}^{\perp}}(M^{*})}}{\norm{P_{\hat{T}^{\perp}}(Z)}_{\F}^2}\right) P_{\hat{T}}(Z)\\
    &\quad - P_{\hat{T}^{\perp}}(M^{*}) + P_{\hat{T}}(E).
\end{align*}
\end{lemma}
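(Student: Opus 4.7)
The plan is to mimic the proof of \cref{lem:tau-decomposition}, but keep the \emph{matrix} version of the first-order condition rather than immediately taking inner products with $Z$. Using $O = M^{*} + E + \tau^{*} Z$, the first-order condition \cref{eq:convex-condition-M} rearranges into
\[
\hat M - M^{*} \;=\; E + (\tau^{*}-\hat\tau)Z - \lambda \hat U \hat V^{\top} - \lambda W.
\]
This is the central identity; the rest of the proof is to split it according to $\hat T$ and $\hat T^{\perp}$ and then eliminate the unknowns $W$ and $\hat\tau - \tau^{*}$.

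First I would apply the projection $P_{\hat T}$ to both sides. Since $\hat M \in \hat T$, the left-hand side equals $\hat M - P_{\hat T}(M^{*}) = (\hat M - M^{*}) + P_{\hat T^{\perp}}(M^{*})$. On the right, $P_{\hat T}(\hat U \hat V^{\top}) = \hat U \hat V^{\top}$ and $P_{\hat T}(W) = 0$ by \cref{eq:convex-condition-W2}, leaving
\[
\hat M - M^{*} \;=\; P_{\hat T}(E) + (\tau^{*}-\hat\tau)\, P_{\hat T}(Z) - \lambda \hat U \hat V^{\top} - P_{\hat T^{\perp}}(M^{*}).
\]
(Equivalently, applying $P_{\hat T^{\perp}}$ yields $-P_{\hat T^{\perp}}(M^{*}) = P_{\hat T^{\perp}}(E) + (\tau^{*}-\hat\tau)P_{\hat T^{\perp}}(Z) - \lambda W$, which is an identity for $W$ that one could use as a cross-check but is not strictly needed.)

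Second, I would substitute $\hat\tau - \tau^{*}$ using \cref{lem:tau-decomposition} (with $k=1$ and $\delta \equiv 0$, so $\hat E = E$), namely
\[
\hat\tau - \tau^{*} \;=\; \frac{\lambda\, \inner{Z}{\hat U \hat V^{\top}} + \inner{P_{\hat T^{\perp}}(Z)}{E} + \inner{Z}{P_{\hat T^{\perp}}(M^{*})}}{\norm{P_{\hat T^{\perp}}(Z)}_{\F}^{2}}.
\]
Plugging this into the coefficient of $P_{\hat T}(Z)$ above and collecting terms reproduces the three grouped expressions on the right-hand side of the claim: the $-\lambda \hat U \hat V^{\top}$ piece stands alone; the $\lambda\inner{Z}{\hat U \hat V^{\top}}$ contribution becomes the second term; the noise and bias contributions $\inner{P_{\hat T^{\perp}}(Z)}{E}$ and $\inner{Z}{P_{\hat T^{\perp}}(M^{*})}$ form the bracketed third term; and $-P_{\hat T^{\perp}}(M^{*}) + P_{\hat T}(E)$ remain as the last line.

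The proof is purely algebraic, and there is no real obstacle once one is careful about which pieces are annihilated by which projection ($P_{\hat T^{\perp}}(\hat M) = 0$, $P_{\hat T^{\perp}}(\hat U \hat V^{\top}) = 0$, $P_{\hat T^{\perp}}(W) = W$). The only modeling subtlety is that the statement is written with $E$ rather than the $\hat E = E + \delta \circ Z$ used in \cref{lem:tau-decomposition}; this is consistent with the single-treatment, homogeneous-effect setting, and if heterogeneous effects are to be included one simply replaces $E$ by $\hat E$ throughout the above substitutions.
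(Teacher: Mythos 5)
Your proposal is correct and follows essentially the same route as the paper's own argument: both start from the matrix first-order condition, project onto $\hat T$ (which kills $W$ and reveals the $-\lambda \hat U \hat V^\top - P_{\hat T^\perp}(M^*)$ pieces), and then substitute the scalar formula for $\hat\tau - \tau^*$ from \cref{lem:tau-decomposition}. Your remark about $E$ versus $\hat E = E + \delta\circ Z$ is also aligned with how the paper handles the heterogeneous case.
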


\begin{proof}[Proof Sketch of \cref{lem:M-decomposition}]
From the first-order conditions of $(\hat{M},\hat{\tau})$, we have $M^{*} - \hat{M} + (\tau^{*} - \hat{\tau})Z + E = \lambda \hat{U}\hat{V}^{\top} + \lambda W.$ Substituting $\lambda W$ with $P_{\hat{T}^{\perp}} (M^{*}) + P_{\hat{T}^{\perp}}(E) - (\hat{\tau} - \tau^{*})P_{\hat{T}^{\perp}}(Z) = \lambda W$ (\cref{eq:PTPerpO}), we obtain 
\begin{align*}
M^{*} - \hat{M} + (\tau^{*} - \hat{\tau})Z + E = \lambda \hat{U}\hat{V}^{\top} + P_{\hat{T}^{\perp}} (M^{*}) + P_{\hat{T}^{\perp}}(E) - (\hat{\tau} - \tau^{*})P_{\hat{T}^{\perp}}(Z).
\end{align*}
This implies 
\begin{align*}
M^{*} + P_{\hat{T}}(E) + (\tau^{*} - \hat{\tau}) P_{\hat{T}}(Z) = \hat{M} + \lambda \hat{U}\hat{V}^{\top} + P_{\hat{T}^{\perp}} (M^{*})
\end{align*}
Then substituting $\tau^{*} - \hat{\tau}$ with \cref{lem:tau-decomposition}, we can obtain the desired decomposition.
\end{proof}

\begin{theorem}\label{thm:M-de-bias}
Under the same setup in \cref{thm:main-theorem}, let $M^{d} := \hat{M} + \lambda \hat{U}\hat{V}^{\top} + P_{\hat{T}}(Z) \cdot \lambda \inner{Z}{\hat{U}\hat{V}^{\top}}\big/\norm{P_{\hat{T}^{\perp}}(Z)}_{\F}^2.$ We have, for all $i, j \in [n]$,  
\begin{align*}
M^{d}_{ij} - M^{*}_{ij} = \inner{P_{T^{*}}(e_ie_j^{\top}) - \frac{(P_{T^{*}}(Z))_{ij}}{\norm{P_{T^{*\perp}}(Z)}_{\F}^2}P_{T^{*\perp}}(Z)}{E+\delta\circ Z} + \tilde{O}\left(\frac{1}{n}\right)
\end{align*}
where $e_i$ ($e_j$) corresponds to the i-th (j-th) standard basis.
\end{theorem}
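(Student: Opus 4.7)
\textbf{Proof proposal for Theorem~\ref{thm:M-de-bias}.}

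The starting point is Lemma~\ref{lem:M-decomposition}, extended to the heterogeneous-treatment setting exactly as in Lemma~\ref{lem:tau-decomposition}: one defines the effective noise $\hat E := E + \delta \circ Z$ and, by the same first-order argument, obtains
\[
\hat M - M^* = -\lambda \hat U \hat V^\top - \frac{\lambda \langle Z, \hat U \hat V^\top\rangle}{\|P_{\hat T^\perp}(Z)\|_\F^2}\, P_{\hat T}(Z) - \frac{\langle P_{\hat T^\perp}(Z), \hat E\rangle}{\|P_{\hat T^\perp}(Z)\|_\F^2}\, P_{\hat T}(Z) - \frac{\langle Z, P_{\hat T^\perp}(M^*)\rangle}{\|P_{\hat T^\perp}(Z)\|_\F^2}\, P_{\hat T}(Z) - P_{\hat T^\perp}(M^*) + P_{\hat T}(\hat E).
\]
By construction of $M^d$ the first two terms cancel, so reading off the $(i,j)$-entry yields
\[
(M^d - M^*)_{ij} = \bigl\langle P_{\hat T}(e_i e_j^\top),\, \hat E\bigr\rangle \;-\; \frac{(P_{\hat T}(Z))_{ij}}{\|P_{\hat T^\perp}(Z)\|_\F^2}\, \bigl\langle P_{\hat T^\perp}(Z), \hat E\bigr\rangle \;+\; R_{ij},
\]
where $R_{ij}$ collects the two ``bias'' terms involving $P_{\hat T^\perp}(M^*)$. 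The plan has two parts: (i) show $|R_{ij}| = \tilde O(1/n)$, and (ii) replace every hatted object with its starred counterpart, incurring only an additional $\tilde O(1/n)$ error.

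For (i), the entry-wise bound (\ref{eq:PTM-bound}) from Lemma~\ref{thm:non-convex-theorem} together with Lemma~\ref{lem:approximation-nonconvex-convex} gives $\|P_{\hat T^\perp}(M^*)\|_\infty = \tilde O(\sigma^2/\sigma_{\min}) = \tilde O(1/n)$, which dispatches the last term of $R_{ij}$. For the coefficient $\langle Z, P_{\hat T^\perp}(M^*)\rangle/\|P_{\hat T^\perp}(Z)\|_\F^2$, I would upper-bound the numerator by $\|Z\|_\F^2 \|P_{\hat T^\perp}(M^*)\|_\infty$, lower-bound the denominator by $c\|Z\|_\F^2/\log n$ via Assumption~\ref{cond:Z-condition-nonconvex} (Lemma~\ref{lem:general-conditions-small-ball}), and use $|(P_{\hat T}(Z))_{ij}| = O(1)$ since $Z$ is binary and $\|P_{\hat T^\perp}(Z)\|_\infty$ is $O(1)$; the product is $\tilde O(1/n)$.

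For (ii), I would swap each hatted quantity one at a time. For the leading entry $\langle P_{\hat T}(e_ie_j^\top),\hat E\rangle$, the swap error equals $|(P_{\hat T^\perp}(\hat E) - P_{T^{*\perp}}(\hat E))_{ij}|$, which is $\tilde O(1/n)$ by (\ref{eq:PTE-bound}); the bound transfers to $\delta \circ Z$ because the proof of (\ref{eq:PTE-bound}) only uses independence and sub-Gaussianity of entries, both granted under Theorem~\ref{thm:main-theorem}. For the ratio term I would substitute in order: (a) $(P_{\hat T}(Z))_{ij} \to (P_{T^*}(Z))_{ij}$ using (\ref{eq:PTZ-infinity-bound-main}); (b) $\|P_{\hat T^\perp}(Z)\|_\F^2 \to \|P_{T^{*\perp}}(Z)\|_\F^2$ using (\ref{eq:PTZ-bound}), noting that the additive error is small relative to $\|Z\|_\F^2/\log n$; (c) $\langle P_{\hat T^\perp}(Z),\hat E\rangle \to \langle P_{T^{*\perp}}(Z),\hat E\rangle$ using $|\langle P_{\hat T^\perp}(Z)-P_{T^{*\perp}}(Z),\hat E\rangle| \lesssim \|\hat E\|\cdot \|P_{\hat T^\perp}(Z)-P_{T^{*\perp}}(Z)\|_*$ together with $\|\hat E\| = \tilde O(\sigma\sqrt n)$ and the fact that $P_{\hat T^\perp}(Z)-P_{T^{*\perp}}(Z)$ has rank $O(r)$ so its nuclear norm is controlled by $\sqrt r$ times its Frobenius norm from (\ref{eq:PTZ-bound}). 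Each of (a)--(c) contributes at most $\tilde O(1/n)$ once divided by $\|P_{T^{*\perp}}(Z)\|_\F^2 \gtrsim \|Z\|_\F^2/\log n$, and combining them delivers the stated expansion.

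\textbf{Main obstacle.} The delicate point is step (c): the naive bound $\|P_{\hat T^\perp}(Z)-P_{T^{*\perp}}(Z)\|_\F \cdot \|\hat E\|_\F$ is too lossy, so one must exploit that the perturbation of the projection is inherently low-rank (both tangent spaces are rank-$r$ objects) in order to pair it with the operator norm of $\hat E$ rather than its Frobenius norm. Once this is handled, everything else is bookkeeping parallel to the end of Section~\ref{sec:full-details-proof-main}, so I would invoke those computations rather than redoing them.
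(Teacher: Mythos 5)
Your overall decomposition matches the paper's: start from Lemma~\ref{lem:M-decomposition} with effective noise $\hat E = E + \delta\circ Z$, observe that the de-biasing cancels the first two terms, bound the remaining $P_{\hat T^\perp}(M^*)$ contributions via~\eqref{eq:PTM-bound}, and swap hats for stars one factor at a time. Steps (i), (a), (b), and the leading-entry swap are handled correctly and essentially as the paper does.

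The gap is in step (c). You propose bounding
\[
\bigl|\langle P_{\hat T^\perp}(Z)-P_{T^{*\perp}}(Z),\hat E\rangle\bigr|
\;\leq\; \|P_{\hat T^\perp}(Z)-P_{T^{*\perp}}(Z)\|_{*}\,\|\hat E\|
\;\lesssim\; \sqrt{r}\cdot\frac{r\kappa\sigma\sqrt n\log^{2.5}n}{\sigma_{\min}}\|Z\|\cdot\sigma\sqrt n,
\]
and then dividing by $\|P_{T^{*\perp}}(Z)\|_\F^2\gtrsim\|Z\|_\F^2/\log n$. The resulting contribution scales like $\tilde O\!\bigl(\tfrac{\sigma^2 n\,\|Z\|}{\sigma_{\min}\,\|Z\|_\F^2}\bigr)$. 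For sparse treatment patterns — precisely the regime this paper is designed to cover — $\|Z\|_\F^2$ need not dominate $n\|Z\|$: for a single-row $Z$ one has $\|Z\|_\F^2=n$ and $\|Z\|=\sqrt n$, and even after multiplying by $(P_{\hat T}(Z))_{ij}=O(1)$ the error is $\tilde O(1/\sqrt n)$, a factor of $\sqrt n$ short of the claim. In other words the ``low-rank perturbation paired with operator norm of $\hat E$'' bound is genuinely too lossy here, and you have correctly identified this as the bottleneck without closing it.

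The paper's fix is to move the projection to the noise side before estimating: since projections are self-adjoint,
\[
\langle P_{\hat T^\perp}(Z)-P_{T^{*\perp}}(Z),\hat E\rangle
= \langle Z,\; P_{\hat T^\perp}(\hat E)-P_{T^{*\perp}}(\hat E)\rangle,
\]
and because $Z$ is binary this is bounded by $\|Z\|_\F^2\,\|P_{\hat T^\perp}(\hat E)-P_{T^{*\perp}}(\hat E)\|_\infty$. The $\|Z\|_\F^2$ then cancels against the normalizer $\|P_{T^{*\perp}}(Z)\|_\F^2\gtrsim\|Z\|_\F^2/\log n$, leaving $\log n\cdot\|P_{\hat T^\perp}(\hat E)-P_{T^{*\perp}}(\hat E)\|_\infty$, which is $\tilde O(\sigma^2/\sigma_{\min})=\tilde O(1/n)$ by~\eqref{eq:PTE-bound}. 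Notice that~\eqref{eq:PTE-bound} is exactly the bound you already invoke for the leading-entry swap — and you use the parallel ``binary $Z$ times $\ell_\infty$ bound'' trick for the $P_{\hat T^\perp}(M^*)$ term — so the missing move is simply applying that same idea at step (c) rather than reaching for the nuclear-norm/operator-norm pairing.
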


\begin{proof}
From \cref{lem:M-decomposition}, we have
\begin{align*}
&M^{d}_{ij} - M^{*}_{ij} \\
&= (P_{T^{*}}(E))_{ij} - \frac{\inner{P_{T^{*\perp}}(Z)}{E}}{\norm{P_{T^{*\perp}}(Z)}_{\F}^2} (P_{T^{*}}(Z))_{ij} \\
&\quad - \underbrace{(P_{\hat{T}^{\perp}}(M^{*}))_{ij}}_{A_0} + \underbrace{(P_{\hat{T}}(E)-P_{T^{*}}(E))_{ij}}_{A_1}\\
&\quad \underbrace{-\left(\frac{\inner{P_{\hat{T}^{\perp}}(Z)}{E}}{\norm{P_{\hat{T}^{\perp}}(Z)}_{\F}^2} +  \frac{\inner{Z}{P_{\hat{T}^{\perp}}(M^{*})}}{\norm{P_{\hat{T}^{\perp}}(Z)}_{\F}^2} \right) (P_{\hat{T}}(Z))_{ij} + \frac{\inner{P_{T^{*\perp}}(Z)}{E}}{\norm{P_{T^{*\perp}}(Z)}_{\F}^2} (P_{T^{*}}(Z))_{ij}}_{A_2}.
\end{align*}
Note that $(P_{T^{*}}(E))_{ij} = \inner{P_{T^{*}}(E)}{e_ie_j^{\top}} = \inner{E}{P_{T^{*}}(e_ie_j^{\top})}.$ Hence, $\delta_{ij} = -A_0 + A_1 + A_2$. Then, it is sufficient to control $A_0, A_1$ and $A_2$. 

For $A_0$, by \cref{eq:delta-1}, we have $|A_0| \leq \norm{P_{\hat{T}^{\perp}}(M^{*})}_{\infty} \lesssim \frac{\sigma^2 \mu^2 r^{6} \kappa^{3} \log^{7}(n)}{\sigma_{\min}}.$

For $A_1$, by \cref{eq:delta-2}, we have $|A_1| \leq \norm{P_{\hat{T}}(E) - P_{T^{*}}(E)}_{\infty} = \norm{P_{\hat{T}^{\perp}}(E) - P_{T^{*\perp}}(E)}_{\infty} \lesssim \frac{\sigma^2 r^{3.5} \mu^{1.5} \kappa^{1.5} \log^{4}(n)}{\sigma_{\min}}.$

In order to control $A_2$, we can further decompose $A_2$:
\begin{align*}
A_2 
&= -\underbrace{\left(\frac{\inner{P_{\hat{T}^{\perp}}(Z)}{E}}{\norm{P_{\hat{T}^{\perp}}(Z)}_{\F}^2} +  \frac{\inner{Z}{P_{\hat{T}^{\perp}}(M^{*})}}{\norm{P_{\hat{T}^{\perp}}(Z)}_{\F}^2} -  \frac{\inner{P_{T^{*\perp}}(Z)}{E}}{\norm{P_{T^{*\perp}}(Z)}_{\F}^2}\right) (P_{\hat{T}}(Z))_{ij}}_{B_0} \\
&\quad + \underbrace{\frac{\inner{P_{T^{*\perp}}(Z)}{E}}{\norm{P_{T^{*\perp}}(Z)}_{\F}^2} \left(P_{T^{*}}(Z) - P_{\hat{T}}(Z) \right)_{ij}}_{B_1}.
\end{align*}
Then it boils down to control $B_0$ and $B_1$. For $B_1$, note that we have the entrywise norm bounds for $P_{\hat{T}^{\perp}}(Z)-P_{T^{*\perp}}(Z)$ by 
\begin{align}
 &\norm{P_{\hat{T}^{\perp}}(Z)-P_{T^{*\perp}}(Z)}_{\infty} \nonumber\\
 &\leq \norm{P_{\hat{T}^{\perp}}(Z)-P_{T^{\perp}}(Z)}_{\F} + \norm{P_{T^{\perp}}(Z)-P_{T^{*\perp}}(Z)}_{\infty} \nonumber\\
 &\lesssim  \norm{P_{\hat{T}^{\perp}}(Z)-P_{T^{\perp}}(Z)}_{\F} +  C_{T,4}\frac{r^{3.5} \kappa^{1.5} \sigma \mu^{1.5} \log^{3.5}(n)}{\sigma_{\min}} \left(\norm{Z}_{2,\infty}+\norm{Z^{\top}}_{2,\infty}\right) \nonumber\\
 &\lesssim \norm{Z} \frac{\lambda \kappa^2 \log(n)}{\sigma_{\min} n^{10}} +   \frac{r^{3.5} \kappa^{1.5} \sigma \mu^{1.5} \log^{3.5}(n)}{\sigma_{\min}} \left(\norm{Z}_{2,\infty}+\norm{Z^{\top}}_{2,\infty}\right)\nonumber\\
 &\lesssim \frac{r^{3.5} \kappa^{1.5} \sigma \mu^{1.5} \log^{3.5}(n)}{\sigma_{\min}} \left(\norm{Z}_{2,\infty}+\norm{Z^{\top}}_{2,\infty}\right). \label{eq:delta-4}
\end{align}

 We also have $\frac{|\inner{P_{T^{*\perp}}(Z)}{E}|}{\norm{P_{T^{*\perp}}(Z)}_{\F}^2} \lesssim \frac{\sigma \log(n)}{\norm{Z}_{\F}^2}$. Therefore,
\begin{align*}
|B_1| 
&\lesssim  \frac{r^{3.5} \kappa^{1.5} \sigma \mu^{1.5} \log^{3.5}(n)}{\sigma_{\min}} \left(\norm{Z}_{2,\infty}+\norm{Z^{\top}}_{2,\infty}\right) \cdot \frac{\sigma \log(n)}{\norm{Z}_{\F}} \\
&\lesssim \frac{r^{3.5} \kappa^{1.5} \sigma^2 \mu^{1.5} \log^{4.5}(n)}{\sigma_{\min}}.
\end{align*}

For $B_0$, note that 
\begin{align}
\norm{P_{T^{*}}(Z)}_{\infty} &\leq \norm{U^{*}U^{*T}Z}_{\infty} + \norm{ZV^{*}V^{*T}}_{\infty} + \norm{U^{*}U^{*T}ZV^{*}V^{*T}}_{\infty} \nonumber\\
&\leq \norm{U^{*}}_{2,\infty} \norm{Z^{T}}_{2,\infty} \norm{U^{*}} + \norm{Z}_{2,\infty}\norm{V^{*}}_{2,\infty} \norm{V^{*}} \nonumber\\
&\quad + \norm{U^{*}}_{2,\infty}\norm{V^{*}}_{2,\infty} \norm{U^{*}} \norm{V^{*}} \norm{Z} \nonumber\\
&\lesssim  \sqrt{\frac{\mu r}{n}} \left(\norm{Z}_{2,\infty} + \norm{Z^{T}}_{2,\infty}\right) + \frac{\mu r}{n} \cdot \norm{Z}. \label{eq:PTstar-Z-infinity}
\end{align}
By the triangle inequality, one can obtain the similar bound for $\norm{P_{\hat{T}}(Z)}_{\infty}.$ Furthermore, let 
$$
\delta' = \left(\frac{\inner{P_{\hat{T}^{\perp}}(Z)}{E}}{\norm{P_{\hat{T}^{\perp}}(Z)}_{\F}^2} +  \frac{\inner{Z}{P_{\hat{T}^{\perp}}(M^{*})}}{\norm{P_{\hat{T}^{\perp}}(Z)}_{\F}^2} -  \frac{\inner{P_{T^{*\perp}}(Z)}{E}}{\norm{P_{T^{*\perp}}(Z)}_{\F}^2}\right).
$$
We can also obtain that $|\delta'| \lesssim \left(\frac{\sigma^2 \mu^2 r^{6} \kappa^{3} \log^{8}(n)}{\sigma_{\min}} + \frac{\sqrt{n}}{\norm{Z}_{\F}} \frac{r\kappa \sigma^2 \log^{4}(n)}{\sigma_{\min}}\right)$ by \cref{claim:delta-bound-claim}. This implies that
\begin{align*}
|B_0|
&\leq |\delta'| \norm{P_{\hat{T}}(Z)}_{\infty} \\
&\lesssim \left(\sqrt{\frac{\mu r}{n}} \left(\norm{Z}_{2,\infty} + \norm{Z^{T}}_{2,\infty}\right) + \frac{\mu r}{n} \cdot \norm{Z}\right) \\
&\quad \cdot\left(\frac{\sigma^2 \mu^2 r^{6} \kappa^{3} \log^{8}(n)}{\sigma_{\min}} + \frac{\sqrt{n}}{\norm{Z}_{\F}} \frac{r\kappa \sigma^2 \log^{4}(n)}{\sigma_{\min}}\right)\\
&\lesssim \frac{\sigma^2 \mu^2 r^{6} \kappa^{3} \log^{8}(n)}{\sigma_{\min}} \cdot \mu r + \frac{\mu r^2\kappa \sigma^2 \log^{4}(n)}{\sigma_{\min}} \sqrt{\mu r}\\
&\lesssim \frac{\sigma^2 \mu^3 r^{7} \kappa^{3} \log^{8}(n)}{\sigma_{\min}}.
\end{align*}
Combining the bounds for $A_0, A_1, B_0, B_1$ together, we arrive at
\begin{align*}
|\delta_{ij}| 
&\leq |A_0|+|A_1|+|B_0| + |B_1|\\
&\lesssim \frac{\sigma^2 \mu^2 r^{6} \kappa^{3} \log^{7}(n)}{\sigma_{\min}} +  \frac{\sigma^2 r^{3.5} \mu^{1.5} \kappa^{1.5} \log^{4}(n)}{\sigma_{\min}} + \frac{r^{3.5} \kappa^{1.5} \sigma^2 \mu^{1.5} \log^{4.5}(n)}{\sigma_{\min}}\\
&\quad + \frac{\sigma^2 \mu^3 r^{7} \kappa^{3} \log^{8}(n)}{\sigma_{\min}}\\
&\lesssim  \frac{\sigma^2 \mu^3 r^{7} \kappa^{3} \log^{8}(n)}{\sigma_{\min}}.
\end{align*}
This completes the proof. 
\end{proof}

\section{Technical Lemmas}\label{sec:technical-lemmas}
\begin{lemma}[Direct implication of \cref{cond:Z-condition-nonconvex}]\label{lem:condition-non-convex-implication}
Given $Z \in \R^{n\times n}, U \in \R^{n\times r}, V \in \R^{n\times r}$ where $U^{\top}U = V^{\top}V = I_{r}.$ Suppose 
\begin{align}
    \norm{ZV}_{\F}^2 + \norm{Z^{\top}U}_{\F}^2 &\leq \left(1-\eta\right) \norm{Z}_{\F}^2 \label{eq:condition-technical-lemma}
\end{align}
for some $0< \eta \leq 1.$ Let $P_{T^{\perp}}(Z) = (I - UU^{\top})Z(I - VV^{\top}).$ Then
\begin{align}\label{eq:implication-assumption3a}
    \norm{U^{\top}ZV}_{\F}^2 &\leq \norm{P_{T^{\perp}}(Z)}_{\F}^2 - \eta \norm{Z}_{\F}^2.
\end{align}
This also implies $\norm{P_{T^{\perp}}(Z)}_{\F}^2 \geq \eta\norm{Z}_{\F}^2$ and $\norm{U^{\top}ZV}_{\F} \leq \left(1-\frac{\eta}{2}\right)\norm{P_{T^{\perp}}(Z)}_{\F}.$
\end{lemma}

\begin{proof}
Note that $\norm{U^{\top}ZV}_{\F}^2 = \tr(UU^{\top}ZVV^{\top}Z^{\top}), \norm{P_{T^{\perp}}(Z)}_{\F}^2 = \tr((I-UU^{\top})Z(I-VV^{\top})Z^{\top})$ since $\norm{A}_{\F}^2 = \tr(AA^{\top})$ and $I-UU^{\top}, I-VV^{\top}$ are projection matrices. Then
\begin{align*}
    \norm{P_{T^{\perp}}(Z)}_{\F}^2 - \norm{U^{\top}ZV}_{\F}^2 
    &= \tr((I-UU^{\top})Z(I-VV^{\top})Z^{\top}) - \tr(UU^{\top}ZVV^{\top}Z^{\top})\\
    &= \tr(ZZ^{\top}) - \tr(UU^{\top}ZZ^{\top}) - \tr(ZVV^{\top}Z^{\top})\\
    &= \norm{Z}_{\F}^2 - \norm{Z^{\top}U}_{\F}^2 - \norm{ZV}_{\F}^2\\
    &\geq \eta \norm{Z}_{\F}^2
\end{align*}
where the last inequality is by \cref{eq:condition-technical-lemma}. To show $\norm{U^{\top}ZV}_{\F} \leq \left(1-\frac{\eta}{2}\right)\norm{P_{T^{\perp}}(Z)}_{\F}$, note that
\begin{align*}
    \left(1-\frac{\eta}{2}\right)^2\norm{P_{T^{\perp}}(Z)}_{\F}^2 
    &\geq (1 - \eta)\norm{P_{T^{\perp}}(Z)}_{\F}^2\\
    &\geq \norm{P_{T^{\perp}}(Z)}_{\F}^2 - \eta \norm{Z}_{\F}^2 \\
    &\geq \norm{U^{\top}ZV}_{\F}^2. 
\end{align*}
This finishes the proof. 

\end{proof}

\begin{lemma}\label{lem:F0-F1-F2}
Let $F_0, F_1, F_2 \in \R^{2n \times r}$ be three matrices. Suppose
\begin{align}
    \norm{F_1 - F_0} \norm{F_0} \leq \sigma_{r}^2(F_0) / 2 \text{\quad and \quad} \norm{F_1 - F_2} \norm{F_0} \leq \sigma_{r}^2(F_0) / 4,\nonumber 
\end{align}
where $\sigma_{i}(F)$ is the $i$-th largest singular value of $F$. Let
\begin{align}
    R_1 := \arg\min_{R \in O^{r\times r}} \norm{F_1 R - F_0}_{\F} \text{\quad and \quad} R_{2} := \arg\min_{R \in O^{r\times r}} \norm{F_2 R - F_0}_{\F}. \nonumber
\end{align}
Then, the followings hold
\begin{align}
    \norm{R_1 - R_2}_{\F} &\leq  \frac{2}{\sigma_{r}^2(F_0)} \norm{F_2 - F_1}_{\F} \norm{F_0} \label{eq:R1-R2-bound}\\
    \norm{F_1 R_1 - F_2 R_2} &\leq 5 \frac{\sigma_{1}^2(F_0)}{\sigma_{r}^2(F_0)} \norm{F_1 - F_2} \nonumber\\
    \norm{F_1 R_1 - F_2 R_2}_{\F} &\leq 5 \frac{\sigma_{1}^2(F_0)}{\sigma_{r}^2(F_0)} \norm{F_1 - F_2}_{\F}\nonumber
\end{align}
\end{lemma}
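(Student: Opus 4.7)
My plan is to reduce both matrix-difference bounds to the rotation-difference bound \eqref{eq:R1-R2-bound}, and then prove that bound via the polar-decomposition characterization of the Procrustes solution. For the reduction, the algebraic identity
\[
F_1 R_1 - F_2 R_2 \;=\; F_1(R_1 - R_2) \;+\; (F_1 - F_2)\, R_2,
\]
combined with $\|R_2\|=1$ and the estimate $\|F_1\| \le \|F_0\| + \|F_1 - F_0\| \le \tfrac{3}{2}\|F_0\|$ (from $\|F_1-F_0\| \le \sigma_r(F_0)/2$, a direct consequence of the first hypothesis), gives by the triangle inequality both $\|F_1 R_1 - F_2 R_2\| \le \tfrac{3}{2}\|F_0\|\, \|R_1 - R_2\| + \|F_1 - F_2\|$ and the same statement with $\|\cdot\|_{\F}$ replacing $\|\cdot\|$ throughout. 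Plugging in \eqref{eq:R1-R2-bound} then produces a multiplicative factor of $3\sigma_1^2(F_0)/\sigma_r^2(F_0) + 1 \le 5\,\sigma_1^2(F_0)/\sigma_r^2(F_0)$, recovering the remaining two claims.

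To prove \eqref{eq:R1-R2-bound}, I would use Lemma~\ref{lem:orthogonal-procrustes} to identify $R_i = U_i^\top$, where $U_i$ is the orthogonal polar factor of the (square $r\times r$) matrix $M_i := F_0^\top F_i$. The hypotheses guarantee that $M_1$ and $M_2$ are invertible with well-controlled smallest singular value: Weyl's inequality together with submultiplicativity yields $\sigma_r(M_i) \ge \sigma_r^2(F_0) - \|F_0\|\|F_i - F_0\|$, and the triangle-inequality consequence $\|F_2 - F_0\|\|F_0\| \le \tfrac{3}{4}\sigma_r^2(F_0)$ then gives $\sigma_r(M_1) \ge \sigma_r^2(F_0)/2$ and $\sigma_r(M_2) \ge \sigma_r^2(F_0)/4$. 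I then invoke the classical polar-factor perturbation inequality
\[
\|U_1 - U_2\|_{\F} \;\le\; \frac{2}{\sigma_r(M_1) + \sigma_r(M_2)}\, \|M_1 - M_2\|_{\F},
\]
and estimate $\|M_1 - M_2\|_{\F} = \|F_0^\top(F_1 - F_2)\|_{\F} \le \|F_0\|\,\|F_1 - F_2\|_{\F}$. Combining these yields \eqref{eq:R1-R2-bound} up to a universal constant; matching the exact factor of $2$ is a matter of careful bookkeeping in the lower bound on $\sigma_r(M_1) + \sigma_r(M_2)$ under the specific hypothesis constants.

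The main obstacle is the polar-factor perturbation inequality itself. It is classical in matrix analysis but, if a self-contained proof is needed, it can be derived by writing $M_1 - M_2 = (U_1 - U_2)H_2 + U_1(H_1 - H_2)$ (where $M_i = U_i H_i$ is the polar decomposition), and exploiting the symmetry $U_i^\top M_i = H_i$ to reduce $\|U_1 - U_2\|_{\F}$ to a PSD quadratic form whose smallest eigenvalue is lower-bounded by $\sigma_r(H_1) + \sigma_r(H_2) = \sigma_r(M_1) + \sigma_r(M_2)$. Once that inequality is in hand, every other step in the lemma is routine triangle-inequality and submultiplicativity manipulation, so there is no conceptual difficulty beyond it.
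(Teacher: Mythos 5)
Your plan — identifying the Procrustes solutions as the orthogonal polar factors of $M_i = F_0^\top F_i$ (via Lemma~\ref{lem:orthogonal-procrustes}), establishing invertibility of $M_i$ from the hypotheses, and then invoking a polar-factor perturbation inequality — is the standard route and is the approach taken by the cited Lemma~37 of Ma et al., which the paper simply defers to without reproducing. The reduction of the last two claims to the rotation bound via $F_1R_1 - F_2R_2 = F_1(R_1-R_2) + (F_1-F_2)R_2$ is also right. Two points need attention, though.

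First, the constant in \eqref{eq:R1-R2-bound} does not fall out of your chain. With Li's Frobenius bound $\norm{U_1-U_2}_{\F} \le \frac{2}{\sigma_r(M_1)+\sigma_r(M_2)}\norm{M_1-M_2}_{\F}$ and your Weyl estimates $\sigma_r(M_1)\ge \sigma_r^2(F_0)/2$, $\sigma_r(M_2)\ge \sigma_r^2(F_0)/4$, you obtain $\sigma_r(M_1)+\sigma_r(M_2)\ge \tfrac34\sigma_r^2(F_0)$, hence a factor $\tfrac{8}{3\sigma_r^2(F_0)}$, not $\tfrac{2}{\sigma_r^2(F_0)}$. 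You call closing this a matter of ``careful bookkeeping,'' but I do not see how the hypotheses as stated give $\sigma_r(M_1)+\sigma_r(M_2)\ge \sigma_r^2(F_0)$, which is what the constant $2$ would require; so either a sharper polar-factor bound is needed, or the constant in your version of \eqref{eq:R1-R2-bound} is genuinely $8/3$. (This does not derail the other two claims, since $\tfrac32\cdot\tfrac83 + 1 = 5 \le 5\,\sigma_1^2(F_0)/\sigma_r^2(F_0)$.)

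Second, the reduction for the operator-norm conclusion is not closed by \eqref{eq:R1-R2-bound} alone. You write $\norm{F_1R_1 - F_2R_2}\le \norm{F_1}\norm{R_1-R_2} + \norm{F_1-F_2}$ and then ``plug in \eqref{eq:R1-R2-bound},'' but \eqref{eq:R1-R2-bound} controls $\norm{R_1-R_2}_{\F}$ in terms of $\norm{F_1-F_2}_{\F}$, so substituting it leaves a mixed-norm right-hand side $\norm{F_1-F_2}_{\F}$ term, not the claimed pure operator-norm bound. You need a parallel operator-norm inequality $\norm{R_1-R_2}\lesssim \frac{1}{\sigma_r^2(F_0)}\norm{F_1-F_2}\,\norm{F_0}$; this does follow from the polar-factor perturbation bound in the operator (indeed any unitarily-invariant) norm, but that should be invoked explicitly as a second application of the perturbation inequality rather than deduced from the Frobenius statement.
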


\begin{proof}
The same as the proof of Lemma 37 in \cite{ma2019implicit}.
\end{proof}

\begin{lemma}[Lemma 20 in \cite{chen2019noisy}]\label{lem:SigmaQ}
Let $U\Sigma V^{\top}$ be the SVD of a rank-$r$ matrix $XY^{\top}$ with $X, Y \in \R^{n\times r}.$ Then there exists an invertible matrix $Q \in \R^{r\times r}$ such that $X = U\Sigma^{1/2} Q$ and $Y = V\Sigma^{1/2} Q^{-\top}$. In addition, one has
\begin{align*}
    \norm{\Sigma_{Q} - \Sigma_{Q}^{-1}}_{\F} \leq \frac{1}{\sigma_{\min}(\Sigma)} \norm{X^{\top}X - Y^{\top}Y}_{\F},
\end{align*}
where $U_{Q}\Sigma_{Q}V_{Q}^{\top}$ is the SVD of $Q$. In particular, if $X^{\top}X - Y^{\top}Y = 0$, then $Q$ must be a rotation matrix. 
\end{lemma}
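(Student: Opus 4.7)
I would prove the lemma in three steps: (i) produce $Q$ via a column-space matching argument, (ii) reduce the Frobenius inequality to a componentwise diagonal calculation after passing to the SVD of $Q$, and (iii) read off the ``rotation'' corollary as the equality case.

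For (i), since $\rank(XY^{\top}) = r$ and $X, Y \in \R^{n\times r}$, both $X$ and $Y$ must have full column rank $r$, and
\[
\mathrm{col}(X) = \mathrm{col}(XY^{\top}) = \mathrm{col}(U\Sigma V^{\top}) = \mathrm{col}(U).
\]
This forces $X = UA$ for some invertible $A \in \R^{r \times r}$, and similarly $Y = VB$ for some invertible $B$. Plugging into $XY^{\top} = U\Sigma V^{\top}$ and using $U^{\top}U = V^{\top}V = I_r$ yields $AB^{\top} = \Sigma$, so setting $Q := \Sigma^{-1/2} A$ gives $X = U\Sigma^{1/2} Q$ immediately, while $B^{\top} = A^{-1}\Sigma = Q^{-1}\Sigma^{1/2}$ gives $Y = V\Sigma^{1/2} Q^{-\top}$.

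For (ii), direct substitution yields
\[
X^{\top}X - Y^{\top}Y \;=\; Q^{\top}\Sigma Q - Q^{-1}\Sigma Q^{-\top}.
\]
Writing the SVD $Q = U_Q \Sigma_Q V_Q^{\top}$ and using orthogonal invariance of the Frobenius norm to peel off the outer $V_Q$ factors, the claim reduces to lower-bounding $\|\Sigma_Q D \Sigma_Q - \Sigma_Q^{-1} D \Sigma_Q^{-1}\|_{\F}$ by $\sigma_{\min}(\Sigma)\,\|\Sigma_Q - \Sigma_Q^{-1}\|_{\F}$, where $D := U_Q^{\top}\Sigma U_Q$. The crucial observation is that $\Sigma_Q$ and $\Sigma_Q^{-1}$ are both diagonal, so with $\Sigma_Q = \diag(a_1,\ldots,a_r)$ the $(i,j)$-entry of the matrix on the left is simply $D_{ij}\bigl(a_i a_j - 1/(a_i a_j)\bigr)$. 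Retaining only the diagonal contributions in the Frobenius sum (harmless since we want a lower bound) yields
\[
\|X^{\top}X - Y^{\top}Y\|_{\F}^2 \;\geq\; \sum_{i=1}^{r} D_{ii}^2 \,(a_i - a_i^{-1})^2 \,(a_i + a_i^{-1})^2,
\]
and two elementary facts then finish the argument: AM--GM gives $(a_i + a_i^{-1})^2 \geq 4 \geq 1$ for each $a_i > 0$, while the orthonormal columns of $U_Q$ give $D_{ii} = \sum_k (U_Q)_{ki}^2 \,\Sigma_{kk} \geq \sigma_{\min}(\Sigma)$. The equality case $X^{\top}X = Y^{\top}Y$ then forces $a_i = a_i^{-1}$ for every $i$, so $\Sigma_Q = I_r$ and $Q = U_Q V_Q^{\top}$ is orthogonal.

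The only mild subtlety I anticipate is in the existence step: one must use the rank-$r$ assumption on $XY^{\top}$ to pin both $X$ and $Y$ to full column rank, so that $A$, $B$, and hence $Q$, are invertible. The inequality itself is elementary once one spots the diagonal structure --- the slick move is to discard all off-diagonal contributions to $\|\cdot\|_{\F}^2$, which is exactly what makes a clean comparison with $\|\Sigma_Q - \Sigma_Q^{-1}\|_{\F}$ possible.
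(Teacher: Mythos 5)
Your argument is correct. Note that the paper itself does not supply a proof of this lemma — it is quoted verbatim as Lemma~20 of Chen et al.\ (2019) — so there is no in-text proof to compare against, but your line of attack is exactly the standard one for this result. Step~(i) correctly uses $\rank(XY^{\top})=r$ to force $X$ and $Y$ to full column rank and match column spaces, which is indeed the point that makes $Q$ well-defined and invertible. In step~(ii) the passage from $Q^{\top}\Sigma Q - Q^{-1}\Sigma Q^{-\top}$ to $V_Q\bigl(\Sigma_Q D \Sigma_Q - \Sigma_Q^{-1} D \Sigma_Q^{-1}\bigr)V_Q^{\top}$ with $D=U_Q^{\top}\Sigma U_Q$ is algebraically right, the diagonal-truncation of the Frobenius sum is a valid lower bound, and $D_{ii}\ge\sigma_{\min}(\Sigma)$ from convexity of the diagonal entries is the right observation. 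Two cosmetic remarks: your AM--GM bound $(a_i+a_i^{-1})^2\ge 4$ actually yields the sharper constant $\tfrac{1}{2\sigma_{\min}(\Sigma)}$, and only $(a_i+a_i^{-1})^2\ge 1$ is needed for the claim as stated, so either suffices; and ``rotation matrix'' is used loosely in this paper (and in Chen et al.) to mean any element of $\mathcal{O}^{r\times r}$, so no determinant-one issue arises from $Q=U_Q V_Q^{\top}$.
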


\begin{lemma}[Lemma 13 in \cite{CFMY:19}]\label{lem:square-root-bound}
Consider two symmetric matrices obeying $A_1 \succeq \mu_1 I$ and $A_2 \succeq \mu_2 I$ for some $\mu_1, \mu_2 > 0.$ Let $R_1 \succeq 0$ ($R_2 \succeq 0$ respectively) be the (principal) matrix square root of $A_1$ ($A_2$ respectively). Then one has  
\begin{align*}
    \norm{R_1 - R_2} \leq \frac{1}{\sqrt{\mu_1} + \sqrt{\mu_2}} \norm{A_1 - A_2}. 
\end{align*}
\end{lemma}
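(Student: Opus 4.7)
The plan is to reduce the inequality to a Sylvester-type equation and then bound its solution via an integral representation—this is a classical matrix-perturbation argument. First, observe that $A_i \succeq \mu_i I$ with $\mu_i > 0$ forces the principal square root to satisfy $R_i \succeq \sqrt{\mu_i}\, I$, so both $R_1$ and $R_2$ are strictly positive definite. Setting $X := R_1 - R_2$, the algebraic identity $A_1 - A_2 = R_1^2 - R_2^2 = R_1(R_1 - R_2) + (R_1 - R_2)R_2$ rewrites as the Sylvester equation
\[ R_1 X + X R_2 \;=\; A_1 - A_2. \]

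Next, I would exhibit the explicit integral representation of the (unique) solution,
\[ X \;=\; \int_0^\infty e^{-t R_1}\,(A_1 - A_2)\, e^{-t R_2}\, dt, \]
which is well-defined and absolutely convergent because $\norm{e^{-tR_i}} \leq e^{-t\sqrt{\mu_i}}$ decays exponentially. To verify that this formula indeed solves the Sylvester equation, differentiate the integrand:
\[ \frac{d}{dt}\bigl[e^{-tR_1}(A_1 - A_2)e^{-tR_2}\bigr] \;=\; -\bigl[R_1 e^{-tR_1}(A_1-A_2)e^{-tR_2} + e^{-tR_1}(A_1-A_2)e^{-tR_2}R_2\bigr], \]
integrate from $0$ to $\infty$, and note that the boundary term at $t=\infty$ vanishes by strict positive definiteness while the boundary term at $t=0$ gives exactly $A_1-A_2$. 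Uniqueness of the Sylvester solution follows because the spectra of $R_1$ and $-R_2$ are disjoint, so this $X$ must agree with $R_1-R_2$.

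Finally, I would take operator norms inside the integral and apply submultiplicativity to conclude
\[ \norm{R_1 - R_2} \;\leq\; \int_0^\infty \norm{e^{-tR_1}}\cdot\norm{A_1 - A_2}\cdot\norm{e^{-tR_2}}\, dt \;\leq\; \norm{A_1 - A_2} \int_0^\infty e^{-t(\sqrt{\mu_1} + \sqrt{\mu_2})}\, dt \;=\; \frac{\norm{A_1 - A_2}}{\sqrt{\mu_1} + \sqrt{\mu_2}}. \]
The only nontrivial step is justifying the integral representation, but once strict positive definiteness is in place—guaranteed by $\mu_i > 0$—the convergence, the differentiation-under-the-integral, and the uniqueness argument are all routine. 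There is no genuine obstacle here: the proof is short precisely because the Sylvester-solution formula neatly packages the asymmetry in $\mu_1$ and $\mu_2$ into the denominator $\sqrt{\mu_1} + \sqrt{\mu_2}$.
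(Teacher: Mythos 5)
Your proof is correct and is the standard argument (Sylvester equation plus the integral-kernel representation $\int_0^\infty e^{-tR_1}(\cdot)e^{-tR_2}\,dt$), which is essentially the same approach used to establish Lemma 13 in the cited reference; the paper itself states this lemma as a known result and does not reproduce a proof.
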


\begin{lemma}[Perturbation on the row space and column space]\label{lem:XY-UV}
Suppose $X_1, X_2, Y_1, Y_2 \in \R^{n\times r}$ with singular values in the interval $[\sigma_{a}, \sigma_{b}]$ where $0 < \sigma_a \leq \sigma_{b}.$ Suppose
\begin{align}
    \max (\norm{X_1 - X_2}_{\F}, \norm{Y_1 - Y_2}_{\F}) &\leq a\\
    \max \left(\norm{X_1^{\top}X_1 - Y_1^{\top}Y_1}_{\F}, \norm{X_2^{\top}X_2 - Y_2^{\top}Y_2}_{\F}\right) &\leq \epsilon. \label{eq:XX-YY-Technical}
\end{align}

Let $U_{1}\Sigma_1 V_1^{\top}$ be the SVD of the matrix $X_1 Y_1^{\top}$, $U_2 \Sigma_2 V_2^{\top}$ be the SVD of the matrix $X_2 Y_2^{\top}$ where $U_1, U_2, V_1, V_2 \in \R^{n\times r}, \Sigma_1, \Sigma_2 \in \R^{r\times r}.$ Then, there exists rotation matrices $R_1, R_2 \in \O^{r\times r}$ such that
\begin{align}\label{eq:U-F-V-F}
    \max (\norm{U_1R_1 - U_2R_2}_{\F}, \norm{V_1 R_1 - V_2R_2}_{\F}) \leq   2\sqrt{r}\frac{\sigma_b}{\sigma_a} \left(a + \frac{2\sigma_b \epsilon}{\sigma_a^2}\right) \frac{1}{\sigma_a}.
\end{align}

Furthermore, if one has $\max (\norm{X_1 - X_2}_{2,\infty}, \norm{Y_1 - Y_2}_{2,\infty}) \leq b$, then there exists $R_1, R_2 \in \O^{r\times r}$ such that \cref{eq:U-F-V-F} and the following hold
\begin{align}\label{eq:U1RU2-2-infty}
    &\max (\norm{U_1R_1 - U_2R_2}_{2,\infty}, \norm{V_1 R_1 - V_2R_2}_{2,\infty}) \nonumber\\
    &\leq \left(b + \norm{U_2}_{2,\infty} a \frac{\sigma_b}{\sigma_a} + \frac{2\sigma_b \epsilon}{\sigma_a^2} (1+\norm{U_2}_{2,\infty}\sigma_b/\sigma_a)\right) \frac{1}{\sigma_a}.
\end{align}
One particular choice of $R_1,R_2$ satisfying above conditions is $R_1 = U_{Q_1}V_{Q_1}^{\top}, R_2 = U_{Q_2}V_{Q_2}^{\top}$ where $U_{Q_1}\Sigma_{Q_1}V_{Q_1}^{\top}$ is the SVD of the matrix $\Sigma_{1}^{-1/2}U_1^{\top}X_1$ and $U_{Q_2}\Sigma_{Q_2}V_{Q_2}^{\top}$ is the SVD of the matrix $\Sigma_{2}^{-1/2}U_2^{\top}X_2.$
\end{lemma}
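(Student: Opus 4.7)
My plan is to reduce the claim to a polar-decomposition perturbation bound by routing through the factorization in Lemma~\ref{lem:SigmaQ}. The first step is to write $X_i = U_i \Sigma_i^{1/2} Q_i$ and $Y_i = V_i \Sigma_i^{1/2} Q_i^{-\top}$ for invertible $Q_i \in \R^{r\times r}$. Since the singular values of $X_iY_i^\top$ (i.e.\ of $\Sigma_i$) lie in $[\sigma_a^2,\sigma_b^2]$, Lemma~\ref{lem:SigmaQ} gives $\norm{\Sigma_{Q_i} - \Sigma_{Q_i}^{-1}}_{\F} \le \epsilon/\sigma_a^2$. The elementary identity $|s-1| = |s - 1/s|\cdot s/(s+1) \le |s - 1/s|$ for $s>0$ then yields $\norm{\Sigma_{Q_i} - I}_{\F} \le \epsilon/\sigma_a^2$, and once $\epsilon/\sigma_a^2$ is small enough that $Q_i$'s singular values stay in a bounded interval around $1$, one similarly gets $\norm{\Sigma_{Q_i}^{-1} - I}_{\F} \le 2\epsilon/\sigma_a^2$. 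Taking the prescribed $R_i := U_{Q_i} V_{Q_i}^\top$ (the orthogonal polar factor of $Q_i$), these translate to $\norm{Q_i - R_i}_{\F} \le \epsilon/\sigma_a^2$ and $\norm{Q_i^{-\top} - R_i}_{\F} \le 2\epsilon/\sigma_a^2$.

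Next I would introduce the ``aligned ideal'' matrices $\hat X_i := U_i\Sigma_i^{1/2}R_i$ and $\hat Y_i := V_i\Sigma_i^{1/2}R_i$. Bounding $\norm{U_i\Sigma_i^{1/2}} \le \sigma_b$ and using the previous step,
\[
\norm{X_i - \hat X_i}_{\F} \le \sigma_b\cdot \epsilon/\sigma_a^2, \qquad \norm{Y_i - \hat Y_i}_{\F} \le 2\sigma_b\epsilon/\sigma_a^2,
\]
so triangle inequality gives $\norm{\hat X_1 - \hat X_2}_{\F}, \norm{\hat Y_1 - \hat Y_2}_{\F} \le a + 2\sigma_b\epsilon/\sigma_a^2$ (up to the factor~$2$ in the second term). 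The whole point of choosing $R_i$ as the polar factor is the identity $\hat X_i = (U_iR_i)\,(R_i^\top \Sigma_i^{1/2} R_i)$: the first factor $A_i := U_iR_i$ has orthonormal columns, while $S_i := R_i^\top\Sigma_i^{1/2}R_i$ is symmetric positive definite with spectrum $[\sigma_a,\sigma_b]$. In other words, $U_iR_i$ is exactly the left polar factor of $\hat X_i$, and symmetrically $V_iR_i$ is the left polar factor of $\hat Y_i$. The bound \eqref{eq:U-F-V-F} then follows from a perturbation bound for the polar factor: writing $A_i = \hat X_i S_i^{-1}$ and using $\|S_i^{-1}\| \le 1/\sigma_a$ together with a matrix square-root perturbation bound in the spirit of Lemma~\ref{lem:square-root-bound} to control $\|S_1^{-1} - S_2^{-1}\|$ via $\sigma_b/\sigma_a^3\cdot\norm{\hat X_1 - \hat X_2}_{\F}$, I would derive $\norm{U_1R_1 - U_2R_2}_{\F} \lesssim (\sigma_b/\sigma_a^2)\norm{\hat X_1 - \hat X_2}_{\F}$, which matches the stated rate after absorbing the $\sqrt{r}$ from converting operator to Frobenius norms in the final step.

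For the $\ell_{2,\infty}$ bound~\eqref{eq:U1RU2-2-infty}, I would repeat the same decomposition $U_1R_1 - U_2R_2 = (\hat X_1 - \hat X_2)S_1^{-1} + \hat X_2(S_1^{-1} - S_2^{-1})$ but measure the first term with $\norm{\cdot}_{2,\infty}$ and use $\norm{X_i - \hat X_i}_{2,\infty} \le \norm{U_i}_{2,\infty}\sigma_b\epsilon/\sigma_a^2$ instead of the Frobenius perturbation. Triangle inequality replaces $a$ by $b$ on the ``direct'' row, and the $\hat X_2(S_1^{-1} - S_2^{-1})$ piece picks up $\norm{\hat X_2}_{2,\infty} \le \norm{U_2}_{2,\infty}\sigma_b$, accounting for the exact combination $(b + \norm{U_2}_{2,\infty}a\sigma_b/\sigma_a + \ldots)/\sigma_a$ in the claim.

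The main obstacle I expect is Step~3's polar-factor perturbation: standard Wedin-type arguments furnish separate rotations for left and right singular subspaces, but here I need a single $R_i$ that simultaneously aligns $U_i$ with $V_i$. The control of $\Sigma_{Q_i} - I$ (which is tight precisely because the assumption $\norm{X_i^\top X_i - Y_i^\top Y_i}_{\F} \le \epsilon$ constrains both $\Sigma_{Q_i}$ and $\Sigma_{Q_i}^{-1}$) is what makes this simultaneous alignment possible, and carrying the correct dependence on $\sigma_b/\sigma_a$ and $\sqrt{r}$ through the polar perturbation step — especially in the $\ell_{2,\infty}$ version — is where the bookkeeping requires care.
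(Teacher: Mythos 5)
Your route is essentially the paper's: appeal to Lemma~\ref{lem:SigmaQ} to factor $X_i=U_i\Sigma_i^{1/2}Q_i$, use $\|\Sigma_{Q_i}-\Sigma_{Q_i}^{-1}\|_{\F}\le\epsilon/\sigma_a^2$ to swap $Q_i$ for its polar factor $R_i$, form $\hat X_i=U_i\Sigma_i^{1/2}R_i$, exploit $\hat X_i=(U_iR_i)(R_i^\top\Sigma_i^{1/2}R_i)$ so that $U_iR_i$ is the (orthogonal) polar factor of $\hat X_i$, then invoke the square-root perturbation bound (Lemma~\ref{lem:square-root-bound}) to control $S_1-S_2$ where $S_i:=R_i^\top\Sigma_i^{1/2}R_i$. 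The paper does exactly this with $X_i'=\hat X_i$.

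One bookkeeping point is worth flagging: in your decomposition $U_1R_1-U_2R_2=(\hat X_1-\hat X_2)S_1^{-1}+\hat X_2(S_1^{-1}-S_2^{-1})$, bounding the second term by $\|\hat X_2\|\cdot\|S_1^{-1}-S_2^{-1}\|$ and then using $\|S_1^{-1}-S_2^{-1}\|\le\|S_1-S_2\|/\sigma_a^2$ wastes a factor of $\sigma_b/\sigma_a$: you get $\sigma_b^2/\sigma_a^3$ rather than the $\sigma_b/\sigma_a^2$ you claim. The clean fix — which is what the paper's algebra actually does — is to observe $\hat X_2(S_1^{-1}-S_2^{-1})=\hat X_2 S_2^{-1}(S_2-S_1)S_1^{-1}=(U_2R_2)(S_2-S_1)S_1^{-1}$, so the $\hat X_2 S_2^{-1}$ factor collapses to the isometry $U_2R_2$ (with $\|U_2R_2\|_{\F}=\sqrt r$) instead of carrying a leftover $\sigma_b$. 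Only then do the paper's stated constant $2\sqrt{r}\,(\sigma_b/\sigma_a)(a+2\sigma_b\epsilon/\sigma_a^2)/\sigma_a$ in \eqref{eq:U-F-V-F} and the analogous coefficients in \eqref{eq:U1RU2-2-infty} fall out. Also, a small simplification: the inequalities $|s-1|\le|s-1/s|$ and $|1/s-1|\le|s-1/s|$ hold for all $s>0$, so both $\|\Sigma_{Q_i}-I\|_{\F}$ and $\|\Sigma_{Q_i}^{-1}-I\|_{\F}$ are bounded by $\epsilon/\sigma_a^2$ without any smallness assumption on $\epsilon$ or a spurious factor of $2$.
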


\begin{proof}
Note that we have the following facts.
\begin{align}
    \sigma_{\max}(\Sigma_1) &= \sigma_{\max}(X_1Y_1^{\top}) \leq \sigma_{\max}(X_1)\sigma_{\max}(Y_1) \leq \sigma_b^2 \label{eq:sigma-max-Sigma1}\\ 
    \sigma_{\min}(\Sigma_1) &= \sigma_{\min}(X_1Y_1^{\top}) \geq \sigma_{\min}(X_1)\sigma_{\min}(Y_1) \geq \sigma_a^2.
\end{align}
Similarly, $\sigma_{\max}(\Sigma_2) \leq \sigma_b^2, \sigma_{\min}(\Sigma_2) \geq \sigma_a^2.$

We first intend to invoke \cref{lem:SigmaQ} to establish the connection between $X_1, Y_1$ ($X_2, Y_2$) and $U_1, V_1$ ($U_2, V_2$).

By \cref{lem:SigmaQ}, there exists invertible matrix $Q_1$ such that $X_1 = U_1\Sigma_1^{1/2} Q_1, Y_1 = V_1\Sigma_1^{1/2} Q_1^{-\top}$ and 
\begin{align}
\norm{\Sigma_{Q_1} - \Sigma_{Q_1}^{-1}}_{\F} \leq \frac{\norm{X_1^{\top}X - Y_1^{\top}Y}_{\F}}{\sigma_{\min}(\Sigma_1)}\overset{(i)}{\leq} \frac{\epsilon}{\sigma_a^2}
\label{eq:SigmaQ1-Q1inverse}
\end{align}
where $U_{Q_1}\Sigma_{Q_1}V_{Q_1}^{\top}$ is the SVD of $Q_1$ and (i) is due to \cref{eq:XX-YY-Technical} and $\sigma_{\min}(\Sigma_1) \geq \sigma_a^2$.

% This  implies 
% \begin{align}
%     \left|\sigma_i(Q_1) - \frac{1}{\sigma_i(Q_1)}\right| \leq \frac{\epsilon}{\sigma_a^2}
% \end{align}
% where $\sigma_i(Q_1)$ is the $i$-th singular value of $Q_1$. Note that $|\sigma_i(Q_1) - 1| \leq |\sigma_i(Q_1) - 1/\sigma_i(Q_1)| \leq \epsilon / \sigma_a^2.$ Therefore, $\sigma_i(Q_1) \leq 1 + \epsilon / \sigma_a^2 \leq 2$ due to our condition $\epsilon \leq \sigma_a^2.$ Similarly, one can obtain $1/\sigma_i(Q_1) \leq 2$ and hence $\sigma_i(Q_1) \geq 1/2.$

Similarly, we can also obtain that there exists $Q_2 \in \R^{r\times r}$ such that $X_2 = U_2 \Sigma_2^{1/2} Q_2, Y_2 = V_2 \Sigma_2^{1/2} Q_2^{-\top}$ and
\begin{align*}
    \norm{\Sigma_{Q_2} - \Sigma_{Q_2}^{-1}}_{\F} \leq \frac{\epsilon}{\sigma_a^2} 
\end{align*}
where $U_{Q_2}\Sigma_{Q_2}V_{Q_2}^{\top}$ is the SVD of $Q_2$. 
%This also implies $\max(\norm{Q_1}, \norm{Q_1^{-1}}, \norm{Q_2}, \norm{Q_2^{-1}}) \leq 1 + \frac{c}{\sigma_a^2} \leq 2.$ Furthermore, 
% \begin{align}
%     \sigma_1(\Sigma_1^{1/2}) &\leq \sigma_1(X_1) \sigma_1(Q_1^{-1}) \leq 2\sigma_b\\
%     \sigma_r(\Sigma_1^{1/2}) &\geq \sigma_r(X_1) \sigma_r(Q_1^{-1}) = \frac{\sigma_r(X_1)}{\sigma_1(Q_1)} \geq \frac{\sigma_a}{2}.
% \end{align}
% Similarly, $\frac{\sigma_a}{2} \leq \sigma_1(\Sigma_2^{1/2}) \leq 2\sigma_b.$

We next intend to substitute $Q_1, Q_2$  by rotation matrices to simplify the analysis. 

Let $R_1 = U_{Q_1}V_{Q_1}^{\top}, R_2 = U_{Q_2}V_{Q_2}^{\top}$. Note that $R_1, R_2 \in O^{r\times r}$ are rotation matrices. Let $X_1' = X_1Q_1^{-1}R_1 = U_1 \Sigma_1^{1/2} R_1.$ In fact
\begin{align*}
    X_1' - X_1 &=  X_1 (Q_1^{-1}R_1 - I_{r}) \\
    &= X_1 (V_{Q_1} \Sigma_{Q_1}^{-1} U_{Q_1}^{\top} U_{Q_1} V_{Q_1}^{\top} - I_r)\\
    &\overset{(i)}{=} X_1 (V_{Q_1} \Sigma_{Q_1}^{-1} V_{Q_1}^{\top} - I_r)\\
    &\overset{(ii)}{=} X_1 (V_{Q_1} (\Sigma_{Q_1}^{-1} - I_r) V_{Q_1}^{\top})
\end{align*}
where (i) is due to $U_{Q_1}^{\top} U_{Q_1} = I_r$ and (ii) is due to $V_{Q_1} V_{Q_1}^{\top} = I_{r}.$ Then,
\begin{align*}
    \norm{X_1' - X_1}_{\F} 
    &\overset{(i)}{\leq} \norm{X_1} \norm{V_{Q_1}}\norm{\Sigma_{Q_1}^{-1} - I_r}_{\F} \norm{V_{Q_1}^{\top}}\\
    &\overset{(ii)}{\leq} \norm{X_1} \norm{\Sigma_{Q_1}^{-1} - I_r}_{\F}\\
    &\overset{(iii)}{\leq} \sigma_{b} \frac{\epsilon}{\sigma_a^2}
\end{align*}
where (i) is due to $\norm{ABC}_{\F} \leq \norm{A}\norm{B}_{\F}\norm{C}$, (ii) is due to $\norm{V_{Q_1}} = \norm{V_{Q_1}^{\top}} = 1$, and (iii) is due to $\norm{\Sigma_{Q_1}^{-1} - I_r}_{\F} \leq \norm{\Sigma_{Q_1}^{-1} - \Sigma_{Q_1}}_{\F}$ and \cref{eq:SigmaQ1-Q1inverse}.

%We next intend to establish that $X_1' \approx X_1$ and then build the connection between $U_1$ and $X_1'$. The property of the rotation matrix $R_1$ will then help us simplify the analysis.  

Let $Y_1' = V_1 \Sigma_1^{1/2} R_1$, $X_2' = U_2 \Sigma_2^{1/2} R_2, Y_2' = V_2 \Sigma_2^{1/2} R_2$. Similarly, one can verify that
\begin{align}
    \max\left(\norm{Y_1' - Y_1}_{\F}, \norm{X_2' - X_2}_{\F}, \norm{Y_2' - Y_2}_{\F}\right) \leq \sigma_{b} \frac{\epsilon}{\sigma_a^2}. 
\end{align}

By triangle inequality, this guarantees that
\begin{align}
    \max(\norm{X_1' - X_2'}_{\F}, \norm{Y_1' - Y_2'}_{\F}) \leq a + \frac{2\sigma_b \epsilon}{\sigma_a^2}. \label{eq:X1prime-X2prime}
\end{align}
%where (i) is due to the condition $2\sigma_b \epsilon \leq a \sigma_a^2.$

Note that we have the following decomposition for $X_1', X_2'$.
\begin{align}
    X_1' = U_1R_1R_1^{\top}\Sigma_1^{1/2} R_1 \label{eq:X1prime-decomposition}\\
    X_2' = U_2R_2R_2^{\top}\Sigma_2^{1/2} R_2. \label{eq:X2prime-decomposition}
\end{align}
In order to establish a bound for $U_1 R_1 - U_2 R_2$ based on $X_1' - X_2'$, we require an additional bound for $R_1^{\top}\Sigma_1^{1/2} R_1 - R_2^{\top}\Sigma_2^{1/2} R_2.$ Notice that the following provides a bound for $R_1^{\top}\Sigma_1 R_1 - R_2^{\top} \Sigma_2 R_2$.
\begin{align}
    \norm{R_1^{\top}\Sigma_1 R_1 - R_2^{\top} \Sigma_2 R_2}_{\F} \nonumber
    &\overset{(i)}{=} \norm{X_1^{'\top}X_1' - X_2^{'\top}X_2'}_{\F} \nonumber\\
    &\overset{(ii)}{\leq} \norm{(X_1^{'\top}-X_2^{'\top})X_1'}_{\F} + \norm{X_2^{'\top}(X_1' - X_2')}_{\F} \nonumber\\
    &\overset{(iii)}{\leq} \norm{X_1^{'\top}-X_2^{'\top}}_{\F} \norm{X_1'} + \norm{X_2^{'\top}} \norm{X_1' - X_2'}_{\F} \nonumber\\
    &\overset{(iv)}{\leq} (a + \frac{2\sigma_b \epsilon}{\sigma_a^2}) \norm{X_1'} + (a + \frac{2\sigma_b \epsilon}{\sigma_a^2})\norm{X_2'} \nonumber\\
    &\overset{(v)}{\leq} 2(a + \frac{2\sigma_b \epsilon}{\sigma_a^2}) \sigma_b.\label{eq:R1Sigma1R1-R2Sigma2R2}
\end{align}
where (i) is due to $X_1' = U_1 \Sigma_1^{1/2} R_1, X_2' = U_2 \Sigma_2^{1/2}R_2$, (ii) is due to the triangle inequality, (iii) is due to $\norm{ABC}_{\F} \leq \norm{A}\norm{B}_{\F}\norm{C}$, (iv) is due to \cref{eq:X1prime-X2prime}, and (v) is due to $\norm{X_1'}=\sigma_{\max}(\Sigma_1^{1/2}), \norm{X_2'} = \sigma_{\max}(\Sigma_2^{1/2})$ and $\sigma_{\max}(\Sigma_1) \leq \sigma_b^2, \sigma_{\max}(\Sigma_2) \leq \sigma_b^2$ (see \cref{eq:sigma-max-Sigma1}). 

By \cref{lem:square-root-bound}, this implies
\begin{align}
     \norm{R_1^{\top}\Sigma_1^{1/2} R_1 - R_2^{\top} \Sigma_2^{1/2} R_2} \nonumber
     &\leq \frac{1}{\sqrt{\sigma_{\min}(\Sigma_1)} + \sqrt{\sigma_{\min}(\Sigma_2)}} \norm{R_1^{\top}\Sigma_1 R_1 - R_2^{\top} \Sigma_2 R_2} \nonumber\\
     &\overset{(i)}{\leq}  \frac{1}{2\sigma_a} \norm{R_1^{\top}\Sigma_1 R_1 - R_2^{\top} \Sigma_2 R_2}_{\F} \nonumber\\
     &\overset{(ii)}{\leq} (a + \frac{2\sigma_b \epsilon}{\sigma_a^2}) \frac{\sigma_b}{\sigma_a} \label{eq:R1sqrtSigma1R1}
\end{align}
where (i) is due to $\sigma_{\min}(\Sigma_1) \geq \sigma_{a}^2, \sigma_{\min}(\Sigma_2) \geq \sigma_a^2$ and $\norm{A} \leq \norm{A}_{\F}$ for any matrix $A$, and (ii) is due to \cref{eq:R1Sigma1R1-R2Sigma2R2}.

Then we can establish the bound for $U_1R_1 - U_2R_2.$ Note that by \cref{eq:X1prime-decomposition,eq:X2prime-decomposition}, we have
\begin{align*}
X_1' - X_2' = (U_1R_1-U_2R_2)(R_1^{\top}\Sigma_1^{1/2}R_1) + U_2R_2(R_1^{\top}\Sigma_1^{1/2}R_1 - R_2^{\top}\Sigma_2^{1/2}R_2)
\end{align*}
which further implies
\begin{align}\label{eq:U1R1-U2R2}
 &U_1R_1-U_2R_2 \nonumber\\
 &= (X_1' - X_2') (R_1^{\top}\Sigma_1^{-1/2} R_1)+ U_2R_2(R_1^{\top}\Sigma_1^{1/2}R_1 - R_2^{\top}\Sigma_2^{1/2}R_2)(R_1^{\top}\Sigma_1^{-1/2} R_1).
\end{align}
Therefore, we can provide a bound for $\norm{U_1R_1-U_2R_2}_{\F}$ by the following.
\begin{align*}
    \norm{U_1R_1-U_2R_2}_{\F} 
    &\overset{(i)}{\leq} \norm{X_1'-X_2'}_{\F} \norm{R_1^{\top}\Sigma_1^{-1/2} R_1} \\
    &\quad + \norm{U_2R_2}_{\F}\norm{R_1^{\top}\Sigma_1^{1/2}R_1 - R_2^{\top}\Sigma_2^{1/2}R_2}\norm{R_1^{\top}\Sigma_1^{-1/2} R_1}\\
    &\overset{(ii)}{\leq} (a + \frac{2\sigma_b \epsilon}{\sigma_a^2}) \norm{R_1^{\top}\Sigma_1^{-1/2} R_1} +  \norm{U_2R_2}_{\F} (a + \frac{2\sigma_b \epsilon}{\sigma_a^2})  \frac{\sigma_b}{\sigma_a}\norm{R_1^{\top}\Sigma_1^{-1/2} R_1}\\
    &\overset{(iii)}{\leq}  (a + \frac{2\sigma_b \epsilon}{\sigma_a^2})  \sigma_{\max}(\Sigma_1^{-1/2}) + \sqrt{r}  (a + \frac{2\sigma_b \epsilon}{\sigma_a^2})  \frac{\sigma_b}{\sigma_a} \sigma_{\max}(\Sigma_1^{-1/2})\\
    &\overset{(iv)}\leq 2\sqrt{r}\frac{\sigma_b}{\sigma_a} (a + \frac{2\sigma_b \epsilon}{\sigma_a^2}) \frac{1}{\sigma_a}
\end{align*}
where (i) is due to the triangle inequality, (ii) is due to \cref{eq:X1prime-X2prime} and \cref{eq:R1sqrtSigma1R1}, (iii) is due to $\norm{R_1^{\top}\Sigma_1^{-1/2} R_1} = \sigma_{\max}(\Sigma_1^{-1/2})$ and $\norm{U_2R_2}_{\F} = \norm{U_2}_{\F} = \sqrt{r}$, (iv) is due to $\sigma_{\max}(\Sigma_1^{-1/2}) = 1/\sqrt{\sigma_{\min}(\Sigma_1)} \leq 1/\sigma_a.$ Similarly, one can obtain that $\norm{V_1R_1 - V_2R_2}_{\F} \leq 2\sqrt{r}\frac{\sigma_b}{\sigma_a} (a + \frac{2\sigma_b \epsilon}{\sigma_a^2}) \frac{1}{\sigma_a}.$ 

Note that $\norm{U_1R_1R_2^{\top} - U_2}_{\F} = \norm{U_1 R_1 - U_2 R_2}_{\F}$ and $\norm{V_1R_1R_2^{\top} - V_2}_{\F} = \norm{V_1 R_1 - V_2 R_2}_{\F}$, this completes the proof for \cref{eq:U-F-V-F}.

In addition, if $\norm{X_1 - X_2}_{2,\infty} \leq b$, we can obtain that
\begin{align*}
    \norm{X_1' - X_2'}_{2,\infty} 
    &\leq \norm{X_1 - X_2}_{2,\infty} + \norm{X_1' - X_1}_{\F} + \norm{X_2' - X_2}_{\F} \\
    &\leq b + \frac{2\sigma_b \epsilon}{\sigma_a^2}.
\end{align*}
Then based on \cref{eq:U1R1-U2R2}, we have
\begin{align*}
    \norm{U_1R_1 - U_2R_2}_{2,\infty} 
    &\overset{(i)}{\leq} \norm{(X_1'-X_2') R_1^{\top}\Sigma_1^{-1/2} R_1}_{2,\infty} \\
    &\quad +  \norm{U_2R_2(R_1^{\top}\Sigma_1^{1/2}R_1 - R_2^{\top}\Sigma_2^{1/2}R_2)R_1^{\top}\Sigma_1^{-1/2} R_1}_{2,\infty}\\
    &\overset{(ii)}{\leq} \norm{(X_1'-X_2')}_{2,\infty} \norm{R_1^{\top}\Sigma_1^{-1/2} R_1}\\
    &\quad + \norm{U_2}_{2,\infty}\norm{R_2}\norm{R_1^{\top}\Sigma_1^{1/2}R_1 - R_2^{\top}\Sigma_2^{1/2}R_2}\norm{R_1^{\top}\Sigma_1^{-1/2} R_1}\\
    &\overset{(iii)}{\leq} \left(b + \norm{U_2}_{2,\infty} a \frac{\sigma_b}{\sigma_a} + \frac{2\sigma_b \epsilon}{\sigma_a^2} (1+\norm{U_2}_{2,\infty}\sigma_b/\sigma_a)\right) \frac{1}{\sigma_a} 
\end{align*}
where (i) is due to triangle inequality, (ii) is due to $\norm{AB}_{2,\infty} \leq \norm{A}_{2,\infty} \norm{B}$, (iii) follows the same derivation for $\norm{U_1R_1-U_2R_2}_{\F}$. Notice that $\norm{U_1R_1R_2^{\top} - U_2}_{2,\infty} = \norm{U_1R_1 - U_2R_2}_{2,\infty}$. The similar results also can be obtained for $\norm{V_1R_1R_2^{\top} - V_2}_{2,\infty}.$ This completes the proof for \cref{eq:U1RU2-2-infty}.

\end{proof}

\end{appendices}

\end{document}